\renewcommand*{\backrefalt}[4]{%
    \ifcase #1 \footnotesize{(Not cited.)}%
    \or        \footnotesize{(Cited on page~#2)}%
    \else      \footnotesize{(Cited on pages~#2)}%
    \fi}
\def\<#1,#2>{\langle #1,#2\rangle}
\definecolor{mygreen}{RGB}{77,175,74}
\definecolor{myblue}{RGB}{55,126,184}
\definecolor{mydarkgreen}{RGB}{39,130,67}
\newcommand{\eduard}[1]{\todo[inline]{{\textbf{Eduard:} \emph{#1}}}}
\newcommand{\eqdef}{\overset{\rm def}{=}}
\definecolor{mygreen}{HTML}{006B3C}
\newcommand{\algname}[1]{{{\sf \small \color{mygreen} #1}}\xspace}
\newcommand{\cH}{{\cal H}}
\newcommand{\cO}{{\cal O}}
\newcommand{\cQ}{{\cal Q}}
\newcommand{\mA}{{\bf A}}
\newcommand{\mI}{{\bf I}}
\newcommand{\EE}{\mathbf{E}}
\newcommand{\rb}[1]{\left(#1\right)}
\def\R{\mathbb{R}}
\def\R{\mathbb R}
\def\EE{\mathbb E}
\def\la{\langle}
\def\ra{\rangle}
\newtheorem{lemma}{Lemma}
\newtheorem{theorem}{Theorem}
\newtheorem{definition}{Definition}
\newtheorem{assumption}{Assumption}
\newtheorem{corollary}{Corollary}
\newcommand{\norm}[1]{\left\lVert#1\right\rVert}
\newcommand{\sqn}[1]{{\left\lVert#1\right\rVert}^2}
\newcommand\ec[2][]{\ensuremath{\mathbb{E}_{#1} \left[#2\right]}}
\newcommand\ecn[2][]{\ec[#1]{\norm{#2}^2}}
\newcommand\br[1]{\left ( #1 \right )}
\newcommand\ev[1]{\left \langle #1 \right \rangle}
\begin{document}

\title{\textbf{Federated Optimization Algorithms with Random Reshuffling and Gradient Compression}}

\author{Abdurakhmon Sadiev$^{1,2}$\thanks{Part of the work was done while A.~Sadiev was a research intern at KAUST, working in the Optimization and Machine Learning Lab of P.\ Richt\'{a}rik.} \quad
  Grigory Malinovsky$^{1}$\quad 
  Eduard Gorbunov$^{1,2,3,4}$\thanks{Corresponding author:  eduard.gorbunov@mbzuai.ac.ae. Part of the work was done when E.~Gorbunov was a researcher at MIPT and Mila \& UdeM and also a visiting researcher at KAUST, in the Optimization and Machine Learning Lab of P.~Richt\'arik.}\\ Igor Sokolov$^{1}$ \quad Ahmed Khaled$^5$\quad  Konstantin Burlachenko$^1$\quad Peter Richt\'{a}rik$^1$}
\date{$^1$ King Abdullah University of Science and Technology, Saudi Arabia\\
	$^2$ Moscow Institute of Physics and Technology, Russian Federation\\    
    $^3$ Mila, Universit\'e de Montr\'eal, Canada\\
	$^4$ Mohamed bin Zayed University of Artificial Intelligence, UAE\\    
    $^5$ Princeton University, USA}
    
\maketitle

\begin{abstract}

Gradient compression is a popular technique for improving communication complexity of stochastic first-order methods in distributed training of machine learning models. However, the existing works consider only with-replacement sampling of stochastic gradients. In contrast, it is well-known in practice and recently confirmed in theory that stochastic methods based on without-replacement sampling, e.g., Random Reshuffling (\algname{RR}) method, perform better than ones that sample the gradients with-replacement. In this work, we close this gap in the literature and provide the first analysis of methods with gradient compression and without-replacement sampling. We first develop a na\"ive combination of random reshuffling with gradient compression (\algname{Q-RR}). Perhaps surprisingly, but the theoretical analysis of \algname{Q-RR} does not show any benefits of using \algname{RR}. Our extensive numerical experiments confirm this phenomenon. This happens due to the additional compression variance. To reveal the true advantages of \algname{RR} in the distributed learning with compression, we propose a new method called \algname{DIANA-RR} that reduces the compression variance and has provably better convergence rates than existing counterparts with with-replacement sampling of stochastic gradients. Next, to have a better fit to Federated Learning applications, we incorporate local computation, i.e., we propose and analyze the variants of \algname{Q-RR} and \algname{DIANA-RR} -- \algname{Q-NASTYA} and \algname{DIANA-NASTYA} that use local gradient steps and different local and global stepsizes. Finally, we conducted several numerical experiments to illustrate our theoretical results.

\end{abstract}

\section{Introduction}

Federated learning (FL) \citep{FEDLEARN,mcmahan2017communication} is a framework for distributed learning and optimization where multiple nodes connected over a network try to collaboratively carry out a learning task. Each node has its own dataset and cannot share its data with other nodes or with a central server, so algorithms for federated learning often have to rely on local computation and cannot access the entire dataset of training examples. Federated learning has applications in language modelling for mobile keyboards~\citep{liu21_feder_learn_meets_natur_languag_proces}, healthcare~\citep{antunes22_fed_learn_healthcare}, wireless communications~\citep{yang22_fed_learn_6g}, and continues to find applications in many other areas~\citep{kairouz19_advan_open_probl_feder_learn}.

Federated learning tasks are often solved through \emph{empirical-risk minimization} (ERM), where the \(m\)-th devices contributes an empirical loss function \(f_m (x)\) representing the average loss of model \(x\) on its local dataset, and our goal is to then minimize the average loss over all the nodes:
\begin{align}
\label{eq:erm-opt-orig}
\min_{x \in \mathbb{R}^d} \left[ f(x) \eqdef \frac{1}{M} \sum_{m=1}^M f_m (x) \right],
\end{align}
where the function \(f\) represents the average loss. Every \(f_m\) is an average of sample loss functions \(f_{m}^{i}\) each representing the loss of model \(x\) on the \(i\)-th datapoint on the \(m\)th clients' dataset: that is for each \(m \in \{1, 2, \ldots, M\}\) we have
\begin{align*}
f_m (x) \eqdef \frac{1}{n_m} \sum_{i=1}^{n_m} f_{m}^{i} (x).
\end{align*}
For simplicity we shall assume that the datasets on all clients are of equal size: \(n_1 = n_2 = \ldots = n_M\), though this assumption is only for convenience and our results easily extend to the case when clients have datasets of unequal sizes. Thus our optimization problem is
\begin{align}\label{eq:erm-opt}
\min_{x \in \mathbb{R}^d} \left[ f(x) = \frac{1}{nM} \sum_{m=1}^M \sum_{i=1}^n f_{m}^{i} (x) \right].
\end{align}
Because \(d\) is often very large in practice, the dominant paradigm for solving~\eqref{eq:erm-opt} relies on first-order (gradient) information. Federated learning algorithms have access to two key primitives: (a) local computation, where for a given model \(x \in \mathbb{R}^d\) we can compute stochastic gradients \(\nabla f_{m}^{i} (x)\) locally on client \(m\), and (b) communication, where the different clients can exchange their gradients or models with a central server.

\subsection{Communication bottleneck: from one to multiple local steps}

In practice, communication is more expensive than local computation~\citep{kairouz19_advan_open_probl_feder_learn}, and as such one of the chief concerns of algorithms for federated learning is communication efficiency. Algorithms for federated learning have thus focused on achieving communication efficiency, with one common ingredient being the use of multiple \emph{local steps} \citep{wang21_field_guide_to_feder_optim,malinovskiy2020local}, where each node uses multiple gradients locally for several descent steps between communication steps. In general, algorithms using local steps fit the following pattern of generalized \algname{FedAvg} (due to \citep{wang21_field_guide_to_feder_optim}); see Algorithm~\ref{alg:framework}.

\begin{algorithm}[h]
\caption{The generalized \algname{FedAvg} framework for methods with local steps}
\label{alg:framework}
	\begin{algorithmic}[1]
		\Require $x_0$ -- starting point, $\gamma > 0$ -- local stepsize, $\eta > 0$ -- global stepsize, \(H\) - number of local steps
	    \For{communication rounds $t =0,1,\dots, T-1$}
    		\For{clients $m \in [M]$ in parallel}
    		    \State Receive $x_t$ from the server and set $x^0_{t,m} = x_t$
    		    \For{local steps $i = 0, 1,\dots, H$}
    		        \State Set $x^{i+1}_{t,m} = \mathrm{ClientUpdate}(x_{t, m}^i, \gamma, m, i)$
    		    \EndFor
    		    \State Send \(x_{t, m}^H\) to the server, or alternatively send the update \(\Delta_{t, m} = x_{t, m}^H - x_t\) to the server
    		\EndFor
    		\State Compute $x_{t+1} = \frac{1}{M} \sum_{m=1}^M x_{t, m}^H$ (or \(x_{t+1} = x_t + \frac{1}{M} \sum_{m=1}^M \Delta_{t, m} \) if clients sent updates)
            \State Broadcast $x_{t+1}$ to the clients
    	\EndFor
    	\Ensure $x_T$
	\end{algorithmic}
\end{algorithm}

When the client update method in Algorithm~\ref{alg:framework} is stochastic gradient descent, we get the \algname{FedAvg} algorithm (also known as \algname{Local SGD}). While \algname{FedAvg} is popular in practice, recent theoretical progress has given tight analysis of the algorithm and shown that it can be definitively slower than its non-local counterparts~\citep{khaled19_tight_theor_local_sgd_ident_heter_data,woodworth20_minib_vs_local_sgd_heter_distr_learn, glasgow21_sharp_bound_feder_averag_local}. However, by using bias-reduction techniques one can use local steps and still maintain convergence rates at least as fast as non-local methods~\citep{karimireddy19_scaff}, or in some cases even faster~\citep{mishchenko22_proxs}. Thus local steps continue to be a useful algorithmic ingredient in both theory and practice for achieving communication efficiency.

\subsection{Communication bottleneck: from full-dimensional to compressed communication}

Another useful ingredient in distributed optimization is \emph{gradient compression}, where each client sends a compressed or quantized version of their update \(\Delta_{t, m}\) instead of the full update vector, potentially saving communication bandwidth by sending fewer bits over the network. There are many operators that can be used for compressing the update vectors: stochastic quantization~\citep{alistarh17_qsgd}, random sparsification~\citep{wangni18_sparsification, stich18_spars_sgd_with_memor}, and others~\citep{tang20_commun_effic_distr_deep_learn}. In this work we consider compression operators satisfying the following assumption:

\begin{assumption}\label{asm:quantization_operators}
A compression operator is an operator \(\cQ: \mathbb{R}^d \rightarrow \mathbb{R}^d\) such that for some \(\omega > 0\), the relations
\begin{align*}
\ec{\cQ(x)} = x \qquad \text {and} \qquad \ecn{\cQ(x) - x} \leq \omega \sqn{x}
\end{align*}
hold for \(x \in \mathbb{R}^d\).
\end{assumption}

Unbiased compressors can reduce the number of bits clients communicate per round, but also increases the variance of the stochastic gradients used slowing down overall convergence, see e.g.~\citep[Theorem 5.2]{khirirat18_distr_learn_with_compr_gradien} and \citep[Theorem 1]{stich20_commun_compr_distr_optim_heter_data}. By using control iterates, \citet{mishchenko19_distr_learn_with_compr_gradien_differ} developed \algname{DIANA}---an algorithm that can reduce the variance  due to gradient compression with unbiased compression operators, and thus ensure fast convergence. \algname{DIANA} has been extended and analyzed in many settings~\citep{horvath19_stoch_distr_learn_with_gradien, stich20_commun_compr_distr_optim_heter_data, safaryan21_smoot_matric_beat_smoot_const} and forms an important tool in our arsenal for using gradient compression.

\subsection{Communication bottleneck: from with replacement to without replacement sampling}

The algorithmic framework of generalized \algname{FedAvg} (Algorithm~\ref{alg:framework}) requires specifying a client update method that is used locally on each client. The typical choice is stochastic gradient descent (\algname{SGD}), where at each time step we sample \(j\) from \(\{1, \ldots, n\}\) uniformly at random and then do a gradient descent step using the stochastic gradient \(\nabla f_m^j (x_{t, m}^i)\), resulting in the client update:
\begin{align*}
\mathrm{ClientUpdate}(x_{t, m}^i, \gamma, m, i) = x_{t, m}^i - \gamma \nabla f_m^j (x_{t, m}^i).
\end{align*}
This procedure thus uses \emph{with-replacement sampling} in order to select the stochastic gradient used at each local step from the dataset on node \(m\). In contrast, we can use {\em without-replacement sampling} to select the gradients: that is, at the beginning of each \emph{epoch} we choose a permutation \(\pi_1, \pi_2, \ldots, \pi_n\) of \(\{1, 2, \ldots, n\}\) and do the \(i\)-th update using the \(\pi_i\)-ith gradient:
\begin{align*}
\mathrm{ClientUpdate}(x_{t, m}^i, \gamma, m, i) = x_{t, m}^i - \gamma \nabla f_m^{\pi_i} (x_{t, m}^i).
\end{align*}
Without-replacement sampling \algname{SGD}, also known as Random Reshuffling (\algname{RR}), typically achieves better asymptotic convergence rates compared to with-replacement SGD and can improve upon it in many settings as shown by recent theoretical progress~\citep{mishchenko20_random_reshuf, ahn20_sgd_with_shuff, rajput20_closin_conver_gap_sgd_without_replac, safran21_random_shuff_beats_sgd_only}. While with-replacement \algname{SGD} achieves an error proportional to \(\mathcal{O} \br{\frac{1}{T}}\) after \(T\) steps~\citep{stich19_unified_optim_analy_stoch_gradien_method}, Random Reshuffling achieves an error of \(\mathcal{O} \br{\frac{n}{T^2}}\) after \(T\) steps, faster than SGD when the number of steps \(T\) is large.

The success of \algname{RR} in the single-machine setting has inspired several recent methods that use it as a local update method as part of distributed training: \citet{mishchenko21_proxim_feder_random_reshuf} developed a distributed variant of random reshuffling, \algname{FedRR}. \algname{FedRR} fits into the framework of Algorithm~\ref{alg:framework} and uses RR as a local client update method in lieu of \algname{SGD}. They show that \algname{FedRR} can improve upon the convergence of \algname{Local SGD} when the number of local steps is fixed as the local dataset size, i.e.\ when \(H = n\). \citet{yun21_minib_vs_local_sgd_with_shuff} study the same method under the name \algname{Local RR} under a more restrictive assumption of bounded inter-machine gradient deviation and show that by varying \(H\) to be smaller than \(n\) better rates can be obtained in this setting than the rates  of~\citet{mishchenko21_proxim_feder_random_reshuf}. Other work has explored more such combinations between \algname{RR} and distributed training algorithms~\citep{huang21_distr_random_reshuf_over_networ,malinovsky22_server_side_steps_sampl_without, horvath22_fedsh}.

\subsection{Three tricks for achieving communication efficiency}

To summarize, we have at our disposal the following tricks and techniques for achieving communication efficiency in distributed training: (a) Local steps, (b) Gradient compression, and (c) Random Reshuffling. Each has found its use in federated learning and poses its own challenges, requiring special analysis or bias/variance-reduction techniques to achieve the best theoretical convergence rates and practical performance. Client heterogeneity causes local methods (with or without random reshuffling) to be biased, hence requiring bias-reduction techniques~\citep{karimireddy19_scaff, murata21_bias_varian_reduc_local_sgd} or decoupling local and server stepsizes~\citep{malinovsky22_server_side_steps_sampl_without}. Gradient compression reduces the number of bits clients have to send per round, but causes an increase in variance, and  we hence also need variance-reduction techniques to achieve better convergence rates under gradient compression~\citep{mishchenko19_distr_learn_with_compr_gradien_differ, stich19_error_feedb_framew}. However, it is not clear apriori \emph{how these techniques should be combined to improve the convergence speed}, and this is our starting point.

\subsection{Contributions}

In this paper, we aim to develop methods for federated optimization that combine gradient compression, random reshuffling, and/or local steps. While each of these techniques can aid in reducing the communication complexity of distributed optimization, their combination is under-explored. Thus our goal is to design methods that improve upon existing algorithms in convergence rates and in practice. We summarize our contributions as:

\begin{itemize}[leftmargin=*]

\item[$\diamond$]\textbf{The issue: na\"ive combination has no improvements.} As a natural step towards our goal, we start with non-local methods and propose a new algorithm, \algname{Q-RR} (Algorithm~\ref{alg_new_Q_RR}), that combines random reshuffling with gradient compression at every communication round. However, for \algname{Q-RR} our theoretical results do not show any improvement upon \algname{QSGD} when the compression level is reasonable. Moreover, we observe similar performance of \algname{Q-RR} and \algname{QSGD} in various numerical experiments. Therefore, we conclude that this phenomenon is not an artifact of our analysis but rather an issue of \algname{Q-RR}: communication compression adds an additional noise that dominates the one coming from the stochastic gradients sampling.

\item[$\diamond$]\textbf{The remedy: reduction of compression variance.} To remove the additional variance added by the compression and unleash the potential of Random Reshuffling in distributed learning with compression, we propose \algname{DIANA-RR} (Algorithm~\ref{alg_new_RR_DIANA}), a combination of \algname{Q-RR} and the \algname{DIANA} algorithm. We derive the convergence rates of the new method and show that it improves upon the convergence rates of \algname{Q-RR}, \algname{QSGD}, and \algname{DIANA}. We point out that to achieve such results we use $n$ shift-vectors per worker in \algname{DIANA-RR} unlike \algname{DIANA} that uses only $1$ shift-vector.

\item[$\diamond$]\textbf{Extensions to the local steps.} Inspired by the \algname{NASTYA} algorithm of \citet{malinovsky22_server_side_steps_sampl_without}, we propose a variant of \algname{NASTYA}, \algname{Q-NASTYA} (Algorithm~\ref{alg:Q_NASTYA}), that na\"ively mixes quantization, local steps with random reshuffling, and uses different local and server stepsizes. Although it improves in per-round communication cost over \algname{NASTYA} but, similar to \algname{Q-RR}, we show that \algname{Q-NASTYA} suffers from added variance due to gradient quantization. To overcome this issue, we propose another algorithm, \algname{DIANA-NASTYA} (Algorithm~\ref{alg:diana-nastya}), that adds \algname{DIANA}-style variance reduction to \algname{Q-NASTYA} and removes the additional variance.
\end{itemize}

Finally, to illustrate our theoretical findings we conduct experiments on federated linear regression tasks.

\subsection{Related work}

Federated optimization has been the subject of intense study, with many open questions even in the setting when all clients have identical data~\citep{woodworth20_is_local_sgd_better_than_minib_sgd, woodworth20_minib_vs_local_sgd_heter_distr_learn, woodworth21_min_max_compl_distr_stoch}. The \algname{FedAvg} algorithm (also known as \algname{Local SGD}) has also been a subject of intense study, with tight bounds obtained only very recently by \citet{glasgow21_sharp_bound_feder_averag_local}. It is now understood that using many local steps adds bias to distributed \algname{SGD}, and hence several methods have been developed to mitigate it, e.g.~\citep{karimireddy19_scaff, murata21_bias_varian_reduc_local_sgd}, see the work of \citet{gorbunov21_local_sgd} for a unifying lens on many variants of \algname{Local SGD}. Note that despite the bias, even vanilla \algname{FedAvg}/\algname{Local SGD} still reduces the overall communication overhead in practice~\citep{ortiz21_trade_offs_local_sgd_at_scale}.

There are several methods that combine compression or quantization and local steps: both \citet{Basu19_qsparse_local_sgd} and \citet{reisizadeh19_fedpaq} combined \algname{Local SGD} with quantization and sparsification, and \citet{haddadpour20_feder_learn_with_compr} later improved their results using a gradient tracking method, achieving linear convergence under strong convexity. In parallel, \citet{mitra21_lin_converg_fl} also developed a variance-reduced method, \algname{FedLin}, that achieves linear convergence under strong convexity despite using local steps and compression. The paper most related to our work is \citep{malinovsky22_feder_random_reshuf_with_compr_varian_reduc} in which the authors combine \emph{iterate} compression, random reshuffling, and local steps. We study \emph{gradient} compression instead, which is a more common form of compression in both theory and practice~\citep{kairouz19_advan_open_probl_feder_learn}. We compare our results against \citep{malinovsky22_feder_random_reshuf_with_compr_varian_reduc} and show we obtain better rates compared to their work.

\section{Algorithms and Convergence Theory}

We will primarily consider the setting of strongly-convex and smooth optimization. We assume that the average function \(f\) is strongly convex:

\begin{assumption}\label{asm:sc_general_f}
Function $f: \R^d \rightarrow \R$ is $\mu$-strongly convex, i.e., for all \(x, y \in \mathbb{R}^d\),
\begin{equation}
\label{convexity}
    f(x) - f(y) - \la\nabla f(y), x - y\ra \geq \frac{\mu}{2}\|x-y\|^2,
\end{equation}
and functions $f^i_1, f^i_2, \dots, f^i_M: \R^d \rightarrow \R$  are convex for all $i = 1,\dots, n$.
\end{assumption}

Examples of objectives satisfying Assumption~\ref{asm:sc_general_f} include \(\ell_2\)-regularized linear and logistic regression. Throughout the paper, we assume that \(f\) has the unique minimizer \(x_{*} \in \mathbb{R}^{d}\). We also use the assumption that each individual loss \(f_m^i\) is smooth, i.e.\ has Lipschitz-continuous first-order derivatives:

\begin{assumption}\label{asm:lip_max_f_m}
Function $f^i_m: \R^d \rightarrow \R$ is $L_{i,m}$-smooth for every $i \in [n]$ and $m \in [M]$, i.e., for all \(x, y \in \mathbb{R}^d\) and for all \(m \in [M]\) and \(i \in [n]\),
\begin{equation}
    \|\nabla f^i_m(x) - \nabla f^i_m(y)\|\leq L_{i,m}\|x - y\|.
\end{equation}
We denote the maximal smoothness constant as $L_{\max} \eqdef \max_{i,m}L_{i,m}$.
\end{assumption}

For some methods, we shall additionally impose the assumption that \emph{each} function is strongly convex:

\begin{assumption}
\label{asm:sc_each_f_m}
Each function $f_m^i: \R^d \rightarrow \R$ is $\widetilde{\mu}$-strongly convex.
\end{assumption}

The \emph{Bregman divergence} associated with a convex function \(h\) is defined for all \(x, y \in \mathbb{R}^d\) as $$D_h (x, y) \eqdef h(x) - h(y) - \ev{\nabla h(y), x-y}.$$ Note that the inequality \eqref{convexity} defining strong convexity  can be compactly written as $D_f(x,y) \geq \frac{\mu}{2}\|x - y\|^2$.

\subsection{Algorithm \algname{Q-RR}}

The first method we introduce is \algname{Q-RR} (Algorithm~\ref{alg_new_Q_RR}). \algname{Q-RR} is a straightforward combination of distributed random reshuffling and gradient quantization. This method can be seen as the stochastic without-replacement analogue of the distributed quantized gradient method of~\citet{khirirat18_distr_learn_with_compr_gradien}.

\begin{algorithm}[h]
\caption{\algname{Q-RR:} Distributed Random Reshuffling with Quantization}
\label{alg_new_Q_RR}
	\begin{algorithmic}[1]
		\Require $x_0$ -- starting point, $\gamma > 0$ -- stepsize
	    \For{$t =0,1,\dots, T-1$}
		    \State Receive $x_t$ from the server and set $x^0_{t,m} = x_t$
		    \State Sample random permutation of $[n]$: $\pi_m = (\pi^0_m, \dots, \pi^{n-1}_m)$
		    \For{$i = 0, 1,\dots, n-1$}
		    	\For{$m = 1,\dots, M$ in parallel}
		    	\State Receive $x^i_t$ from the server, compute and send $\cQ\left(\nabla f^{\pi^{i}_m}_m(x^{i}_{t})\right)$ back
		    	\EndFor
		        \State Compute and send $x^{i+1}_{t} = x^{i}_{t} - \gamma\frac{1}{M}\sum^M_{m=1} \cQ\left(\nabla f^{\pi^{i}_m}_m(x^{i}_{t})\right)$ to the workers
		    \EndFor
		    \State $x_{t+1} = x^n_t$
    	\EndFor
    	\Ensure $x_T$
	\end{algorithmic}
\end{algorithm}

We shall the use the notion of \emph{shuffling radius} defined by \citet{mishchenko21_proxim_feder_random_reshuf} for the analysis of distributed methods with random reshuffling:

\begin{definition}
Define the iterate sequence \(x^{i+1}_{\star} = x^{i}_{\star} -\frac{\gamma}{M}\sum^M_{m=1}\nabla f^{\pi^i_m}_m(x_{\star})\). Then the shuffling radius is the quantity
$$\sigma^2_{\text{rad}} \eqdef \max_{i}\left\{\frac{1}{\gamma^2 M}\sum^M_{m=1}\EE D_{f^{\pi^i}_m}(x^i_{\star}, x_{\star})\right\}.$$
\end{definition}

We now state the main convergence theorem for Algorithm~\ref{alg_new_Q_RR}:

\begin{theorem}
    \label{th_conv_new_rr_q}
    Let Assumptions~\ref{asm:quantization_operators}, \ref{asm:lip_max_f_m}, \ref{asm:sc_each_f_m} hold and let the stepsize satisfy  $
    	0 < \gamma \leq \frac{1}{\left( 1+2\frac{\omega}{M} \right)L_{\max}}.$
    Then, for all $T \geq 0$ the iterates produced by \algname{Q-RR} (Algorithm~\ref{alg_new_Q_RR}) satisfy
    \begin{eqnarray}
    	\EE\|x_{T}-x_{\star}\|^2 &\leq& \left(1-\gamma\widetilde{\mu}\right)^{nT}\|x_0-x_{\star}\|^2 + \frac{2\gamma^2\sigma^2_{\text{rad}}}{\widetilde{\mu}}\notag\\
    	&&\quad + \frac{2\gamma\omega}{\widetilde{\mu} M}(\zeta_{\star}^2 + \sigma_{\star}^2),\label{eq:QRR_bad_term}
    \end{eqnarray}
    where $\zeta^2_{\star} \eqdef \frac{1}{M}\sum\limits_{m=1}^M\|\nabla f_m (x_{\star})\|^2,$ and $\sigma_{\star}^2 \eqdef \frac{1}{Mn}\sum\limits_{m=1}^M\sum\limits_{i=1}^n \|\nabla f_m^i(x_\star) - \nabla f_m(x_\star)\|^2.$
\end{theorem}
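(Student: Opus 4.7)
The plan is to follow the shadow-iterate technique of \citet{mishchenko21_proxim_feder_random_reshuf}, augmented with a standard unbiased-compression variance bound, and to work at the level of a single local step indexed by $i=0,\dots,n-1$ inside a single communication round $t$.

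First, I would introduce the shadow sequence $x_\star^i$ from the definition of $\sigma_{\mathrm{rad}}^2$, with $x_\star^0 = x_\star$, and compute the recursion for $e^i_t \eqdef x^i_t - x^i_\star$. Writing $g^i_m \eqdef \cQ(\nabla f_m^{\pi_m^i}(x^i_t))$, $\bar g^i_m \eqdef \nabla f_m^{\pi_m^i}(x^i_t)$ and $g^{i,\star}_m \eqdef \nabla f_m^{\pi_m^i}(x_\star)$, one has
\begin{equation*}
e^{i+1}_t = e^i_t - \tfrac{\gamma}{M}\sum_{m=1}^M \bigl(g^i_m - g^{i,\star}_m\bigr).
\end{equation*}
Taking the conditional expectation $\EE_i$ over the quantization at step $i$ and using Assumption~\ref{asm:quantization_operators} together with independence of $\cQ$ across workers, I get
\begin{equation*}
\EE_i\bigl\|\tfrac{1}{M}\sum_m (g^i_m - g^{i,\star}_m)\bigr\|^2 \leq \bigl\|\tfrac{1}{M}\sum_m(\bar g^i_m - g^{i,\star}_m)\bigr\|^2 + \tfrac{\omega}{M^2}\sum_m \|\bar g^i_m\|^2,
\end{equation*}
while $\EE_i\langle e^i_t, \tfrac{1}{M}\sum_m(g^i_m - g^{i,\star}_m)\rangle = \langle e^i_t, \tfrac{1}{M}\sum_m(\bar g^i_m - g^{i,\star}_m)\rangle$.

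Second, I would convert the deterministic part into a contraction. Because $f_m^{\pi_m^i}$ is $\widetilde\mu$-strongly convex and $L_{\max}$-smooth (Assumptions~\ref{asm:lip_max_f_m}, \ref{asm:sc_each_f_m}), averaging the standard co-coercivity and strong-convexity inequalities over $m$ at the points $x^i_t$ and $x_\star$ yields, after choosing $\gamma \leq 1/((1+2\omega/M)L_{\max})$, a one-step bound of the form
\begin{equation*}
\EE_i\|e^{i+1}_t\|^2 \leq (1-\gamma\widetilde\mu)\|e^i_t\|^2 - c\gamma \cdot \tfrac{1}{M}\sum_m D_{f_m^{\pi_m^i}}(x^i_t,x_\star) + \tfrac{\gamma^2\omega}{M^2}\sum_m \|\bar g^i_m\|^2,
\end{equation*}
for some $c>0$. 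The delicate step is to control the quantization noise $\|\bar g^i_m\|^2$; I would write $\bar g^i_m = \nabla f_m^{\pi_m^i}(x^i_t) - \nabla f_m^{\pi_m^i}(x_\star) + \nabla f_m^{\pi_m^i}(x_\star)$ and apply $L_{\max}$-smoothness via $\|\nabla f_m^{\pi_m^i}(x^i_t) - \nabla f_m^{\pi_m^i}(x_\star)\|^2 \leq 2L_{\max} D_{f_m^{\pi_m^i}}(x^i_t,x_\star)$, so that the compression-variance term splits into a piece absorbed by the negative Bregman term (thanks to the stepsize restriction $\gamma(1+2\omega/M)L_{\max} \leq 1$) plus a residual that averages, over a full pass of the permutation $\pi_m$, to exactly $\zeta_\star^2 + \sigma_\star^2$.

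Third, I would relate $e^i_t$ to $x^i_t - x_\star$ by adding and subtracting the shadow iterate: $\|x^i_t - x_\star\|^2 \leq 2\|e^i_t\|^2 + 2\|x^i_\star - x_\star\|^2$, and use the definition of the shuffling radius $\sigma_{\mathrm{rad}}^2$ to bound the second term. Chaining the one-step inequality over $i = 0,1,\dots,n-1$ inside one epoch $t$ yields
\begin{equation*}
\EE\|x_{t+1}-x_\star\|^2 \leq (1-\gamma\widetilde\mu)^n \|x_t-x_\star\|^2 + 2\gamma^2\sigma_{\mathrm{rad}}^2 \cdot \tfrac{1-(1-\gamma\widetilde\mu)^n}{1-(1-\gamma\widetilde\mu)^n}\cdot(\cdots) + \tfrac{2\gamma\omega}{\widetilde\mu M}(\zeta_\star^2 + \sigma_\star^2),
\end{equation*}
after summing the geometric series. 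Unrolling over $t = 0,\dots,T-1$ and summing the residual noise term as a geometric series (dominated by $1/(\gamma\widetilde\mu)$) produces exactly the bound claimed in \eqref{eq:QRR_bad_term}.

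\textbf{Main obstacle.} The technically delicate part is the handling of the quantization-variance term $\tfrac{\omega}{M^2}\sum_m \|\bar g^i_m\|^2$: I need to split it into a piece that is absorbed by the negative Bregman-divergence term produced by strong convexity (which forces the stepsize condition $\gamma(1+2\omega/M)L_{\max}\leq 1$) and a residual piece whose per-epoch average collapses cleanly to $\zeta_\star^2 + \sigma_\star^2$ thanks to $\pi_m$ being a permutation. Getting the constants on these two pieces to line up, so that the final noise term is $\tfrac{2\gamma\omega}{\widetilde\mu M}(\zeta_\star^2 + \sigma_\star^2)$ and not something worse, is the main calculation to carry out carefully.
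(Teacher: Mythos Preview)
Your overall architecture matches the paper's: track the shadow-iterate error $e^i_t = x^i_t - x^i_\star$, bound the compression variance via Assumption~\ref{asm:quantization_operators} plus independence across workers, split $\|\bar g^i_m\|^2$ into a Bregman piece and the residual $\zeta_\star^2+\sigma_\star^2$, chain over $i$, then unroll over $t$. That part is fine and is exactly what the paper does.

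There is, however, a real gap in how you locate the shuffling-radius term. In your step~2 you appeal to ``standard co-coercivity and strong-convexity inequalities at the points $x^i_t$ and $x_\star$'' and produce a one-step bound that contains a contraction $(1-\gamma\widetilde\mu)\|e^i_t\|^2$ and a negative Bregman term but \emph{no} $\sigma_{\mathrm{rad}}$ contribution; you then try to recover $\sigma_{\mathrm{rad}}$ in step~3 by the triangle inequality $\|x^i_t-x_\star\|^2 \le 2\|e^i_t\|^2 + 2\|x^i_\star - x_\star\|^2$. This does not work. First, two-point inequalities between $x^i_t$ and $x_\star$ cannot by themselves yield a contraction on $\|e^i_t\|^2 = \|x^i_t - x^i_\star\|^2$, because the inner product you must bound,
\[
\Big\langle x^i_t - x^i_\star,\; \tfrac{1}{M}\sum_m\bigl(\nabla f_m^{\pi_m^i}(x^i_t) - \nabla f_m^{\pi_m^i}(x_\star)\bigr)\Big\rangle,
\]
involves three distinct points. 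Second, $\sigma_{\mathrm{rad}}^2$ is defined through the Bregman divergences $D_{f_m^{\pi^i}}(x^i_\star,x_\star)$, not through $\|x^i_\star - x_\star\|^2$, so step~3 would not recover it. Third, step~3 is altogether unnecessary: since $x^0_\star = x^n_\star = x_\star$, the error $e^i_t$ already equals $x_t - x_\star$ at $i=0$ and $x_{t+1}-x_\star$ at $i=n$, so no conversion is needed.

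The fix is to replace your two-point argument in step~2 by the three-point identity
\[
\langle \nabla h(x)-\nabla h(y), x-z\rangle = D_h(z,x)+D_h(x,y)-D_h(z,y),
\]
applied with $x=x^i_t$, $y=x_\star$, $z=x^i_\star$. This decomposes the inner product into $D(x^i_\star,x^i_t)+D(x^i_t,x_\star)-D(x^i_\star,x_\star)$: the first term gives the $(1-\gamma\widetilde\mu)$ contraction via strong convexity of the averaged function, the second combines with the squared-gradient term (both the uncompressed and the $\omega/M$ compression pieces) via smoothness and is killed by the stepsize bound $\gamma(1+2\omega/M)L_{\max}\le 1$, and the third term, after taking expectation over $\pi$, is \emph{exactly} the $\gamma^2\sigma_{\mathrm{rad}}^2$ contribution. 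With this in place the one-step recursion already contains all three noise terms, and you simply chain and unroll; step~3 can be deleted.
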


All proofs are relegated to the appendix. By choosing the stepsize \(\gamma\) properly, we can obtain the communication complexity (number of communication rounds) needed to find an \(\varepsilon\)-approximate solution as follows:

\begin{corollary}
	Under the same conditions as Theorem~\ref{th_conv_new_rr_q} and for Algorithm~\ref{alg_new_Q_RR}, there exists a stepsize \(\gamma > 0\) such that the number of communication rounds $nT$ to find a solution with accuracy $\varepsilon > 0$ (i.e. \(\EE\sqn{x_T - x_{*}} \leq \epsilon\)) is equal to
\begin{equation}
	\widetilde{\cO}\left( \left( 1+ \frac{\omega}{M} \right) \frac{L_{\max}}{\widetilde{\mu}} + \frac{\omega(\zeta^2_{\star} + \sigma_{\star}^2)}{M\widetilde{\mu}^2\varepsilon}+ \frac{\sigma_{\text{rad}}}{\sqrt{\widetilde{\mu}^3\varepsilon}}\right),\label{eq:qrr-complexity}
\end{equation}
where $\widetilde{\cO}(\cdot)$ hides constants and logarithmic factors.
\end{corollary}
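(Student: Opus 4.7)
The plan is to obtain the communication complexity by a standard stepsize-tuning argument applied to the bound of Theorem~\ref{th_conv_new_rr_q}. Let me set $K := nT$ for the total number of communication rounds, $r_0 := \|x_0 - x_\star\|^2$, and introduce the shorthands $c_1 := 2\omega(\zeta^2_\star + \sigma^2_\star)/(\widetilde{\mu} M)$ and $c_2 := 2\sigma^2_{\text{rad}}/\widetilde{\mu}$. Then Theorem~\ref{th_conv_new_rr_q} reads
\begin{equation*}
\EE\|x_T - x_\star\|^2 \;\leq\; (1-\gamma\widetilde{\mu})^{K}\, r_0 \;+\; \gamma\, c_1 \;+\; \gamma^2\, c_2,
\end{equation*}
valid for every $\gamma \in (0,\gamma_0]$ with $\gamma_0 := 1/((1+2\omega/M)L_{\max})$. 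So the task is to choose $\gamma$ so that the right-hand side is at most $\varepsilon$ and to count how large $K$ must then be.

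The next step is to impose that each of the three summands be bounded by $\varepsilon/3$. Using $1-\gamma\widetilde{\mu}\leq e^{-\gamma\widetilde{\mu}}$, the contraction term demands $K \geq \log(3r_0/\varepsilon)/(\gamma\widetilde{\mu})$; the linear-in-$\gamma$ term demands $\gamma \leq \varepsilon/(3c_1)$; and the quadratic-in-$\gamma$ term demands $\gamma \leq \sqrt{\varepsilon/(3c_2)}$. Combining these with the stepsize cap, the largest admissible stepsize is $\gamma^\star = \min\{\gamma_0,\; \varepsilon/(3c_1),\; \sqrt{\varepsilon/(3c_2)}\}$, and with this choice the number of rounds needed is $K = \widetilde{\cO}\bigl(1/(\gamma^\star\widetilde{\mu})\bigr)$.

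Applying the elementary bound $1/\min\{a,b,c\} \leq 1/a + 1/b + 1/c$ and substituting back the explicit values of $\gamma_0$, $c_1$, and $c_2$ yields
\begin{equation*}
K = \widetilde{\cO}\!\left(\frac{(1+2\omega/M)L_{\max}}{\widetilde{\mu}} \;+\; \frac{\omega(\zeta^2_\star + \sigma^2_\star)}{\widetilde{\mu}^{2} M\, \varepsilon} \;+\; \frac{\sigma_{\text{rad}}}{\sqrt{\widetilde{\mu}^{3}\varepsilon}}\right),
\end{equation*}
which, after noting $1+2\omega/M = \Theta(1+\omega/M)$, is precisely~\eqref{eq:qrr-complexity}.

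This is a routine stepsize-tuning calculation of the type that appears throughout the stochastic optimization literature (e.g.\ Stich's decreasing-stepsize lemma, or the tuning lemma of Khaled--Richt\'arik), so no serious obstacle is expected; the only subtlety is keeping track of the logarithmic factor arising from the contraction term and absorbing it into $\widetilde{\cO}(\cdot)$. Alternatively, one may invoke a pre-packaged tuning lemma of the form ``if $r_K \leq (1-\gamma\mu)^K r_0 + \gamma c_1 + \gamma^2 c_2$ for $\gamma \in (0,\gamma_0]$, then $K = \widetilde{\cO}(1/(\gamma_0\mu) + c_1/(\mu\varepsilon) + \sqrt{c_2}/(\mu\sqrt{\varepsilon}))$ suffices'' as a black box, which immediately produces the three additive summands in~\eqref{eq:qrr-complexity} corresponding exactly to the three competing constraints on $\gamma$.
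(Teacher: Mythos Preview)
Your proof is correct and follows essentially the same approach as the paper: bound each of the three terms in the conclusion of Theorem~\ref{th_conv_new_rr_q} by $\varepsilon/3$, take $\gamma$ to be the minimum of the resulting constraints together with the stepsize cap, and read off $nT = \widetilde{\cO}(1/(\gamma\widetilde{\mu}))$. Your write-up is in fact slightly more explicit than the paper's (you spell out the use of $1/\min\{a,b,c\}\leq 1/a+1/b+1/c$), but the argument is the same.
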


The complexity of Quantized SGD (\algname{QSGD}) is~\citep{gorbunov2020unified}: $$
	\widetilde{\cO}\left(\left(1 + \frac{\omega}{M}\right)\frac{L_{\max}}{\mu} + \frac{\left(\omega\zeta_{\star}^2 + (1+\omega)\sigma_{\star}^2\right)}{M\mu^2 \varepsilon}\right).$$
For simplicity, let us neglect the differences between $\mu$ and $\widetilde{\mu}$. First, when $\omega = 0$ we recover the complexity of \algname{FedRR} \citep{mishchenko21_proxim_feder_random_reshuf} which is known to be better than the one of \algname{SGD} as long as $\varepsilon$ is sufficently small as we have $\nicefrac{n\mu\sigma_{\star}^2}{8}\leq\sigma_{\text{rad}}^2 \leq \nicefrac{nL\sigma_{\star}^2}{4}$ from \citep{mishchenko21_proxim_feder_random_reshuf}. Next, when $M = 1$ and $\omega = 0$ (single node, no compression) our results recovers the rate of \algname{RR} \citep{mishchenko20_random_reshuf}.

However, it is more interesting to compare \algname{Q-RR} and \algname{QSGD} when $M > 1$ and $\omega > 1$, which is typically the case. In these settings, \algname{Q-RR} and \algname{QSGD} have \emph{the same complexity} since the $\cO(\nicefrac{1}{\varepsilon})$ term dominates the $\cO(\nicefrac{1}{\sqrt{\varepsilon}})$ one if $\varepsilon$ is sufficiently small. That is, the derived result for \algname{Q-RR} has no advantages over the known one for \algname{QSGD} unless $\omega$ is very small, which means that there is almost no compression at all. We also observe this phenomenon in the experiments.

The main reason for that is the variance appearing due to compression. Indeed, even if the current point is the solution of the problem ($x_t^i = x_\ast$), the update direction $-\gamma\frac{1}{M}\sum^M_{m=1} \cQ\left(\nabla f^{\pi^{i}_m}_m(x^{i}_{t})\right)$  has the compression variance
\begin{align*}
	\EE_{\cQ}\left[\left\|\frac{\gamma}{M}\sum^M_{m=1} \left(\cQ(\nabla f^{\pi^{i}_m}_m(x_{*})) - \nabla f^{\pi^{i}_m}_m(x_{*})\right)\right\|^2\right] \leq \frac{\gamma^2\omega}{M^2}\sum^M_{m=1}\| \nabla f^{\pi^{i}_m}_m(x_{*})\|^2.
\end{align*}
This upper bound is tight and non-zero in general. Moreover, it is proportional to $\gamma^2$ that creates the term proportional to $\gamma$ in \eqref{eq:QRR_bad_term} like in the convergence results for \algname{QSGD}/\algname{SGD}, while the \algname{RR}-variance is proportional to $\gamma^2$ in the same bound. Therefore, during the later stages of the convergence \algname{Q-RR} behaves similarly to \algname{QSGD} when we decrease the stepsize.

\subsection{Algorithm \algname{DIANA-RR}}

To reduce the additional variance caused by compression, we apply \algname{DIANA}-style shift sequences \citep{mishchenko19_distr_learn_with_compr_gradien_differ,horvath19_stoch_distr_learn_with_gradien}. Thus we obtain \algname{DIANA-RR} (Algorithm~\ref{alg_new_RR_DIANA}). We notice that unlike \algname{DIANA}, \algname{DIANA-RR} has $n$ shift-vectors on each node.

\begin{algorithm}[h]
\caption{\algname{DIANA-RR}}
\label{alg_new_RR_DIANA}
	\begin{algorithmic}[1]
		\Require $x_0$ -- starting point, $\{h_{0,m}^i\}_{m,i=1,1}^{M,n}$ -- initial shift-vectors, $\gamma > 0$ -- stepsize,  $\alpha > 0$ -- stepsize for learning the shifts
	    \For{$t =0,1,\dots, T-1$}
	    		    \State Receive $x_t$ from the server and set $x^0_{t,m} = x_t$
            \State Sample random permutation of $[n]$: $\pi_m = (\pi^0_m, \dots, \pi^{n-1}_m)$
		    \For{$i = 0, 1,\dots, n-1$}
		        \For{$m = 1, 2,\dots, M$ in parallel}
					\State Receive $x^i_t$ from the server, compute and send $\cQ\left(\nabla f^{\pi^{i}_m}_m(x^{i}_{t}) - h^{\pi^i_m}_{t,m}\right)$ back
    		        \State Set $\hat{g}^{\pi^i_m}_{t,m} = h^{\pi^i_m}_{t,m} + \cQ\left(\nabla f^{\pi^i_m}_m(x^i_{t,m}) - h^{\pi^i_m}_{t,m}\right)$
    		        \State Set $h^{\pi^i_m}_{t+1,m} = h^{\pi^i_m}_{t,m} + \alpha\cQ\left(\nabla f^{\pi^i_m}_m(x^i_{t,m}) - h^{\pi^i_m}_{t,m}\right)$
    		    \EndFor
    		    \State Compute $x^{i+1}_{t} = x^i_{t} - \gamma \frac{1}{M}\sum^M_{m=1}\hat{g}^{\pi^i_m}_{t,m}$ and send $x_t^{i+1}$ to the workers
    		\EndFor
    		\State $x_{t+1} = x^n_t$
    	\EndFor
    	\Ensure $x_T$
	\end{algorithmic}
\end{algorithm}

\begin{theorem}
    \label{th_conv_rr_diana}
    Let Assumptions~\ref{asm:quantization_operators}, \ref{asm:lip_max_f_m}, \ref{asm:sc_each_f_m} hold and suppose that the stepsizes satisfy $0 < \gamma \leq \min\left\{\frac{\alpha}{2n\widetilde\mu}, \frac{1}{\left( 1+\frac{6\omega}{M} \right)L_{\max}}\right\},$ and $\alpha \leq \frac{1}{1+\omega}.$
Define the following Lyapunov function for every $t \geq 0$
\begin{eqnarray}
    \Psi_{t+1} &\eqdef& \|x_{t+1}-x_{\star}\|^2  +\frac{4\omega\gamma^2}{\alpha M^2}\sum^M_{m=1}\sum^{n-1}_{j=0}(1-\gamma\mu)^j\|\Delta_{t+1,m}^j\|^2,\label{lyapunov_func_rr_diana}
\end{eqnarray}
where $\Delta_{t+1,m}^j = h^{\pi^j_m}_{t+1,m} - \nabla f^{\pi^j_m}_m(x_{\star})$
	Then, for all $T \geq 0$ the iterates produced by \algname{DIANA-RR} (Algorithm~\ref{alg_new_RR_DIANA}) satisfy
    \begin{equation*}
        \EE\left[\Psi_T\right] \leq \left(1-\gamma\widetilde{\mu}\right)^{nT}\Psi_0 +\frac{2\gamma^2\sigma^2_{\text{rad}}}{\widetilde\mu}
    \end{equation*}
\end{theorem}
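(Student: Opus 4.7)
The plan is to follow the standard DIANA-style Lyapunov argument, adapted to the random reshuffling setting of \citet{mishchenko21_proxim_feder_random_reshuf}. The main idea is to measure progress against the ``shadow'' iterate $x_\star^i$ (defined in the shuffling radius), decompose the compression noise using the shifts $h_{t,m}^{\pi_m^i}$, and choose the weights $(1-\gamma\mu)^j$ in the Lyapunov function so that each local step telescopes cleanly.

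First, I would write one local step of \algname{DIANA-RR} as
\begin{equation*}
x_t^{i+1} - x_\star^{i+1} = x_t^i - x_\star^i - \frac{\gamma}{M}\sum_{m=1}^M\bigl(\hat g_{t,m}^{\pi_m^i} - \nabla f_m^{\pi_m^i}(x_\star)\bigr),
\end{equation*}
and expand the squared norm. Taking conditional expectation over the quantizer and using $\EE[\cQ(y)]=y$ kills the cross term with $\cQ(\nabla f_m^{\pi_m^i}(x_t^i) - h_{t,m}^{\pi_m^i}) - (\nabla f_m^{\pi_m^i}(x_t^i) - h_{t,m}^{\pi_m^i})$, leaving a ``deterministic'' descent term plus the compression variance
\begin{equation*}
\frac{\gamma^2}{M^2}\sum_{m=1}^M \EE_\cQ\bigl\|\cQ\bigl(\nabla f_m^{\pi_m^i}(x_t^i)-h_{t,m}^{\pi_m^i}\bigr) - (\nabla f_m^{\pi_m^i}(x_t^i)-h_{t,m}^{\pi_m^i})\bigr\|^2 \le \frac{\gamma^2 \omega}{M^2}\sum_{m=1}^M \bigl\|\nabla f_m^{\pi_m^i}(x_t^i) - h_{t,m}^{\pi_m^i}\bigr\|^2.
\end{equation*}
I would then split this variance via $\|a-b\|^2 \le 2\|a-c\|^2 + 2\|c-b\|^2$ with $c=\nabla f_m^{\pi_m^i}(x_\star)$, so it is bounded by $2\|\Delta_{t,m}^i\|^2$ plus $2\|\nabla f_m^{\pi_m^i}(x_t^i)-\nabla f_m^{\pi_m^i}(x_\star)\|^2$; the latter is absorbed into the descent term using $L_{\max}$-smoothness plus convexity (bound by $4L_{\max}D_{f_m^{\pi_m^i}}(x_t^i,x_\star)$), which is where the stepsize condition $\gamma \le 1/((1+6\omega/M)L_{\max})$ enters. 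The strongly convex descent is handled via the standard bound on $-2\gamma\langle x_t^i-x_\star,\cdot\rangle$ together with $\widetilde\mu$-strong convexity of each $f_m^{\pi_m^i}$, producing a $(1-\gamma\widetilde\mu)$ contraction of $\|x_t^i-x_\star\|^2$ and a residual that is exactly $\gamma^2\sigmarr^2$ after invoking the shuffling-radius definition.

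Next, for the shift part of the Lyapunov function I would derive a per-step recursion for $\EE\|\Delta_{t,m}^i\|^2$. Since only the coordinate $\pi_m^i$ of the shifts is updated at local step $i$, and using $\EE\|\cQ(y)\|^2\le(1+\omega)\|y\|^2$ together with the choice $\alpha\le 1/(1+\omega)$, one gets the contraction
\begin{equation*}
\EE_\cQ\bigl\|h_{t,m}^{\pi_m^i}+\alpha\cQ(\nabla f_m^{\pi_m^i}(x_t^i)-h_{t,m}^{\pi_m^i}) - \nabla f_m^{\pi_m^i}(x_\star)\bigr\|^2 \le (1-\alpha)\|\Delta_{t,m}^i\|^2 + \alpha\|\nabla f_m^{\pi_m^i}(x_t^i)-\nabla f_m^{\pi_m^i}(x_\star)\|^2,
\end{equation*}
while shifts with index $\ne \pi_m^i$ are unchanged. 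This is the standard DIANA shift inequality and again produces a Bregman-type term that can be paired with the $\gamma/(1+6\omega/M)$ smoothness budget.

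The heart of the argument is then combining the two bounds into the Lyapunov function $\Psi_{t+1}$. The weights $(1-\gamma\mu)^j$ are chosen so that at local step $i$, when I add the scaled shift update for the single coordinate $\pi_m^i$, the $\alpha$-contraction of $\|\Delta_{t,m}^i\|^2$ together with the stepsize condition $\gamma \le \alpha/(2n\widetilde\mu)$ produces an overall factor of $(1-\gamma\widetilde\mu)$ in front of each term, matching the contraction of $\|x_t^i-x_\star\|^2$. I would then telescope the one-step inequality $\EE[\Psi_{(t,i+1)}]\le(1-\gamma\widetilde\mu)\EE[\Psi_{(t,i)}] + \gamma^2 \cdot (\text{shuffling-radius contribution at step } i)$ over $i=0,\dots,n-1$ within an epoch, use the definition of $\sigmarr^2$ to collapse the per-step radii into a single $\sigmarr^2$, and finally telescope over epochs $t=0,\dots,T-1$ to sum the geometric series $\sum_{k\ge 0}(1-\gamma\widetilde\mu)^k = 1/(\gamma\widetilde\mu)$, yielding the claimed bound.

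The main obstacle is the careful bookkeeping in step three: namely, matching the DIANA shift contraction (which updates only the coordinate $\pi_m^i$ at local step $i$) with the RR analysis (which needs the shadow iterate and the $(1-\gamma\mu)^j$ weighting). In particular, I must verify that the coefficient $\frac{4\omega\gamma^2}{\alpha M^2}$ on the shift term is large enough to absorb the compression variance produced by $\|\Delta_{t,m}^i\|^2$ in the iterate bound, and small enough that the $\alpha$-driven decay of the shifts does not outpace the $\gamma\widetilde\mu$-decay of the iterate part — this is precisely what the constraint $\gamma \le \alpha/(2n\widetilde\mu)$ enforces.
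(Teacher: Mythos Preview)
Your proposal is correct and follows essentially the same approach as the paper's proof: the same shadow-iterate expansion, the same splitting of the compression variance around $\nabla f_m^{\pi_m^i}(x_\star)$, the same DIANA shift-contraction lemma, and the same role of the two stepsize constraints. The only organizational difference is that the paper first unrolls the one-step iterate recursion over the full epoch (which is what makes the weights $(1-\gamma\widetilde\mu)^j$ appear naturally in front of the shift terms) and only then adds the shift part of the Lyapunov function at the epoch level, rather than trying to maintain a per-local-step Lyapunov inequality as you suggest; the paper's ordering sidesteps the ``only one shift is updated per local step'' bookkeeping you flag as the main obstacle.
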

\begin{corollary}
Under the same conditions as Theorem~\ref{th_conv_rr_diana} and for Algorithm~\ref{alg_new_RR_DIANA}, there exists stepsizes \(\gamma, \alpha > 0\) such that the number of communication rounds $nT$ to find a solution with accuracy $\varepsilon > 0$ is
	\begin{equation*}
		\widetilde{\cO}\left(n(1+\omega)+ \br{1 + \frac{\omega}{M}} \frac{L_{\max}}{\widetilde \mu}+\frac{\sigma_{\text{rad}}}{\sqrt{\varepsilon\widetilde{\mu}^3}}\right).
	\end{equation*}
\end{corollary}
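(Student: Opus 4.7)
The plan is to reduce the statement to a standard stepsize-tuning exercise for the recursion
\[
\EE[\Psi_T]\leq (1-\gamma\widetilde{\mu})^{nT}\Psi_0 + \frac{2\gamma^2\sigma_{\text{rad}}^2}{\widetilde{\mu}}
\]
supplied by Theorem~\ref{th_conv_rr_diana}, under the constraints $\alpha\leq\tfrac{1}{1+\omega}$ and $\gamma\leq\min\{\tfrac{\alpha}{2n\widetilde{\mu}},\ \tfrac{1}{(1+6\omega/M)L_{\max}}\}$. Since $\Psi_T\geq\|x_T-x_\star\|^2$, any $T$ making $\EE[\Psi_T]\leq\varepsilon$ also yields an $\varepsilon$-accurate iterate, so we only need to bound the right-hand side above by $\varepsilon$.

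First I would choose $\alpha$ as large as permitted, $\alpha=\tfrac{1}{1+\omega}$, which relaxes the coupling constraint on $\gamma$ to $\gamma\leq\tfrac{1}{2n(1+\omega)\widetilde{\mu}}$. Next I would set
\[
\gamma \;=\; \min\!\left\{\frac{1}{2n(1+\omega)\widetilde{\mu}},\ \frac{1}{(1+6\omega/M)L_{\max}},\ \sqrt{\frac{\varepsilon\widetilde{\mu}}{4\sigma_{\text{rad}}^2}}\right\}.
\]
With this choice, the last (noise) term satisfies $\tfrac{2\gamma^2\sigma_{\text{rad}}^2}{\widetilde{\mu}}\leq\tfrac{\varepsilon}{2}$ by construction. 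Then I would ensure that the geometric term is at most $\tfrac{\varepsilon}{2}$, which using $1-\gamma\widetilde{\mu}\leq e^{-\gamma\widetilde{\mu}}$ requires
\[
nT \;\geq\; \frac{1}{\gamma\widetilde{\mu}}\log\!\frac{2\Psi_0}{\varepsilon}.
\]

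Substituting the explicit expression for $\gamma$ into $\tfrac{1}{\gamma\widetilde{\mu}}$ and using $\max\{a,b,c\}\leq a+b+c$ yields the three summands
\[
n(1+\omega),\qquad \Bigl(1+\frac{\omega}{M}\Bigr)\frac{L_{\max}}{\widetilde{\mu}},\qquad \frac{2\sigma_{\text{rad}}}{\sqrt{\varepsilon\widetilde{\mu}^3}},
\]
matching exactly the claimed complexity once the $\log(2\Psi_0/\varepsilon)$ factor is absorbed into $\widetilde{\cO}(\cdot)$. The only place where care is needed is the classical square-root balancing of the noise term: the choice $\gamma\propto\sqrt{\varepsilon\widetilde{\mu}/\sigma_{\text{rad}}^2}$ converts a $\gamma^2$-bound into a $1/\sqrt{\varepsilon}$ complexity; the rest is bookkeeping. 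I expect no serious obstacle here, since Theorem~\ref{th_conv_rr_diana} has already absorbed the hard work of constructing the Lyapunov function $\Psi_t$; an alternative, fully equivalent route is to invoke a standard ``linear-plus-additive'' recursion lemma (as used for instance in the \algname{DIANA}/\algname{FedRR} analyses) with parameters $(a,b,c)=(\gamma\widetilde{\mu},\ 2\gamma^2\sigma_{\text{rad}}^2/\widetilde{\mu},\ \Psi_0)$ and the stepsize upper bound $\gamma_{\max}=\min\{\tfrac{1}{2n(1+\omega)\widetilde{\mu}},\ \tfrac{1}{(1+6\omega/M)L_{\max}}\}$.
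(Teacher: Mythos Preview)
Your proposal is correct and follows essentially the same route as the paper: set $\alpha=\tfrac{1}{1+\omega}$, choose $\gamma$ as the minimum of the two feasibility bounds and the noise-balancing term $\sqrt{\varepsilon\widetilde\mu}/(2\sigma_{\text{rad}})$, bound each summand of the Theorem~\ref{th_conv_rr_diana} recursion by $\varepsilon/2$, and read off $nT=\widetilde{\cO}(1/(\gamma\widetilde\mu))$. The paper's argument is identical up to constants, so there is nothing to add.
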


Unlike \algname{Q-RR}/\algname{QSGD}/\algname{DIANA}, \algname{DIANA-RR} does not have a $\widetilde\cO(\nicefrac{1}{\varepsilon})$-term, which makes it superior to \algname{Q-RR}/\algname{QSGD}/\algname{DIANA} for small enough $\varepsilon$. However, the complexity of \algname{DIANA-RR} has an additive $\widetilde{\cO}(n(1+\omega))$ term arising due to learning the shifts $\{h_{t,m}^i\}_{m\in [M], i \in [n]}$. Nevertheless, this additional term is not the dominating one when $\varepsilon$ is small enough. Next, we elaborate a bit more on the comparison between \algname{DIANA} and \algname{DIANA-R}. That is, \algname{DIANA}  has $
	\widetilde{\cO}\left(\left(1 + \frac{\omega}{M}\right)\frac{L_{\max}}{\mu} + \frac{(1+\omega)\sigma_{\star}^2}{M\mu^2 \varepsilon}\right)$
complexity \citep{gorbunov2020unified}. Neglecting the differences between $\mu$ and $\widetilde{\mu}$, $L_{\max}$ and $L_{\max}$, we observe a similar relation between \algname{DIANA-RR} and \algname{DIANA} as between \algname{RR} and \algname{SGD}: instead of the term $\cO(\nicefrac{(1+\omega)\sigma_{\star}^2}{(M\mu^2 \varepsilon)})$ appearing in the complexity of \algname{DIANA}, \algname{DIANA-RR} has $\cO(\nicefrac{\sigma_{\text{rad}}}{\sqrt{\varepsilon\widetilde{\mu}^3}})$ term much better depending on $\varepsilon$. To the best of our knowledge, our result is the only known one establishing the theoretical superiority of \algname{RR} to regular \algname{SGD} in the context of distributed learning with gradient compression. Moreover, when $\omega = 0$ (no compression) we recover the rate of \algname{FedRR} and when additionally $M = 1$ (single worker) we recover the rate of \algname{RR}.
\begin{comment}
\eduard{I am not sure whether we should compare with \algname{Q-NASTYA}/\algname{DIANA-NASTYA} here.}
\end{comment}

\subsection{Algorithm \algname{Q-NASTYA}}

By adding local steps to \algname{Q-RR}, we can do enable each client to do more local work and only communicate once per epoch rather than at each iteration of every epoch. We follow the framework of the \algname{NASTYA} algorithm~\citep{malinovsky22_server_side_steps_sampl_without} and extend it by allowing for quantization, resulting in \algname{Q-NASTYA} (Algorithm~\ref{alg:Q_NASTYA}).

\begin{algorithm}
\caption{\algname{Q-NASTYA}}
\label{alg:Q_NASTYA}
	\begin{algorithmic}[1]
		\Require $x_0$ -- starting point, $\gamma > 0$ -- local stepsize, $\eta > 0$ -- global stepsize
	    \For{$t =0,1,\dots, T-1$}
    		\For{$m \in [M]$ in parallel}
    		    \State Receive $x_t$ from the server and set $x^0_{t,m} = x_t$
    		    \State Sample random permutation of $[n]$: $\pi_m = (\pi^0_m, \dots, \pi^{n-1}_m)$
    		    \For{$i = 0, 1,\dots, n-1$}
    		        \State Set $x^{i+1}_{t,m} = x^{i}_{t,m} - \gamma \nabla f^{\pi^{i}_m}_m(x^{i}_{t,m})$
    		    \EndFor
    		    \State Compute $g_{t,m} = \frac{1}{\gamma n}\left(x_t - x^n_{t,m}\right)$ and send $\cQ_t(g_{t,m})$ to the server
    		\EndFor
    		\State Compute $g_t = \frac{1}{M}\sum_{m=1}^M\mathcal{Q}_t(g_{t,m})$
    		\State Compute $x_{t+1} = x_{t} - \eta g_t$ and send $x_{t+1}$ to the workers
    	\EndFor
    	\Ensure $x_T$
	\end{algorithmic}
\end{algorithm}

\begin{theorem}
\label{thm:convergence_Q_NASTYA}
Let Assumptions~\ref{asm:quantization_operators}, \ref{asm:sc_general_f}, \ref{asm:lip_max_f_m} hold. Let the stepsizes $\gamma$, $\eta$ satisfy $0 < \eta \leq \frac{1}{16L_{\max}\left(1+\frac{\omega}{M}\right)},$ $0 < \gamma \leq \frac{1}{5nL_{\max}}.$
Then, for all $T \geq 0$ the iterates produced by \algname{Q-NASTYA} (Algorithm~\ref{alg:Q_NASTYA}) satisfy
\begin{eqnarray*}
	\mathbb{E}\left[\|x_T-x_{\star}\|^2\right] &\leq& \left(1-\frac{\eta\mu}{2}\right)^T\|x_0 - x_{\star}\|^2 +\frac{9}{2}\frac{\gamma^2nL_{\max}}{\mu}\left((n+1)\zeta^2_{\star} + \sigma_{\star}^2\right) +  8\frac{\eta\omega}{\mu M}\zeta^2_{\star}.
\end{eqnarray*}
\end{theorem}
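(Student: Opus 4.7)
The plan is to treat the server update $x_{t+1} = x_t - \eta g_t$ as an inexact gradient descent step, where $g_t = \frac{1}{M}\sum_{m=1}^M \cQ_t(g_{t,m})$ and $g_{t,m} = \frac{1}{n}\sum_{i=0}^{n-1}\nabla f_m^{\pi_m^i}(x_{t,m}^i)$ is the averaged local random-reshuffling gradient. The overall strategy is to establish a one-step recursion of the form
\begin{equation*}
\EE\|x_{t+1}-x_\star\|^2 \leq \left(1-\tfrac{\eta\mu}{2}\right)\EE\|x_t - x_\star\|^2 + A\,\eta\tfrac{\omega}{M}\zeta_\star^2 + B\,\gamma^2 nL_{\max}\left((n+1)\zeta_\star^2 + \sigma_\star^2\right),
\end{equation*}
and then unroll it using $\sum_{t=0}^{T-1}(1-\eta\mu/2)^t \leq 2/(\eta\mu)$ to extract the stated bound.

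First I would expand $\|x_{t+1}-x_\star\|^2 = \|x_t-x_\star\|^2 - 2\eta\langle g_t, x_t-x_\star\rangle + \eta^2\|g_t\|^2$ and take conditional expectation over the quantizations $\cQ_t$. Unbiasedness and independence of $\cQ_t$ across workers (Assumption~\ref{asm:quantization_operators}) imply that the conditional mean of $g_t$ equals $\bar g_t := \frac{1}{M}\sum_m g_{t,m}$ and that $\EE[\|g_t\|^2 \mid \cdot] \leq \|\bar g_t\|^2 + \tfrac{\omega}{M^2}\sum_{m=1}^M \|g_{t,m}\|^2$. The first piece combines with the inner product via standard descent manipulations; the second is the new compression-variance contribution. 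Applying $\mu$-strong convexity of $f$ (Assumption~\ref{asm:sc_general_f}) to the cross term via $\langle\nabla f(x_t), x_t-x_\star\rangle \geq f(x_t)-f(x_\star)+\tfrac{\mu}{2}\|x_t-x_\star\|^2$ gives a preliminary recursion modulo control of $\bar g_t - \nabla f(x_t)$ and of $\sum_m \|g_{t,m}\|^2$.

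The second step is to control both quantities through a standard local-RR drift analysis in the spirit of \citep{mishchenko20_random_reshuf,malinovsky22_server_side_steps_sampl_without}. Under Assumption~\ref{asm:lip_max_f_m} and $\gamma \leq 1/(5nL_{\max})$, the local iterates $x_{t,m}^i$ stay close to $x_t$, and $g_{t,m}$ approximates $\nabla f_m(x_t)$ up to an RR-type remainder. Summing over $m$ yields bounds of the form
\begin{equation*}
\tfrac{1}{M}\sum_{m=1}^M \EE\|g_{t,m}\|^2 \leq C_1 L_{\max}\EE[f(x_t)-f(x_\star)] + C_2 \zeta_\star^2 + C_3 \gamma^2 n L_{\max}^2\bigl((n+1)\zeta_\star^2 + \sigma_\star^2\bigr),
\end{equation*}
together with a matching estimate on $\EE\langle \nabla f(x_t)-\bar g_t, x_t - x_\star\rangle$ via Young's inequality. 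Plugging these into the preliminary recursion and invoking $\|\nabla f(x_t)\|^2 \leq 2L_{\max}(f(x_t)-f(x_\star))$, the bound $\eta \leq 1/(16L_{\max}(1+\omega/M))$ is exactly what is needed to absorb \emph{all} $(f(x_t)-f(x_\star))$ terms into the strong-convexity decrement, leaving $(1-\eta\mu/2)\|x_t-x_\star\|^2$ plus purely additive noise. The $\gamma^2$ piece survives intact because the local-RR variance is inherently $\mathcal{O}(\gamma^2)$, whereas the compression piece retains a single power of $\eta$ multiplied by $\omega\zeta_\star^2/M$.

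The main obstacle is this second step: quantifying precisely how far $g_{t,m}$ drifts from $\nabla f_m(x_t)$ over the $n$ local RR iterations and extracting the clean coefficients $(n+1)\zeta_\star^2$ and $\sigma_\star^2$. This requires iterating the single-client RR descent inequality for $n$ steps with $\gamma \leq 1/(5nL_{\max})$ to control $\|x_{t,m}^i - x_t\|^2$, exploiting that the permutations are independent across clients so that no spurious cross-client variance appears, and carefully bookkeeping constants to recover the stated prefactors $9/2$ and $8$.
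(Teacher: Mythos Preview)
Your outline tracks the paper's proof closely: expand the server step, take conditional expectation over the compressors to split $g_t$ into its mean $\bar g_t$ plus compression variance, bound $\EE_{\cQ}\|g_t\|^2$ as in Lemma~\ref{lem_norm_g_t}, control the local drift $\frac{1}{Mn}\sum_{m,i}\|x^i_{t,m}-x_t\|^2$ via the RR estimate (Lemma~\ref{lem_dist} from \citep{malinovsky22_server_side_steps_sampl_without}), and unroll. So the architecture is right.

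The one genuine gap is how you handle the inner product $-2\eta\langle \bar g_t, x_t-x_\star\rangle$. Splitting $\bar g_t = \nabla f(x_t) + (\bar g_t-\nabla f(x_t))$ and then applying Young's inequality to the bias term costs you a condition-number factor. To preserve a $(1-\eta\mu/2)$ contraction you must weight Young's by $O(\mu)$, which leaves a term $\tfrac{\eta}{\mu}\|\bar g_t-\nabla f(x_t)\|^2 \leq \tfrac{\eta L_{\max}^2}{\mu}\cdot(\text{drift})$; after unrolling, the RR variance becomes $\tfrac{\gamma^2 n L_{\max}^2}{\mu^2}\bigl((n{+}1)\zeta_\star^2+\sigma_\star^2\bigr)$ rather than the claimed $\tfrac{\gamma^2 n L_{\max}}{\mu}(\cdot)$. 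You will not recover the $9/2$ prefactor this way.

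The paper sidesteps this by never isolating $\nabla f(x_t)$. Instead it applies the three-point identity directly to each summand,
\[
\big\langle \nabla f_m^{\pi_m^i}(x^i_{t,m}) - \nabla f_m^{\pi_m^i}(x_\star),\, x_t-x_\star\big\rangle
= D_{f_m^{\pi_m^i}}(x_t,x_\star) + D_{f_m^{\pi_m^i}}(x_\star,x^i_{t,m}) - D_{f_m^{\pi_m^i}}(x_t,x^i_{t,m}),
\]
drops the nonnegative middle term, and bounds the last one by $\tfrac{L_{\max}}{2}\|x_t-x^i_{t,m}\|^2$ via smoothness. This is exactly Lemma~\ref{lem_inner_product} (borrowed from \citep{malinovsky22_server_side_steps_sampl_without}) and is what produces the drift contribution with coefficient $\eta L_{\max}$ rather than $\eta L_{\max}^2/\mu$. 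Swap your Young's step for this identity and the rest of your plan goes through with the stated constants.
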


\begin{corollary}
Under the same conditions as Theorem~\ref{thm:convergence_Q_NASTYA} and for Algorithm~\ref{alg:Q_NASTYA}, there exist stepsizes \(\gamma = \nicefrac{\eta}{n}\) and \(\eta > 0\) such that the number of communication rounds $T$ to find a solution with accuracy $\varepsilon > 0$ is
\begin{equation*}
			\widetilde{\cO}\left(\frac{L_{\max}}{\mu}\left(1+\frac{\omega}{M}\right)+ \frac{\omega}{M}\frac{\zeta^2_{\star}}{\varepsilon\mu^3}+\sqrt{\frac{ L_{\max} }{\varepsilon\mu^3}} \sqrt{\zeta^2_{\star}+\frac{\sigma_{\star}^2}{n}}\right).
\end{equation*}
If $\gamma \rightarrow 0$, one can choose $\eta > 0$ such that the above complexity bound improves to
\begin{equation*}
			\widetilde{\cO}\left( \frac{L_{\max}}{\mu}\left(1+\frac{\omega}{M}\right)+\frac{\omega}{M}\frac{\zeta^2_{\star}}{\varepsilon\mu^3}\right).
		\end{equation*}
\end{corollary}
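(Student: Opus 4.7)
The corollary follows from Theorem~\ref{thm:convergence_Q_NASTYA} by a standard stepsize-tuning argument for strongly-convex recursions of the canonical form $V_T \leq (1-c\eta\mu)^T V_0 + A\eta^2 + B\eta$. The strategy is (i) to substitute $\gamma = \eta/n$ so that the theorem's bound becomes an expression in a single variable $\eta$, and (ii) to balance the three error contributions by choosing $\eta$ as the minimum of a smoothness cap, a $\sqrt{\varepsilon/A}$ cap, and an $\varepsilon/B$ cap, then reading off $T$ from the geometric-decay term.

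Concretely, setting $\gamma = \eta/n$ and bounding $(n+1)\zeta^2_\star + \sigma^2_\star \leq 2(n\zeta^2_\star + \sigma^2_\star)$ turns the middle term of Theorem~\ref{thm:convergence_Q_NASTYA} into $A\eta^2$ with $A = \Theta\big(L_{\max}(\zeta^2_\star + \sigma^2_\star/n)/\mu\big)$, while the last term is $B\eta$ with $B = \Theta\big(\omega\zeta^2_\star/(\mu M)\big)$. The constraint $\gamma \leq 1/(5nL_{\max})$ translates to $\eta \leq 1/(5L_{\max})$, which is dominated by $\eta_{\max} := 1/(16L_{\max}(1+\omega/M))$. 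Choosing $\eta = \min\{\eta_{\max},\; \sqrt{\varepsilon/(3A)},\; \varepsilon/(3B)\}$ and requiring $T \geq (2/(\eta\mu))\log(3\|x_0-x_\star\|^2/\varepsilon)$ to drive the contraction term below $\varepsilon/3$, the three regimes sum to
\begin{equation*}
T \;=\; \widetilde{\cO}\!\left(\frac{1}{\eta_{\max}\mu} \;+\; \frac{\sqrt{A}}{\mu\sqrt{\varepsilon}} \;+\; \frac{B}{\mu\varepsilon}\right),
\end{equation*}
which, after plugging in $A$, $B$ and $\eta_{\max}$, matches the first stated complexity.

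For the refinement $\gamma \to 0$, the key observation is that in \algname{Q-NASTYA} the local stepsize $\gamma$ and the global stepsize $\eta$ are independent parameters of the algorithm, and the entire $A\eta^2$ contribution originates from the prefactor $\gamma^2 n L_{\max}/\mu$ in the theorem. Keeping $\gamma$ free rather than tying it to $\eta/n$, we can drive this term to zero by shrinking $\gamma$ alone, leaving only the contraction and the $B\eta$ term in the bound. The same tuning applied to the two-term recursion yields $T = \widetilde{\cO}(1/(\eta_{\max}\mu) + B/(\mu\varepsilon))$, which expands to the claimed improved bound. No substantive obstacle is anticipated: the only care needed is constant bookkeeping through the substitution and verifying that both smoothness caps on $\gamma$ and $\eta$ are respected in each regime.
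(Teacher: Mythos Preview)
Your proposal is correct and follows essentially the same approach as the paper: substitute $\gamma=\eta/n$, choose $\eta$ as the minimum of the smoothness cap $\eta_{\max}=1/\bigl(16L_{\max}(1+\omega/M)\bigr)$, a $\sqrt{\varepsilon/A}$ cap, and an $\varepsilon/B$ cap so that each residual term is at most $\varepsilon/3$, then read off $T=\widetilde{\cO}(1/(\eta\mu))$. The refinement for $\gamma\to 0$ is handled identically---decouple $\gamma$ from $\eta$, send the $A$-term to zero, and repeat the two-term tuning---exactly as the paper does.
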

We emphasize several differences with the known theoretical  results. First, the \algname{FedCOM} method of~\citet{haddadpour20_feder_learn_with_compr} was analyzed in the homogeneous setting only, i.e., $f_m(x) = f(x)$ for all $m \in [M]$, which is an unrealistic assumption for FL applications. In contrast, our result holds in the fully heterogeneous case. Next, the analysis of \algname{FedPAQ}of ~\citet{reisizadeh19_fedpaq} uses a bounded variance assumption, which is also known to be restrictive. Nevertheless, let us compare to their result. \citet{reisizadeh19_fedpaq} derive the following complexity for their method: $$
	\widetilde{\cO}\left( \frac{L_{\max}}{\mu}\left(1+\frac{\omega}{M}\right)+\frac{\omega}{M}\frac{\sigma^2}{\mu^2\varepsilon} + \frac{\sigma^2}{M\mu^2\varepsilon}\right).
$$  This result is inferior to the one we show for \algname{Q-NASTYA}: when $\omega$ is small, the main term in the complexity bound of \algname{FedPAQ} is $\widetilde{\cO}\left(\nicefrac{1}{\varepsilon}\right)$, while for \algname{Q-NASTYA} the dominating term is of the order  $\widetilde{\cO}\left(\nicefrac{1}{\sqrt{\varepsilon}}\right)$ (when $\omega$ and $\varepsilon$ are sufficiently small). %the stochastic part of \algname{FedPAQ} is proportional to $\widetilde{\cO}\left(\nicefrac{1}{\varepsilon}\right)$. 
We also highlight that \algname{FedCRR}~\citep{malinovsky22_feder_random_reshuf_with_compr_varian_reduc} does not converge if $\omega > \nicefrac{M^2 \gamma \mu \varepsilon}{(2\left\|x_{*, m}^{n}\right\|^{2})}$, while \algname{Q-NASTYA} does for any $\omega \geq 0$. Finally, when $\omega = 0$ (no compression) we recover \algname{NASTYA} as a special case, and using $\gamma = \nicefrac{\eta}{n}$, we recover the rate of \algname{FedRR}~\citep{mishchenko21_proxim_feder_random_reshuf}.

\subsection{Algorithm \algname{DIANA-NASTYA}}
As in the case of \algname{Q-RR}, the complexity bound for \algname{Q-NASTYA} includes a $\widetilde\cO(\nicefrac{\omega}{\varepsilon})$ term, appearing due to  quantization noise. To reduce it, we apply \algname{DIANA}-style correction sequences, which leads to  a new method for which we coin the name \algname{DIANA-NASTYA} (Algorithm~\ref{alg:diana-nastya}).

\begin{algorithm}
\caption{\algname{DIANA-NASTYA}}
\label{alg:diana-nastya}
	\begin{algorithmic}[1]
		\Require $x_0$ -- starting point, $\{h_{0,m}\}_{m=1}^M$ -- initial shift-vectors, $\gamma > 0$ -- local stepsize, $\eta > 0$ -- global stepsize, $\alpha > 0$ -- stepsize for learning the shifts
	    \For{$t =0,1,\dots, T-1$}
    	   \For{$m = 1,\dots, M$ in parallel}
    		    \State Receive $x_t$ from the server and set $x^0_{t,m} = x_t$
    		    \State Sample random permutation of $[n]$: $\pi_m = (\pi^0_m, \dots, \pi^{n-1}_m)$
    		    \For{$i = 0, 1,\dots, n-1$}
    		        \State Set $x^{i+1}_{t,m} = x^{i}_{t,m} - \gamma \nabla f^{\pi^{i}_m}_m(x^{i}_{t,m})$
    		    \EndFor
    		    \State Compute $g_{t,m} = \frac{1}{\gamma n}\left(x_t - x^n_{t,m}\right)$ and send $\cQ_t\left(g_{t,m} - h_{t,m}\right)$ to the server
    		    \State Set $h_{t+1,m} = h_{t,m} + \alpha\cQ_t\left(g_{t,m} - h_{t,m}\right)$
    		    \State Set $\hat{g}_{t,m} = h_{t,m} + \cQ_t\left(g_{t,m} - h_{t,m}\right)$
    		\EndFor
			\State $h_{t+1} = \frac{1}{M}\sum^M_{m=1}h_{t+1,m} = h_t + \frac{\alpha}{M}\sum^M_{m=1} \cQ_t\left(g_{t,m} - h_{t,m}\right)$
    		\State $\hat{g}_t = \frac{1}{M}\sum^M_{m=1}\hat{g}_{t,m} = h_t + \frac{1}{M}\sum^M_{m=1}\cQ_t\left(g_{t,m} - h_{t,m}\right)$
    		\State $x_{t+1} = x_{t} - \eta \hat{g}_t$
    	\EndFor
    	\Ensure $x_T$
	\end{algorithmic}
\end{algorithm}

\begin{theorem}
    \label{thm:diana-nastya-conv}
    Let Assumptions~\ref{asm:quantization_operators}, \ref{asm:sc_general_f}, \ref{asm:lip_max_f_m} hold. Suppose the stepsizes $\gamma$, $\eta, \alpha$ satisfy $
        0 < \gamma \leq  \frac{1}{16L_{\max}n}$, $0 < \eta \leq \min\left\{\frac{\alpha}{2\mu}, \frac{1}{16L_{\max}\left(1+\frac{9\omega}{M}\right)}\right\},$ and $\alpha \leq \frac{1}{1+\omega}.$
    Define the following Lyapunov function:
    \begin{equation}
    \label{eq:diana-nastya-lyapunov-function}
    \Psi_{t+1} \eqdef \|x_{t+1}-x_{\star}\|^2 +\frac{8\omega\eta^2}{\alpha M^2}\sum^M_{m=1}\|h_{t+1,m}-h^{\star}_m\|^2.
    \end{equation}
    Then, for all $T \geq 0$ the iterates produced by \algname{DIANA-NASTYA} (Algorithm~\ref{alg:diana-nastya}) satisfy
    \begin{equation}
    \EE\left[\Psi_T\right] \leq \left(1-\frac{\eta\mu}{2}\right)^T\Psi_0 + \frac{9}{2}\frac{\gamma^2 nL}{\mu}\left((n+1)\zeta^2_{\star} + \sigma_{\star}^2\right).
    \end{equation}
\end{theorem}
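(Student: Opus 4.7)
The plan is to combine a DIANA-style analysis for the compressed server step with the NASTYA-style analysis for the $n$ local RR steps, packaged into a single-round Lyapunov inequality for $\Psi_t$, and then unroll it.

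First, I would expand the server update $x_{t+1} = x_t - \eta \hat g_t$ with $\hat g_t = \frac{1}{M}\sum_m \hat g_{t,m}$ and take the expectation over the independent per-worker compressions. Because $\EE_\cQ[\hat g_{t,m}] = g_{t,m}$ and Assumption~\ref{asm:quantization_operators} applies coordinate-wise per worker, one gets the canonical DIANA-type variance bound
$$\EE_\cQ\|\hat g_t\|^2 \leq \|g_t\|^2 + \frac{\omega}{M^2}\sum_{m=1}^M\|g_{t,m}-h_{t,m}\|^2,\qquad g_t\eqdef\tfrac{1}{M}\sum_m g_{t,m}.$$
The object $g_{t,m} = \frac{1}{\gamma n}(x_t - x_{t,m}^n)$ is identical to the one appearing in the proof of Theorem~\ref{thm:convergence_Q_NASTYA}, so I would import from there both the RR-drift bound $\frac{1}{M}\sum_m\EE\|g_{t,m} - \nabla f_m(x_t)\|^2 \leq O(\gamma^2 n L_{\max})\cdot((n+1)\zeta_\star^2 + \sigma_\star^2) + O(\gamma^2 n^2 L_{\max}^2)\|x_t - x_\star\|^2$ and the lower bound on $\langle g_t, x_t - x_\star\rangle$ coming from $\mu$-strong convexity of $f$ together with smoothness. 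Setting $h_m^\star\eqdef\nabla f_m(x_\star)$, this step produces an inequality of the form $\EE_\cQ\|x_{t+1}-x_\star\|^2 \leq (1-\eta\mu)\|x_t-x_\star\|^2 - c\,\eta \langle \nabla f(x_t), x_t-x_\star\rangle + \tfrac{\omega\eta^2}{M^2}\sum_m\|g_{t,m}-h_{t,m}\|^2 + \text{(RR drift)}$.

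Second, I would control the shift recursion $h_{t+1,m} = h_{t,m} + \alpha\cQ(g_{t,m}-h_{t,m})$. The constraint $\alpha(1+\omega)\leq 1$ gives the standard contraction $\EE_\cQ\|h_{t+1,m}-h_m^\star\|^2 \leq (1-\alpha)\|h_{t,m}-h_m^\star\|^2 + \alpha\|g_{t,m}-h_m^\star\|^2$, and I would then split $\|g_{t,m}-h_m^\star\|^2 \leq 2\|g_{t,m}-\nabla f_m(x_t)\|^2 + 2\|\nabla f_m(x_t)-\nabla f_m(x_\star)\|^2$, using cocoercivity on the last piece so that it can be merged with the cross term $\langle \nabla f(x_t), x_t - x_\star\rangle$ produced in Step~1. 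Adding this inequality, weighted by $\frac{8\omega\eta^2}{\alpha M^2}$, to the one from Step~1 and using $\|g_{t,m}-h_{t,m}\|^2 \leq 2\|g_{t,m}-h_m^\star\|^2 + 2\|h_{t,m}-h_m^\star\|^2$ causes the $\|h_{t,m}-h_m^\star\|^2$ bookkeeping to telescope into the Lyapunov function~\eqref{eq:diana-nastya-lyapunov-function}. The stepsize constraint $\eta \leq 1/(16L_{\max}(1+9\omega/M))$ is exactly what is needed to absorb $\eta^2\|g_t\|^2$ into the strongly-convex descent, and $\eta \leq \alpha/(2\mu)$ aligns the shift-contraction rate $(1-\alpha)$ with the target $x$-contraction rate $(1-\eta\mu/2)$; the RR-drift residual that survives scales as $\eta\gamma^2 n L_{\max}((n+1)\zeta_\star^2 + \sigma_\star^2)$ and is $(h, x)$-free. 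Unrolling $\EE[\Psi_{t+1}] \leq (1-\eta\mu/2)\EE[\Psi_t] + O(\eta\gamma^2 n L_{\max})\cdot((n+1)\zeta_\star^2 + \sigma_\star^2)$ via the geometric series $\sum_t(1-\eta\mu/2)^{T-1-t} \leq 2/(\eta\mu)$ produces exactly the stated constant noise term $\tfrac{9}{2}\gamma^2 n L((n+1)\zeta_\star^2+\sigma_\star^2)/\mu$.

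The main obstacle is Step~3: unlike in vanilla DIANA, the DIANA-variance term now involves $g_{t,m}$ (a noisy $n$-step surrogate of $\nabla f_m(x_t)$), not the true gradient, so the coupling between the compression bookkeeping and the RR drift is not a clean concatenation of the two analyses. The bookkeeping has to be done so that the RR-drift $\|g_{t,m}-\nabla f_m(x_t)\|^2$ contributions entering through $\|g_{t,m}-h_m^\star\|^2$ do not blow up the Lyapunov noise by factors depending on $\omega$, and this is precisely what pins down the constants $8$ in \eqref{eq:diana-nastya-lyapunov-function} and $1+9\omega/M$ in the $\eta$ upper bound.
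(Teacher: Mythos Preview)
Your proposal is correct and follows essentially the same structure as the paper's proof: Lemmas~\ref{rr_diana_inner_product}, \ref{rr_diana_dinst}, \ref{rr_diana_descent_lemma_for_h} implement exactly the three steps you outline, and Lemma~\ref{lem_dist} from the \algname{Q-NASTYA} analysis supplies the RR-drift bound. The only cosmetic difference is that the paper carries the descent quantity as $f(x_t)-f(x_\star)$ (via the three-point Bregman identity) rather than as $\langle \nabla f(x_t), x_t - x_\star\rangle$ or $\|x_t-x_\star\|^2$, which makes the cancellation of the positive drift/variance terms against the negative inner-product term cleaner and directly yields the stated constants.
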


\begin{corollary}
Under the same conditions as Theorem~\ref{thm:diana-nastya-conv} and for Algorithm~\ref{alg:diana-nastya}, there exist stepsizes \(\gamma = \nicefrac{\eta}{n}\), \(\eta > 0\), \(\alpha > 0\) such that the number of communication rounds $T$ to find a solution with accuracy $\varepsilon > 0$ is
\begin{equation*}
			\widetilde{\cO}\left(\omega + \frac{L_{\max}}{\mu}\left(1+\frac{\omega}{M}\right)+\sqrt{\frac{ L_{\max} }{\varepsilon\mu^3}} \sqrt{\zeta^2_{\star}+\frac{\sigma_{\star}^2}{n}}\right).
\end{equation*}
If $\gamma \rightarrow 0$, one can choose $\eta > 0$ such that the above complexity bound improves to
\begin{equation*}
			\widetilde{\cO}\left(\omega + \frac{L_{\max}}{\mu}\left(1+\frac{\omega}{M}\right)\right).
		\end{equation*}		
\end{corollary}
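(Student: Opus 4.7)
The plan is to reduce Theorem~\ref{thm:diana-nastya-conv} to a standard step-size tuning lemma of the form ``balance a linearly decaying term with an $\eta^{2}$-sized noise floor''. Concretely, I would first pick $\gamma = \eta/n$, which absorbs the constraint $\gamma \leq 1/(16 L_{\max} n)$ into the (weaker) condition $\eta \leq 1/(16 L_{\max})$. With this substitution the additive residual in Theorem~\ref{thm:diana-nastya-conv} becomes
\[
\tfrac{9}{2}\tfrac{\gamma^{2} n L_{\max}}{\mu}\bigl((n{+}1)\zeta_{\star}^{2} + \sigma_{\star}^{2}\bigr) \;\leq\; \tfrac{9 L_{\max}}{\mu}\,\eta^{2}\Bigl(\zeta_{\star}^{2} + \tfrac{\sigma_{\star}^{2}}{n}\Bigr) \;=:\; C\,\eta^{2}.
\]

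Next I would maximize the allowed $\alpha$ by setting $\alpha = 1/(1+\omega)$, so the condition $\eta \leq \alpha/(2\mu)$ turns into $\eta \leq 1/(2\mu(1+\omega))$. Collecting all constraints, $\eta$ must satisfy $\eta \leq \eta_{\max}$ with
\[
\eta_{\max} \;\eqdef\; \min\!\left\{\tfrac{1}{2\mu(1+\omega)},\;\tfrac{1}{16 L_{\max}(1+9\omega/M)}\right\},
\qquad \tfrac{1}{\eta_{\max}\mu} \;=\; \cO\!\left(\omega + \tfrac{L_{\max}}{\mu}\bigl(1+\tfrac{\omega}{M}\bigr)\right).
\]
The recursion $\EE[\Psi_{T}] \leq (1-\eta\mu/2)^{T}\Psi_{0} + C\eta^{2}$ is exactly the form handled by the standard tuning lemma (e.g.\ \citet{stich19_unified_optim_analy_stoch_gradien_method}): with the choice $\eta = \min\{\eta_{\max},\sqrt{\varepsilon/(2C)}\}$ and $T = \lceil (2/(\eta\mu))\log(2\Psi_{0}/\varepsilon)\rceil$ one obtains $\EE[\Psi_{T}]\leq\varepsilon$, and hence
\[
T \;=\; \widetilde{\cO}\!\left(\tfrac{1}{\eta_{\max}\mu} + \tfrac{1}{\mu}\sqrt{\tfrac{C}{\varepsilon}}\right) \;=\; \widetilde{\cO}\!\left(\omega + \tfrac{L_{\max}}{\mu}\bigl(1+\tfrac{\omega}{M}\bigr) + \sqrt{\tfrac{L_{\max}}{\varepsilon\mu^{3}}}\sqrt{\zeta_{\star}^{2}+\tfrac{\sigma_{\star}^{2}}{n}}\right),
\]
which is the first claimed bound.

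For the second bound, I would argue that the additive term in Theorem~\ref{thm:diana-nastya-conv} is proportional to $\gamma^{2}$ and independent of $\eta$, so letting $\gamma \downarrow 0$ (while keeping $\eta$ fixed at $\eta_{\max}$ and $\alpha = 1/(1+\omega)$) makes it vanish. The recursion collapses to the purely geometric bound $\EE[\Psi_{T}] \leq (1-\eta\mu/2)^{T}\Psi_{0}$, so $T = \lceil (2/(\eta_{\max}\mu))\log(\Psi_{0}/\varepsilon)\rceil$ suffices, giving $\widetilde{\cO}\bigl(\omega + (L_{\max}/\mu)(1+\omega/M)\bigr)$.

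The only genuinely delicate step is bookkeeping around $\eta_{\max}$: one must check that the two bounds defining $\eta_{\max}$ each contribute a separate summand to $1/(\eta_{\max}\mu)$ (hence the additive $\omega$ term from the $\alpha/(2\mu)$ constraint, and the $(L_{\max}/\mu)(1+\omega/M)$ term from the smoothness constraint), and that $\max\{a,b\} \leq a+b$ is invoked to split them. Everything else is a routine application of the tuning lemma; no new analytic difficulty appears beyond what is already in Theorem~\ref{thm:diana-nastya-conv}.
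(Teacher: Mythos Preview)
Your proposal is correct and follows essentially the same approach as the paper's proof: set $\gamma=\eta/n$, $\alpha=1/(1+\omega)$, rewrite the residual as $C\eta^{2}$, choose $\eta$ as the minimum of the feasibility bound $\eta_{\max}$ and a noise-balancing term $\Theta(\sqrt{\varepsilon/C})$, and read off $T=\widetilde{\cO}(1/(\eta\mu))$; for the second part, send $\gamma\to 0$ while keeping $\eta=\eta_{\max}$. The only cosmetic difference is that the paper bounds each of the two terms in the theorem's conclusion by $\varepsilon/2$ directly rather than invoking a named tuning lemma, but the arithmetic and the resulting stepsize choice are the same.
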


In contrast to \algname{Q-NASTYA}, \algname{DIANA-NASTYA} does not suffer from the $\widetilde{\cO}(\nicefrac{1}{\varepsilon})$ term in the complexity bound. This shows the superiority of \algname{DIANA-NASTYA} to \algname{Q-NASTYA}. Next, \algname{FedCRR-VR}~\citep{malinovsky22_feder_random_reshuf_with_compr_varian_reduc} has the rate $$
	\widetilde{\mathcal{O}}\left(\frac{(\omega+1)\left(1-\frac{1}{\kappa}\right)^{n}}{\left(1-\left(1-\frac{1}{\kappa}\right)^{n}\right)^{2}}+\frac{\sqrt{\kappa}\left(\zeta_{\star}+ \sigma_{\star}\right)}{\mu \sqrt{\varepsilon}}\right),$$ which depends on $\widetilde{\cO}\left(\nicefrac{1}{\sqrt{\varepsilon}}\right)$. However, the first term is close to $\widetilde{\cO}\left((\omega+1)\kappa^2\right)$ for a large condition number. \algname{FedCRR-VR-2} utilizes variance reduction technique from~\citet{malinovsky2021random} and it allows to get rid of permutation variance. This method has $$\widetilde{\cO}\left(\frac{(\omega+1)\left(1-\frac{1}{\kappa \sqrt{\kappa n}}\right)^{\frac{n}{2}}}{\left(1-\left(1-\frac{1}{\kappa \sqrt{\kappa n}}\right)^{\frac{n}{2}}\right)^{2}}+\frac{\sqrt{\kappa}\zeta_{\star}}{\mu \sqrt{\varepsilon}}\right) $$ complexity, but it requires additional assumption on number of functions $n$ and thus not directly comparable with our result. Note that if we have no compression $(\omega = 0)$, \algname{DIANA-NASTYA} recovers rate of \algname{NASTYA}.

\section{Experiments}\label{sec:experiments}

We tested our methods on solving a logistic regression problem and in training neural networks.

\subsection{Logistic Regression}

To confirm our theoretical results we conducted several numerical experiments on binary classification problem with L2 regularized logistic regression of the form
\begin{eqnarray}\label{eq:log-reg}
%\squeeze
\min _{x \in \mathbb{R}^{d}}\left[f(x) \eqdef \frac{1}{M} \sum_{m=1}^{M} \frac{1}{n_m} \sum_{i=1}^{n_m} f_{m,i} \right] \text {, }
\end{eqnarray}
where $f_{m,i}\eqdef \log \left(1+\exp({-y_{mi} a_{mi}^{\top} x})\right)+\lambda \|x\|^2_2$ $(a_{mi},  y_{mi}) \in \mathbb{R}^{d} \times \in\{-1,1\}, i =1,\dots,n_m$ are the training data samples stored on machines $m =1,\dots,M$, and $\lambda>0$ is a regularization parameter. In all experiments, for each method, we used the largest stepsize allowed by its theory multiplied by some individually tuned constant multiplier. For better parallelism, each worker $m$ uses mini-batches of size $\approx 0.1 n_m$. In all algorithms, as a compression operator $\cQ$, we use Rand-$k$ \citep{beznosikov20_biased_compr_distr_learn}  with fixed compression ratio $\nicefrac{k}{d} \approx 0.02$, where $d$ is the number of features in the dataset. 
We provide more details on experimental setups, hardware and datasets in Appendix~\ref{subsec:extra_exp}.

\begin{figure*}[t]
	\centering
	\begin{subfigure}[t]{\textwidth}
		\includegraphics[width=\linewidth]{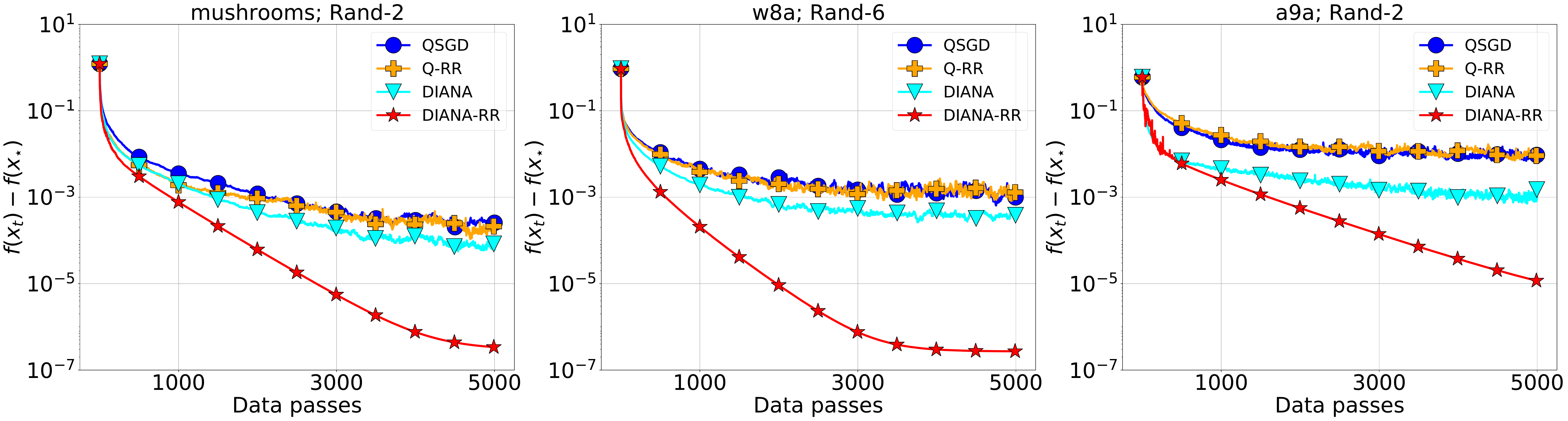} 
		\caption{Non-local methods}\label{fig:non_local}
	\end{subfigure}
	\hfill
	\begin{subfigure}[t]{\textwidth}
		\includegraphics[width=\linewidth]{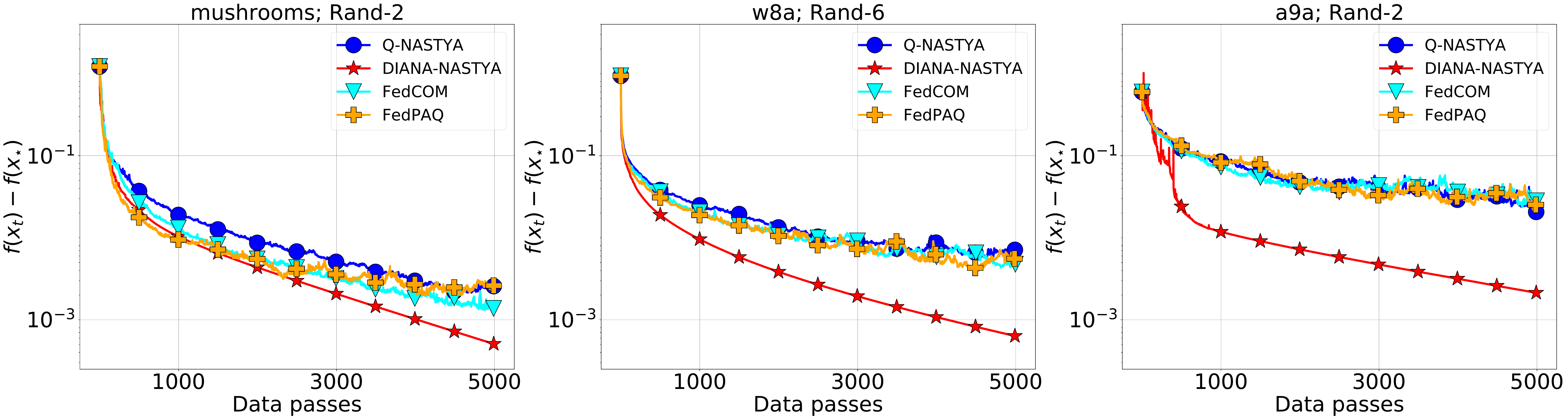} 
		\caption{Local methods}\label{fig:local}
	\end{subfigure}
	\caption{The comparison of the four proposed methods (\algname{Q-NASTYA}, \algname{DIANA-NASTYA}, \algname{Q-RR}, \algname{DIANA-RR}) and existing baselines (\algname{FedCOM}, \algname{FedPAQ}) with tuned stepsizes and Rand-$k$ compressor.}\label{fig:tuned experimets}
\end{figure*}

\paragraph{Experiment 1: Comparison of the proposed non-local methods with existing baselines.} 
In our first experiment (see Figure~\ref{fig:non_local}), we compare \algname{Q-RR} and \algname{DIANA-RR} with corresponding classical baselines (\algname{QSGD} \citep{alistarh17_qsgd}, \algname{DIANA} \citep{mishchenko19_distr_learn_with_compr_gradien_differ}) that use a with-replacement mini-batch \algname{SGD} estimator. Figure~\ref{fig:non_local} illustrates that \algname{Q-RR} experiences similar behavior as \algname{QSGD} both losing in speed to \algname{DIANA} method in all considered datasets. However, \algname{DIANA-RR} shows the best rate among all considered non-local methods, efficiently reducing the variance, and achieving the lowest functional sub-optimality tolerance. The results observed in numerical experiments are in perfect correspondence with the derived theory.

\paragraph{Experiment 2: Comparison of the proposed local methods with existing baselines.}
The second experiment shows that \algname{DIANA}-based method can significantly outperform in practice when one applies it to local methods as well. In particular, whereas \algname{Q-NASTYA} shows comparative behavior as existing methods \algname{FedCOM} \citep{haddadpour20_feder_learn_with_compr}, \algname{FedPAQ} \citep{reisizadeh19_fedpaq} in all considered datasets, \algname{DIANA-NASTYA} noticeably outperforms other methods.

\subsection{Training \texttt{ResNet-18} on \texttt{CIFAR10}}
Since random reshuffling is a very popular technique in training neural networks, it is natural to test the proposed methods on such problems. Therefore, in the second set of experiments, we consider training \texttt{ResNet-18}~\citep{resnet18_56} model on the \texttt{CIFAR10} dataset \cite{krizhevsky2009learning}. To conduct these experiments we use \texttt{FL\_PyTorch} simulator \citep{burlachenko2021fl_pytorch}. Further technical details are deferred to Appendix~\ref{subsec:extra_exp}.

\paragraph{Experiment 3: Comparison of the proposed non-local methods in training \texttt{ResNet-18} on \texttt{CIFAR10}.} The main goal of this experiment is to verify the phenomenon observed in Experiment 1 on the training of a deep neural network. That is, we tested \algname{Q-RR}, \algname{QSGD}, \algname{DIANA}, and \algname{DIANA-RR} in the distributed training of \texttt{ResNet-18} on \texttt{CIFAR10}, see Figure~\ref{fig:NN_plots_main}. As in the logistic regression experiments, we observe that (i) \algname{Q-RR} and \algname{QSGD} behave similarly and (ii) \algname{DIANA-RR} outperforms \algname{DIANA}.

\begin{figure}[h]
	\centering
	\captionsetup[sub]{font=small,labelfont={}}	
	\begin{subfigure}[ht]{0.49\columnwidth}
		\includegraphics[width=\textwidth]{./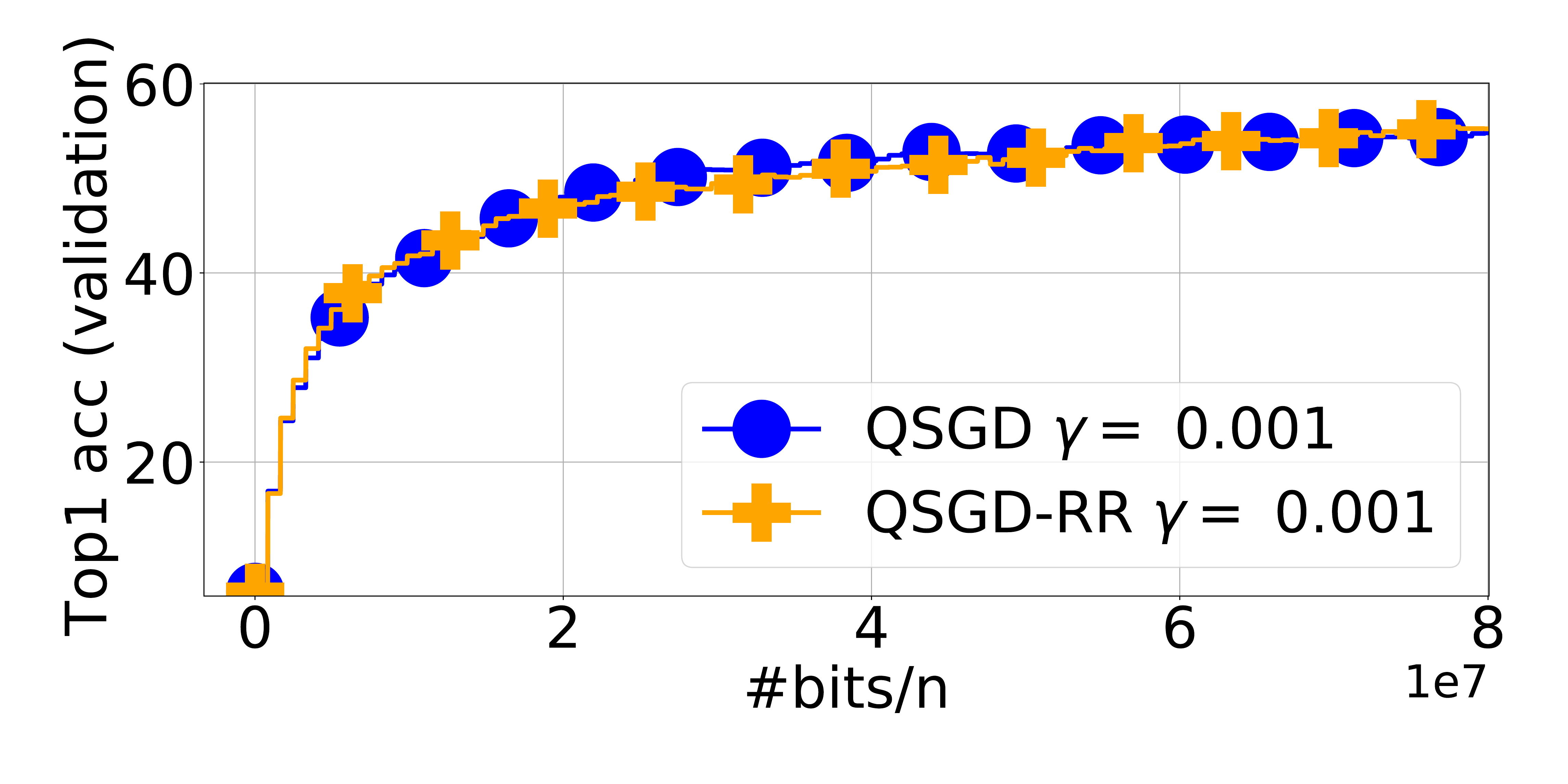} \caption{\algname{Q-RR} vs \algname{QSGD}}
	\end{subfigure}
	\begin{subfigure}[ht]{0.49\columnwidth}
		\includegraphics[width=\textwidth]{./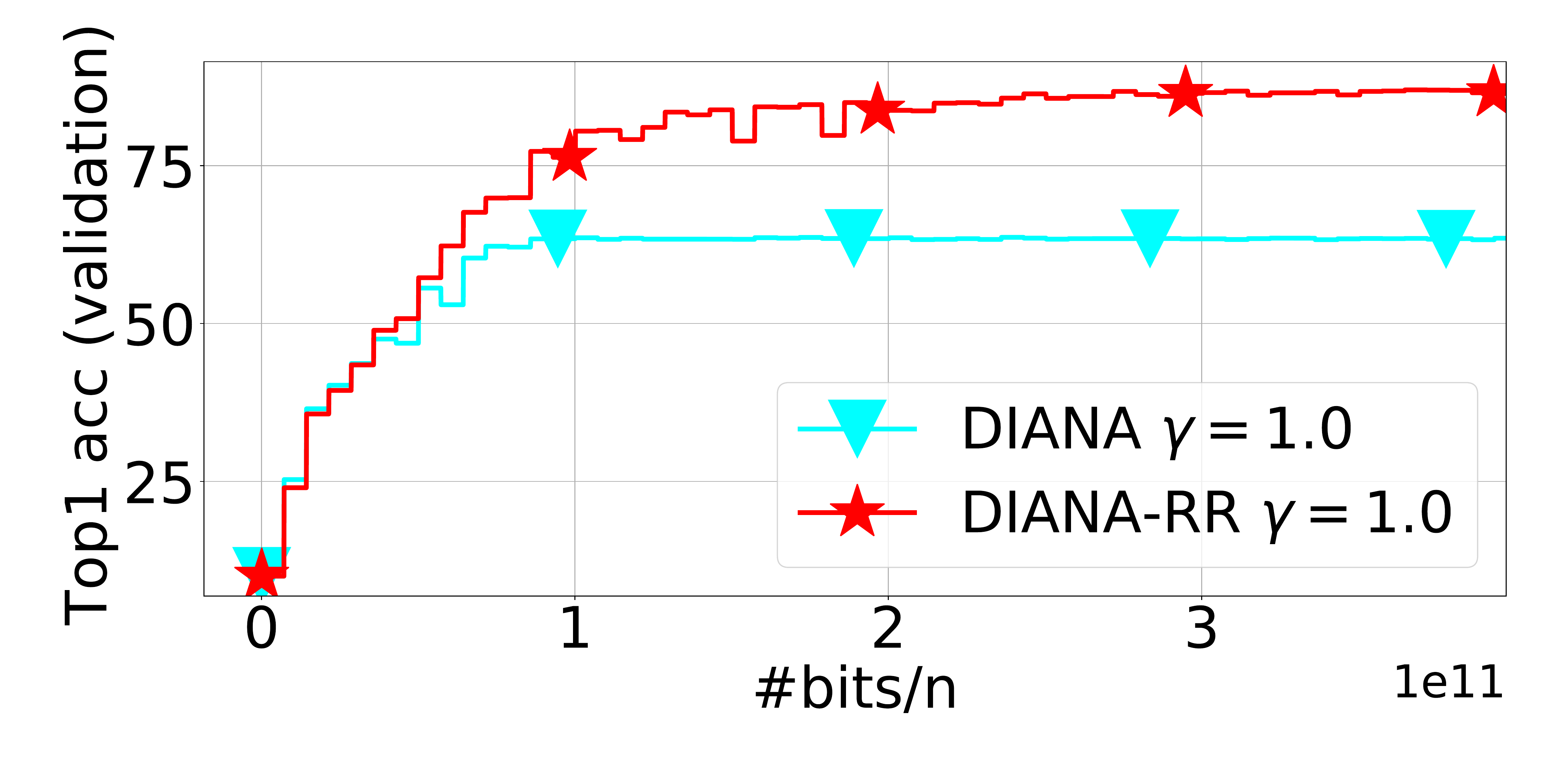} \caption{\algname{DIANA-RR} vs \algname{DIANA}}
	\end{subfigure}
	\caption{The comparison of \algname{Q-RR}, \algname{QSGD}, \algname{DIANA}, and \algname{DIANA-RR} on the task of training \texttt{ResNet-18} on \texttt{CIFAR-10} with $n=10$ workers. Top-1 accuracy on test set is reported. Stepsizes were tuned and workers used Rand-$k$ compressor with $\nicefrac{k}{d} \approx 0.05$. Further details and additional experiments are provided in Appendix~\ref{subsec:extra_exp}.}
	\label{fig:NN_plots_main}
\end{figure}

\section*{Acknowledgements}
The work of A.~Sadiev and E.~Gorbunov was partially supported by a grant for research centers in the field of artificial intelligence, provided by the Analytical Center for the Government of the Russian Federation in accordance with the subsidy agreement (agreement identifier 000000D730321P5Q0002) and the agreement with the Moscow Institute of Physics and Technology dated November 1, 2021 No. 70-2021-00138.

\bibliography{main}

\begin{thebibliography}{}

\bibitem[Ahn et~al., 2020]{ahn20_sgd_with_shuff}
Ahn, K., Yun, C., and Sra, S. (2020).
\newblock {SGD} with shuffling: optimal rates without component convexity and
  large epoch requirements.
\newblock In Larochelle, H., Ranzato, M., Hadsell, R., Balcan, M., and Lin, H.,
  editors, {\em Advances in Neural Information Processing Systems 33: Annual
  Conference on Neural Information Processing Systems 2020, NeurIPS 2020,
  December 6-12, 2020, virtual}.

\bibitem[Alistarh et~al., 2017]{alistarh17_qsgd}
Alistarh, D., Grubic, D., Li, J.~Z., Tomioka, R., and Vojnovic, M. (2017).
\newblock {QSGD}: Communication-efficient {SGD} via gradient quantization and
  encoding.
\newblock In {\em Proceedings of the 31st International Conference on Neural
  Information Processing Systems}, NIPS'17, page 1707–1718, Red Hook, NY,
  USA. Curran Associates Inc.

\bibitem[Antunes et~al., 2022]{antunes22_fed_learn_healthcare}
Antunes, R.~S., André~da Costa, C., Küderle, A., Yari, I.~A., and Eskofier,
  B. (2022).
\newblock Federated learning for healthcare: Systematic review and architecture
  proposal.
\newblock {\em ACM Trans. Intell. Syst. Technol.}, 13(4).

\bibitem[Basu et~al., 2019]{Basu19_qsparse_local_sgd}
Basu, D., Data, D., Karakus, C., and Diggavi, S. (2019).
\newblock {Q}sparse-local-{SGD}: Distributed {SGD} with quantization,
  sparsification and local computations.
\newblock volume~32.

\bibitem[Beznosikov et~al., 2020]{beznosikov20_biased_compr_distr_learn}
Beznosikov, A., Horv{á}th, S., Richt{á}rik, P., and Safaryan, M. (2020).
\newblock On biased compression for distributed learning.
\newblock {\em arXiv preprint arXiv:2002.12410}, abs/2002.12410.

\bibitem[Burlachenko et~al., 2021]{burlachenko2021fl_pytorch}
Burlachenko, K., Horv{\'a}th, S., and Richt{\'a}rik, P. (2021).
\newblock Fl\_pytorch: optimization research simulator for federated learning.
\newblock In {\em Proceedings of the 2nd ACM International Workshop on
  Distributed Machine Learning}, pages 1--7.

\bibitem[Chang and Lin, 2011]{chang2011libsvm}
Chang, C.-C. and Lin, C.-J. (2011).
\newblock {LIBSVM}: a library for support vector machines.
\newblock {\em {ACM} {T}ransactions on {I}ntelligent {S}ystems and {T}echnology
  (TIST)}, 2(3):1--27.

\bibitem[Glasgow et~al., 2022]{glasgow21_sharp_bound_feder_averag_local}
Glasgow, M.~R., Yuan, H., and Ma, T. (2022).
\newblock Sharp bounds for federated averaging (local {SGD}) and continuous
  perspective.
\newblock In Camps-Valls, G., Ruiz, F. J.~R., and Valera, I., editors, {\em
  Proceedings of The 25th International Conference on Artificial Intelligence
  and Statistics}, volume 151 of {\em Proceedings of Machine Learning
  Research}, pages 9050--9090. PMLR.

\bibitem[Gorbunov et~al., 2020]{gorbunov2020unified}
Gorbunov, E., Hanzely, F., and Richt{\'a}rik, P. (2020).
\newblock A unified theory of {SGD}: variance reduction, sampling, quantization
  and coordinate descent.
\newblock In {\em International Conference on Artificial Intelligence and
  Statistics}, pages 680--690. PMLR.

\bibitem[Gorbunov et~al., 2021]{gorbunov21_local_sgd}
Gorbunov, E., Hanzely, F., and Richt{á}rik, P. (2021).
\newblock Local {SGD}: Unified theory and new efficient methods.
\newblock In Banerjee, A. and Fukumizu, K., editors, {\em Proceedings of The
  24th International Conference on Artificial Intelligence and Statistics},
  volume 130 of {\em Proceedings of Machine Learning Research}, pages
  3556--3564. PMLR.

\bibitem[Haddadpour et~al., 2021]{haddadpour20_feder_learn_with_compr}
Haddadpour, F., Kamani, M.~M., Mokhtari, A., and Mahdavi, M. (2021).
\newblock Federated learning with compression: Unified analysis and sharp
  guarantees.
\newblock In Banerjee, A. and Fukumizu, K., editors, {\em The 24th
  International Conference on Artificial Intelligence and Statistics, {AISTATS}
  2021, April 13-15, 2021, Virtual Event}, volume 130 of {\em Proceedings of
  Machine Learning Research}, pages 2350--2358. PMLR.

\bibitem[He et~al., 2016]{resnet18_56}
He, K. et~al. (2016).
\newblock Deep residual learning for image recognition.
\newblock In {\em {CVPR}}, pages 770--778.

\bibitem[He et~al., 2015]{he2015delving}
He, K., Zhang, X., Ren, S., and Sun, J. (2015).
\newblock Delving deep into rectifiers: Surpassing human-level performance on
  imagenet classification.
\newblock In {\em Proceedings of the IEEE international conference on computer
  vision}, pages 1026--1034.

\bibitem[Horváth et~al., 2019]{horvath19_stoch_distr_learn_with_gradien}
Horváth, S., Kovalev, D., Mishchenko, K., Stich, S., and Richtárik, P.
  (2019).
\newblock Stochastic distributed learning with gradient quantization and
  variance reduction.
\newblock {\em arXiv preprint arXiv:1904.05115}, abs/1904.05115.

\bibitem[Horváth et~al., 2022]{horvath22_fedsh}
Horváth, S., Sanjabi, M., Xiao, L., Richtárik, P., and Rabbat, M. (2022).
\newblock {FedShuffle}: Recipes for better use of local work in federated
  learning.
\newblock {\em arXiv preprint arXiv:2204.13169}, abs/2204.13169.

\bibitem[Huang et~al., 2021]{huang21_distr_random_reshuf_over_networ}
Huang, K., Li, X., Milzarek, A., Pu, S., and Qiu, J. (2021).
\newblock Distributed random reshuffling over networks.
\newblock {\em arXiv preprint arXiv:2112.15287}, abs/2112.15287.

\bibitem[Ioffe and Szegedy, 2015]{ioffe2015batch}
Ioffe, S. and Szegedy, C. (2015).
\newblock Batch normalization: Accelerating deep network training by reducing
  internal covariate shift.
\newblock In {\em International conference on machine learning}, pages
  448--456. PMLR.

\bibitem[Kairouz et~al., 2019]{kairouz19_advan_open_probl_feder_learn}
Kairouz, P., McMahan, H.~B., Avent, B., Bellet, A., Bennis, M., Bhagoji, A.~N.,
  Bonawitz, K., Charles, Z., Cormode, G., Cummings, R., D'Oliveira, R. G.~L.,
  Eichner, H., Rouayheb, S.~E., Evans, D., Gardner, J., Garrett, Z., Gascón,
  A., Ghazi, B., Gibbons, P.~B., Gruteser, M., Harchaoui, Z., He, C., He, L.,
  Huo, Z., Hutchinson, B., Hsu, J., Jaggi, M., Javidi, T., Joshi, G., Khodak,
  M., Konečný, J., Korolova, A., Koushanfar, F., Koyejo, S., Lepoint, T.,
  Liu, Y., Mittal, P., Mohri, M., Nock, R., Özgür, A., Pagh, R., Raykova, M.,
  Qi, H., Ramage, D., Raskar, R., Song, D., Song, W., Stich, S.~U., Sun, Z.,
  Suresh, A.~T., Tramèr, F., Vepakomma, P., Wang, J., Xiong, L., Xu, Z., Yang,
  Q., Yu, F.~X., Yu, H., and Zhao, S. (2019).
\newblock Advances and open problems in federated learning.
\newblock {\em arXiv preprint arXiv:1912.04977}, abs/1912.04977.

\bibitem[Karimireddy et~al., 2020]{karimireddy19_scaff}
Karimireddy, S.~P., Kale, S., Mohri, M., Reddi, S.~J., Stich, S.~U., and
  Suresh, A.~T. (2020).
\newblock {SCAFFOLD:} stochastic controlled averaging for federated learning.
\newblock In {\em Proceedings of the 37th International Conference on Machine
  Learning, {ICML} 2020, 13-18 July 2020, Virtual Event}, volume 119 of {\em
  Proceedings of Machine Learning Research}, pages 5132--5143. PMLR.

\bibitem[Khaled et~al., 2020]{khaled19_tight_theor_local_sgd_ident_heter_data}
Khaled, A., Mishchenko, K., and Richtárik, P. (2020).
\newblock Tighter theory for local {SGD} on identical and heterogeneous data.
\newblock In Chiappa, S. and Calandra, R., editors, {\em The 23rd International
  Conference on Artificial Intelligence and Statistics, {AISTATS} 2020, 26-28
  August 2020, Online [Palermo, Sicily, Italy]}, volume 108 of {\em Proceedings
  of Machine Learning Research}, pages 4519--4529. PMLR.

\bibitem[Khirirat et~al., 2018]{khirirat18_distr_learn_with_compr_gradien}
Khirirat, S., Feyzmahdavian, H.~R., and Johansson, M. (2018).
\newblock Distributed learning with compressed gradients.
\newblock {\em arXiv preprint arXiv:1806.06573}, abs/1806.06573.

\bibitem[Kone\v{c}n\'{y} et~al., 2016]{FEDLEARN}
Kone\v{c}n\'{y}, J., McMahan, H.~B., Yu, F., Richt\'{a}rik, P., Suresh, A.~T.,
  and Bacon, D. (2016).
\newblock Federated learning: strategies for improving communication
  efficiency.
\newblock In {\em NIPS Private Multi-Party Machine Learning Workshop}.

\bibitem[Krizhevsky and Hinton, 2009]{krizhevsky2009learning}
Krizhevsky, A. and Hinton, G. (2009).
\newblock Learning multiple layers of features from tiny images.
\newblock Technical Report~0, University of Toronto, Toronto, Ontario.

\bibitem[Liu et~al., 2021]{liu21_feder_learn_meets_natur_languag_proces}
Liu, M., Ho, S., Wang, M., Gao, L., Jin, Y., and Zhang, H. (2021).
\newblock Federated learning meets natural language processing: a survey.
\newblock {\em arXiv preprint arXiv:2107.12603}, abs/2107.12603.

\bibitem[Malinovskiy et~al., 2020]{malinovskiy2020local}
Malinovskiy, G., Kovalev, D., Gasanov, E., Condat, L., and Richtarik, P.
  (2020).
\newblock From local {SGD} to local fixed-point methods for federated learning.
\newblock In {\em International Conference on Machine Learning}, pages
  6692--6701. PMLR.

\bibitem[Malinovsky et~al., 2022]{malinovsky22_server_side_steps_sampl_without}
Malinovsky, G., Mishchenko, K., and Richt{á}rik, P. (2022).
\newblock Server-side stepsizes and sampling without replacement provably help
  in federated optimization.
\newblock {\em arXiv preprint arXiv:2201.11066}, abs/2201.11066.

\bibitem[Malinovsky and Richtárik,
  2022]{malinovsky22_feder_random_reshuf_with_compr_varian_reduc}
Malinovsky, G. and Richtárik, P. (2022).
\newblock Federated random reshuffling with compression and variance reduction.
\newblock {\em arXiv preprint arXiv:2205.03914}, abs/2205.03914.

\bibitem[Malinovsky et~al., 2021]{malinovsky2021random}
Malinovsky, G., Sailanbayev, A., and Richt{\'a}rik, P. (2021).
\newblock Random reshuffling with variance reduction: New analysis and better
  rates.
\newblock {\em arXiv preprint arXiv:2104.09342}.

\bibitem[McMahan et~al., 2017]{mcmahan2017communication}
McMahan, B., Moore, E., Ramage, D., Hampson, S., and y~Arcas, B.~A. (2017).
\newblock Communication-efficient learning of deep networks from decentralized
  data.
\newblock In {\em Artificial intelligence and statistics}, pages 1273--1282.
  PMLR.

\bibitem[Mishchenko et~al., 2019a]{mishchenko2019distributed}
Mishchenko, K., Gorbunov, E., Tak{\'a}{\v{c}}, M., and Richt{\'a}rik, P.
  (2019a).
\newblock Distributed learning with compressed gradient differences.
\newblock {\em arXiv preprint arXiv:1901.09269}.

\bibitem[Mishchenko et~al.,
  2019b]{mishchenko19_distr_learn_with_compr_gradien_differ}
Mishchenko, K., Gorbunov, E., Takác, M., and Richtárik, P. (2019b).
\newblock Distributed learning with compressed gradient differences.
\newblock {\em arXiv preprint arXiv:1901.09269}, abs/1901.09269.

\bibitem[Mishchenko et~al., 2020]{mishchenko20_random_reshuf}
Mishchenko, K., Khaled, A., and Richtárik, P. (2020).
\newblock Random reshuffling: Simple analysis with vast improvements.
\newblock In Larochelle, H., Ranzato, M., Hadsell, R., Balcan, M., and Lin, H.,
  editors, {\em Advances in Neural Information Processing Systems 33: Annual
  Conference on Neural Information Processing Systems 2020, NeurIPS 2020,
  December 6-12, 2020, virtual}.

\bibitem[Mishchenko et~al., 2021]{mishchenko21_proxim_feder_random_reshuf}
Mishchenko, K., Khaled, A., and Richtárik, P. (2021).
\newblock Proximal and federated random reshuffling.
\newblock {\em arXiv preprint arXiv:2102.06704}, abs/2102.06704.

\bibitem[Mishchenko et~al., 2022]{mishchenko22_proxs}
Mishchenko, K., Malinovsky, G., Stich, S., and Richt{á}rik, P. (2022).
\newblock {ProxSkip}: Yes! local gradient steps provably lead to communication
  acceleration! finally!
\newblock {\em arXiv preprint arXiv:2202.09357}, abs/2202.09357.

\bibitem[Mitra et~al., 2021]{mitra21_lin_converg_fl}
Mitra, A., Jaafar, R., Pappas, G.~J., and Hassani, H. (2021).
\newblock Linear convergence in federated learning: Tackling client
  heterogeneity and sparse gradients.
\newblock In Ranzato, M., Beygelzimer, A., Dauphin, Y., Liang, P., and Vaughan,
  J.~W., editors, {\em Advances in Neural Information Processing Systems},
  volume~34, pages 14606--14619. Curran Associates, Inc.

\bibitem[Murata and Suzuki, 2021]{murata21_bias_varian_reduc_local_sgd}
Murata, T. and Suzuki, T. (2021).
\newblock Bias-variance reduced local {SGD} for less heterogeneous federated
  learning.
\newblock In Meila, M. and Zhang, T., editors, {\em Proceedings of the 38th
  International Conference on Machine Learning, {ICML} 2021, 18-24 July 2021,
  Virtual Event}, volume 139 of {\em Proceedings of Machine Learning Research},
  pages 7872--7881. PMLR.

\bibitem[Ortiz et~al., 2021]{ortiz21_trade_offs_local_sgd_at_scale}
Ortiz, J. J.~G., Frankle, J., Rabbat, M., Morcos, A., and Ballas, N. (2021).
\newblock Trade-offs of local {SGD} at scale: an empirical study.
\newblock {\em arXiv preprint arXiv:2110.08133}, abs/2110.08133.

\bibitem[Rajput et~al., 2020]{rajput20_closin_conver_gap_sgd_without_replac}
Rajput, S., Gupta, A., and Papailiopoulos, D.~S. (2020).
\newblock Closing the convergence gap of {SGD} without replacement.
\newblock In {\em Proceedings of the 37th International Conference on Machine
  Learning, {ICML} 2020, 13-18 July 2020, Virtual Event}, volume 119 of {\em
  Proceedings of Machine Learning Research}, pages 7964--7973. PMLR.

\bibitem[Reisizadeh et~al., 2020]{reisizadeh19_fedpaq}
Reisizadeh, A., Mokhtari, A., Hassani, H., Jadbabaie, A., and Pedarsani, R.
  (2020).
\newblock Fedpaq: {A} communication-efficient federated learning method with
  periodic averaging and quantization.
\newblock In Chiappa, S. and Calandra, R., editors, {\em The 23rd International
  Conference on Artificial Intelligence and Statistics, {AISTATS} 2020, 26-28
  August 2020, Online [Palermo, Sicily, Italy]}, volume 108 of {\em Proceedings
  of Machine Learning Research}, pages 2021--2031. PMLR.

\bibitem[Safaryan et~al., 2021]{safaryan21_smoot_matric_beat_smoot_const}
Safaryan, M., Hanzely, F., and Richt{á}rik, P. (2021).
\newblock Smoothness matrices beat smoothness constants: Better communication
  compression techniques for distributed optimization.
\newblock In Ranzato, M., Beygelzimer, A., Dauphin, Y., Liang, P., and Vaughan,
  J.~W., editors, {\em Advances in Neural Information Processing Systems},
  volume~34, pages 25688--25702. Curran Associates, Inc.

\bibitem[Safran and Shamir, 2021]{safran21_random_shuff_beats_sgd_only}
Safran, I. and Shamir, O. (2021).
\newblock Random shuffling beats {SGD} only after many epochs on
  ill-conditioned problems.
\newblock {\em arXiv preprint arXiv:2106.06880}, abs/2106.06880.

\bibitem[Stich, 2019]{stich19_unified_optim_analy_stoch_gradien_method}
Stich, S.~U. (2019).
\newblock Unified optimal analysis of the (stochastic) gradient method.
\newblock {\em arXiv preprint arXiv:1907.04232}, abs/1907.04232.

\bibitem[Stich, 2020]{stich20_commun_compr_distr_optim_heter_data}
Stich, S.~U. (2020).
\newblock On communication compression for distributed optimization on
  heterogeneous data.
\newblock {\em arXiv preprint arXiv:2009.02388}, abs/2009.02388.

\bibitem[Stich et~al., 2018]{stich18_spars_sgd_with_memor}
Stich, S.~U., Cordonnier, J., and Jaggi, M. (2018).
\newblock Sparsified {SGD} with memory.
\newblock In Bengio, S., Wallach, H.~M., Larochelle, H., Grauman, K.,
  Cesa{-}Bianchi, N., and Garnett, R., editors, {\em Advances in Neural
  Information Processing Systems 31: Annual Conference on Neural Information
  Processing Systems 2018, NeurIPS 2018, December 3-8, 2018, Montréal,
  Canada}, pages 4452--4463.

\bibitem[Stich and Karimireddy, 2019]{stich19_error_feedb_framew}
Stich, S.~U. and Karimireddy, S.~P. (2019).
\newblock The error-feedback framework: Better rates for {SGD} with delayed
  gradients and compressed communication.
\newblock {\em arXiv preprint arXiv:1909.05350}, abs/1909.05350.

\bibitem[Tang et~al., 2020]{tang20_commun_effic_distr_deep_learn}
Tang, Z., Shi, S., Chu, X., Wang, W., and Li, B. (2020).
\newblock Communication-efficient distributed deep learning: a comprehensive
  survey.
\newblock {\em arXiv preprint arXiv:2003.06307}, abs/2003.06307.

\bibitem[Wang et~al., 2021]{wang21_field_guide_to_feder_optim}
Wang, J., Charles, Z., Xu, Z., Joshi, G., McMahan, H.~B., Arcas, B. A.~y.,
  Al-Shedivat, M., Andrew, G., Avestimehr, S., Daly, K., Data, D., Diggavi, S.,
  Eichner, H., Gadhikar, A., Garrett, Z., Girgis, A.~M., Hanzely, F., Hard, A.,
  He, C., Horvath, S., Huo, Z., Ingerman, A., Jaggi, M., Javidi, T., Kairouz,
  P., Kale, S., Karimireddy, S.~P., Konecny, J., Koyejo, S., Li, T., Liu, L.,
  Mohri, M., Qi, H., Reddi, S.~J., Richt{á}rik, P., Singhal, K., Smith, V.,
  Soltanolkotabi, M., Song, W., Suresh, A.~T., Stich, S.~U., Talwalkar, A.,
  Wang, H., Woodworth, B., Wu, S., Yu, F.~X., Yuan, H., Zaheer, M., Zhang, M.,
  Zhang, T., Zheng, C., Zhu, C., and Zhu, W. (2021).
\newblock A field guide to federated optimization.
\newblock {\em arXiv preprint arXiv:2107.06917}, abs/2107.06917.

\bibitem[Wangni et~al., 2018]{wangni18_sparsification}
Wangni, J., Wang, J., Liu, J., and Zhang, T. (2018).
\newblock Gradient sparsification for communication-efficient distributed
  optimization.
\newblock In Bengio, S., Wallach, H., Larochelle, H., Grauman, K.,
  Cesa-Bianchi, N., and Garnett, R., editors, {\em Advances in Neural
  Information Processing Systems}, volume~31. Curran Associates, Inc.

\bibitem[Woodworth et~al., 2021]{woodworth21_min_max_compl_distr_stoch}
Woodworth, B., Bullins, B., Shamir, O., and Srebro, N. (2021).
\newblock The min-max complexity of distributed stochastic convex optimization
  with intermittent communication.
\newblock {\em arXiv preprint arXiv:2102.01583}, abs/2102.01583.

\bibitem[Woodworth et~al.,
  2020a]{woodworth20_minib_vs_local_sgd_heter_distr_learn}
Woodworth, B.~E., Patel, K.~K., and Srebro, N. (2020a).
\newblock Minibatch vs local {SGD} for heterogeneous distributed learning.
\newblock In Larochelle, H., Ranzato, M., Hadsell, R., Balcan, M., and Lin, H.,
  editors, {\em Advances in Neural Information Processing Systems 33: Annual
  Conference on Neural Information Processing Systems 2020, NeurIPS 2020,
  December 6-12, 2020, virtual}.

\bibitem[Woodworth et~al.,
  2020b]{woodworth20_is_local_sgd_better_than_minib_sgd}
Woodworth, B.~E., Patel, K.~K., Stich, S.~U., Dai, Z., Bullins, B., McMahan,
  H.~B., Shamir, O., and Srebro, N. (2020b).
\newblock Is local {SGD} better than minibatch {SGD}?
\newblock In {\em Proceedings of the 37th International Conference on Machine
  Learning, {ICML} 2020, 13-18 July 2020, Virtual Event}, volume 119 of {\em
  Proceedings of Machine Learning Research}, pages 10334--10343. PMLR.

\bibitem[Yang et~al., 2022]{yang22_fed_learn_6g}
Yang, Z., Chen, M., Wong, K.-K., Poor, H.~V., and Cui, S. (2022).
\newblock Federated learning for 6g: Applications, challenges, and
  opportunities.
\newblock {\em Engineering}, 8:33--41.

\bibitem[Yun et~al., 2021]{yun21_minib_vs_local_sgd_with_shuff}
Yun, C., Rajput, S., and Sra, S. (2021).
\newblock Minibatch vs local {SGD} with shuffling: Tight convergence bounds and
  beyond.
\newblock {\em arXiv preprint arXiv:2110.10342}, abs/2110.10342.

\end{thebibliography}
\clearpage

\appendix

\tableofcontents

\clearpage
\section{Experiments: Missing Details and Extra Results}\label{subsec:extra_exp}

In this section, we provide missing details on the experimental setting from Section \ref{sec:experiments}.  The codes are provided in the following Github repository: \url{https://github.com/IgorSokoloff/rr_with_compression_experiments_source_code}.

\subsection{Logistic Regression}

\paragraph{Hardware and Software.}
All algorithms were written in Python 3.8. We used three different CPU cluster node types:
\begin{enumerate}
	\item AMD EPYC 7702 64-Core; 
	\item Intel(R) Xeon(R) Gold 6148 CPU @ 2.40GHz;
	\item Intel(R) Xeon(R) Gold 6248 CPU @ 2.50GHz.
\end{enumerate}
\paragraph{Datasets.}
The datasets were taken from open LibSVM library \citet{chang2011libsvm}, sorted in ascending order of labels, and equally split among 20 machines $\backslash$clients$\backslash$workers. The remaining part of size $N - 20\cdot \left \lfloor{\nicefrac{N}{20}}\right \rfloor $ was assigned to the last worker,  where $N = \sum_{m=1}^{M}n_m$ is the total size of the dataset. 
A summary of the splitting and the data samples distribution between clients can be found in Tables \ref{tbl:datasets_summary}, \ref{tbl:mushrooms_partition}, \ref{tbl:w8a_partition}, \ref{tbl:a9a_partition}.
\begin{table}[h!]
	\caption{Summary of the datasets and splitting of the data samples among clients.}
	\label{tbl:datasets_summary}
	\centering
	% \footnotesize
	\begin{tabular}{l l l l l }
		\toprule
		Dataset  & $M$ & $N$ (dataset size) & $d$ (\# of features)  & $n_m$ (\# of datasamples per client) \\
		\midrule			
		\texttt{mushrooms} & $20$ & $8120$ & $112$  & $406$\\ 
		\texttt{w8a} & $20$  &$49749$ & $300$ &  $2487$\\
		\texttt{a9a} & $20$ &$32560$ & $123$  &$1628$ \\ 
		\bottomrule
	\end{tabular}
\end{table}
\begin{table}[h!]
	\caption{Partition of the \texttt{mushrooms} dataset among clients.}
	\label{tbl:mushrooms_partition}
	\centering
	% \footnotesize
	\begin{tabular}{l l l }
		\toprule
		Client's \textnumero &\# of datasamples of class "-1" & \# of datasamples of class "+1"  \\
		\midrule			
		$1$ -- $9$& $406$& $0$\\
		$10$& $262$& $144$\\
		$11$ -- $19$& $0$& $406$\\
		$20$& $0$& $410$\\
		\bottomrule
	\end{tabular}
\end{table}
\begin{table}[h!]
	\caption{Partition of the \texttt{w8a} dataset among clients.}
	\label{tbl:w8a_partition}
	\centering
	% \footnotesize
	\begin{tabular}{l l l }
		\toprule
		Client's \textnumero &\# of datasamples of class "-1" & \# of datasamples of class "+1"  \\
		\midrule			
		$1$ -- $19$& $2487$& $0$\\
		$20$& $1017$& $1479$\\
		\bottomrule
	\end{tabular}
\end{table}
\begin{table}[h!]
	\caption{Partition of the \texttt{a9a} dataset among clients.}
	\label{tbl:a9a_partition}
	\centering
	% \footnotesize
	\begin{tabular}{l l l }
		\toprule
		Client's \textnumero &\# of datasamples of class "-1" & \# of datasamples of class "+1"  \\
		\midrule			
		$1$ -- $14$& $1628$& $0$\\
		$15$& $1328$& $300$\\
		$16$ -- $19$& $0$& $1628$\\
		$20$& $0$& $1629$\\
		\bottomrule
	\end{tabular}
\end{table}
\paragraph{Hyperparameters.}
Regularization parameter $\lambda$ was chosen individually for each dataset to guarantee the condition number $\nicefrac{L}{\mu}$ to be approximately $10^4$,  where $L$ and $\mu$ are the smoothness and strong-convexity constants of function $f$. For the chosen logistic regression problem of the form \eqref{eq:log-reg}, smoothness and strong convexity constants $L$, $L_m$, $L_{i,m}$, $\mu$, $ \widetilde{\mu}$ of functions  $f$, $f_m$ and $f_m^i$ were computed explicitly as 

\begin{eqnarray*}
L &=& \lambda_{max}\rb{\frac{1}{M}\sum_{m=1}^{M}\frac{1}{4n_m}\mA_m^{\top}\mA_m + 2\lambda \mI} \\
L_m &=& \lambda_{max}\rb{\frac{1}{4n_m}\mA_m^{\top}\mA_m + 2\lambda \mI}  \\L_{i,m} &=& \lambda_{\max}\rb{ \frac{1}{4} a_{mi}a_{mi}^{\top} + 2\lambda \mI} \\
 \mu &=& 2\lambda \\
 \widetilde{\mu}  &=& 2\lambda, 
\end{eqnarray*}  
where $\mA_m $ is the dataset associated with client $m$, and $a_{mi}$ is the $i$-th row of data matrix $\mA_m$. In general, the fact that $f$ is $L$-smooth with $$L \le \frac{1}{M}\sum_{m=1}^{M}\frac{1}{n_m}\sum_{i=1}^{n_m} L_{i,m}$$ follows from the $L_{i,m}$-smoothness of $f_m^i$ (see Assumption \ref{asm:lip_max_f_m}). 

In all algorithms, as a compression operator $\cQ$, we use Rand-$k$  as a canonical example of unbiased compressor with relatively bounded variance, and fix the compression parameter $k = \lfloor 0.02 d\rfloor$, where $d$ is the number of features in the dataset. 

In addition, in all algorithms, for all clients $m=1,\dots,M$, we set the batch size for the \algname{SGD} estimator to be $b_m = \lfloor 0.1 n_m\rfloor$, where $n_m$ is the size of the local dataset. 

The summary of the values $L$, $L_m$, $L_{i,m}$ $L_{\max}$, $\mu$, $b_m$ and $k$ for each dataset can be found in Table \ref{tbl:hyperparameters_per_dataset_summary}.

\begin{table}[h!]
	\caption{Summary of the hyperparameters.}
	\label{tbl:hyperparameters_per_dataset_summary}
	\centering
	% \footnotesize
	\begin{tabular}{l l l l l l l}
		\toprule
		Dataset  & $L$ & $L_{\max}$& $\mu$ & $\lambda$ & $k$&$b_m$ (batchsize)\\
		\midrule			
		\texttt{mushrooms} & $2.59$ & $5.25$ & $2.58\cdot10^{-4}$ & $1.29\cdot10^{-4}$ & $2$&$40$\\ 
		\texttt{w8a} & $0.66$ &$28.5$  & $6.6\cdot10^{-5}$ & $3.3\cdot10^{-5}$ &$6$ & $248$\\
		\texttt{a9a} & $1.57$  & $3.5$ &  $1.57\cdot10^{-4}$ & $7.85\cdot10^{-5}$& $2$& $162$\\ 
		\bottomrule
	\end{tabular}
\end{table}

In all experiments, we follow constant stepsize strategy within the whole iteration procedure. For each method, we set the largest possible stepsize predicted by its theory multiplied by some individually tuned constant multiplier. 
For a more detailed explanation of the tuning routine, see Sections \ref{subsec:non_local_methods} and \ref{subsec:local_methods}.

\paragraph{SGD implementation.} 
We considered two approaches to minibatching: random reshuffling and with-replacement sampling. In the first, all clients $m=1,\dots, M$ independently permute their local datasets and pass through them within the next subsequent  $\lfloor \frac{n_m}{b_m} \rfloor$ steps. In our implementations of \algname{Q-RR}, \algname{Q-NASTYA} and \algname{DIANA-NASTYA}, all clients permuted their datasets in the beginning of every new epoch, whereas for the \algname{DIANA-RR} method they do so only once in the beginning of the iteration procedure. Second approach of minibatching is called with-replacement sampling, and it requires every client to draw $b_m$ 
data samples from the local dataset uniformly at random. We used this strategy in the baseline algorithms (\algname{QSGD}, \algname{DIANA}, \algname{FedCOM} and \algname{FedPAQ}) we compared our proposed methods to.

\paragraph{Experimental setup.}
To compare the performance of methods within the whole optimization process, we track the functional suboptimality metric $f(x_t) - f(x_{\star})$ that was recomputed after each epoch. For each dataset, the value $f(x_{\star})$ was computed once at the preprocessing stage with $10^{-16}$ tolerance via conjugate gradient method.
We terminate our algorithms after performing $5000$ epochs.

\subsubsection{Experiment 1: Comparison of the Proposed Non-Local Methods with Existing Baselines (Extra Details)}\label{subsec:non_local_methods} 
For each of the considered non-local methods, we take the stepsize as the largest one predicted by the theory premultiplied by the individually tuned constant factor from the set 
$\left\{0.000975, 0.00195, 0.0039,	0.0078,	0.0156,	0.0312,	0.0625,	0.125,	0.25,	0.5, 1,	2,	4,	8,	16,	32,	64,\right.$\\
$\left.128, 256, 512,1024, 2048, 4096\right\}.$

Therefore, for each local method on every dataset, we performed $20$ launches to find the stepsize multiplier showing the best convergence behavior (the fastest reaching the lowest possible level of functional suboptimality $f(x_t) - f(x_{\star})$).

Theoretical stepsizes for methods \algname{Q-RR} and \algname{DIANA-RR} are provided by the Theorems \ref{th_conv_new_rr_q} and \ref{th_conv_rr_diana}, whereas stepsizes for \algname{QSGD} and \algname{DIANA} were taken from the paper \cite{gorbunov2020unified}.

\subsubsection{Experiment 2: Comparison of the Proposed Local Methods with Existing Baselines (Extra Details)}\label{subsec:local_methods}

In this set of experiments, we tuned stepsizes similarly to the non-local methods. However, for algorithms \algname{Q-NASTYA}, \algname{DIANA-NASTYA}, and \algname{FedCOM} we needed to independently adjust the client and server stepsizes, leading to a more extensive tunning routine. 

As before, for each local method on every dataset, tuned client and server stepsizes are defined by the theoretical one and adjusted constant multiplier.
Theoretical stepsizes for methods \algname{Q-NASTYA} and \algname{DIANA-NASTYA} are given by the Theorems \ref{thm:convergence_Q_NASTYA} and \ref{thm:diana-nastya-conv}, whereas \algname{FedCOM} and \algname{FedPAQ} stepsizes  were taken from the papers by \citet{haddadpour20_feder_learn_with_compr} and \citet{reisizadeh19_fedpaq} respectively.
We now list all the considered multipliers of client and server stepsizes for every method (i.e. $\gamma$ and $\eta$ respectively):
\begin{itemize}
	\item \algname{Q-NASTYA}:
	\begin{itemize}
		\item
		Multipliers for $\gamma:$
		$\{$0.000975, 0.00195, 0.0039,	0.0078,	0.0156,	0.0312,	0.0625,	0.125,	0.25, 0.5, 1,	2,	4,	8,	16,	32,	64,128$\}$;
		\item
		Multipliers for $\eta:$
		$\{$0.0039,	0.0078,	0.0156,	0.0312,	0.0625,	0.125,	0.25, 0.5, 1,	2,	4,	8,	16,	32,	64, 128$\}$. 
	\end{itemize}
	\item \algname{DIANA-NASTYA}:
	\begin{itemize}
		\item Multipliers for $\gamma$ and $\eta:$
		$\{$0.000975, 0.00195, 0.0039,	0.0078,	0.0156,	0.0312,	0.0625,	0.125,	0.25, 0.5, 1,	2,	4,	8,	16,	32,	64,128$\}$;
	\end{itemize}
	\item \algname{FedCOM}:
	\begin{itemize}
		\item
		Multipliers for $\gamma:$
		$\{$0.0312,	0.0625,	0.125,	0.25, 0.5, 1,	2,	4,	8,	16,	32,	64, 128, 256, 512,1024, 2048, 4096, 8192, 16384, 32768 $\}$;
		\item
		Multipliers for $\eta:$
		$\{$0.000975, 0.00195, 0.0039,	0.0078,	0.0156,	0.0312,	0.0625,	0.125,	0.25, 0.5, 1,	2,	4,	8,	16,	32,	64, 128$\}$.
	\end{itemize}
	\item \algname{FedPAQ}:
	\begin{itemize}
		\item 
		Multipliers for $\gamma:$
		$\{$0.00195, 0.0039, 0.0078, 0.0156, 0.0312, 0.0625, 0.125, 0.25, 0.5, 1, 2, 4, 8, 16, 32, 64, 128, 256, 512, 1024, 2048, 4096, 8192, 16384, 32768, 65536, 131072, 262144, 524288, 1048576 $\}$.
	\end{itemize}
\end{itemize}

For example, to find the best pair $(\gamma, \eta)$ for \algname{FedCOM} method on each dataset, we performed $378$ launches. A similar subroutine was executed for all algorithms on all datasets independently.

\subsection{Training Deep Neural Network model: ResNet-18 on CIFAR-10}
To illustrate the behavior of the proposed methods in training Deep Neural Networks (DNN), we consider the \texttt{ResNet-18}~\citep{resnet18_56} model. This model is used for image classification, feature extraction for image segmentation, object detection, image embedding, and image captioning. We train all layers of \texttt{ResNet-18} model meaning that the dimension of the optimization problem equals $d=11,173,962$. During the training, the \texttt{ResNet-18} model normalizes layer inputs via exploiting $20$ Batch Normalization ~\citep{ioffe2015batch} layers that are applied directly before nonlinearity in the computation graph of this model. Batch normalization (BN) layers add $9600$ trainable parameters to the model. Besides trainable parameters, a BN layer has its internal state that is used for computing the running mean and variance of inputs due to its own specific regime of working. 
We use \textit{He} initialization~\citep{he2015delving}.

\subsubsection{Computing Environment} 
We performed numerical experiments on a server-grade machine running Ubuntu 18.04 and Linux Kernel v5.4.0, equipped with 16-cores (2 sockets by 16 cores per socket) 3.3 GHz Intel Xeon, and four NVIDIA A100 GPU with 40GB of GPU memory. The distributed environment is simulated in Python 3.9 via using the software suite \texttt{FL\_PyTorch} ~\citep{burlachenko2021fl_pytorch} that serves for carrying complex Federate Learning experiments. \texttt{FL\_PyTorch} allowed us to simulate the distributed environment in the local machine. Besides storing trainable parameters per client, this simulator stores all not trainable parameters including BN statistics per client.

\subsubsection{Loss Function} 
Training of \texttt{ResNet-18} can be formalized as problem \eqref{eq:erm-opt-orig} with the following choice of $f_m^i$ 
\begin{equation}
	f_m(x)=\frac{1}{|n_{m}|}\sum_{j=1}^{|n_{m}|} CE(b^{(j)}, g(a^{(j)},x)),
\end{equation}
where $CE(p, q) \eqdef -\sum_{k=1}^{\mathrm{\#classes}} p_i \cdot \log(q_i)$ with agreement $0\cdot\log(0) = 0$ is a standard cross-entropy loss, function $g: \mathbb{R}^{28 \times 28} \times \mathbb{R}^d \to [0,1]^{\mathrm{\#classes}}$ is a neural network taking image $a^{(j)}$ and vector of parameters $x$ as an input and returning a vector in probability simplex, and $n_m$ is the size of the dataset on worker $m$. 

\subsubsection{Dataset and Metric} 
In our experiments, we used \texttt{CIFAR10} dataset \cite{krizhevsky2009learning}. The dataset consists of input variables $a_i \in \mathbb{R}^{28 \times 28 \times 3}$, and response variables $b_i \in \{0,1\}^{10}$ and is used for training 10-way classification. The sizes of training and validation set are $5\times 10^4$ and $10^4$ respectively. The training set is partitioned heterogeneously across $10$ clients. %In the experiments, all clients are involved at each round. 
To measure the performance, we evaluate the loss function value $f(x)$, norm of the gradient $\|\nabla f(x)\|_2$ and the Top-1 accuracy of the obtained model as a function of passed epochs and the normalized number of bits sent from clients to the server.

\subsubsection{Tuning Process}  
In this set of experiments, we tested \algname{QSGD}~\citep{alistarh17_qsgd}, \algname{Q-RR} (Algorithm \ref{alg_new_Q_RR}), \algname{DIANA}~\citep{mishchenko2019distributed} and \algname{DIANA-RR} (Algorithm \ref{alg_new_RR_DIANA}) algorithms. For all algorithms, we tuned the strategy $\in \{A, B, C\}$ of decaying stepsize model via selecting the best in terms of the norm of the full gradient on the train set in the final iterate produced after $20 000$ rounds. The stepsize policies are described below.
\begin{enumerate}[label=\Alph*.]
	\item Stepsizes decaying as inverse square root of the number epochs
	\[
	\gamma_e = \begin{cases}
		\gamma_{init}  \cdot \dfrac{1}{\sqrt{e - s + 1}}, &\text{if } e \ge s,\\
		\gamma_{init}, &\text{if } e < s,
	\end{cases}
	\]
	where $\gamma_e$ denotes the stepsize used during epoch $e+1$, $s$ is a fixed shift.
	\item Stepsizes decaying as inverse of number epochs
	\[
	\gamma=\begin{cases}
		\gamma_{init}  \cdot \dfrac{1}{{e - s + 1}}, &\text{if } e \ge s,\\
		\gamma_{init}, &\text{if } e < s.
	\end{cases}
	\]
	\item Fixed stepsize
	\[
	\gamma=\gamma_{init}.	
	\]
	
\end{enumerate}

We say that the algorithm passed $e$ epochs if the total number of computed gradient oracles lies between $e \sum_{m=1}^M n_m$ and $(e+1) \sum_{m=1}^M n_m$. For each algorithm the used stepsize $\gamma_{init}$ and shift parameter $s$ were tuned via selecting from the following sets:
\begin{eqnarray*}
	\gamma_{init} \in \gamma_{set} \eqdef \{4.0, 3.75, 3.00, 2.5, 2.00, 1.25, 1.0, 0.75, 0.5, 0.25,\\
	0.2, 0.1, 0.06, 0.03, 0.01, 0.003, 0.001, 0.0006\}.
\end{eqnarray*}
\begin{eqnarray*}
	s \in s_{set} \eqdef \{50, 100, 200, 500, 1000\}.
\end{eqnarray*}

In all tested methods, clients independently apply Rand-$k$  compression with carnality $k= \lfloor 0.05 d \rfloor$. Computation for all gradient oracles is carried out in single precision float (fp32) arithmetic.

%===============================================================
\begin{figure}[H]
	\centering
	\captionsetup[sub]{font=small,labelfont={}}	
	\begin{subfigure}[ht]{0.32\textwidth}
		\includegraphics[width=\textwidth]{./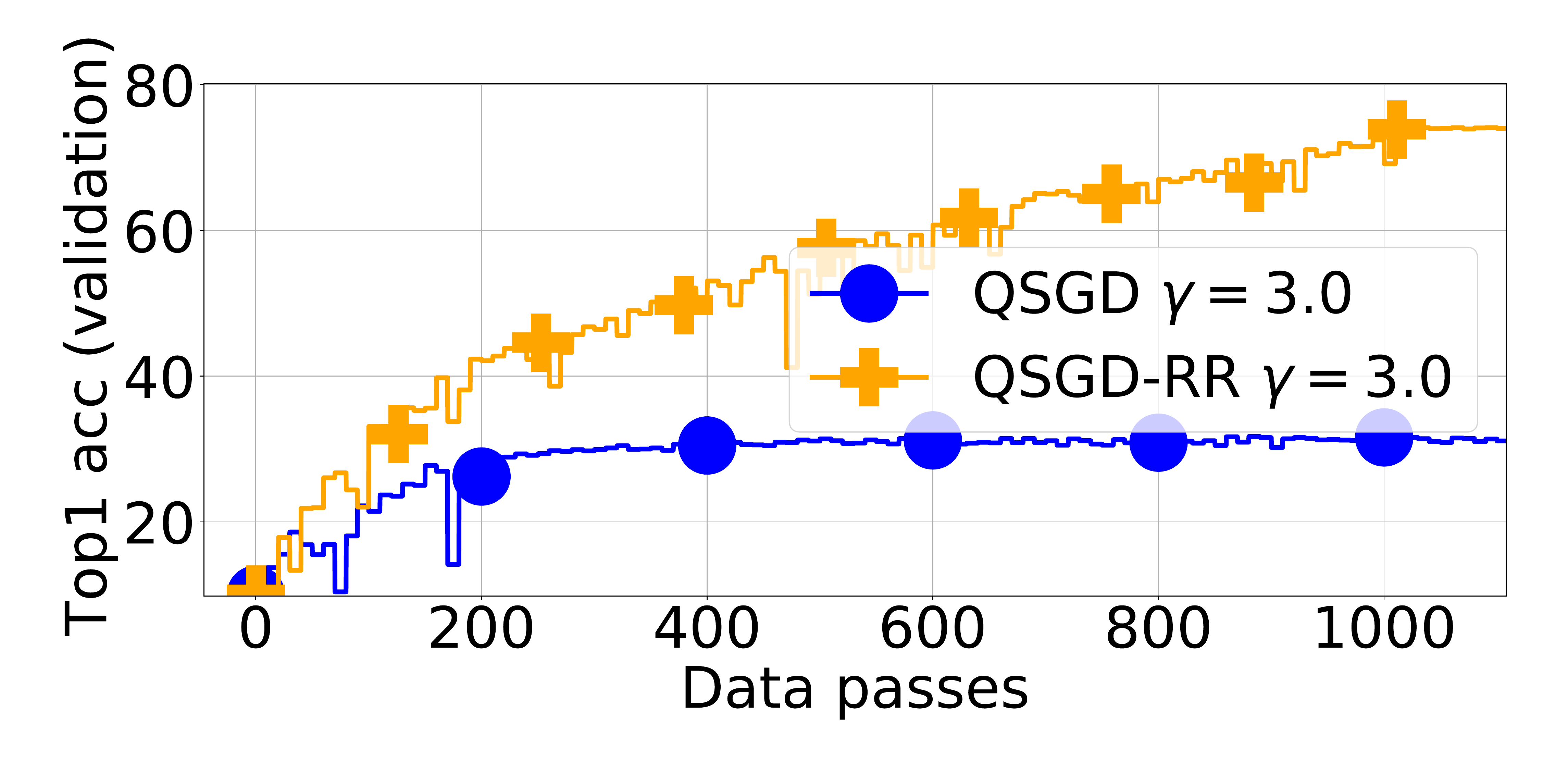} \caption{}\label{fig:train_resnet18_qsgd_a}
	\end{subfigure}
	\begin{subfigure}[ht]{0.32\textwidth}
		\includegraphics[width=\textwidth]{./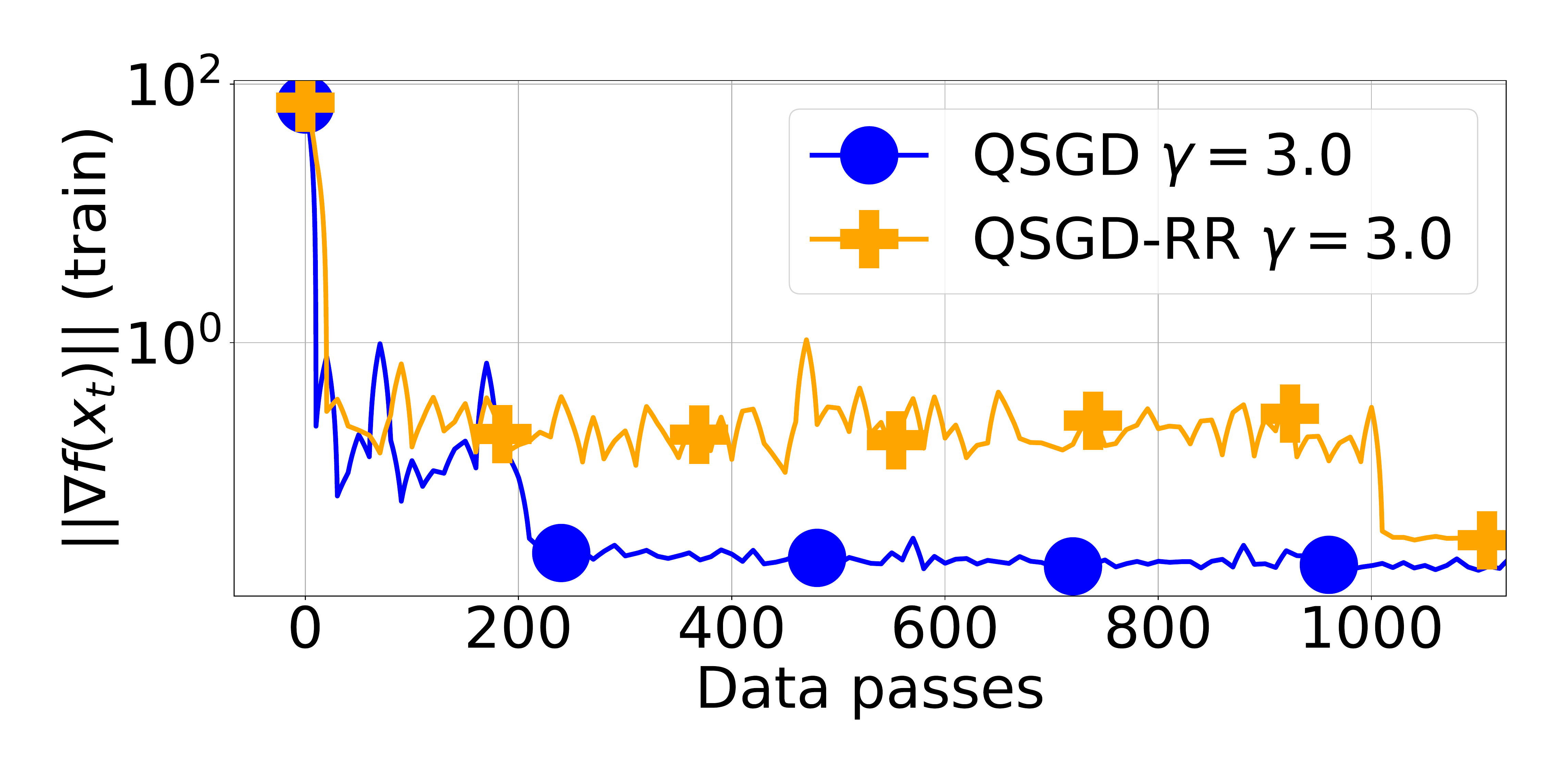} \caption{}\label{fig:train_resnet18_qsgd_b}
	\end{subfigure}
	\begin{subfigure}[ht]{0.32\textwidth}
		\includegraphics[width=\textwidth]{./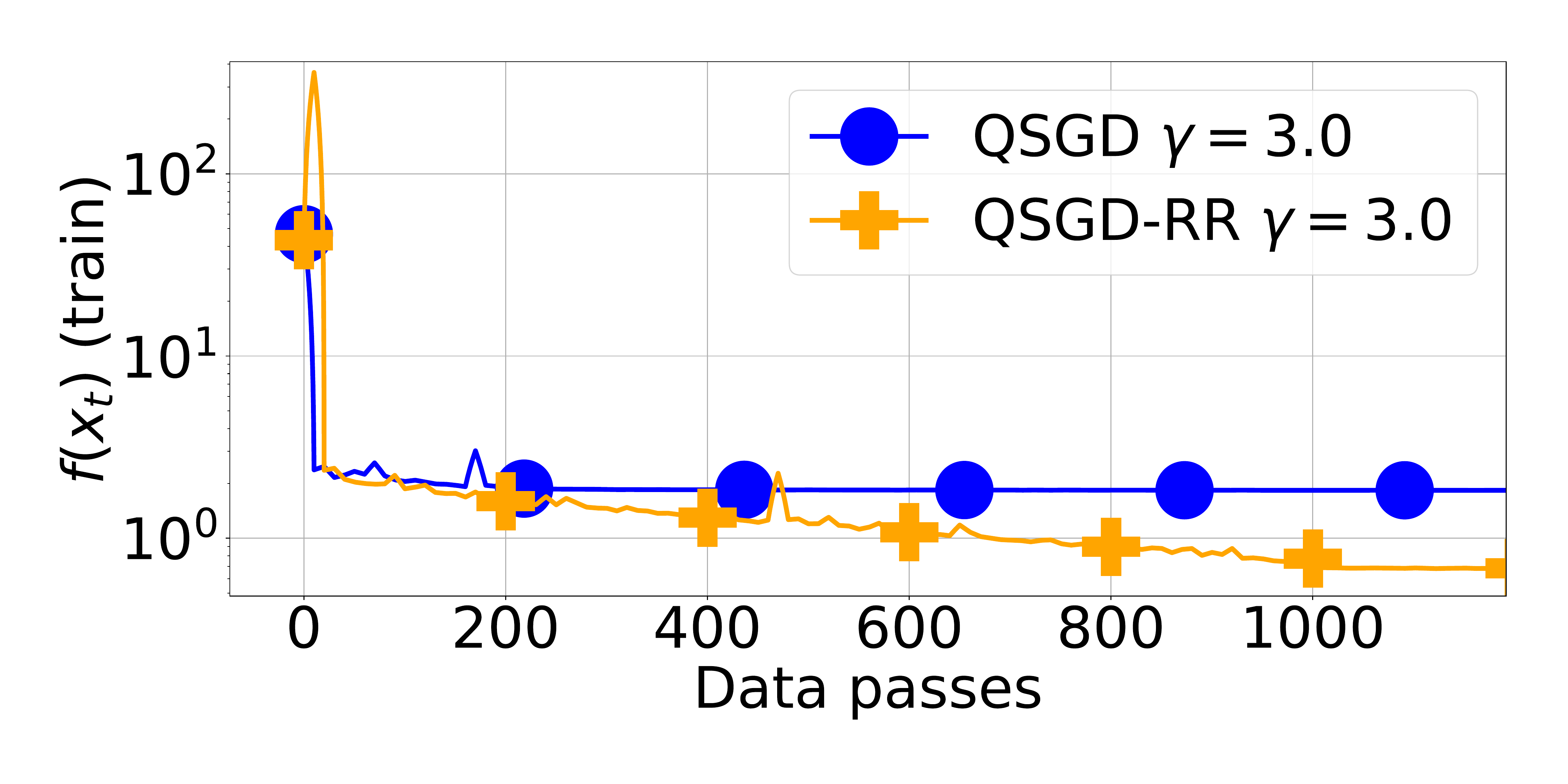} \caption{}\label{fig:train_resnet18_qsgd_c}
	\end{subfigure}
	
	\begin{subfigure}[ht]{0.32\textwidth}
		\includegraphics[width=\textwidth]{./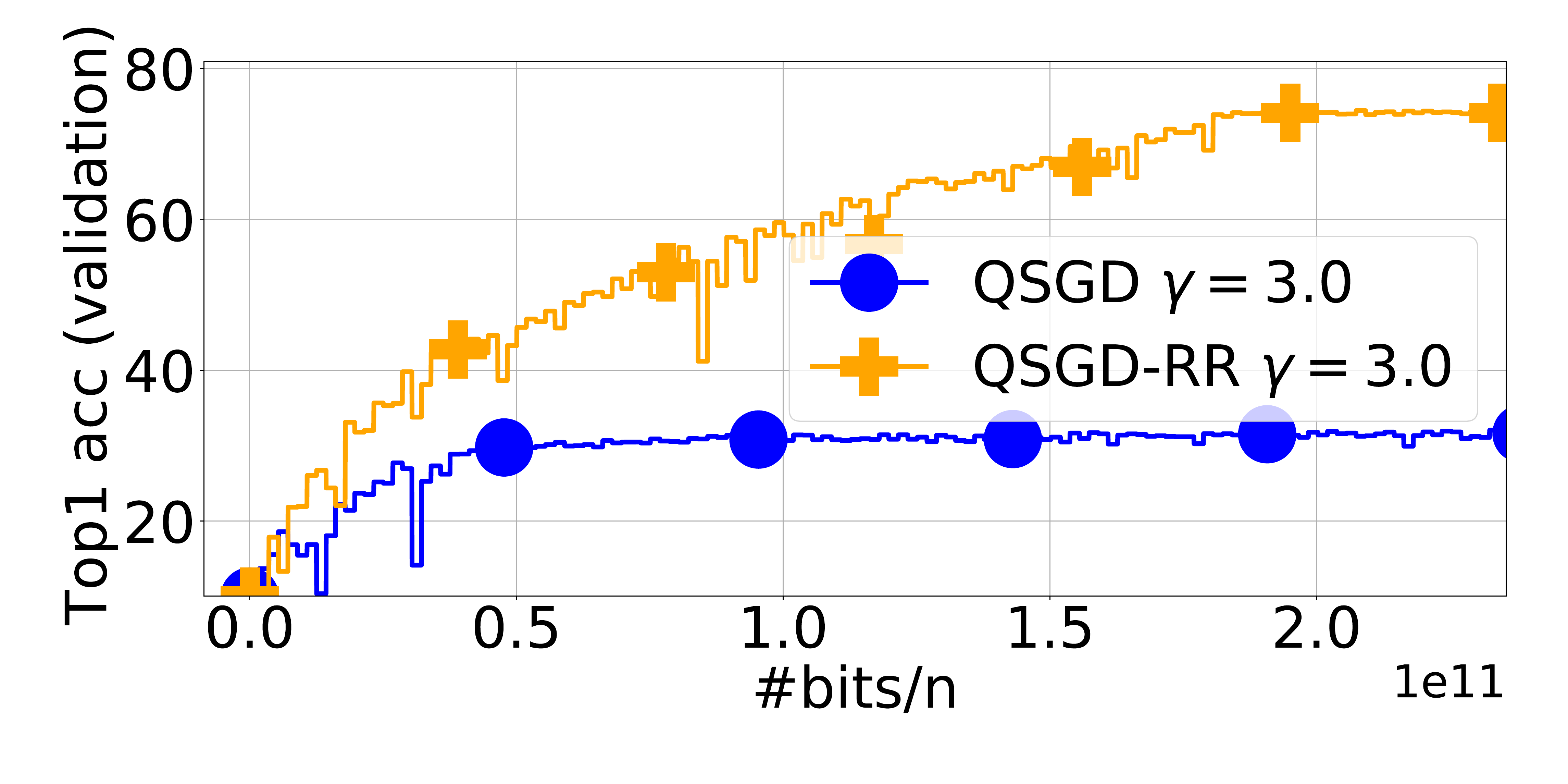} \caption{}\label{fig:train_resnet18_qsgd_d}
	\end{subfigure}
	\begin{subfigure}[ht]{0.32\textwidth}
		\includegraphics[width=\textwidth]{./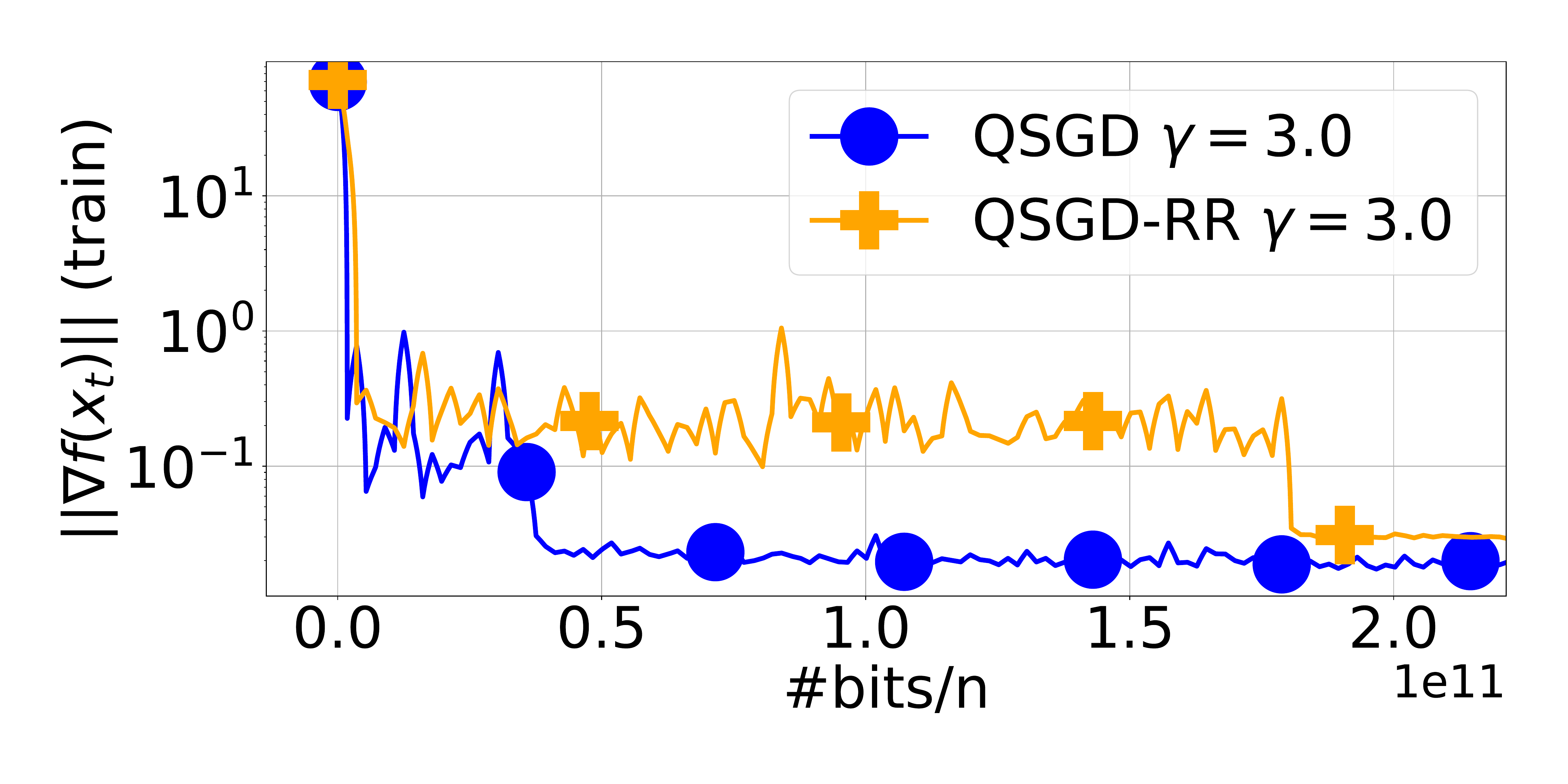}\caption{}\label{fig:train_resnet18_qsgd_e}
	\end{subfigure}
	\begin{subfigure}[ht]{0.32\textwidth}
		\includegraphics[width=\textwidth]{./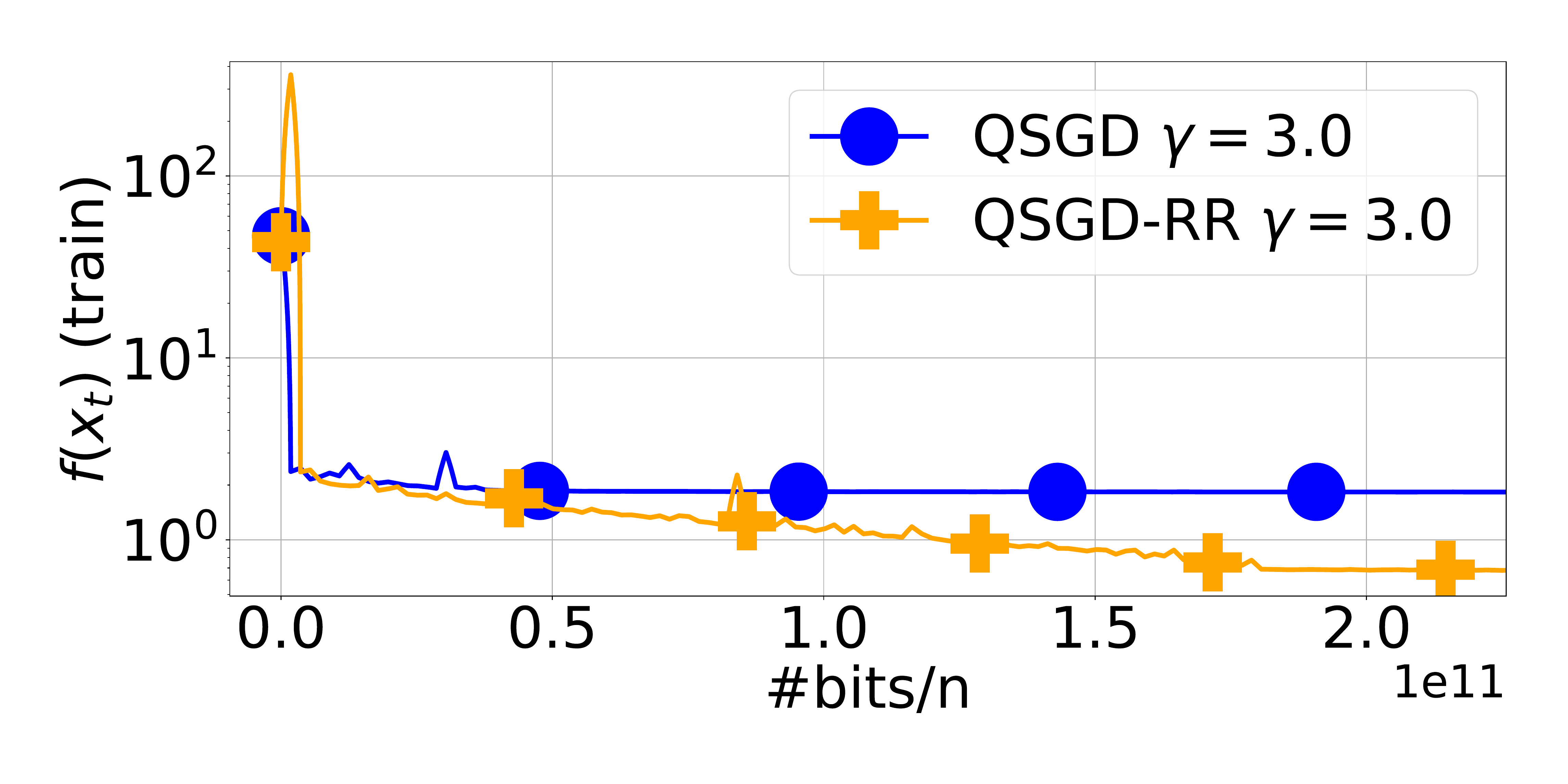}\caption{}\label{fig:train_resnet18_qsgd_f}
	\end{subfigure}
	\caption{{Comparison of \algname{QSGD} and \algname{Q-RR} in the training of \texttt{ResNet-18} on \texttt{CIFAR-10}, with $n=10$ workers. Here (a) and (d) show Top-1 accuracy on test set, (b) and (e) -- norm of full gradient on the train set, (c) and (f) -- loss function value on the train set. Stepsizes and decay shift has been tuned from $s_{set}$ and $\gamma_{set}$ based on minimum achievable value of loss function on the train set.}}
	\label{fig:train_resnet18_qsgd}
\end{figure}

\begin{figure}[H]
	\centering
	\captionsetup[sub]{font=small,labelfont={}}	
	\begin{subfigure}[ht]{0.32\textwidth}
		\includegraphics[width=\textwidth]{./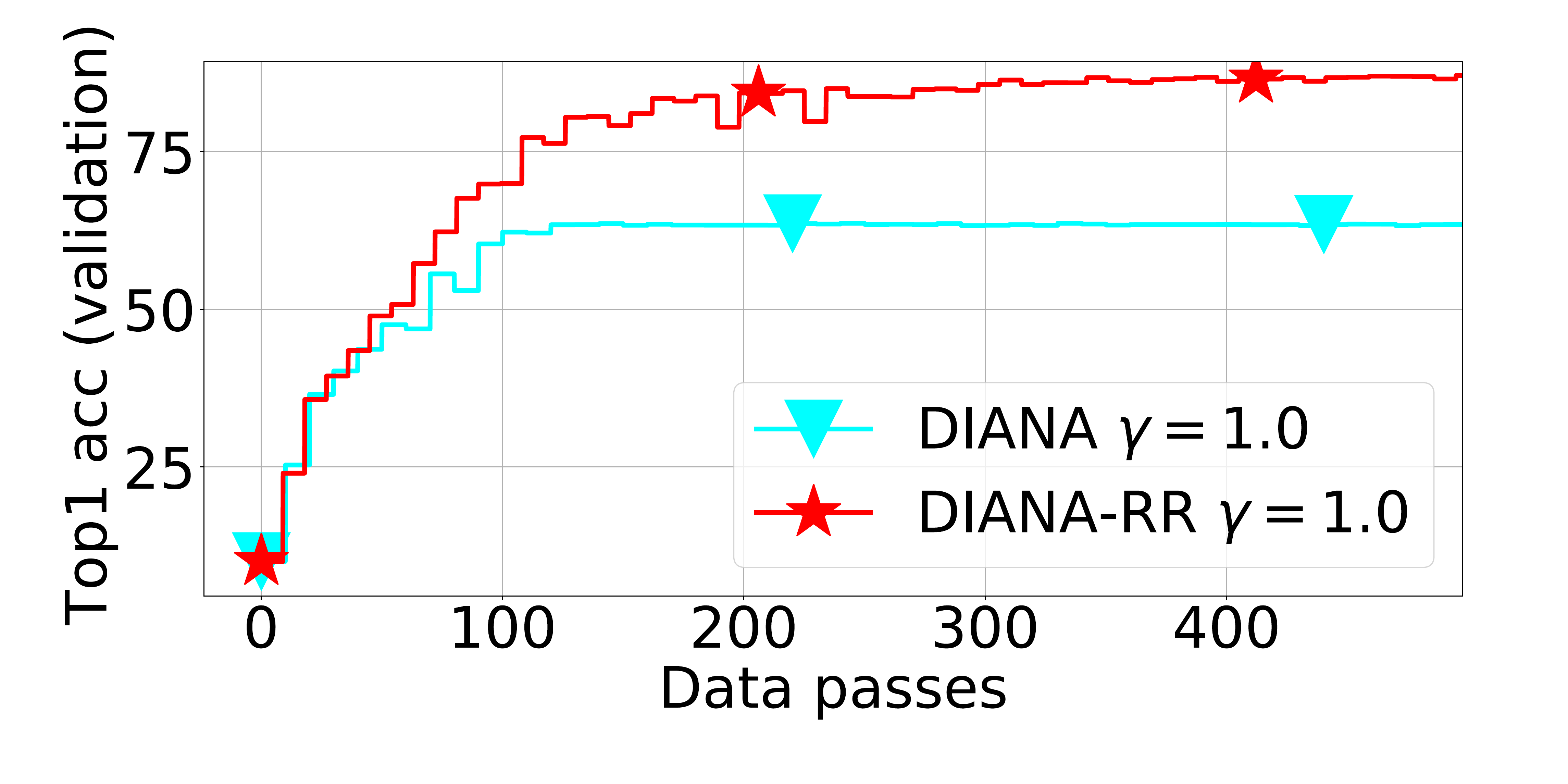} \caption{}\label{fig:train_resnet18_diana_a}
	\end{subfigure}
	\begin{subfigure}[ht]{0.32\textwidth}
		\includegraphics[width=\textwidth]{./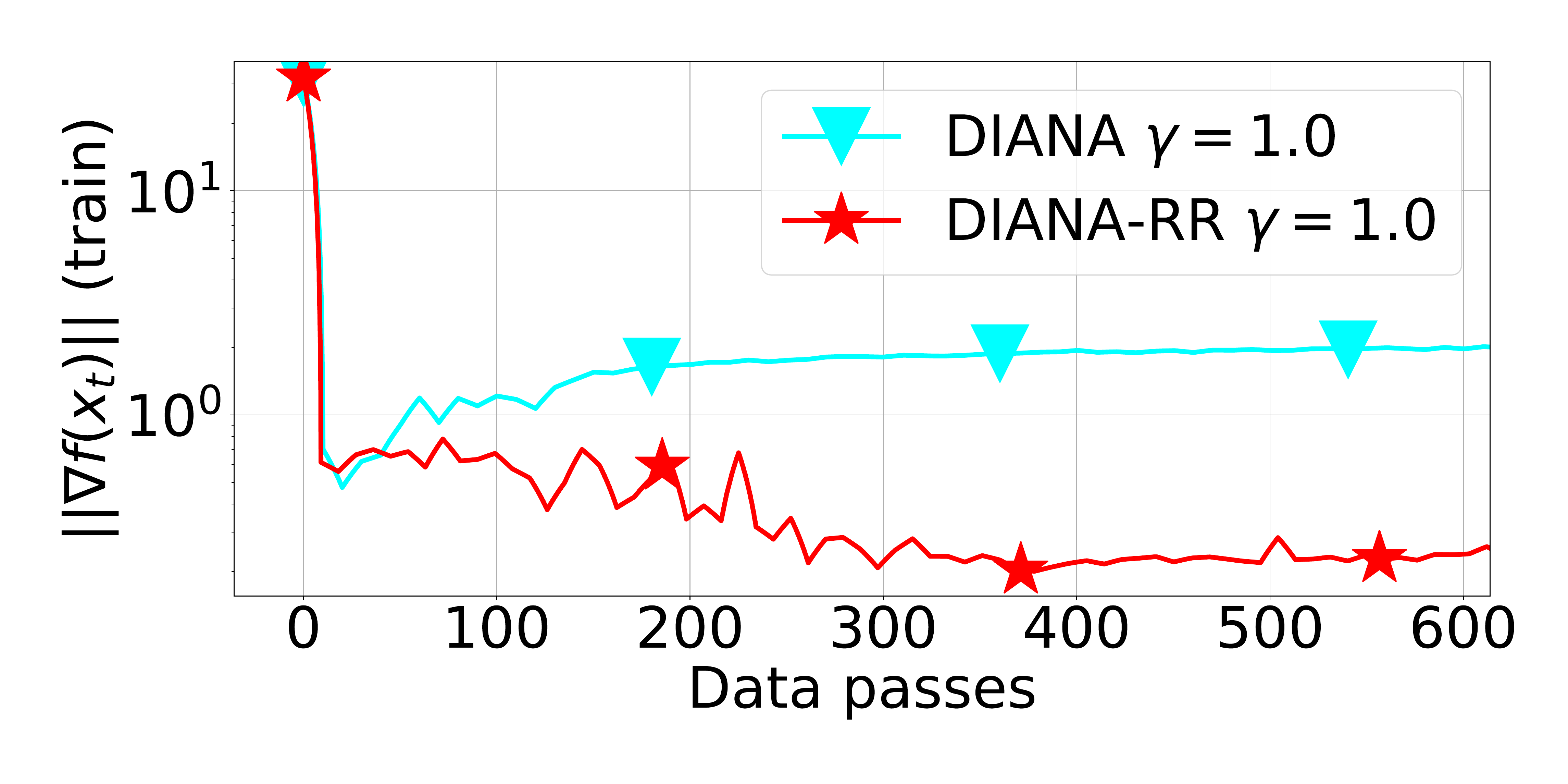} \caption{}\label{fig:train_resnet18_diana_b}
	\end{subfigure}
	\begin{subfigure}[ht]{0.32\textwidth}
		\includegraphics[width=\textwidth]{./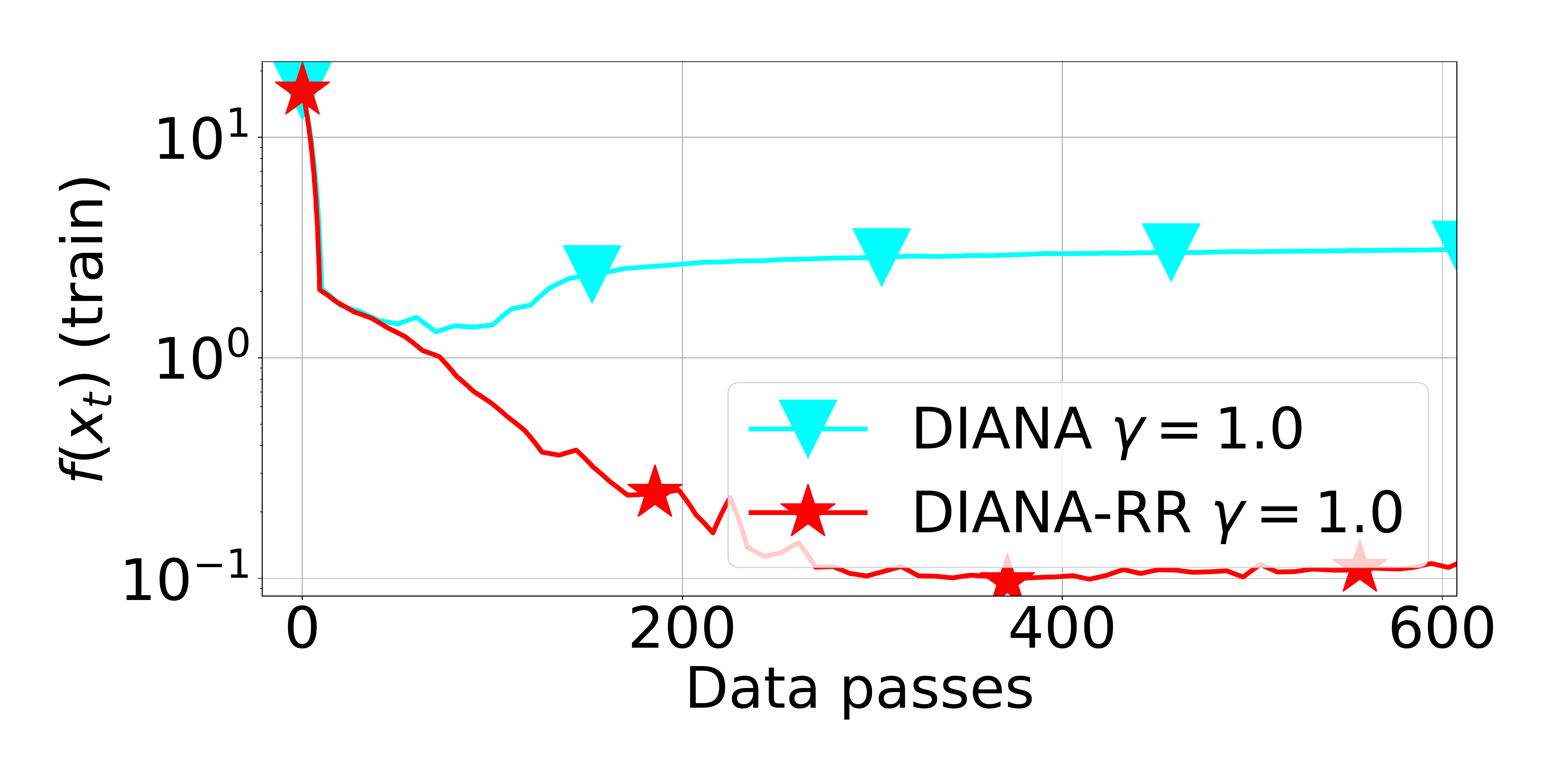} \caption{}\label{fig:train_resnet18_diana_c}
	\end{subfigure}
	
	\begin{subfigure}[ht]{0.32\textwidth}
		\includegraphics[width=\textwidth]{./plots_nn/diana_fig_2.pdf} \caption{}\label{fig:train_resnet18_diana_d}
	\end{subfigure}
	\begin{subfigure}[ht]{0.32\textwidth}
		\includegraphics[width=\textwidth]{./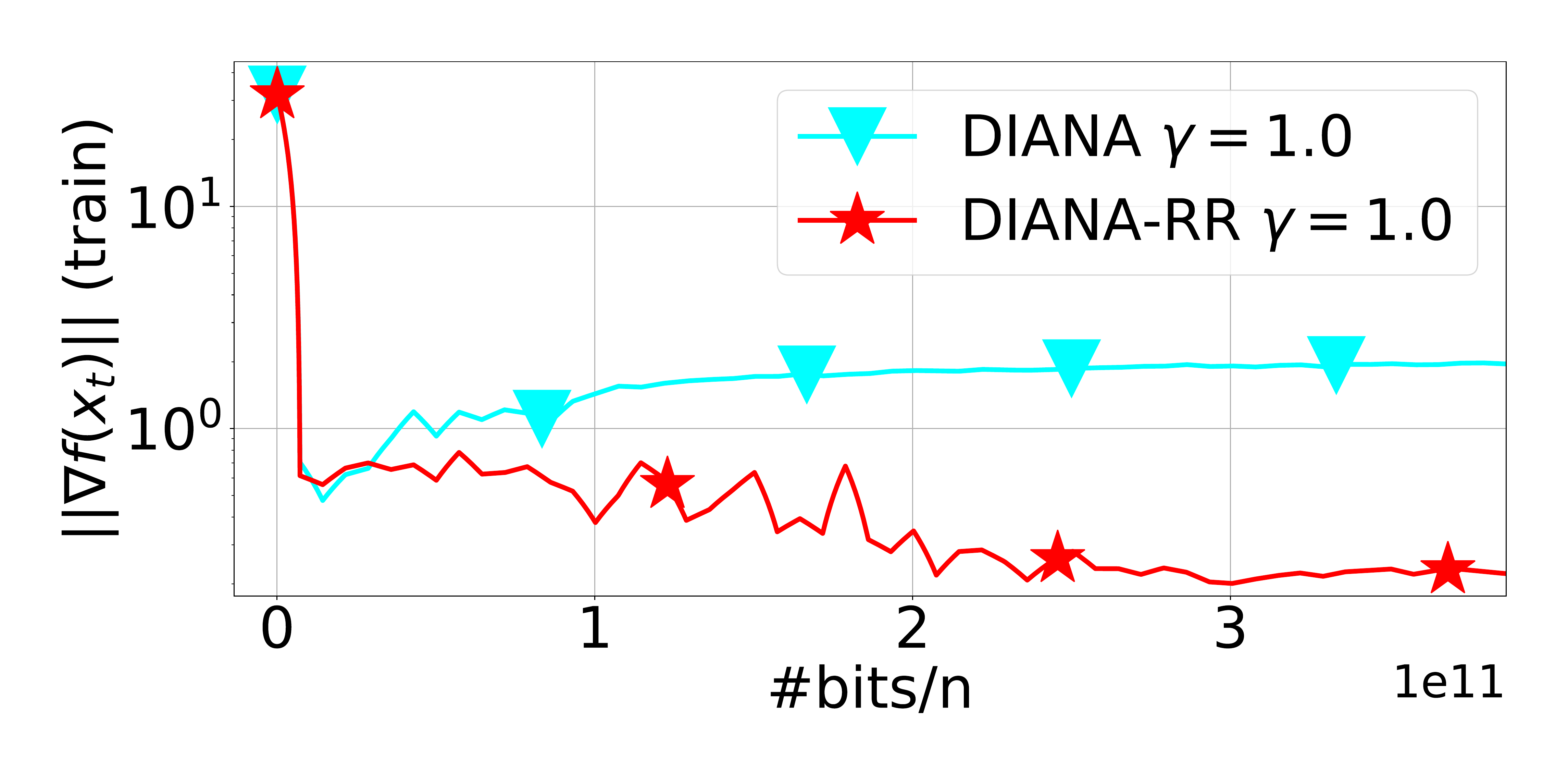}\caption{}\label{fig:train_resnet18_diana_e}
	\end{subfigure}
	\begin{subfigure}[ht]{0.32\textwidth}
		\includegraphics[width=\textwidth]{./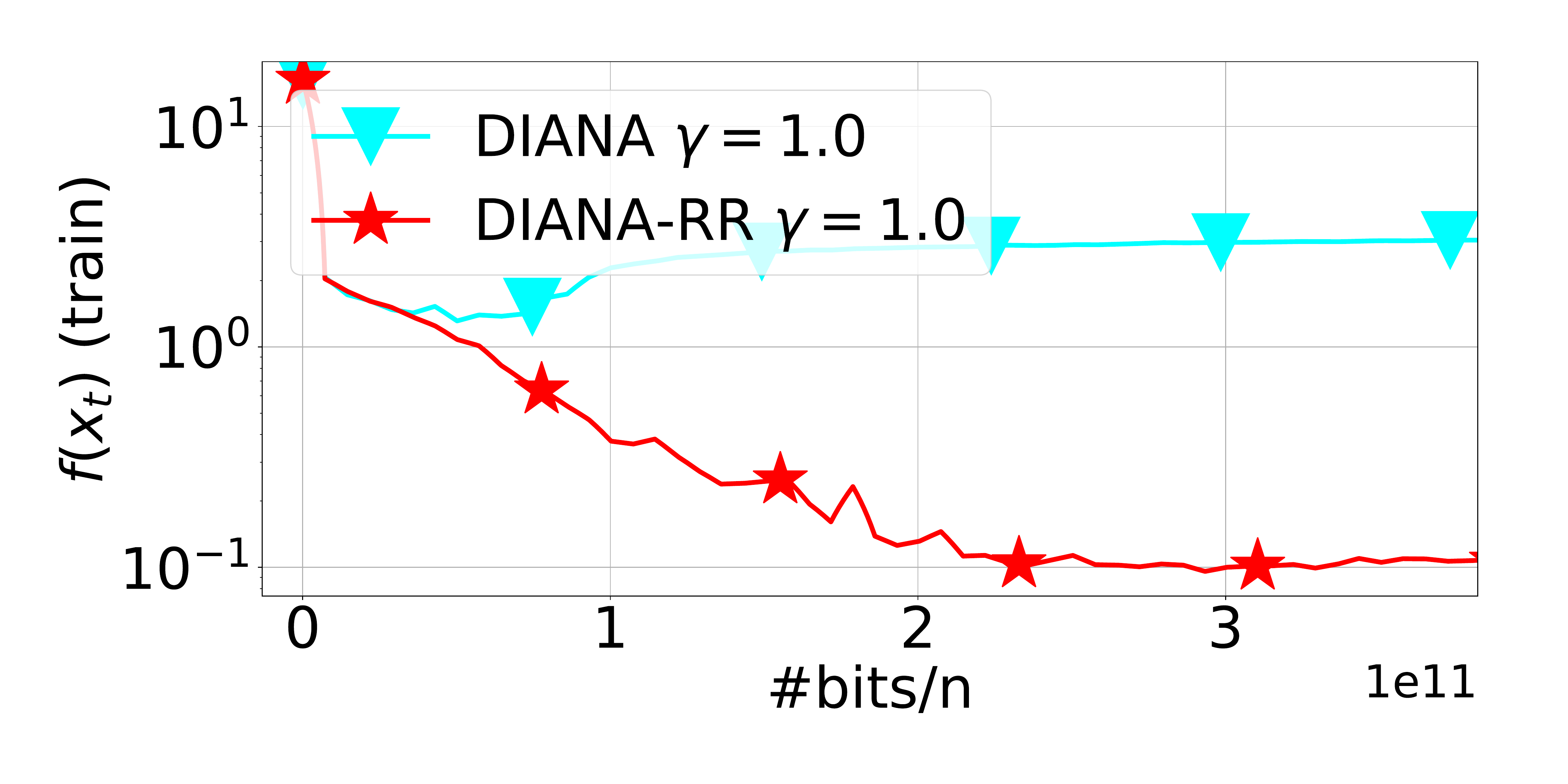}\caption{}\label{fig:train_resnet18_diana_f}
	\end{subfigure}
	\caption{{Comparison of \algname{DIANA} and \algname{DIANA-RR} in the training of \texttt{ResNet-18} on \texttt{CIFAR-10}, with $n=10$ workers. Here (a) and (d) show Top-1 accuracy on test set, (b) and (e) -- norm of full gradient on the train set, (c) and (f) -- loss function value on the train set. Stepsizes and decay shift has been tuned from $s_{set}$ and $\gamma_{set}$ based on minimum achievable value of loss function on the train set. For both algorithms stepsize is fixed. For both algorithms stepsize is decaying according to srategy $B$.}}\label{fig:train_resnet18_diana}
\end{figure}
%===============================================================
\subsubsection{Optimization-Based Fine-Tuning for Pretrained \texttt{ResNet-18}.}
In this setting, we trained \texttt{ResNet-18} image classification in a distributed way across $n=10$ clients. In this experiment, we have trained only the last linear layer. 

Next, we have turned off batch normalization. Turning off batch normalization implies that the computation graph of NN $g(a, x)$ with weights of NN denoted as $x$ is a deterministic function and does not include any internal state.

The loss function is a standard cross-entropy loss augmented with extra $\ell_2$-regularization $\alpha \nicefrac{\|x\|^2}{2}$ with $\alpha=0.0001$. Initially used weights of NN are pretrained parameters after training the model on ImageNet. 

The dataset distribution across clients has been set in a heterogeneous manner via presorting dataset $D$ by label class and after this, it was split across $10$ clients. 

The comparison of stepsizes policies used in \algname{QSGD} and \algname{Q-RR} is presented in Figure~\ref{fig:train_resnet18_qsgd_all_bad}. The behavior of the algorithms with best tuned step sizes is presented in Figure~\ref{fig:train_resnet18_qsgd_best_to_best_bad}. These results demonstrate that in this setting there is no real benefit of using \algname{Q-RR} in comparison to \algname{QSGD}.

%\clearpage
\subsubsection{Experiments}

The comparison of \algname{QSGD} and \algname{Q-RR} is presented in Figure \ref{fig:train_resnet18_qsgd}. In particular, Figures \ref{fig:train_resnet18_qsgd_b} and \ref{fig:train_resnet18_qsgd_e} show that in terms of the convergence to stationary points both algorithms exhibit similar behavior. However, \algname{Q-RR} has better generalization and in fact, converges to the better loss function value. This experiment demonstrates that \algname{Q-RR} with manually tuned stepsize can be better compared to \algname{QSGD} in terms of the final quality of obtained Deep Learning model. For \algname{QSGD} the tuned meta parameters are: $\gamma_{init}=3.0$,$s=200$, $\text{strategy}=B$. For  \texttt{QSGD-RR} tuned meta parameters are: $\gamma_{init}=3.0$, $s=1000$, $\text{strategy}=B$.

The results of comparison of \algname{DIANA} and \algname{DIANA-RR} are presented in Figure~\ref{fig:train_resnet18_diana}. For \algname{DIANA} the tuned meta parameters are: $\gamma_{init}=1.0$,$s=0$, $\text{strategy}=C$ and for  \algname{DIANA-RR} tuned meta parameters are: $\gamma_{init}=1.0$, $s=0$, $\text{strategy}=C$. These results show that \algname{DIANA-RR} outperforms \algname{DIANA} in terms of the all reported metrics.

\begin{figure}[H]
	\centering
	\captionsetup[sub]{font=small,labelfont={}}	
	\begin{subfigure}[ht]{0.32\textwidth}
		\includegraphics[width=\textwidth]{./plots_nn_not_good/best2best_fig1.pdf} \caption{}
	\end{subfigure}
	\begin{subfigure}[ht]{0.32\textwidth}
		\includegraphics[width=\textwidth]{./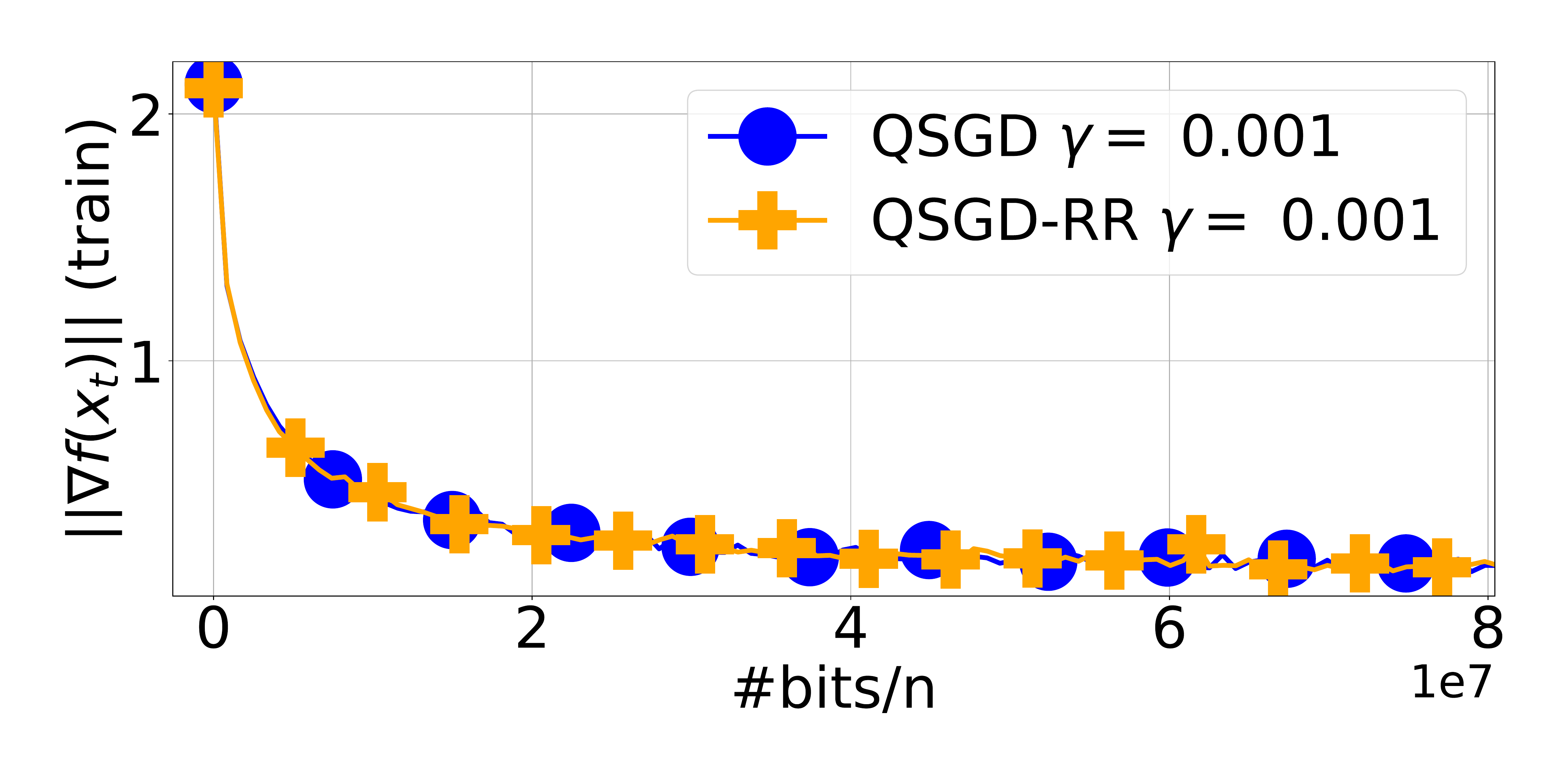} \caption{}
	\end{subfigure}
	\begin{subfigure}[ht]{0.32\textwidth}
		\includegraphics[width=\textwidth]{./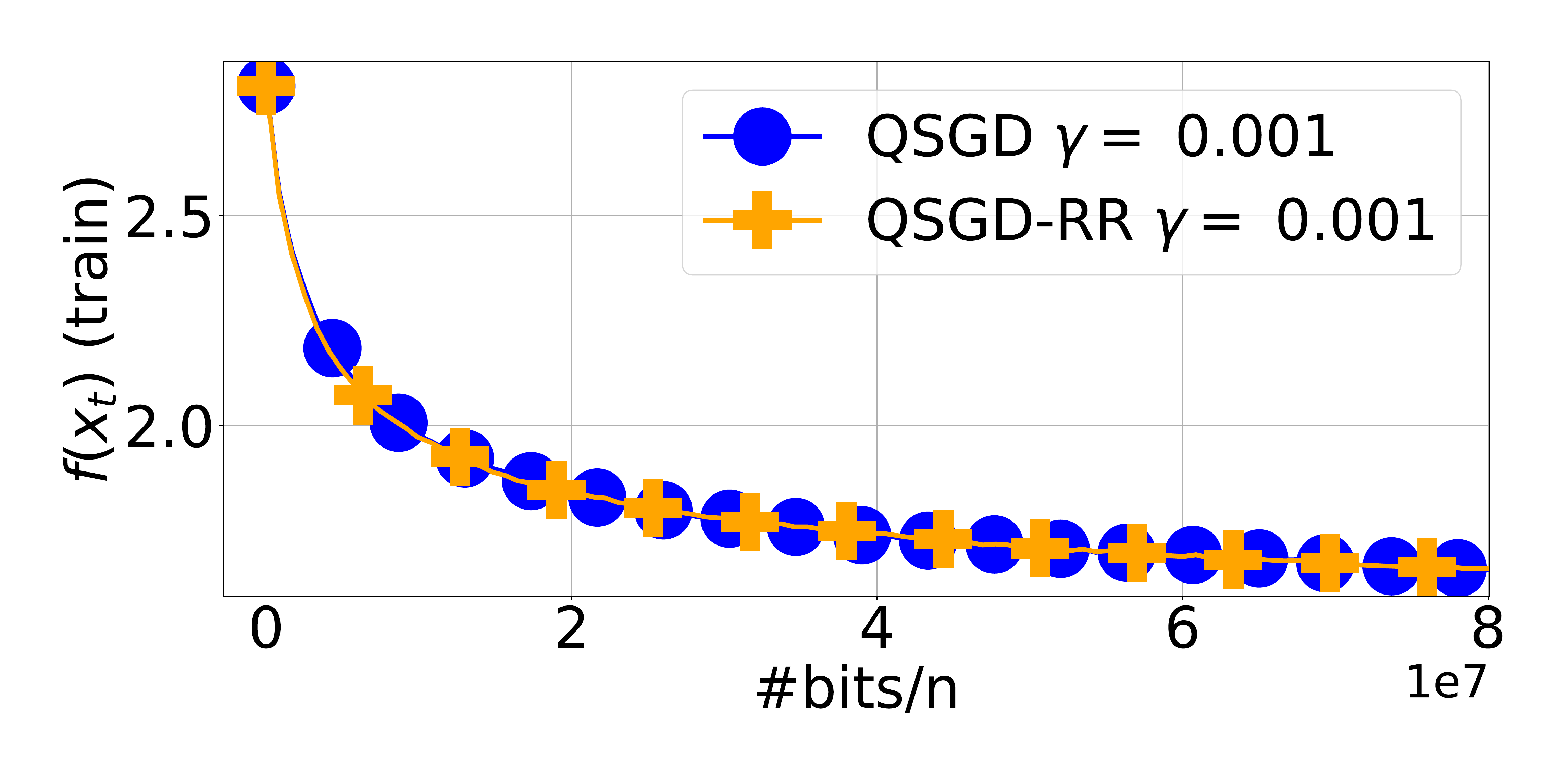} \caption{}
	\end{subfigure}
	\caption{{Comparison of \algname{QSGD} and \algname{Q-RR} in the training of the last linear layer of \texttt{ResNet-18} on \texttt{CIFAR-10}, with $n=10$ workers. Here (a) shows Top-1 accuracy on test set, (b) -- norm of full gradient on the train set, (c) -- loss function value on the train set. Stepsizes and decay shift has been tuned from $s_{set}$ and $\gamma_{set}$ based on minimum achievable value of loss function on the train set. Both algorithms used fixed stepsize during training.}}\label{fig:train_resnet18_qsgd_best_to_best_bad}
\end{figure}

\begin{figure}[H]
	\centering
	\captionsetup[sub]{font=small,labelfont={}}	
	\begin{subfigure}[ht]{0.49\textwidth}
		\includegraphics[width=\textwidth]{./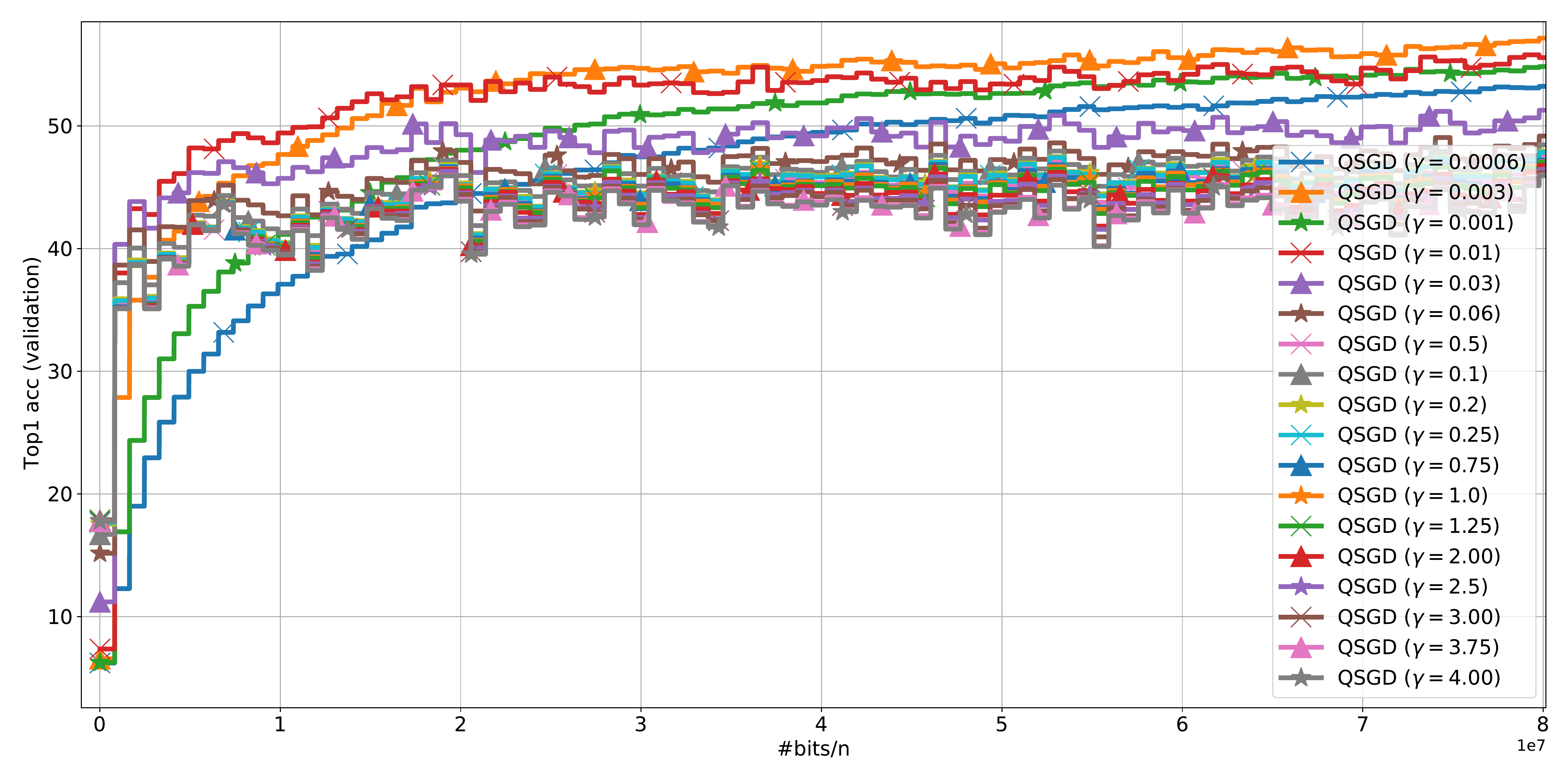} \caption{}
	\end{subfigure}
	\begin{subfigure}[ht]{0.49\textwidth}
		\includegraphics[width=\textwidth]{./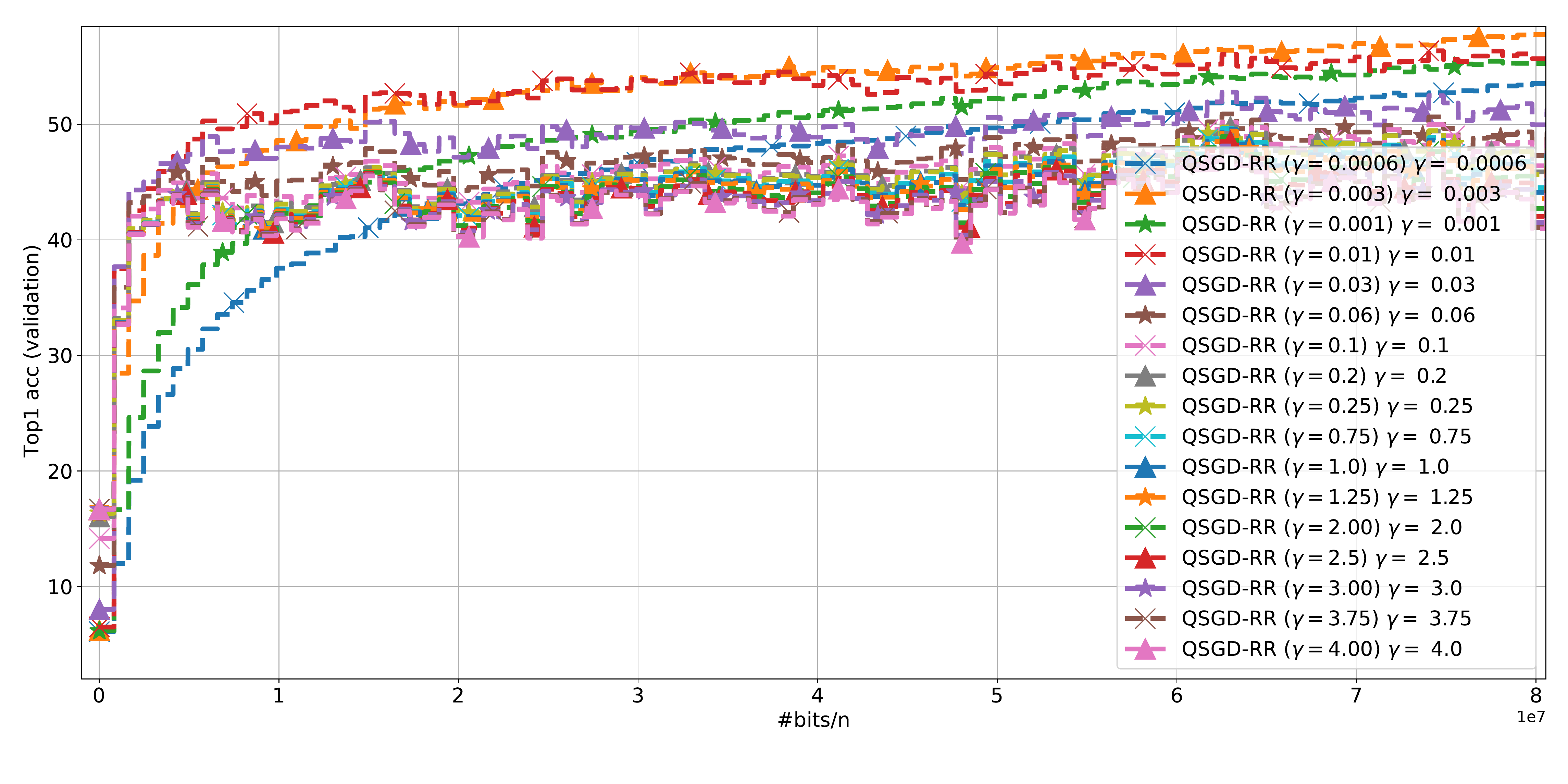} \caption{}
	\end{subfigure}

	\begin{subfigure}[ht]{0.49\textwidth}
		\includegraphics[width=\textwidth]{./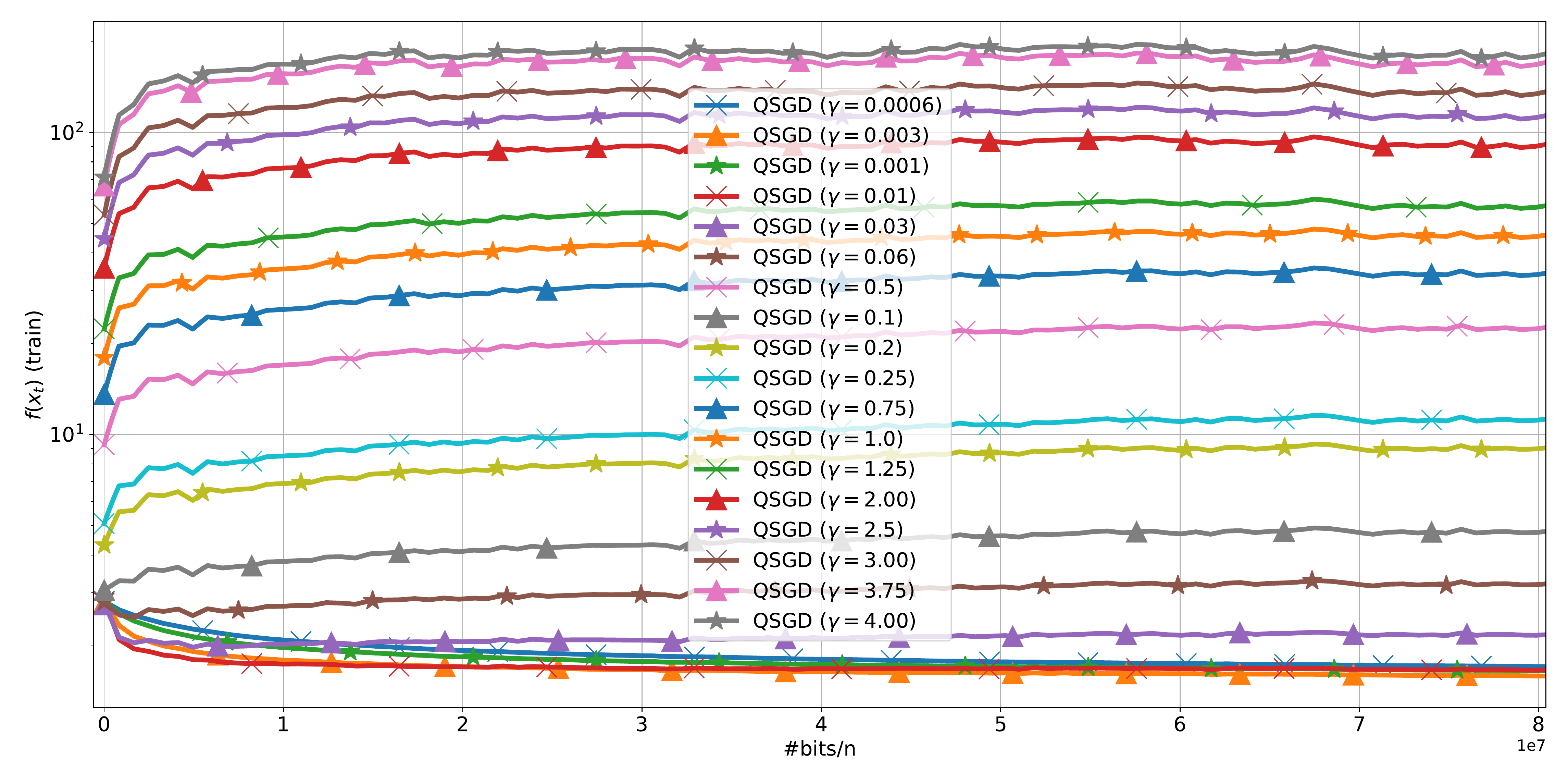} \caption{}
	\end{subfigure}
	\begin{subfigure}[ht]{0.49\textwidth}
		\includegraphics[width=\textwidth]{./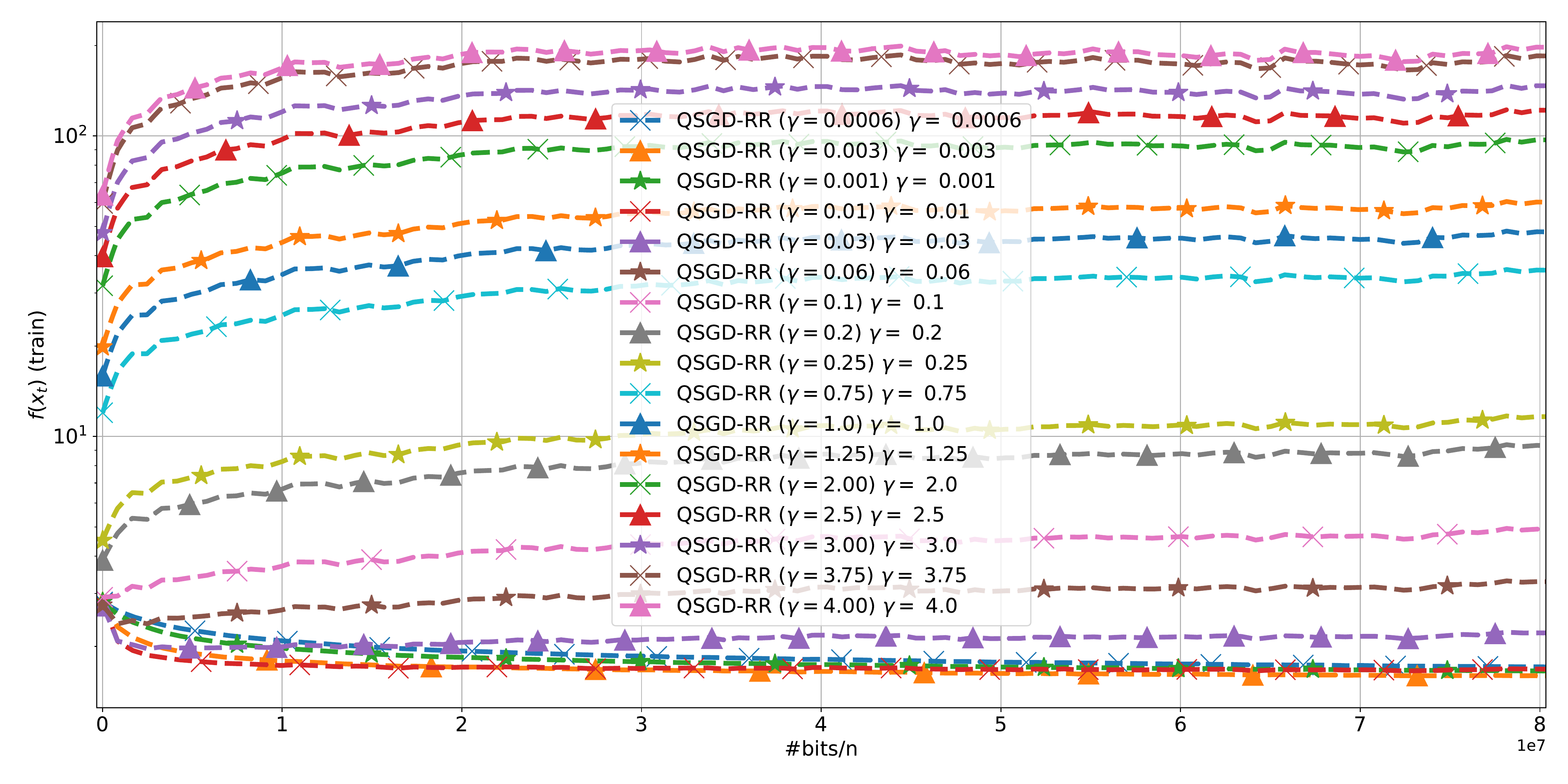} \caption{}
	\end{subfigure}
	
	\begin{subfigure}[ht]{0.49\textwidth}
		\includegraphics[width=\textwidth]{./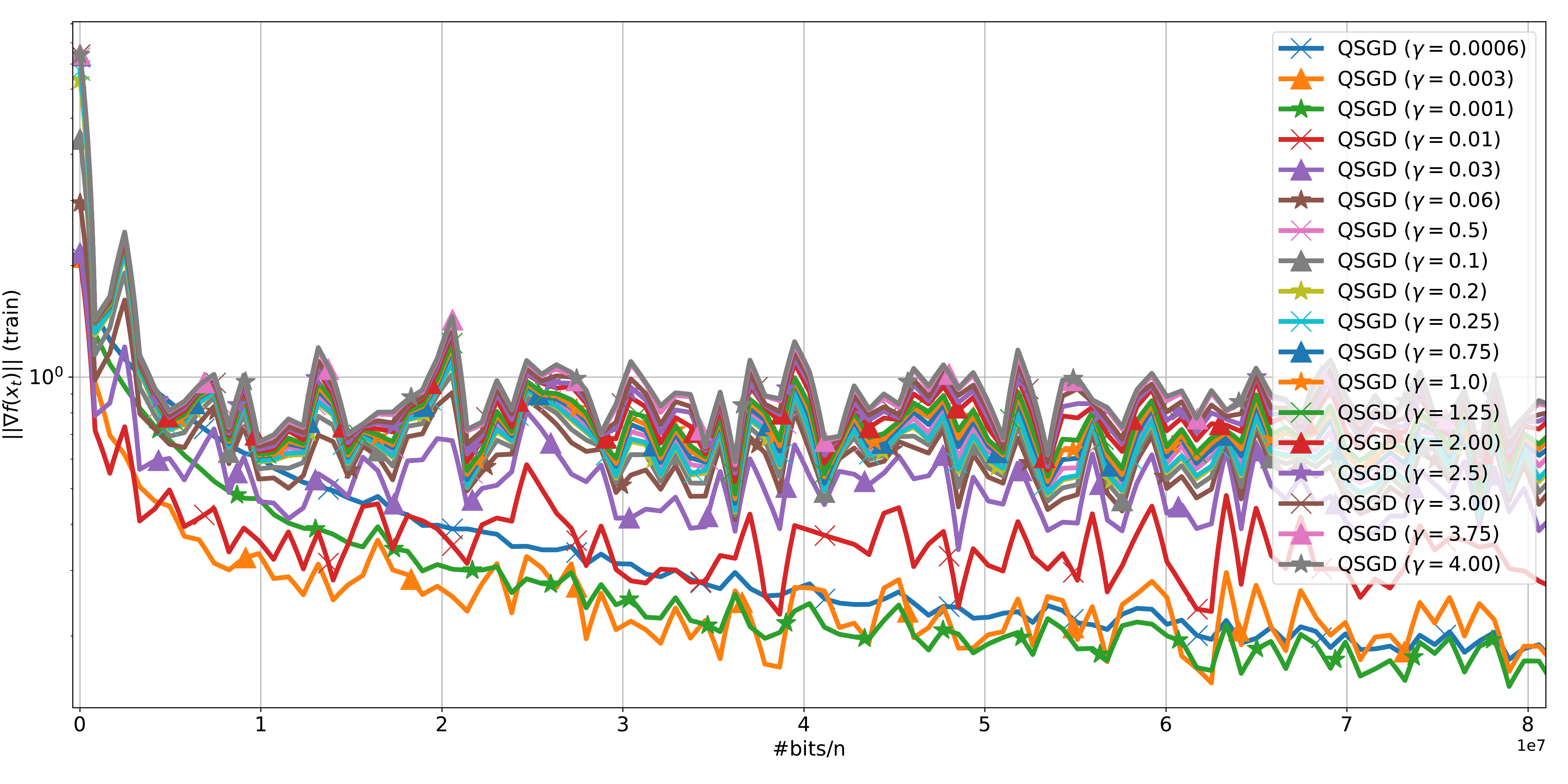} \caption{}
	\end{subfigure}
	\begin{subfigure}[ht]{0.49\textwidth}
		\includegraphics[width=\textwidth]{./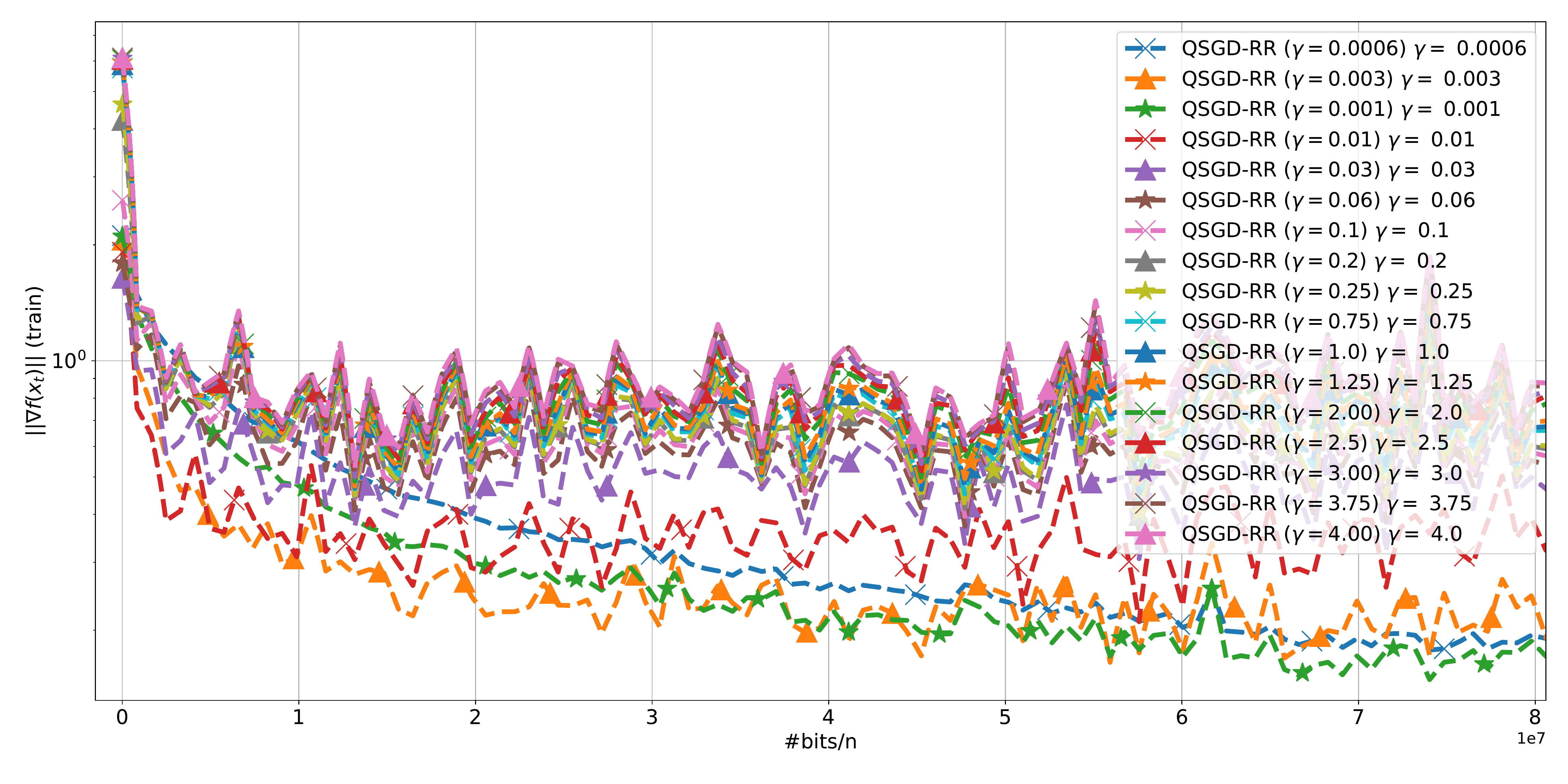} \caption{}
	\end{subfigure}
	\caption{{Comparison of \algname{QSGD} and \algname{Q-RR} in the training of the last linear layer of \texttt{ResNet-18} on \texttt{CIFAR-10}, with $n=10$ workers. Here (a) and (b) show Top-1 accuracy on test set, (c) and (d) -- loss function value on the train set, (e) and (f) -- norm of full gradient on the train set. Stepsizes and decay shift has been tuned from $s_{set}$ and $\gamma_{set}$ based on minimum achievable value of loss function on the train set. During training stepsize was fixed. Batch Normalization was turned off.}}\label{fig:train_resnet18_qsgd_all_bad}
\end{figure}

\newpage

\section{Missing Proofs For Q-RR}

In the main part of the paper, we intoduce Assumptions~\ref{asm:lip_max_f_m} and \ref{asm:sc_each_f_m} for the analysis of \algname{Q-RR} and \algname{DIANA-RR}. These assumptions can be refined as follows.

\begin{assumption}
	\label{asm:lip_avr_f}
	Function $f^{\pi^i} = \frac{1}{M}\sum^M_{i=1}f^{\pi_m^i}_m: \R^d \rightarrow \R$ is $\widetilde{L}$-smooth for all sets of permutations $\pi = (\pi_1,\ldots,\pi_m)$ from $[n]$ and all $i \in [n]$, i.e.,
	\begin{equation}
		\max_{i \in [n], \pi}\|\nabla f^{\pi^i}(x) - \nabla f^{\pi^i}(y)\|\leq \widetilde{L}\|x - y\| \quad \forall x, y \in \R^d.\notag
	\end{equation}
\end{assumption}

\begin{assumption}
\label{asm:sc_each_f_m_refined}
Function $f^{\pi^i} = \frac{1}{M}\sum^M_{i=1}f^{\pi_m^i}_m: \R^d \rightarrow \R$ is $\widetilde{\mu}$-strongly convex for all sets of permutations $\pi = (\pi_1,\ldots,\pi_m)$ from $[n]$ and all $i \in [n]$, i.e.,
\begin{equation}
	\min_{i \in [n], \pi}\left\{f^{\pi^i}(x) - f^{\pi^i}(y) - \la\nabla f^{i,\pi}(y), x - y\ra\right\} \geq \frac{\widetilde{\mu}}{2}\|x - y\|^2 \quad \forall x, y \in \R^d. \notag
\end{equation}
Moreover, functions $f^i_1, f^i_2, \dots, f^i_M: \R^d \rightarrow \R$  are convex for all $i = 1,\dots, n$.
\end{assumption}

We notice that Assumptions~\ref{asm:lip_max_f_m} and \ref{asm:sc_each_f_m} imply Assumptions~\ref{asm:lip_avr_f} and \ref{asm:sc_each_f_m_refined}. Moreover, $\widetilde{L} \leq L_{\max}$. In the proofs of the results for \algname{Q-RR} and \algname{DIANA-RR}, we use Assumptions~\ref{asm:lip_avr_f} in addition to Assumption~\ref{asm:lip_max_f_m} and we use Assumption~\ref{asm:sc_each_f_m_refined} instead of Assumption~\ref{asm:sc_each_f_m}.

\subsection{Proof of Theorem \ref{th_conv_new_rr_q}}
For convenience, we restate the theorem below.
\begin{theorem}[Theorem~\ref{th_conv_new_rr_q}]
	\label{thm:advaced_conv_Q_RR}
	Let Assumptions \ref{asm:quantization_operators}, \ref{asm:lip_max_f_m}, \ref{asm:lip_avr_f}, \ref{asm:sc_each_f_m_refined} hold and $0 < \gamma \leq \frac{1}{\widetilde{L}+2\frac{\omega}{M}L_{\max}}$. Then, for all $T \geq 0$ the iterates produced by \algname{Q-RR} (Algorithm~\ref{alg_new_Q_RR}) satisfy
	\begin{equation}
		\EE\|x_{T}-x_{\star}\|^2\leq \left(1-\gamma\widetilde{\mu}\right)^{nT}\|x_0-x_{\star}\|^2 + \frac{2\gamma^2\sigma^2_{\text{rad}}}{\widetilde{\mu}}+\frac{2\gamma\omega}{\widetilde{\mu} M}\left(\zeta^2_{\star}+\sigma^2_{\star}\right),\notag
	\end{equation}
	where $\zeta^2_{\star} \eqdef \frac{1}{M}\sum\limits_{m=1}^M\|\nabla f_m (x_{\star})\|^2,$ and $\sigma_{\star}^2 \eqdef \frac{1}{Mn}\sum\limits_{m=1}^M\sum\limits_{i=1}^n \|\nabla f_m^i(x_\star) - \nabla f_m(x_\star)\|^2.$
\end{theorem}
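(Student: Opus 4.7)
The plan is to combine the virtual-iterate technique used in the analysis of \algname{FedRR}~\citep{mishchenko21_proxim_feder_random_reshuf} with a careful handling of the extra compression variance. Fixing an epoch $t$, I would define the virtual trajectory $x_\star^0 = x_\star$, $x_\star^{i+1} = x_\star^i - \tfrac{\gamma}{M}\sum_m \nabla f_m^{\pi_m^i}(x_\star)$, which satisfies $x_\star^n = x_\star$ because $\nabla f(x_\star) = 0$. Writing $G_t^i = \nabla f^{\pi^i}(x_t^i) - \nabla f^{\pi^i}(x_\star)$ for the centered averaged gradient and $E_t^i = \frac{1}{M}\sum_m[\cQ(\nabla f_m^{\pi_m^i}(x_t^i)) - \nabla f_m^{\pi_m^i}(x_t^i)]$ for the zero-mean compression error, the update can be expressed as $x_t^{i+1} - x_\star^{i+1} = (x_t^i - x_\star^i) - \gamma G_t^i - \gamma E_t^i$, which is the starting point of the one-step analysis.

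Next, I would take the conditional expectation with respect to the compression operators. The cross term involving $E_t^i$ vanishes by unbiasedness and produces the variance contribution $\gamma^2\mathbb{E}_\cQ\|E_t^i\|^2 \leq \frac{\gamma^2\omega}{M^2}\sum_m \|\nabla f_m^{\pi_m^i}(x_t^i)\|^2$ via Assumption~\ref{asm:quantization_operators}. The remaining ``clean'' quantity $\|(x_t^i - x_\star^i) - \gamma G_t^i\|^2$ I would expand and combine with strong convexity of $f^{\pi^i}$ (Assumption~\ref{asm:sc_each_f_m_refined}) together with smoothness (Assumption~\ref{asm:lip_avr_f}) to obtain a factor $(1-\gamma\widetilde{\mu})\|x_t^i-x_\star^i\|^2$ accompanied by a non-positive Bregman term of the form $-2\gamma(1-\gamma\widetilde{L})D_{f^{\pi^i}}(x_t^i, x_\star^i)$.

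The crucial step will be bounding $\|\nabla f_m^{\pi_m^i}(x_t^i)\|^2 \leq 4 L_{\max} D_{f_m^{\pi_m^i}}(x_t^i, x_\star) + 2\|\nabla f_m^{\pi_m^i}(x_\star)\|^2$ via smoothness and convexity of each $f_m^i$, and then absorbing the Bregman-divergence term into the contraction of the clean part. This is the main obstacle: the stepsize condition $\gamma \leq 1/(\widetilde{L} + 2\omega L_{\max}/M)$ is exactly what makes the coefficient in front of the remaining Bregman divergence non-positive after the absorption, giving a clean one-step inequality of the form
\begin{equation*}
    \mathbb{E}\|x_t^{i+1}-x_\star\|^2 \leq (1-\gamma\widetilde{\mu})\,\mathbb{E}\|x_t^i - x_\star\|^2 + 2\gamma^3\sigma_{\text{rad}}^2 + \frac{2\gamma^3\omega}{M^2}\sum_m\|\nabla f_m^{\pi_m^i}(x_\star)\|^2,
\end{equation*}
where the shuffling-radius term arises when switching between $\|x_t^i - x_\star^i\|^2$ and $\|x_t^i - x_\star\|^2$ using the drift bound on $x_\star^i - x_\star$ that comes directly from the definition of $\sigma_{\text{rad}}^2$.

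Finally, I would unroll this recursion across the $n$ inner steps of one epoch and across the $T$ outer epochs, obtaining a geometric contraction with ratio $(1-\gamma\widetilde{\mu})^{nT}$ on the initial distance. Summing the geometric series converts the $\gamma^3\sigma_{\text{rad}}^2$-term into $\frac{2\gamma^2\sigma_{\text{rad}}^2}{\widetilde{\mu}}$. Taking the expectation over the permutations and using the identity $\mathbb{E}_\pi\bigl[\frac{1}{nM}\sum_{i,m}\|\nabla f_m^{\pi_m^i}(x_\star)\|^2\bigr] = \frac{1}{nM}\sum_{i,m}\|\nabla f_m^i(x_\star)\|^2 = \zeta_\star^2 + \sigma_\star^2$ turns the compression-variance contribution into $\frac{2\gamma\omega}{\widetilde{\mu} M}(\zeta_\star^2 + \sigma_\star^2)$, yielding exactly the claimed bound.
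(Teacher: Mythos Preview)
Your plan matches the paper's proof: virtual iterates, compression-variance separation, three-point identity, absorbing the Bregman terms via the stepsize condition, and unrolling. A few details in your write-up need correcting. The one-step recursion should stay on $\|x_t^i - x_{\star}^i\|^2$ throughout the epoch; the shuffling-radius term enters not through any ``switch'' to $\|x_t^i - x_{\star}\|^2$ but directly as the three-point-identity residual $\frac{2\gamma}{M}\sum_m D_{f_m^{\pi_m^i}}(x_{\star}^i, x_{\star}) \le 2\gamma^3\sigma^2_{\text{rad}}$, and the epoch closes because $x_{\star}^0 = x_{\star}^n = x_{\star}$. Correspondingly, the Bregman term you absorb is $D_{f_m^{\pi_m^i}}(x_t^i, x_{\star})$ with base point $x_{\star}$, not $x_{\star}^i$. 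Finally, the compression-variance constant in the one-step bound carries $\gamma^2$, not $\gamma^3$; after unrolling and permutation-averaging this is what produces $\frac{2\gamma\omega}{\widetilde{\mu}M}(\zeta_{\star}^2+\sigma_{\star}^2)$ as stated.
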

\begin{proof}
	Using \(x^{i+1}_{\star} = x^{i}_{\star} -\frac{\gamma}{M}\sum^M_{m=1}\nabla f^{\pi^i_m}_m(x_{\star})\) and line 7 of Algorithm \ref{alg_new_Q_RR}, we get
	\begin{eqnarray*}
		\|x^{i+1}_t-x^{i+1}_{\star}\|^2 &=& \left\|x^i_t-x^{i}_{\star}- \gamma\frac{1}{M}\sum^M_{m=1} \left(\cQ\left(\nabla f^{\pi^{i}_m}_m(x^{i}_{t})\right) - \nabla f^{\pi^{i}_m}_m(x_{\star})\right)\right\|^2\\
		&=& \left\|x^i_t-x^{i}_{\star}\right\|^2 - 2\gamma\left\la\frac{1}{M}\sum^M_{m=1} \left(\cQ\left(\nabla f^{\pi^{i}_m}_m(x^{i}_{t})\right) - \nabla f^{\pi^{i}_m}_m(x_{\star})\right), x^i_t-x^{i}_{\star} \right\ra\\
		&&+\gamma^2\left\|\frac{1}{M}\sum^M_{m=1} \left(\cQ\left(\nabla f^{\pi^{i}_m}_m(x^{i}_{t})\right) - \nabla f^{\pi^{i}_m}_m(x_{\star})\right)\right\|^2.
	\end{eqnarray*}
	Taking the expectation w.r.t.\ $\cQ$, we obtain 
	\begin{eqnarray*}
		\EE_{\cQ}\left[\|x^{i+1}_t-x^{i+1}_{\star}\|^2 \right]
		&=& \left\|x^i_t-x^{i}_{\star}\right\|^2 - 2\gamma\left\la\frac{1}{M}\sum^M_{m=1} \left(\nabla f^{\pi^{i}_m}_m(x^{i}_{t}) - \nabla f^{\pi^{i}_m}_m(x_{\star})\right), x^i_t-x^{i}_{\star} \right\ra\\
		&&+\gamma^2\EE_{\cQ}\left[\left\|\frac{1}{M}\sum^M_{m=1} \left(\cQ\left(\nabla f^{\pi^{i}_m}_m(x^{i}_{t})\right) - \nabla f^{\pi^{i}_m}_m(x_{\star})\right)\right\|^2\right].
	\end{eqnarray*}
	In view of Assumption \ref{asm:quantization_operators} and $\EE_{\xi}\|\xi - c\|^2 = \EE_{\xi}\|\xi - \EE_{\xi}\xi\|^2+ \|\EE_{\xi}\xi - c\|^2 $, we have 
	\begin{eqnarray*}
		\EE_{\cQ}\left[\|x^{i+1}_t-x^{i+1}_{\star}\|^2 \right]
		&=& \left\|x^i_t-x^{i}_{\star}\right\|^2 - \frac{2\gamma}{M}\sum^M_{m=1}\left\la \nabla f^{\pi^{i}_m}_m(x^{i}_{t}) - \nabla f^{\pi^{i}_m}_m(x_{\star}), x^i_t-x^{i}_{\star} \right\ra\\
		&&+\gamma^2\EE_{\cQ}\left[\left\|\frac{1}{M}\sum^M_{m=1} \left(\cQ\left(\nabla f^{\pi^{i}_m}_m(x^{i}_{t})\right) - \nabla f^{\pi^{i}_m}_m(x^{i}_{t})\right)\right\|^2\right]\\
		&&+ \gamma^2\left\|\frac{1}{M}\sum^M_{m=1} \left(\nabla f^{\pi^{i}_m}_m(x^{i}_{t}) - \nabla f^{\pi^{i}_m}_m(x_{\star})\right)\right\|^2\\
		&\leq& \left\|x^i_t-x^{i}_{\star}\right\|^2 - \frac{2\gamma}{M}\sum^M_{m=1}\left\la \nabla f^{\pi^{i}_m}_m(x^{i}_{t}) - \nabla f^{\pi^{i}_m}_m(x_{\star}), x^i_t-x^{i}_{\star} \right\ra\\
		&&+ \gamma^2\left\|\frac{1}{M}\sum^M_{m=1} \left(\nabla f^{\pi^{i}_m}_m(x^{i}_{t}) - \nabla f^{\pi^{i}_m}_m(x_{\star})\right)\right\|^2 +\frac{\gamma^2\omega}{M^2}\sum^M_{m=1}\left\| \nabla f^{\pi^{i}_m}_m(x^{i}_{t})\right\|^2,
	\end{eqnarray*}
	where in the last step we apply independence of $\cQ\left(\nabla f^{\pi^{i}_m}_m(x^{i}_{t})\right)$ for $m\in [M]$. Next, we use three-point identity\footnote{For any differentiable function $f : \R^d \to \R^d$ we have: $\langle \nabla f(x) - \nabla f(y), x- z \rangle = D_{f}(z, x) + D_f(x, y) - D_f(z, y)$.} and obtain 
	\begin{eqnarray*}
		\EE_{\cQ}\left[\|x^{i+1}_t-x^{i+1}_{\star}\|^2 \right]
		&\leq& \left\|x^i_t-x^{i}_{\star}\right\|^2 - \frac{2\gamma}{M}\sum^M_{m=1}\left( D_{f^{\pi^{i}_m}_m}(x^{i}_{\star},x^{i}_{t}) +D_{f^{\pi^{i}_m}_m}(x^{i}_{t},x_{\star})- D_{f^{\pi^{i}_m}_m}(x^{i}_{\star},x_{\star})\right) \\
		&&+ \gamma^2\left\|\frac{1}{M}\sum^M_{m=1} \left(\nabla f^{\pi^{i}_m}_m(x^{i}_{t}) - \nabla f^{\pi^{i}_m}_m(x_{\star})\right)\right\|^2 +\frac{\gamma^2\omega}{M^2}\sum^M_{m=1}\left\| \nabla f^{\pi^{i}_m}_m(x^{i}_{t})\right\|^2.
	\end{eqnarray*}
	Applying $\widetilde{L}$-smoothness and convexity of $\frac{1}{M}\sum_{m= 1}^m f_m^{\pi_m^i}$, $\widetilde\mu$-strong convexity of $\frac{1}{M}\sum_{m= 1}^m f_m^{\pi_m^i}$, and $L_{\max}$-smoothness and convexity of $f_m^i$, we get
	\begin{eqnarray*}
		\EE_{\cQ}\left[\|x^{i+1}_t-x^{i+1}_{\star}\|^2 \right]
		&\leq& \left(1-\gamma\widetilde\mu\right)\left\|x^i_t-x^{i}_{\star}\right\|^2 - 2\gamma \left(1 -\widetilde{L}\gamma\right)\frac{1}{M}\sum^M_{m=1}D_{f^{\pi^{i}_m}_m}(x^{i}_{t},x_{\star}) \\
		&&+ 2\gamma\frac{1}{M}\sum^M_{m=1}D_{f^{\pi^{i}_m}_m}(x^{i}_{\star},x_{\star}) +\frac{\gamma^2\omega}{M^2}\sum^M_{m=1}\left\| \nabla f^{\pi^{i}_m}_m(x^{i}_{t})\right\|^2\\
		&\leq& \left(1-\gamma\widetilde\mu\right)\left\|x^i_t-x^{i}_{\star}\right\|^2 - 2\gamma \left(1 -
		\widetilde{L}\gamma\right)\frac{1}{M}\sum^M_{m=1}D_{f^{\pi^{i}_m}_m}(x^{i}_{t},x_{\star}) \\
		&&+ 2\gamma\frac{1}{M}\sum^M_{m=1}D_{f^{\pi^{i}_m}_m}(x^{i}_{\star},x_{\star}) + \frac{2\gamma^2\omega}{M^2}\sum^M_{m=1}\left\| \nabla f^{\pi^{i}_m}_m(x_{\star})\right\|^2\\
		&&+ \frac{2\gamma^2\omega}{M^2}\sum^M_{m=1}\left\| \nabla f^{\pi^{i}_m}_m(x^{i}_{t}) - \nabla f^{\pi^{i}_m}_m(x_{\star})\right\|^2 \\
		&\leq& \left(1-\gamma\widetilde\mu\right)\left\|x^i_t-x^{i}_{\star}\right\|^2  + \frac{2\gamma^2\omega}{M^2}\sum^M_{m=1}\left\| \nabla f^{\pi^{i}_m}_m(x_{\star})\right\|^2+ \frac{2\gamma}{M}\sum^M_{m=1}D_{f^{\pi^{i}_m}_m}(x^{i}_{\star},x_{\star})\\
		&&- 2\gamma \left(1 -\gamma\left(\widetilde{L} +\frac{2\omega L_{\max}}{M}\right)\right)\frac{1}{M}\sum^M_{m=1}D_{f^{\pi^{i}_m}_m}(x^{i}_{t},x_{\star}). 
	\end{eqnarray*}
	Taking the  full  expectation and using a definition of shuffle radius, $0<\gamma \leq \frac{1}{\left(\widetilde{L}+2\frac{\omega}{M}L_{\max}\right)}$, and $D_{f^{\pi^{i}_m}_m}(x^{i}_{t},x_{\star}) \geq 0$, we obtain
	\begin{eqnarray*}
		\EE\left[\|x^{i+1}_t-x^{i+1}_{\star}\|^2 \right]
		&\leq& \left(1-\gamma\widetilde\mu\right)\EE\left[\left\|x^i_t-x^{i}_{\star}\right\|^2\right] + 2\gamma^3\sigma^2_{\text{rad}} + \frac{2\gamma^2\omega}{M^2}\sum^M_{m=1}\EE\left[\left\| \nabla f^{\pi^{i}_m}_m(x_{\star})\right\|^2\right]\\
		&=& \left(1-\gamma\widetilde\mu\right)\EE\left[\left\|x^i_t-x^{i}_{\star}\right\|^2\right] + 2\gamma^3 \sigma^2_{\text{rad}} + \frac{2\gamma^2\omega}{M^2n}\sum^M_{m=1}\sum^n_{j=1}\left\| \nabla f^{j}_m(x_{\star})\right\|^2\\
		&\leq& \left(1-\gamma\widetilde\mu\right)\EE\left[\left\|x^i_t-x^{i}_{\star}\right\|^2\right] + 2\gamma^3 \sigma^2_{\text{rad}} + \frac{2\gamma^2\omega}{M}\left(\zeta^2_{\star}+\sigma^2_{\star}\right).
	\end{eqnarray*}
	Unrolling the recurrence in $i$, we derive 
	\begin{eqnarray*}
		\EE\left[\|x_{t+1}-x_{\star}\|^2 \right]
		&\leq& \left(1-\gamma\widetilde\mu\right)^n\EE\left[\left\|x_t-x_{\star}\right\|^2\right] + 2\gamma^3 \sigma^2_{\text{rad}}\sum^{n-1}_{j=0}(1-\gamma\widetilde\mu)^j\\
		&&+ \frac{2\gamma^2\omega}{M}\left(\zeta^2_{\star}+\sigma^2_{\star}\right)\sum^{n-1}_{j=0}(1-\gamma\widetilde\mu)^j.
	\end{eqnarray*}
	Unrolling the recurrence in $t$, we derive 
	\begin{eqnarray*}
		\EE\left[\|x_{T}-x_{\star}\|^2 \right]
		&\leq& \left(1-\gamma\widetilde\mu\right)^{nT}\left\|x_0-x_{\star}\right\|^2 + 2\gamma^3\sigma^2_{\text{rad}}\sum^{T-1}_{t=0}(1-\gamma\widetilde\mu)^{nt}\sum^{n-1}_{j=0}(1-\gamma\widetilde\mu)^j \\
		&&+ \frac{2\gamma^2\omega}{M}\left(\zeta^2_{\star}+\sigma^2_{\star}\right)\sum^{nT-1}_{j=0}(1-\gamma\widetilde\mu)^{nt}\sum^{n-1}_{j=0}(1-\gamma\widetilde\mu)^j.
	\end{eqnarray*}
	Since $\sum^{nT-1}_{j=0}(1-\gamma\widetilde\mu)^{j} \leq \frac{1}{\gamma\widetilde\mu}$, we get the result.
\end{proof}

\begin{corollary}
\label{cor_convergence_new_Q_RR}
Let the assumptions of Theorem~\ref{thm:advaced_conv_Q_RR} hold and
\begin{equation}
	\label{new_gamma_new_Q_RR}
	\gamma = \min\left\{\frac{1}{\widetilde{L}+2\frac{\omega}{M}L_{\max}}, \sqrt{\frac{\varepsilon\widetilde{\mu}}{6\sigma^2_{\text{rad}}}}, \frac{\varepsilon\widetilde{\mu}M}{6\omega\left(\zeta^2_{\star}+\sigma^2_{\star}\right)}\right\}.
\end{equation}
Then, \algname{Q-RR} (Algorithm~\ref{alg_new_Q_RR}) finds a solution with accuracy $\varepsilon>0$ after the following number of communication rounds: 
\begin{equation}
	\widetilde{\cO}\left(\frac{\widetilde{L}}{\widetilde{\mu}}+\frac{\omega}{M}\frac{L_{\max}}{\widetilde{\mu}} + \frac{\omega}{M}\frac{\zeta^2_{\star}+\sigma^2_{\star}}{\varepsilon\widetilde{\mu}^2} + \frac{\sigma_{\text{rad}}}{\sqrt{\varepsilon\widetilde{\mu}^3}}\right).\notag
\end{equation}
\end{corollary}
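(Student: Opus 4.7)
The plan is to reduce the corollary to a straightforward stepsize-tuning exercise applied to the three-term bound of Theorem~\ref{thm:advaced_conv_Q_RR}. Recall that the theorem guarantees
$$\EE\|x_T - x_\star\|^2 \leq (1 - \gamma\widetilde{\mu})^{nT}\|x_0 - x_\star\|^2 + \frac{2\gamma^2 \sigma_{\text{rad}}^2}{\widetilde{\mu}} + \frac{2\gamma\omega(\zeta_\star^2 + \sigma_\star^2)}{\widetilde{\mu} M},$$
subject to $\gamma \leq (\widetilde{L} + 2\omega L_{\max}/M)^{-1}$. To make the total error at most $\varepsilon$, I would force each of the three summands to be at most $\varepsilon/3$. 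The choice of $\gamma$ in \eqref{new_gamma_new_Q_RR} is tailored exactly for this: the first candidate is the admissibility bound from the theorem; the second, $\sqrt{\varepsilon\widetilde{\mu}/(6\sigma_{\text{rad}}^2)}$, comes from solving $2\gamma^2\sigma_{\text{rad}}^2/\widetilde{\mu} \leq \varepsilon/3$; and the third, $\varepsilon\widetilde{\mu} M/(6\omega(\zeta_\star^2+\sigma_\star^2))$, comes from solving $2\gamma\omega(\zeta_\star^2+\sigma_\star^2)/(\widetilde{\mu} M) \leq \varepsilon/3$. Taking the minimum guarantees that all three constraints are met simultaneously.

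Next, I would handle the contraction term. Applying $1 - \gamma\widetilde{\mu} \leq \exp(-\gamma\widetilde{\mu})$ yields
$$(1-\gamma\widetilde{\mu})^{nT}\|x_0-x_\star\|^2 \leq \exp(-\gamma\widetilde{\mu}\, nT)\|x_0-x_\star\|^2,$$
which is at most $\varepsilon/3$ whenever $nT \geq (\gamma\widetilde{\mu})^{-1}\log(3\|x_0-x_\star\|^2/\varepsilon)$. Hence the communication complexity is $\widetilde{\cO}(1/(\gamma\widetilde{\mu}))$, with the logarithmic factor absorbed into $\widetilde{\cO}(\cdot)$.

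To finish, I would convert the min-of-three into a sum via the elementary inequality $1/\min\{a,b,c\} = \max\{1/a,1/b,1/c\} \leq 1/a + 1/b + 1/c$. This gives
$$\frac{1}{\gamma\widetilde{\mu}} \leq \frac{\widetilde{L}}{\widetilde{\mu}} + \frac{2\omega L_{\max}}{M\widetilde{\mu}} + \frac{\sigma_{\text{rad}}}{\widetilde{\mu}}\sqrt{\frac{6}{\varepsilon\widetilde{\mu}}} + \frac{6\omega(\zeta_\star^2 + \sigma_\star^2)}{\varepsilon M \widetilde{\mu}^2},$$
which matches the claimed complexity up to absolute constants (again absorbed into $\widetilde{\cO}(\cdot)$). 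The argument is essentially routine; the only bookkeeping item worth double-checking is that using $\widetilde{L} \leq L_{\max}$ lets the first summand $\widetilde{L}/\widetilde{\mu}$ be merged with $(\omega/M)\cdot(L_{\max}/\widetilde{\mu})$ into the single $(1 + \omega/M)L_{\max}/\widetilde{\mu}$ term that appears in the main-text statement \eqref{eq:qrr-complexity}. There is no real obstacle here beyond careful tracking of constants, since all of the nontrivial work (quantization variance handling, shuffling-radius accounting, three-point identity manipulations) was already done inside Theorem~\ref{thm:advaced_conv_Q_RR}.
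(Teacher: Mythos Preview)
Your proposal is correct and follows essentially the same approach as the paper: bound each of the three terms in the theorem's guarantee by $\varepsilon/3$, which yields exactly the three candidates in the minimum defining $\gamma$, then read off $nT = \widetilde{\cO}(1/(\gamma\widetilde{\mu}))$ and substitute. You are in fact more explicit than the paper (which just says ``Substituting \eqref{new_gamma_new_Q_RR}, we get a final result''), spelling out the $1-x\le e^{-x}$ step and the $1/\min \le \sum 1/\cdot$ conversion.
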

\begin{proof}
Theorem~\ref{thm:advaced_conv_Q_RR} implies
\begin{equation}
	\EE\|x_{T}-x_{\star}\|^2\leq \left(1-\gamma\widetilde{\mu}\right)^{nT}\|x_0-x_{\star}\|^2 + \frac{2\gamma^2\sigma^2_{\text{rad}}}{\widetilde{\mu}}+\frac{2\gamma\omega}{\widetilde{\mu} M}\left(\zeta^2_{\star}+\sigma^2_{\star}\right).
\end{equation}
To estimate the number of communication rounds required to find a solution with accuracy $\varepsilon >0$, we need to upper-bound each term from the right-hand side by $\nicefrac{\varepsilon}{3}$. Thus, we get additional conditions on $\gamma$:
\begin{equation*}
	\frac{2\gamma^2\sigma^2_{\text{rad}}}{\widetilde{\mu}} < \frac{\varepsilon}{3},\qquad\frac{2\gamma\omega}{\widetilde{\mu} M}\left(\zeta^2_{\star}+\sigma^2_{\star}\right)<\frac{\varepsilon}{3}
\end{equation*}
and also the upper bound on the number of communication rounds $nT$
\begin{equation*}
	nT = \widetilde{\cO}\left(\frac{1}{\gamma\widetilde{\mu}}\right).
\end{equation*}
Substituting \eqref{new_gamma_new_Q_RR}, we get a final result.
\end{proof}

\subsection{Non-Strongly Convex Summands}
In this section, we provide the analysis of \algname{Q-RR} without using Assumptions~\ref{asm:sc_each_f_m}, \ref{asm:sc_each_f_m_refined}. Before we move one to the proofs, we would like to emphasize that 
\begin{equation*}
	x^{i+1}_t = x^{i}_t - \gamma \frac{1}{M}\sum^{M}_{m=1}\cQ\left(\nabla f^{\pi^i_m}_m(x^i_t)\right).
\end{equation*}
Then we have 
\begin{equation*}
	x_{t+1} = x_t - \gamma \sum^{n-1}_{i=0}\frac{1}{M}\sum^{M}_{m=1}\cQ\left(\nabla f^{\pi^i_m}_m(x^i_t)\right) = x_t - \tau \frac{1}{Mn}\sum^{n-1}_{i=0} \sum^{M}_{m=1}\cQ\left(\nabla f^{\pi^i_m}_m(x^i_t)\right),
\end{equation*}
where $\tau = \gamma n$. For convenience, we denote
\begin{equation*}
	g_t \eqdef \frac{1}{Mn}\sum^{n-1}_{i=0} \sum^{M}_{m=1}\cQ\left(\nabla f^{\pi^i_m}_m(x^i_t)\right), 
\end{equation*}
which allows us to write the update rule as $x_{t+1} = x_t - \tau g_t$.

\begin{lemma}[Lemma 1 from ~\citep{malinovsky22_server_side_steps_sampl_without}]
	\label{Sampling_without_replacement}
	For any $k \in [n]$, let $\xi_{\pi_1}, \dots,\xi_{\pi_k}$ be sampled uniformly without replacement from a set of vectors $\{\xi_1, \dots, \xi_n\}$ and $\bar{\xi}_{\pi}$ be their average. Then	
	\begin{equation}
		\EE \bar{\xi}_{\pi} = \bar{\xi},\qquad \EE\left[\|\bar{\xi}_{\pi} - \bar{\xi}\|^2\right] = \frac{n-k}{k(n-1)}\sigma^2, 
	\end{equation}
	where $\bar{\xi} \eqdef \frac{1}{n}\sum^n_{i=1}\xi_i$, $\bar{\xi}_{\pi} \eqdef \frac{1}{k}\sum^k_{i=1}\xi_{\pi_i}$, and $\sigma^2 \eqdef \frac{1}{n}\sum^n_{i=1}\|\xi_i - \bar{\xi}\|^2$.
\end{lemma}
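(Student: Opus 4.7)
The plan is to prove the two claims by symmetry plus a single algebraic identity. For the mean, since sampling without replacement is exchangeable, each marginal index $\pi_j$ is uniform on $[n]$, so $\EE \xi_{\pi_j} = \bar\xi$ for every $j \in [k]$. Averaging and using linearity of expectation gives $\EE \bar\xi_\pi = \bar\xi$.

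For the variance, I would write $\bar\xi_\pi - \bar\xi = \tfrac{1}{k}\sum_{j=1}^k (\xi_{\pi_j} - \bar\xi)$ and expand the squared norm:
\begin{equation*}
\EE\bigl[\|\bar\xi_\pi - \bar\xi\|^2\bigr] = \frac{1}{k^2}\sum_{j=1}^k \EE\bigl[\|\xi_{\pi_j} - \bar\xi\|^2\bigr] + \frac{1}{k^2}\sum_{j \neq l} \EE\bigl[\langle \xi_{\pi_j} - \bar\xi,\, \xi_{\pi_l} - \bar\xi\rangle\bigr].
\end{equation*}
The diagonal terms each equal $\sigma^2$ by the marginal uniformity. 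For the off-diagonal terms, exchangeability implies that every ordered pair $(\pi_j,\pi_l)$ with $j \neq l$ is uniformly distributed over ordered pairs of distinct indices in $[n]$, so
\begin{equation*}
\EE\bigl[\langle \xi_{\pi_j} - \bar\xi,\, \xi_{\pi_l} - \bar\xi\rangle\bigr] = \frac{1}{n(n-1)} \sum_{a \neq b} \langle \xi_a - \bar\xi,\, \xi_b - \bar\xi\rangle.
\end{equation*}

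The key identity unlocking the computation is $\sum_{a=1}^n (\xi_a - \bar\xi) = 0$. Taking its squared norm yields
\begin{equation*}
0 = \sum_{a=1}^n \|\xi_a - \bar\xi\|^2 + \sum_{a \neq b} \langle \xi_a - \bar\xi,\, \xi_b - \bar\xi\rangle = n\sigma^2 + \sum_{a \neq b} \langle \xi_a - \bar\xi,\, \xi_b - \bar\xi\rangle,
\end{equation*}
so the off-diagonal sum equals $-n\sigma^2$, and hence each off-diagonal expectation equals $-\sigma^2/(n-1)$. Substituting and simplifying,
\begin{equation*}
\EE\bigl[\|\bar\xi_\pi - \bar\xi\|^2\bigr] = \frac{1}{k^2}\Bigl(k\sigma^2 - k(k-1)\tfrac{\sigma^2}{n-1}\Bigr) = \frac{\sigma^2}{k}\cdot\frac{n-k}{n-1},
\end{equation*}
which is the desired finite-population correction.

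There is no real obstacle: the proof is a standard covariance computation and the only mild point is justifying the joint distribution of $(\pi_j,\pi_l)$ from exchangeability of uniform sampling without replacement. Sanity checks: at $k=n$ the variance vanishes (the sample mean equals $\bar\xi$ deterministically), and at $k=1$ we recover $\sigma^2$, matching the variance of a single uniform draw.
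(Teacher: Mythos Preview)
Your proof is correct and is the standard finite-population covariance computation for sampling without replacement. The paper itself does not prove this lemma; it is quoted verbatim from \citep{malinovsky22_server_side_steps_sampl_without} and used as a black box, so there is no ``paper's approach'' to compare against beyond the citation.
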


\begin{lemma}
	\label{lem_inner_product_cvx_case} 
	Under Assumptions \ref{asm:quantization_operators}, \ref{asm:sc_general_f},  \ref{asm:lip_max_f_m}, \ref{asm:lip_avr_f}, the following inequality holds
	\begin{equation*}
		\EE_{\cQ}\left[-2\tau\la g_t, x_t-x_{\star}\ra\right] \leq -\frac{\tau\mu}{2}\|x_t-x_{\star}\|^2-\tau(f(x_t)- f(x_{\star}))+\frac{\tau\widetilde{L}}{n}\sum^{n-1}_{i=0}\|x^i_t-x_t\|^2.
	\end{equation*}
\end{lemma}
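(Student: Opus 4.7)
The plan is to first take the conditional expectation $\EE_{\cQ}[\cdot]$ and exploit the unbiasedness of $\cQ$ (Assumption~\ref{asm:quantization_operators}): since the compressors applied on different workers and iterates are independent and unbiased,
\begin{equation*}
\EE_{\cQ}[g_t] \;=\; \frac{1}{Mn}\sum_{i=0}^{n-1}\sum_{m=1}^{M}\nabla f^{\pi^i_m}_m(x^i_t) \;=\; \frac{1}{n}\sum_{i=0}^{n-1}\nabla f^{\pi^i}(x^i_t),
\end{equation*}
where $f^{\pi^i} \eqdef \frac{1}{M}\sum_{m=1}^M f^{\pi^i_m}_m$ is the function appearing in Assumption~\ref{asm:lip_avr_f}. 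Then I would insert $\pm x^i_t$ inside the inner product and split:
\begin{equation*}
-2\tau\la \EE_{\cQ}[g_t], x_t - x_\star\ra \;=\; \underbrace{-\frac{2\tau}{n}\sum_{i=0}^{n-1}\la \nabla f^{\pi^i}(x^i_t), x_t - x^i_t\ra}_{(\mathrm{I})} \;+\; \underbrace{-\frac{2\tau}{n}\sum_{i=0}^{n-1}\la \nabla f^{\pi^i}(x^i_t), x^i_t - x_\star\ra}_{(\mathrm{II})}.
\end{equation*}

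Next, I would bound each piece using only convexity and $\widetilde L$-smoothness, since no individual-function strong convexity is available here. For $(\mathrm{I})$, $\widetilde L$-smoothness of $f^{\pi^i}$ combined with convexity gives the descent-lemma-style bound $\la \nabla f^{\pi^i}(x^i_t), x^i_t - x_t\ra \leq f^{\pi^i}(x^i_t) - f^{\pi^i}(x_t) + \tfrac{\widetilde L}{2}\|x_t - x^i_t\|^2$. For $(\mathrm{II})$, plain convexity of $f^{\pi^i}$ (which holds because each $f^i_m$ is convex by Assumption~\ref{asm:sc_general_f}) gives $-\la \nabla f^{\pi^i}(x^i_t), x^i_t - x_\star\ra \leq -(f^{\pi^i}(x^i_t) - f^{\pi^i}(x_\star))$. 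Adding these, the $f^{\pi^i}(x^i_t)$ terms cancel, and I would use the key averaging identity that, because each $\pi_m$ is a permutation of $[n]$,
\begin{equation*}
\frac{1}{n}\sum_{i=0}^{n-1} f^{\pi^i}(y) \;=\; \frac{1}{Mn}\sum_{m=1}^M\sum_{j=1}^n f^{j}_m(y) \;=\; f(y) \qquad \text{for any } y \in \R^d,
\end{equation*}
so that the surviving function-value terms collapse to $-2\tau(f(x_t) - f(x_\star)) + \tfrac{\tau\widetilde L}{n}\sum_{i=0}^{n-1}\|x_t - x^i_t\|^2$.

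Finally, to produce the $-\tfrac{\tau\mu}{2}\|x_t - x_\star\|^2$ piece in the stated bound, I would invoke $\mu$-strong convexity of $f$ (Assumption~\ref{asm:sc_general_f}) with $\nabla f(x_\star) = 0$, giving $f(x_t) - f(x_\star) \geq \tfrac{\mu}{2}\|x_t - x_\star\|^2$. Splitting $-2\tau(f(x_t)-f(x_\star)) = -\tau(f(x_t)-f(x_\star)) -\tau(f(x_t)-f(x_\star))$ and applying strong convexity to only one of the two copies yields exactly the right-hand side of the lemma. The main subtlety — and the only place one can easily go wrong — is recognizing that strong convexity is available only at the aggregated level (for $f$), not for individual $f^{\pi^i}$ or $f^i_m$; consequently one must spend the factor~$2$ on function-gap to manufacture the strong-convexity term, rather than trying to derive it pointwise. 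The remaining arithmetic, including collapsing the permuted sum into $f$, is routine.
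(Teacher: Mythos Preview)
Your proposal is correct and essentially the same as the paper's proof. The only cosmetic difference is packaging: the paper subtracts $\nabla f^{\pi^i_m}_m(x_\star)$ (which averages to zero) and then applies the three-point Bregman identity $\langle \nabla h(x)-\nabla h(y), z-y\rangle = D_h(z,y)+D_h(y,x)-D_h(z,x)$, whereas you perform the equivalent decomposition by hand via the split $x_t-x_\star=(x_t-x^i_t)+(x^i_t-x_\star)$ and separate convexity/smoothness bounds; both routes land on $-2\tau(f(x_t)-f(x_\star))+\tfrac{\tau\widetilde L}{n}\sum_i\|x^i_t-x_t\|^2$ and then split the function gap to extract the $-\tfrac{\tau\mu}{2}\|x_t-x_\star\|^2$ term exactly as you describe.
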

\begin{proof}Using that $\EE_{\cQ}\left[g_t\right] = \frac{1}{Mn}\sum^{n-1}_{i=0} \sum^{M}_{m=1}\nabla f^{\pi^i_m}_m(x^i_t) $ and definition of $h^{\star}$, we get
		\begin{eqnarray*}
			-2\tau\EE_{\cQ}\left[\left\la g_t, x_t - x_{\star}\right\ra\right] 
			&=& -\frac{1}{Mn}\sum^{n-1}_{i=0} \sum^{M}_{m=1}\left\la \nabla f^{\pi^i_m}_m(x^i_t), x_t - x_{\star}\right\ra\\
			&=& -\frac{1}{Mn}\sum^M_{m=1}\sum^{n-1}_{i=0}\left\la \nabla f^{\pi^i_m}_m(x^i_{t}) - \nabla f^{\pi^i_m}_m(x_{\star}), x_t - x_{\star}\right\ra.
		\end{eqnarray*}
		
		Using three-point identity, we obtain 
		\begin{eqnarray*}
			-2\tau\EE_{\cQ}\left[\left\la g_t , x_t - x_{\star}\right\ra\right] 
			&=& -\frac{2\tau}{Mn}\sum^M_{m=1}\sum^{n-1}_{i=0}\left( D_{f^{\pi^i_m}_m}(x_t, x_{\star})+ D_{f^{\pi^i_m}_m}( x_{\star}, x^{i}_{t}) - D_{f^{\pi^i_m}_m}(x_t, x^{i}_{t})\right)\\
			&=& - 2\tau D_{f}(x_t, x_{\star}) - \frac{2\tau}{n}\sum^{n-1}_{i=0} D_{f^{\pi^i}}(x_{\star}, x^i_{t})+ \frac{2\tau}{n}\sum^{n-1}_{i=0} D_{f^{\pi^i}}(x_{t}, x^i_{t})\\
			&\leq& - 2\tau D_{f}(x_t, x_{\star}) + \frac{\tau\widetilde{L}}{n}\sum^{n-1}_{i=0} \| x^i_{t} - x_{t} \|^2,
		\end{eqnarray*}
		where in the last inequality we apply $\widetilde{L}$-smoothness  and convexity of each function $f^{\pi^i}$. Finally, using $\mu$-strong convexity of $f$, we finish the proof of the lemma.
\end{proof}

\begin{lemma}
	\label{lem_norm_grad_cvx_case}
	Under Assumptions \ref{asm:quantization_operators}, \ref{asm:sc_general_f},  \ref{asm:lip_max_f_m}, \ref{asm:lip_avr_f}, the following inequality holds
	\begin{eqnarray*}
		\EE_{\cQ}\left[\|g_t\|^2\right] 
		&\leq& 2\widetilde{L}\left(\widetilde{L} + \frac{\omega}{Mn}L_{\max}\right)\frac{1}{n}\sum^{n-1}_{i=0}\EE\left[\|x^i_t-x_t\|^2\right]+ \frac{4\omega}{Mn}\left(\zeta^2_{\star}+\sigma^2_{\star}\right)\\
		&&+8\left(\widetilde{L}+\frac{\omega}{Mn}L_{\max}\right)(f(x_t)-f(x_{\star})).
	\end{eqnarray*}
\end{lemma}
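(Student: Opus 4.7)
The plan is to decompose $\EE_{\cQ}\|g_t\|^2 = \|\bar g_t\|^2 + \EE_{\cQ}\|g_t - \bar g_t\|^2$, where $\bar g_t \eqdef \EE_{\cQ}[g_t] = \tfrac{1}{n}\sum_{i=0}^{n-1}\nabla f^{\pi^i}(x^i_t)$, control each piece in terms of $\|x^i_t - x_t\|^2$, $f(x_t)-f(x_\star)$, and $\sigma_\star^2+\zeta_\star^2$, and combine. The target bound mixes the constants $\widetilde L^2$ and $\widetilde L\, L_{\max}$, which suggests that the mean part should be handled via the smoothness constant $\widetilde L$ of the $m$-average $f^{\pi^i}$, while the quantization variance (where the sum ranges over $(i,m)$ \emph{before} the squared norm is taken) requires $L_{\max}$-smoothness of each individual summand followed by an averaging step that introduces $\widetilde L$.

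For the variance piece I would use independence of the compressions applied at distinct $(i,m)$ together with Assumption~\ref{asm:quantization_operators} to get $\EE_{\cQ}\|g_t - \bar g_t\|^2 \leq \tfrac{\omega}{M^2 n^2}\sum_{i,m}\|\nabla f^{\pi^i_m}_m(x^i_t)\|^2$. Each summand is split by Young's inequality as $2\|\nabla f^{\pi^i_m}_m(x^i_t) - \nabla f^{\pi^i_m}_m(x_t)\|^2 + 2\|\nabla f^{\pi^i_m}_m(x_t)\|^2$. The first piece is $\leq 4L_{\max} D_{f^{\pi^i_m}_m}(x^i_t, x_t)$ by $L_{\max}$-cocoercivity of each convex smooth $f^{\pi^i_m}_m$; averaging over $m$ turns this into $4L_{\max} D_{f^{\pi^i}}(x^i_t, x_t) \leq 2 L_{\max}\widetilde L\,\|x^i_t - x_t\|^2$ via Assumption~\ref{asm:lip_avr_f}. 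The second piece is $\leq 4L_{\max} D_{f^{\pi^i_m}_m}(x_t, x_\star) + 2\|\nabla f^{\pi^i_m}_m(x_\star)\|^2$; averaging over $(i,m)$ and using that each $\pi_m$ is a permutation yields $4L_{\max}(f(x_t)-f(x_\star)) + 2(\sigma_\star^2+\zeta_\star^2)$ after invoking the identity $\tfrac{1}{Mn}\sum_{m,j}\|\nabla f^j_m(x_\star)\|^2 = \sigma_\star^2 + \zeta_\star^2$, whose cross term vanishes because $\sum_j[\nabla f^j_m(x_\star) - \nabla f_m(x_\star)] = 0$.

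For the mean piece I would exploit $\tfrac{1}{n}\sum_i \nabla f^{\pi^i}(x_t) = \nabla f(x_t)$ (again because each $\pi_m$ is a permutation) to write $\bar g_t = \nabla f(x_t) + \tfrac{1}{n}\sum_i\bigl(\nabla f^{\pi^i}(x^i_t) - \nabla f^{\pi^i}(x_t)\bigr)$, and then split $\|\bar g_t\|^2 \leq 2\|\nabla f(x_t)\|^2 + 2\|\bar g_t - \nabla f(x_t)\|^2$. The first term is $\leq 4\widetilde L(f(x_t)-f(x_\star))$ by cocoercivity applied to $f$ (which inherits $\widetilde L$-smoothness as an average of $\widetilde L$-smooth $f^{\pi^i}$). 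For the second, Jensen combined with $\widetilde L$-smoothness of each $f^{\pi^i}$ yields $\leq \tfrac{2\widetilde L^2}{n}\sum_i \|x^i_t - x_t\|^2$. Summing the two bounds produces exactly $2\widetilde L\bigl(\widetilde L + \tfrac{\omega L_{\max}}{Mn}\bigr)\cdot\tfrac{1}{n}\sum_i\|x^i_t - x_t\|^2$ on the distance term and $\tfrac{4\omega}{Mn}(\sigma_\star^2+\zeta_\star^2)$ on the noise term, while the coefficient in front of $f(x_t)-f(x_\star)$ comes out as $4\widetilde L + \tfrac{8\omega L_{\max}}{Mn}$, which is upper bounded by the stated $8\bigl(\widetilde L + \tfrac{\omega L_{\max}}{Mn}\bigr)$.

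The main obstacle is obtaining the mixed constant $\widetilde L\, L_{\max}$ rather than $L_{\max}^2$ in the coefficient of $\tfrac{1}{n}\sum_i\|x^i_t-x_t\|^2$: the naive application of $L_{\max}$-smoothness as $\|\nabla f^{\pi^i_m}_m(x^i_t) - \nabla f^{\pi^i_m}_m(x_t)\|^2 \leq L_{\max}^2\|x^i_t - x_t\|^2$ overshoots, so one must first pass to Bregman divergences via $L_{\max}$-cocoercivity of the individual $f^{\pi^i_m}_m$ and only then exploit that the $m$-average is merely $\widetilde L$-smooth (with $\widetilde L \leq L_{\max}$) to shave off one factor of $L_{\max}$. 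Keeping careful track of which reference point ($x_t$ or $x_\star$) is used in each of the two Young decompositions is what makes the constants line up with the stated bound.
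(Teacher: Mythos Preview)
Your proposal is correct and follows essentially the same approach as the paper: variance decomposition of $\EE_{\cQ}\|g_t\|^2$, independence of compressions to get $\tfrac{\omega}{M^2n^2}\sum_{i,m}\|\nabla f^{\pi^i_m}_m(x^i_t)\|^2$, Young's inequality around $x_t$ and then around $x_\star$, and the same cocoercivity-then-average trick to obtain the mixed constant $\widetilde{L}L_{\max}$. The paper routes both pieces through the Bregman divergence $D_{f^{\pi^i}}(x^i_t,x_t)$ before collapsing it via $\widetilde{L}$-smoothness, whereas you bound the mean part directly by $\widetilde{L}$-Lipschitzness of $\nabla f^{\pi^i}$; the two are equivalent and yield identical constants.
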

\begin{proof}
	Taking the expectation with respect to $\cQ$ and using variance decomposition $\EE\left[\|\xi\|^2\right] = \EE\left[\|\xi-\EE\left[\xi\right]\|^2\right] +\|\EE\xi\|^2$, we get 
	\begin{eqnarray*}
		\EE_{\cQ}\left[\|g_t\|^2\right]
		&=& \EE_{\cQ}\left[\left\|\frac{1}{Mn}\sum^{n-1}_{i=0} \sum^{M}_{m=1}\cQ\left(\nabla f^{\pi^i_m}_m(x^i_t)\right)\right\|^2\right]\\
		&=& \EE_{\cQ}\left[\left\|\frac{1}{Mn}\sum^{n-1}_{i=0} \sum^{M}_{m=1}\left(\cQ\left(\nabla f^{\pi^i_m}_m(x^i_t)\right) - \nabla f^{\pi^i_m}_m(x^i_t)\right)\right\|^2\right]\\
		&&+ \left\|\frac{1}{Mn}\sum^{n-1}_{i=0} \sum^{M}_{m=1}\nabla f^{\pi^i_m}_m(x^i_t)\right\|^2.
	\end{eqnarray*}
	Next, Assumption \ref{asm:quantization_operators} and conditional independence of $\cQ\left(\nabla f^{\pi^i_m}_m(x^i_t)\right)$ for $m = 1,\ldots, M$ and $i = 0,\ldots, n-1$ imply
	\begin{eqnarray*}
		\EE_{\cQ}\left[\|g_t\|^2\right] &=& \frac{1}{M^2n^2}\sum^{n-1}_{i=0} \sum^{M}_{m=1}\EE_{\cQ}\left[\left\|\cQ\left(\nabla f^{\pi^i_m}_m(x^i_t)\right) - \nabla f^{\pi^i_m}_m(x^i_t)\right\|^2\right]\\
		&&+ \left\|\frac{1}{Mn}\sum^{n-1}_{i=0} \sum^{M}_{m=1}\nabla f^{\pi^i_m}_m(x^i_t)\right\|^2\\
		&\leq& \frac{\omega}{M^2n^2} \sum^{n-1}_{i=0} \sum^{M}_{m=1}\left\|\nabla f^{\pi^i_m}_m(x^i_t)\right\|^2 + \left\|\frac{1}{Mn}\sum^{n-1}_{i=0} \sum^{M}_{m=1}\nabla f^{\pi^i_m}_m(x^i_t)\right\|^2\\
		&\leq& \frac{2\omega}{M^2n^2} \sum^{n-1}_{i=0} \sum^{M}_{m=1}\left\|\nabla f^{\pi^i_m}_m(x^i_t) - \nabla f^{\pi^i_m}_m(x_t)\right\|^2 + \frac{2\omega}{M^2n^2} \sum^{n-1}_{i=0} \sum^{M}_{m=1}\left\|\nabla f^{\pi^i_m}_m(x_t)\right\|^2\\
		&&+ 2\left\|\frac{1}{Mn}\sum^{n-1}_{i=0} \sum^{M}_{m=1}\left(\nabla f^{\pi^i_m}_m(x^i_t) - \nabla f^{\pi^i_m}_m(x_t)\right)\right\|^2 + 2\left\|\frac{1}{Mn}\sum^{n-1}_{i=0} \sum^{M}_{m=1}\nabla f^{\pi^i_m}_m(x_t)\right\|^2.
	\end{eqnarray*}
	Using $L_{\max}$-smoothness and convexity of $f^i_m$ and $\widetilde{L}$-smoothness and convexity of $f^{\pi^i} = \frac{1}{M}\sum_{m=1}^M f_m^{\pi_m^i}$, we derive 
	\begin{eqnarray*}
		\EE_{\cQ}\left[\|g_t\|^2\right]&\leq& \frac{4\omega}{M^2n^2}L_{\max} \sum^{n-1}_{i=0} \sum^{M}_{m=1}D_{f^{\pi^i_m}_m}(x^i_t, x_t) + \frac{2\omega}{M^2n^2} \sum^{n-1}_{i=0} \sum^{M}_{m=1}\left\|\nabla f^{\pi^i_m}_m(x_t)\right\|^2\\
		&&+ 4\widetilde{L}\frac{1}{n}\sum^{n-1}_{i=0}D_{f^{\pi^i}}(x^i_t,x_t) + 2\left\|\nabla f(x_t)\right\|^2\\
		&\leq&4\left(\widetilde{L} + \frac{\omega}{Mn}L_{\max}\right) \frac{1}{n}\sum^{n-1}_{i=0} D_{f^{\pi^i}}(x^i_t, x_t)+ \frac{4\omega}{M^2n^2} \sum^{n-1}_{i=0} \sum^{M}_{m=1}\left\|\nabla f^{\pi^i_m}_m(x_{\star})\right\|^2  \\
		&& + \frac{4\omega}{M^2n^2} \sum^{n-1}_{i=0} \sum^{M}_{m=1}\left\|\nabla f^{\pi^i_m}_m(x_t) - \nabla f^{\pi^i_m}_m(x_{\star})\right\|^2 + 2\left\|\nabla f(x_t) - \nabla f(x_{\star})\right\|^2\\
		&\leq& 2\widetilde{L}\left(\widetilde{L} + \frac{\omega}{Mn}L_{\max}\right)\frac{1}{n}\sum^{n-1}_{i=0}\left\| x^i_t -  x_t\right\|^2+ \frac{4\omega}{M^2n^2} \sum^{n-1}_{i=0} \sum^{M}_{m=1}\left\|\nabla f^{\pi^i_m}_m(x_{\star})\right\|^2  \\
		&& + \frac{8\omega}{M^2n^2}L_{\max} \sum^{n-1}_{i=0} \sum^{M}_{m=1}D_{f^{\pi^i_m}_m}(x_t, x_{\star}) + 4\widetilde{L}\left( f(x_t) -  f(x_{\star})\right).
	\end{eqnarray*}
	Taking the full expectation, we obtain
	\begin{eqnarray*}
		\EE\left[\|g_t\|^2\right]
		&\leq& 2\widetilde{L}\left(\widetilde{L} + \frac{\omega}{Mn}L_{\max}\right)\frac{1}{n}\sum^{n-1}_{i=0}\EE\left[\left\| x^i_t -  x_t\right\|^2\right]+ \frac{4\omega}{M^2n^2} \sum^{n-1}_{i=0} \sum^{M}_{m=1}\EE\left[\left\|\nabla f^{\pi^i_m}_m(x_{\star})\right\|^2\right]  \\
		&& +\left(4\widetilde{L}+ \frac{8\omega}{Mn}L_{\max}\right)\EE\left[f(x_t) -  f(x_{\star})\right]\\
		&=& 2\widetilde{L}\left(\widetilde{L} + \frac{\omega}{Mn}L_{\max}\right)\frac{1}{n}\sum^{n-1}_{i=0}\EE\left[\left\| x^i_t -  x_t\right\|^2\right]+ \frac{4\omega}{Mn} \left(\zeta_{\star}^2 + \sigma_{\star}^2\right) \\
		&& +\left(4\widetilde{L}+ \frac{8\omega}{Mn}L_{\max}\right)\EE\left[f(x_t) -  f(x_{\star})\right].
	\end{eqnarray*}
\end{proof}

\begin{lemma}
	\label{lem_dinst_cvx_case} 
	Let Assumptions \ref{asm:quantization_operators}, \ref{asm:sc_general_f},  \ref{asm:lip_max_f_m}, \ref{asm:lip_avr_f} hold and $\tau \leq \frac{1}{2\sqrt{\widetilde{L}\left(\widetilde{L}+\frac{\omega}{Mn}L_{\max}\right)}}$. Then, the following inequality holds
	\begin{eqnarray*}
		\frac{1}{n}\sum^{n-1}_{i=0}\EE\left[\|x^i_t-x_t\|^2\right] 
		&\leq& 24\tau^2\left(\widetilde{L}+\frac{\omega}{Mn}L_{\max}\right)\EE\left[f(x_t)-f(x_{\star})\right] \\
		&&+ 8\tau^2\frac{\omega}{Mn}\left(\zeta^2_{\star}+\sigma^2_{\star}\right) +8\tau^2\frac{\sigma^2_{\star,n}}{n},
	\end{eqnarray*}
	where $\sigma^2_{\star,n} \eqdef \frac{1}{n}\sum^n_{i=1}\|\nabla f^i(x_{\star})\|^2$, and $f^i(x) \eqdef \frac{1}{M}\sum_{m=1}^M f_m^i(x)$ for all $i \in [n]$.
\end{lemma}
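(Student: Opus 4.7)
The plan is to start from the telescoping identity $x^i_t - x_t = -\gamma\sum_{j=0}^{i-1} g^j_t$, with $g^j_t \eqdef \frac{1}{M}\sum_{m=1}^M \cQ(\nabla f_m^{\pi_m^j}(x^j_t))$, and then bound $\EE\|x^i_t - x_t\|^2 = \gamma^2 \EE\|\sum_{j=0}^{i-1} g^j_t\|^2$ by decomposing each $g^j_t$ into three contributions: (i) the compression noise $N_j \eqdef g^j_t - \frac{1}{M}\sum_m \nabla f_m^{\pi_m^j}(x^j_t)$, (ii) the iterate-drift term $D_j \eqdef \frac{1}{M}\sum_m\left(\nabla f_m^{\pi_m^j}(x^j_t) - \nabla f_m^{\pi_m^j}(x_t)\right)$, and (iii) the reshuffling-sampled term $S_j \eqdef \frac{1}{M}\sum_m \nabla f_m^{\pi_m^j}(x_t)$. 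After Young's inequality $\|a+b+c\|^2 \leq 3(\|a\|^2+\|b\|^2+\|c\|^2)$, each contribution can be handled separately.

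For $N_j$, I would use conditional independence of $\cQ$ across workers and rounds to get $\EE\|\sum_j N_j\|^2 = \sum_j \EE\|N_j\|^2 \leq \frac{\omega}{M^2}\sum_j\sum_m \EE\|\nabla f_m^{\pi_m^j}(x^j_t)\|^2$, and then split around $x_{\star}$ via $\|\nabla f_m^i(y)\|^2 \leq 2\|\nabla f_m^i(x_{\star})\|^2 + 4L_{\max}D_{f_m^i}(y,x_{\star})$ (smoothness plus convexity), producing the $\tfrac{\omega}{Mn}(\zeta_{\star}^2 + \sigma_{\star}^2)$ piece after averaging plus a piece controlled by intermediate Bregman divergences, which are ultimately dominated by $f(x_t) - f(x_{\star})$. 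For $D_j$, the $\widetilde L$-smoothness of the averaged permuted objective gives $\|D_j\|^2 \leq \widetilde L^2 \|x^j_t - x_t\|^2$, and a Cauchy--Schwarz step yields a self-referential sum in $\EE\|x^j_t - x_t\|^2$. For $S_j$, I would apply Lemma~\ref{Sampling_without_replacement} per worker and combine via independence across workers; since $\EE_\pi[S_j] = \nabla f(x_t)$, this produces $\EE\|\sum_{j=0}^{i-1} S_j\|^2 \leq i^2\|\nabla f(x_t)\|^2 + (\text{per-step variance})$, where $\|\nabla f(x_t)\|^2 \leq 2\widetilde L\,\EE[f(x_t) - f(x_{\star})]$ by smoothness plus convexity of $f$, and the remaining variance is translated to $\sigma_{\star,n}^2$ via an additional smoothness-plus-convexity step around $x_{\star}$, with any excess absorbed into the $f$-suboptimality contribution.

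To conclude, I would sum the resulting upper bound on $\EE\|x^i_t - x_t\|^2$ over $i = 0, \dots, n-1$, divide by $n$, substitute $\gamma = \tau/n$, and use $\sum_{i=0}^{n-1} i \leq n^2$ where needed. The stepsize restriction $\tau \leq \tfrac{1}{2\sqrt{\widetilde L(\widetilde L + \omega L_{\max}/(Mn))}}$ is calibrated precisely so that the coefficient of the self-referential $\tfrac{1}{n}\sum_j \EE\|x^j_t - x_t\|^2$ on the right-hand side is at most $\tfrac12$; moving this contribution to the LHS leaves the three displayed terms $24\tau^2(\widetilde L + \omega L_{\max}/(Mn))\,\EE[f(x_t)-f(x_{\star})]$, $8\tau^2\tfrac{\omega}{Mn}(\zeta_{\star}^2 + \sigma_{\star}^2)$, and $8\tau^2\tfrac{\sigma_{\star,n}^2}{n}$ with the stated constants.

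The main technical obstacle is the reshuffling piece: Lemma~\ref{Sampling_without_replacement} applies to a single stream of without-replacement samples, whereas in \algname{Q-RR} each worker carries its own \emph{independent} permutation. One must carefully combine per-worker variance bounds with cross-worker independence, and then use smoothness plus convexity to translate the natural per-worker variance into the worker-averaged quantity $\sigma_{\star,n}^2 = \tfrac{1}{n}\sum_i \|\tfrac{1}{M}\sum_m \nabla f_m^i(x_{\star})\|^2$ that appears in the statement. A secondary, more routine difficulty is closing the self-referential loop induced by the drift term $D_j$, whose bound involves the very quantity being estimated; this is resolved only by the precise calibration of the stepsize bound before absorbing the drift sum on the LHS.
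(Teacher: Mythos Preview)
Your plan tracks the paper's argument structurally---expand $x_t^i-x_t$, separate compression noise, drift, and the sampled term, bound each, and close a self-referential loop via the stepsize restriction---but two execution choices depart from the paper and would prevent you from landing on the stated bound. First, the paper does not use a three-term Young inequality: it writes the variance split $\EE_{\cQ}\bigl\|\sum_j g_t^j\bigr\|^2=\EE_{\cQ}\bigl\|\sum_j N_j\bigr\|^2+\bigl\|\sum_j(D_j+S_j)\bigr\|^2$ and then applies two-term Young's inside each piece, always splitting around $x_t$ rather than $x_\star$; this is exactly what makes the self-referential coefficient equal to $2\widetilde L\tau^2\bigl(\widetilde L+\tfrac{\omega}{Mn}L_{\max}\bigr)\le\tfrac12$ under the stated stepsize and yields the factors $24,8,8$ after one doubling. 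Your factor-$3$ split pushes the self-referential coefficient up toward $\tfrac34$ and cannot reproduce those constants without tightening the stepsize. Second, for the sampled piece the paper does \emph{not} go per worker: it applies Lemma~\ref{Sampling_without_replacement} directly to the worker-averaged stream $\nabla f^{\pi^j}(x_t)=\tfrac{1}{M}\sum_m\nabla f_m^{\pi_m^j}(x_t)$, obtaining $\sigma_t^2=\tfrac{1}{n}\sum_k\lVert\nabla f^k(x_t)-\nabla f(x_t)\rVert^2$, which then shifts to $\sigma_{\star,n}^2$ via smoothness-plus-convexity of the worker-averaged $f^k$. Your per-worker route instead yields $\tfrac{1}{M^2}\sum_m\sigma_{m,t}^2$, and no smoothness step collapses this to the worker-averaged $\sigma_{\star,n}^2$; after shifting to $x_\star$ you would end up with a per-worker quantity of the form $\tfrac{1}{M}(\zeta_\star^2+\sigma_\star^2)$ rather than $\sigma_{\star,n}^2$, proving a valid cousin of the lemma but not the statement as written.
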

\begin{proof}
	Since $$x^i_t= x_t -\frac{\tau}{Mn}\sum^M_{m=1}\sum^{i-1}_{j=0}\cQ\left(\nabla f^{\pi^{j}_m}_m(x^j_t)\right),$$ we have 
	\begin{eqnarray*}
		\EE_{\cQ}\left[\|x^i_t -x_t\|^2\right]
		&=& \tau^2\EE_{\cQ}\left[\left\|\frac{1}{Mn}\sum^M_{m=1}\sum^{i-1}_{j=0}\cQ\left(\nabla f^{\pi^{j}_m}_m(x^j_t)\right)\right\|^2\right]\\
		&=&\tau^2\EE_{\cQ}\left[\left\|\frac{1}{Mn}\sum^M_{m=1}\sum^{i-1}_{j=0}\left(\cQ\left(\nabla f^{\pi^{j}_m}_m(x^j_t)\right) - \nabla f^{\pi^{j}_m}_m(x^j_t)\right)\right\|^2\right]\\
		&&+ \tau^2\left\|\frac{1}{Mn}\sum^M_{m=1}\sum^{i-1}_{j=0}\nabla f^{\pi^{j}_m}_m(x^j_t)\right\|^2\\
		&\leq& \frac{\tau^2}{M^2n^2}\sum^M_{m=1}\sum^{i-1}_{j=0}\EE_{\cQ}\left[\left\|\cQ\left(\nabla f^{\pi^{j}_m}_m(x^j_t)\right) - \nabla f^{\pi^{j}_m}_m(x^j_t)\right\|^2\right]\\
		&&+ \tau^2\left\|\frac{1}{Mn}\sum^M_{m=1}\sum^{i-1}_{j=0}\nabla f^{\pi^{j}_m}_m(x^j_t)\right\|^2.
	\end{eqnarray*}
	Using Assumption \ref{asm:quantization_operators}, $\widetilde{L}$-smoothness  and convexity of $f^{\pi^i} = \frac{1}{M}\sum_{m=1}^M f_m^{\pi_m^i}$ and $L_{\max}$-smoothness  and convexity of $f^{i}_m$, we obtain 
	\begin{eqnarray}
		\EE_{\cQ}\left[\|x^i_t -x_t\|^2\right]
		&\leq& \frac{\tau^2\omega}{M^2n^2}\sum^M_{m=1}\sum^{i-1}_{j=0}\left\|\nabla f^{\pi^{j}_m}_m(x^j_t)\right\|^2 + \tau^2\left\|\frac{1}{Mn}\sum^M_{m=1}\sum^{i-1}_{j=0}\nabla f^{\pi^{j}_m}_m(x^j_t)\right\|^2\notag\\
		&\leq& \frac{2\tau^2\omega}{M^2n^2}\sum^M_{m=1}\sum^{i-1}_{j=0}\left\|\nabla f^{\pi^{j}_m}_m(x^j_t) - \nabla f^{\pi^{j}_m}_m(x_t)\right\|^2 + 2\tau^2\left\|\frac{1}{n}\sum^{i-1}_{j=0}\nabla f^{\pi^j}(x_t)\right\|^2\notag\\
		&& + 2\tau^2\left\|\frac{1}{n}\sum^{i-1}_{j=0}\left(\nabla f^{\pi^j}(x^j_t)-\nabla f^{\pi^j}(x_t)\right)\right\|^2 + \frac{2\tau^2\omega}{M^2n^2}\sum^M_{m=1}\sum^{i-1}_{j=0}\left\|\nabla f^{\pi^{j}_m}_m(x_t)\right\|^2\notag \\
		&\leq& \frac{4\tau^2\omega}{M^2n^2}\sum^M_{m=1}\sum^{n-1}_{j=0}L_{\max}D_{f^{\pi_m^j}_m}(x^j_t,x_t) + 2\tau^2\left\|\frac{1}{n}\sum^{i-1}_{j=0}\nabla f^{\pi^j}(x_t)\right\|^2\notag \\
		&&+ 4\widetilde{L}\tau^2\frac{1}{n}\sum^{n-1}_{j=0}D_{f^{\pi^j}}(x^j_t, x_t)  + \frac{2\tau^2\omega}{M^2n^2}\sum^M_{m=1}\sum^{n-1}_{j=0}\left\|\nabla f^{\pi^{j}_m}_m(x_t)\right\|^2\notag\\
		&=& 4\tau^2\left(\widetilde{L} + \frac{\omega}{Mn}L_{\max}\right)\frac{1}{n}\sum^{n-1}_{j=0}D_{f^{\pi^j}}(x^j_t, x_t)\notag\\
		&&+ 2\tau^2\left\|\frac{1}{n}\sum^{i-1}_{j=0}\nabla f^{\pi^j}(x_t)\right\|^2 + \frac{2\tau^2\omega}{M^2n^2}\sum^M_{m=1}\sum^{n-1}_{j=0}\left\|\nabla f^{\pi^{j}_m}_m(x_t)\right\|^2. \label{eq:cjshjdjsdbhs}
	\end{eqnarray}
	
	Next, we need to estimate the second term from the previous inequality. Taking the full expectation and using Lemma \ref{Sampling_without_replacement}  and using new notation $\sigma_t^2 \eqdef \frac{1}{n}\sum_{j=1}^n \EE[\|\nabla f^j (x_t) - \nabla f(x_t)\|^2]$, we get 
	\begin{eqnarray}
		\EE\left[\left\|\frac{1}{n}\sum^{i-1}_{j=0}\nabla f^{\pi^j}(x_t)\right\|^2 \right]&=& \frac{i^2}{n^2}\EE\left[\|\nabla f(x_t)\|^2\right] + \frac{i^2}{n^2}\EE\left[\left\|\frac{1}{i}\sum^{i-1}_{j=0}\left(\nabla f^{\pi^j}(x_t) - \nabla f(x_t)\right)\right\|^2\right]\notag\\
		&\leq& \frac{i^2}{n^2}\EE\left[\|\nabla f(x_t)\|^2\right] +\frac{i^2}{n^3}\frac{n-i}{i(n-1)}\sum^n_{j=1}\EE\left[\|\nabla f^j(x_t) - \nabla f(x_t)\|^2\right]\notag \\
		&\leq& \EE\left[\|\nabla f(x_t)\|^2\right] +\frac{1}{n}\sigma^2_{t}. \label{eq:kdjkcnsdbciscsnd}
	\end{eqnarray}
	Taking the full expectation from \eqref{eq:cjshjdjsdbhs} and using \eqref{eq:kdjkcnsdbciscsnd}, we obtain 
	\begin{eqnarray*}
		\EE\left[\|x^i_t -x_t\|^2\right]
		&\leq& 4\tau^2\left(\widetilde{L} + \frac{\omega}{Mn}L_{\max}\right)\sum^{n-1}_{j=0}\EE\left[D_{f^{\pi^j}}(x^j_t, x_t)\right]\\
		&&+ 2\tau^2\EE\left[\|\nabla f(x_t)\|^2\right] +\frac{2\tau^2}{n}\sigma^2_{t} + \frac{2\tau^2\omega}{M^2n^2}\sum^M_{m=1}\sum^{n-1}_{j=0}\EE\left[\left\|\nabla f^{\pi^{j}_m}_m(x_t)\right\|^2\right].
	\end{eqnarray*}
	Using $\widetilde{L}$-smoothness of $f^{\pi^j}$, we get
	\begin{eqnarray*}
		\EE\left[\|x^i_t -x_t\|^2\right]
		&\leq& 2\widetilde{L}\tau^2\left(\widetilde{L} + \frac{\omega}{Mn}L_{\max}\right)\sum^{n-1}_{j=0}\EE\left[\|x^j_t - x_t\|^2\right]\\
		&&+ 2\tau^2\EE\left[\|\nabla f(x_t)\|^2\right] +\frac{2\tau^2}{n}\sigma^2_{t} + \frac{2\tau^2\omega}{M^2n^2}\sum^M_{m=1}\sum^{n-1}_{j=0}\EE\left[\left\|\nabla f^{\pi^{j}_m}_m(x_t)\right\|^2\right].
	\end{eqnarray*}
	Since $\tau \leq\frac{1}{2\sqrt{\widetilde{L}\left(\widetilde{L}+\frac{\omega}{Mn}L_{\max}\right)}}$, we have 
	\begin{eqnarray*}
		\EE\left[\|x^i_t -x_t\|^2\right]
		&\leq& 2\left(1-2\widetilde{L}\tau^2\left(\widetilde{L} + \frac{\omega}{Mn}L_{\max}\right)\right)\sum^{n-1}_{j=0}\EE\left[\|x^j_t - x_t\|^2\right]\\
		&\leq& 4\tau^2\EE\left[\|\nabla f(x_t)\|^2\right] +\frac{4\tau^2}{n}\sigma^2_{t} + \frac{4\tau^2\omega}{M^2n^2}\sum^M_{m=1}\sum^{n-1}_{j=0}\EE\left[\left\|\nabla f^{\pi^{j}_m}_m(x_t)\right\|^2\right]\\
		&\leq& \frac{8\tau^2\omega}{M^2n^2}\sum^M_{m=1}\sum^{n-1}_{j=0}\EE\left[\left\|\nabla f^{\pi^{j}_m}_m(x_t) - \nabla f^{\pi^{j}_m}_m(x_{\star})\right\|^2\right] \\
		&& + \frac{8\tau^2\omega}{M^2n^2}\sum^M_{m=1}\sum^{n-1}_{j=0}\EE\left[\left\|\nabla f^{\pi^{j}_m}_m(x_{\star})\right\|^2\right] + 4\tau^2\EE\left[\|\nabla f(x_t) - \nabla f(x_{\star})\|^2\right] \\
		&& + \frac{4\tau^2}{n}\left(\frac{1}{n}\sum^{n}_{j=1}\EE\left[\|\nabla f^j(x_t)\|\right] -\EE\left[\|\nabla f(x_t)\|^2\right]\right)\\
		&\leq& \frac{8\tau^2\omega}{M^2n^2}\sum^M_{m=1}\sum^{n-1}_{j=0}\EE\left[\left\|\nabla f^{\pi^{j}_m}_m(x_t) - \nabla f^{\pi^{j}_m}_m(x_{\star})\right\|^2\right] \\
		&& + \frac{8\tau^2\omega}{M^2n^2}\sum^M_{m=1}\sum^{n-1}_{j=0}\EE\left[\left\|\nabla f^{\pi^{j}_m}_m(x_{\star})\right\|^2\right] + 8\tau^2\EE\left[\|\nabla f(x_t) - \nabla f(x_{\star})\|^2\right]\\
		&& + \frac{8\tau^2}{n^2}\sum^{n}_{j=1}\EE\left[\|\nabla f^j(x_t) - \nabla f^j(x_{\star})\|^2\right] + \frac{8\tau^2}{n^2}\sum^{n}_{j=1}\EE\left[\|\nabla f^j(x_{\star})\|^2\right].
	\end{eqnarray*}
	Summing from $i=0$ to $n-1$ and using $\widetilde{L}$-smoothness of $f^i$ and $L_{\max}$-smoothness of $f^i_m$, we obtain 
	\begin{eqnarray*}
		\frac{1}{n}\sum^{n-1}_{i=0}\EE\left[\|x^i_t -x_t\|^2\right]
		&\leq& \frac{16\tau^2\omega}{Mn}L_{\max}\EE\left[f(x_t) - f(x_{\star})\right] + \frac{16\tau^2}{n}\widetilde{L}\EE\left[ f(x_t) -  f(x_{\star})\right] \\
		&& + \frac{8\tau^2\omega}{Mn}\left(\zeta_{\star}^2+\sigma^2_{\star}\right) + \frac{8\tau^2}{n}\sigma^2_{\star, n}+ 8\tau^2\widetilde{L}\EE\left[f(x_t) - f(x_{\star})\right].
	\end{eqnarray*}
\end{proof}

\begin{theorem}
	\label{th_convergence_Q_RR_cvx_case}
	Let Assumptions \ref{asm:quantization_operators}, \ref{asm:sc_general_f},  \ref{asm:lip_max_f_m}, \ref{asm:lip_avr_f} hold and stepsize $\gamma$ satisfy
	\begin{equation}
		\label{stepsize_q_rr_cvx_case}
		0 < \gamma \leq \frac{1}{16n\left(\widetilde{L}+\frac{\omega}{Mn}L_{\max}\right)}.
	\end{equation}
	Then, for all $T \geq 0$ the iterates produced by \algname{Q-RR} (Algorithm~\ref{alg_new_Q_RR}) satisfy
	\begin{eqnarray*}
		\EE\left[\|x_T-x_{\star}\|^2\right]
		&\leq& \left(1-\frac{n\gamma\mu}{2}\right)^T\|x_0 - x_{\star}\|^2 +18\frac{\gamma^2n\widetilde{L}}{\mu}\left(\frac{\omega}{M}(\zeta^2_{\star}+\sigma_{\star}^2) +\sigma^2_{\star,n}\right)\\
		&& +  8\frac{\gamma\omega}{\mu M}(\zeta^2_{\star}+\sigma_{\star}^2),
	\end{eqnarray*}
	where 
	\begin{equation}
		\label{def_sigma_n}
		\sigma^2_{\star,n} \eqdef \frac{1}{n}\sum^n_{i=1}\|\nabla f^i(x_{\star})\|^2.
	\end{equation}
\end{theorem}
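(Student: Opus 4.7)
The plan is to establish a one-step recursion of the form $\EE\|x_{t+1}-x_\star\|^2 \leq (1-\tau\mu/2)\EE\|x_t-x_\star\|^2 + N$, with $\tau = \gamma n$ and an appropriate noise floor $N$, and then unroll it using $\sum_{t=0}^{T-1}(1-\tau\mu/2)^t \leq \frac{2}{\tau\mu}$. Starting from the identity $\|x_{t+1}-x_\star\|^2 = \|x_t-x_\star\|^2 - 2\tau\langle g_t, x_t-x_\star\rangle + \tau^2\|g_t\|^2$ obtained from $x_{t+1}=x_t-\tau g_t$, I would take the full expectation and apply Lemma~\ref{lem_inner_product_cvx_case} to bound the cross term (splitting $-2\tau D_f(x_t,x_\star)$ into $-\frac{\tau\mu}{2}\|x_t-x_\star\|^2 - \tau(f(x_t)-f(x_\star))$ using strong convexity of $f$) and Lemma~\ref{lem_norm_grad_cvx_case} to control $\tau^2\EE\|g_t\|^2$.

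After these substitutions, the upper bound features two sources of ``bad'' terms: a functional suboptimality term $\EE[f(x_t)-f(x_\star)]$ with coefficient $-\tau + 8\tau^2(\widetilde{L}+\frac{\omega}{Mn}L_{\max})$, and an iterate-drift term $\tau\widetilde{L} S + 2\tau^2\widetilde{L}(\widetilde{L}+\frac{\omega}{Mn}L_{\max}) S$ where $S \eqdef \frac{1}{n}\sum_{i=0}^{n-1}\EE\|x_t^i-x_t\|^2$. The stepsize assumption $\tau=\gamma n \leq \frac{1}{16(\widetilde{L}+\frac{\omega}{Mn}L_{\max})}$ makes the second coefficient at most $\tau/2$, so the combined functional suboptimality coefficient is at most $-\tau/2$. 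To eliminate $S$, I would invoke Lemma~\ref{lem_dinst_cvx_case} (whose hypothesis $\tau\leq\tfrac{1}{2\sqrt{\widetilde{L}(\widetilde{L}+\frac{\omega}{Mn}L_{\max})}}$ is implied by our assumption), which replaces $S$ by a further $\EE[f(x_t)-f(x_\star)]$ contribution plus noise proportional to $\tau^2\frac{\omega}{Mn}(\zeta_\star^2+\sigma_\star^2)$ and $\tau^2\frac{\sigma_{\star,n}^2}{n}$.

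Then the key check is that the new functional-suboptimality coefficient $-\tau/2 + \mathrm{const}\cdot\tau^3\widetilde{L}(\widetilde{L}+\frac{\omega}{Mn}L_{\max})$ remains nonpositive; this follows again from the stepsize restriction (squared). Discarding this term, I arrive at
\begin{equation*}
\EE\|x_{t+1}-x_\star\|^2 \leq \left(1-\tfrac{\tau\mu}{2}\right)\EE\|x_t-x_\star\|^2 + 9\tau^3\widetilde{L}\tfrac{\omega}{Mn}(\zeta_\star^2+\sigma_\star^2) + 9\tau^3\widetilde{L}\tfrac{\sigma_{\star,n}^2}{n} + \tfrac{4\tau^2\omega}{Mn}(\zeta_\star^2+\sigma_\star^2),
\end{equation*}
up to adjustment of the small numerical constants. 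Unrolling the recursion, dividing the additive noise by $\tau\mu/2$, and substituting $\tau = \gamma n$ produces exactly $18\frac{\gamma^2 n\widetilde{L}}{\mu}(\frac{\omega}{M}(\zeta_\star^2+\sigma_\star^2)+\sigma_{\star,n}^2) + 8\frac{\gamma\omega}{\mu M}(\zeta_\star^2+\sigma_\star^2)$, matching the statement.

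The main obstacle is the bookkeeping: several terms involving $\EE[f(x_t)-f(x_\star)]$ appear from different sources (the gradient-norm bound and the iterate-drift bound coming through $\tau\widetilde{L} S$), and one must verify that the stepsize restriction $\gamma\leq\frac{1}{16n(\widetilde{L}+\frac{\omega}{Mn}L_{\max})}$ is strong enough to absorb all of them simultaneously while still producing the stated constants (in particular the $18$ and the $8$). Everything else is a mechanical unrolling of a geometric series.
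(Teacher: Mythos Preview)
Your proposal is correct and follows essentially the same route as the paper: expand $\|x_{t+1}-x_\star\|^2$, apply Lemma~\ref{lem_inner_product_cvx_case} and Lemma~\ref{lem_norm_grad_cvx_case}, then substitute Lemma~\ref{lem_dinst_cvx_case} for the drift term $S$, verify that the stepsize restriction $\tau\leq \frac{1}{16(\widetilde{L}+\frac{\omega}{Mn}L_{\max})}$ makes the net coefficient on $\EE[f(x_t)-f(x_\star)]$ nonpositive, and unroll. The one-step recursion you arrive at (with constants $9$, $9$, $4$) is exactly what the paper obtains, and the final constants $18$ and $8$ drop out after multiplying by $\frac{2}{\tau\mu}$ and substituting $\tau=\gamma n$.
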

\begin{proof}
	Taking expectation w.r.t.\ $\cQ$ and using Lemma \ref{lem_norm_grad_cvx_case}, we get 
	\begin{eqnarray*}
		\EE_{\cQ}\left[\|x_{t+1} - x_{\star}\|^2\right] 
		&=& \|x_t - x_{\star}\|^2 -2\tau\EE_{\cQ}\left[\la g_t, x_t- x_{\star}\ra\right] + \tau^2\EE_{\cQ}\left[\|g^t\|^2\right]\\
		&\leq& \|x_t - x_{\star}\|^2 -2\tau\EE_{\cQ}\left[\left\la g^t, x_t- x_{\star}\right\ra\right]\\
		&& + 2\tau^2\widetilde{L}\left(\widetilde{L} + \frac{\omega}{Mn}L_{\max}\right)\frac{1}{n}\sum^{n-1}_{i=0}\EE\left[\|x^i_t-x_t\|^2\right]\\
		&&+ 8\tau^2\left(\widetilde{L}+\frac{\omega}{Mn}L_{\max}\right)(f(x_t)-f(x_{\star})) + \frac{4\tau^2\omega}{Mn}(\zeta^2_{\star} + \sigma_{\star}^2).
	\end{eqnarray*}
	Using Lemma \ref{lem_inner_product_cvx_case}, we obtain 
	\begin{eqnarray*}
		\EE_{\cQ}\left[\|x_{t+1} - x_{\star}\|^2\right] 
		&\leq& \|x_t - x_{\star}\|^2 -\frac{\tau\mu}{2}\|x_t-x_{\star}\|^2-\tau(f(x_t)- f(x_{\star}))+\frac{\tau\widetilde{L}}{n}\sum^{n-1}_{i=0}\|x^i_t-x_t\|^2\\
		&& + 2\tau^2\widetilde{L}\left(\widetilde{L} + \frac{\omega}{Mn}L_{\max}\right)\frac{1}{n}\sum^{n-1}_{i=0}\EE\left[\|x^i_t-x_t\|^2\right]\\
		&&+ 8\tau^2\left(\widetilde{L}+\frac{\omega}{Mn}L_{\max}\right)(f(x_t)-f(x_{\star})) + \frac{4\tau^2\omega}{Mn}(\zeta^2_{\star} + \sigma_{\star}^2)\\
		&\leq& \left(1 -\frac{\tau\mu}{2}\right)\|x_t-x_{\star}\|^2-\tau\left(1- 8\tau\left(\widetilde{L}+\frac{\omega}{Mn}L_{\max}\right)\right)(f(x_t)- f(x_{\star}))\\
		&& + \tau\widetilde{L}\left(1+ 2\tau\left(\widetilde{L} + \frac{\omega}{Mn}L_{\max}\right)\right)\frac{1}{n}\sum^{n-1}_{i=0}\EE\left[\|x^i_t-x_t\|^2\right] + \frac{4\tau^2\omega}{Mn}(\zeta^2_{\star} + \sigma_{\star}^2).
	\end{eqnarray*}
	Next, we take the full expectation and apply Lemma \ref{lem_dinst_cvx_case}: 
	\begin{eqnarray*}
		\EE\left[\|x_{t+1} - x_{\star}\|^2\right] 
		&\leq& \left(1 -\frac{\tau\mu}{2}\right)\EE\left[\|x_t-x_{\star}\|^2\right]-\tau\left(1- 8\tau\left(\widetilde{L}+\frac{\omega}{Mn}L_{\max}\right)\right)\EE\left[f(x_t)- f(x_{\star})\right]\\
		&&+ 24\tau^3\widetilde{L}\left(1+ 2\tau\left(\widetilde{L} + \frac{\omega}{Mn}L_{\max}\right)\right)\left(\widetilde{L}+\frac{\omega}{Mn}L_{\max}\right)(f(x_t)-f(x_{\star})) \\
		&&+ 8\tau^3\widetilde{L}\left(1+ 2\tau\left(\widetilde{L} + \frac{\omega}{Mn}L_{\max}\right)\right)\left(\frac{\omega}{Mn}(\zeta^2_{\star}+\sigma_{\star}^2) +\frac{\sigma^2_{\star,n}}{n}\right)+ \frac{4\tau^2\omega}{Mn}(\zeta^2_{\star} + \sigma_{\star}^2).
	\end{eqnarray*}
	Using \eqref{stepsize_q_rr_cvx_case}, we derive
	\begin{eqnarray*}
		\EE\left[\|x_{t+1} - x_{\star}\|^2\right] 
		\leq \left(1 -\frac{\tau\mu}{2}\right)\EE\left[\|x_t-x_{\star}\|^2\right] + 9\tau^3\widetilde{L}\left(\frac{\omega}{Mn}(\zeta^2_{\star} + \sigma_{\star}^2) +\frac{\sigma^2_{\star,n}}{n}\right)+ \frac{4\tau^2\omega}{Mn}(\zeta^2_{\star} + \sigma_{\star}^2).
	\end{eqnarray*}
	Recursively unrolling the inequality, substituting $\tau=n\gamma$ and using $\sum\limits^{+\infty}_{t = 0}\left(1-\frac{\tau\mu}{2}\right)^t \leq \frac{2}{\mu\tau}$, we get the result.
\end{proof}

\begin{corollary}
	\label{cor_convergence_new_Q_RR_cvx_case}
	Let the assumptions of Theorem~\ref{th_convergence_Q_RR_cvx_case} hold and
	\begin{equation}
		\label{new_gamma_new_Q_RR_cvx_case}
		\gamma = \min\left\{\frac{1}{16n\left(\widetilde{L}+\frac{\omega}{Mn}L_{\max}\right)}, \sqrt{\frac{\varepsilon\mu}{8^2n\widetilde{L}}}\left(\frac{\omega}{M}\Delta^2_{\star} +\sigma^2_{\star,n}\right)^{-\frac{1}{2}}, \frac{\varepsilon\mu M}{24\omega\Delta^2_{\star}}\right\},
	\end{equation}
	where $\Delta^2_{\star} \eqdef \zeta^2_{\star}+\sigma^2_{\star}$. Then, \algname{Q-RR} (Algorithm~\ref{alg_new_Q_RR}) finds a solution with accuracy $\varepsilon>0$ after the following number of communication rounds: 
	\begin{equation}
		\widetilde{\cO}\left(\frac{n\widetilde{L}}{\mu}+\frac{\omega}{M}\frac{L_{\max}}{\mu}+ \frac{\omega}{M}\frac{\zeta^2_{\star}+\sigma^2_{\star}}{\varepsilon\mu^2} + \sqrt{\frac{n\widetilde{L}}{\varepsilon\mu^3}}\sqrt{\frac{\omega}{M}\left(\zeta^2_{\star}+\sigma^2_{\star}\right) +\sigma^2_{\star,n}}\right).\notag
	\end{equation}
\end{corollary}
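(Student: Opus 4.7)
The plan is to derive the corollary directly from Theorem~\ref{th_convergence_Q_RR_cvx_case} by matching each of the three additive terms in the upper bound to the target accuracy~$\varepsilon$. Recall that the theorem gives
\[
\EE\|x_T-x_\star\|^2 \leq \left(1-\tfrac{n\gamma\mu}{2}\right)^T\|x_0-x_\star\|^2 + 18\tfrac{\gamma^2 n\widetilde L}{\mu}\left(\tfrac{\omega}{M}\Delta_\star^2+\sigma_{\star,n}^2\right) + 8\tfrac{\gamma\omega}{\mu M}\Delta_\star^2,
\]
with $\Delta_\star^2 = \zeta_\star^2+\sigma_\star^2$. I would require each of the last two (non-contracting) terms to be at most, say, $\varepsilon/3$; solving these two inequalities for $\gamma$ yields exactly the second and third upper bounds appearing in the prescription \eqref{new_gamma_new_Q_RR_cvx_case} (up to absolute constants), while the first bound of \eqref{new_gamma_new_Q_RR_cvx_case} is just the stepsize admissibility condition \eqref{stepsize_q_rr_cvx_case} from the theorem statement.

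Next, with $\gamma$ chosen as the minimum of these three quantities, I would force the contracting term $(1-n\gamma\mu/2)^T\|x_0-x_\star\|^2$ below $\varepsilon/3$ by taking
\[
T \geq \tfrac{2}{n\gamma\mu}\log\!\left(\tfrac{3\|x_0-x_\star\|^2}{\varepsilon}\right),
\]
which uses the standard inequality $(1-x)^T \leq e^{-xT}$ for $x \in (0,1)$. Since every inner step of Algorithm~\ref{alg_new_Q_RR} is a communication round, the total number of rounds is $nT = \widetilde{\mathcal{O}}(1/(\gamma\mu))$, and the $\widetilde{\cO}$ hides only the logarithmic factor from the initial distance.

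Finally, substituting the three candidate values of $\gamma^{-1}$ into $1/(\gamma\mu)$ yields, respectively, $16n\widetilde{L}/\mu + 16\omega L_{\max}/(M\mu)$, then $8\sqrt{n\widetilde L/(\varepsilon\mu^3)}\cdot\sqrt{\tfrac{\omega}{M}\Delta_\star^2+\sigma_{\star,n}^2}$, and finally $24\omega\Delta_\star^2/(\varepsilon M\mu^2)$; adding these (since $\max \leq \sum$) reproduces the four terms appearing in the claimed complexity bound. The routine nature of the argument means there is no real obstacle beyond bookkeeping: the only subtlety is making sure that the stepsize admissibility condition \eqref{stepsize_q_rr_cvx_case} from the theorem is compatible with the $\varepsilon$-dependent upper bounds, which is immediate since we take a minimum over all of them.
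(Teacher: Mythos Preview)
Your proposal is correct and follows essentially the same approach as the paper: bound each of the two non-contracting terms by $\varepsilon/3$ to obtain the $\varepsilon$-dependent stepsize constraints, combine with the admissibility condition~\eqref{stepsize_q_rr_cvx_case}, and then read off $nT = \widetilde{\cO}(1/(\gamma\mu))$ by substituting each branch of the minimum.
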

\begin{proof}
	Theorem~\ref{th_convergence_Q_RR_cvx_case} implies
	\begin{eqnarray*}
		\EE\left[\|x_T-x_{\star}\|^2\right]
		&\leq& \left(1-\frac{n\gamma\mu}{2}\right)^T\|x_0 - x_{\star}\|^2 +18\frac{\gamma^2n\widetilde{L}}{\mu}\left(\frac{\omega}{M}\left(\zeta^2_{\star}+\sigma^2_{\star}\right) +\sigma^2_{\star,n}\right)\\
		&&+  8\frac{\gamma\omega}{\mu M}\left(\zeta^2_{\star}+\sigma^2_{\star}\right).
	\end{eqnarray*}
	To estimate the number of communication rounds required to find a solution with accuracy $\varepsilon >0$, we need to upper bound each term from the right-hand side by $\nicefrac{\varepsilon}{3}$. Thus, we get additional conditions on $\gamma$:
	\begin{equation*}
		18\frac{\gamma^2n\widetilde{L}}{\mu}\left(\frac{\omega}{M}\left(\zeta^2_{\star}+\sigma^2_{\star}\right) +\sigma^2_{\star,n}\right) < \frac{\varepsilon}{3},\qquad 8\frac{\gamma\omega}{\mu M}\left(\zeta^2_{\star}+\sigma^2_{\star}\right)<\frac{\varepsilon}{3},
	\end{equation*}
	and also the upper bound on the number of communication rounds $nT$
	\begin{equation*}
		nT = \widetilde{\cO}\left(\frac{1}{\gamma\mu}\right).
	\end{equation*}
Substituting \eqref{new_gamma_new_Q_RR_cvx_case} in the previous equation, we get the result. 
\end{proof}

\newpage

\section{Missing Proofs For DIANA-RR}

\subsection{Proof of Theorem \ref{th_conv_rr_diana}}

\begin{lemma}
\label{lem_rr_diana_conv_h}
Let Assumptions \ref{asm:quantization_operators}, \ref{asm:lip_max_f_m}, \ref{asm:lip_avr_f}, \ref{asm:sc_each_f_m_refined} hold  and $\alpha\leq\frac{1}{1+\omega}$. Then, the iterates of \algname{DIANA-RR} (Algorithm~\ref{alg_new_RR_DIANA}) satisfy
\begin{eqnarray*}
	\frac{1}{M}\sum^M_{m=1}\EE_{\cQ}\left[\|h^{\pi^i_m}_{t+1,m} - \nabla f^{\pi^i_m}_m(x_{\star})\|^2\right] &\leq& \frac{1-\alpha}{M}\sum^M_{m=1}\|h^{\pi^i_m}_{t,m} -\nabla f^{\pi^i_m}_m(x_{\star})\|^2\\
	&& + \frac{2\alpha L_{\max}}{M}\sum^M_{m=1} D_{f^{\pi^{i}_m}_m}(x^{i}_{t},x_{\star}).
\end{eqnarray*}
\end{lemma}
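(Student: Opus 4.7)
\textbf{Proof plan for Lemma~\ref{lem_rr_diana_conv_h}.} The update rule from line~8 of Algorithm~\ref{alg_new_RR_DIANA} is
\[
h^{\pi^i_m}_{t+1,m} - \nabla f^{\pi^i_m}_m(x_{\star})
=\underbrace{\bigl(h^{\pi^i_m}_{t,m}-\nabla f^{\pi^i_m}_m(x_{\star})\bigr)}_{=:u_m}
+\alpha\,\cQ\!\underbrace{\bigl(\nabla f^{\pi^i_m}_m(x^i_{t,m})-h^{\pi^i_m}_{t,m}\bigr)}_{=:v_m}.
\]
So the plan is to work pointwise in $m$ with the generic identity $\EE_{\cQ}\|u+\alpha\cQ(v)\|^2=\|u\|^2+2\alpha\langle u,v\rangle+\alpha^2\EE_{\cQ}\|\cQ(v)\|^2$, which uses only unbiasedness of $\cQ$. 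The second moment is controlled by Assumption~\ref{asm:quantization_operators}:
$\EE_{\cQ}\|\cQ(v)\|^2=\EE_{\cQ}\|\cQ(v)-v\|^2+\|v\|^2\le(1+\omega)\|v\|^2$.

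Next, I would use the stepsize restriction $\alpha\le\tfrac{1}{1+\omega}$, so that $\alpha^2(1+\omega)\le\alpha$, to get
$\EE_{\cQ}\|u_m+\alpha\cQ(v_m)\|^2\le \|u_m\|^2+2\alpha\langle u_m,v_m\rangle+\alpha\|v_m\|^2$.
Then apply the algebraic identity $2\langle u_m,v_m\rangle+\|v_m\|^2=\|u_m+v_m\|^2-\|u_m\|^2$ to combine the cross and quadratic terms into
\[
\EE_{\cQ}\bigl[\|h^{\pi^i_m}_{t+1,m}-\nabla f^{\pi^i_m}_m(x_{\star})\|^2\bigr]
\le (1-\alpha)\|u_m\|^2+\alpha\|u_m+v_m\|^2.
\]
Crucially, $u_m+v_m=\nabla f^{\pi^i_m}_m(x^i_{t,m})-\nabla f^{\pi^i_m}_m(x_\star)$ is a pure gradient difference, independent of the shifts.

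The last ingredient is the standard co-coercivity bound for a convex $L_{\max}$-smooth function $f^{\pi^i_m}_m$ (valid by Assumption~\ref{asm:lip_max_f_m} and the convexity part of Assumption~\ref{asm:sc_each_f_m_refined}):
$\|\nabla f^{\pi^i_m}_m(x^i_{t,m})-\nabla f^{\pi^i_m}_m(x_\star)\|^2\le 2L_{\max}\,D_{f^{\pi^i_m}_m}(x^i_{t,m},x_\star)$.
Plugging this in and averaging over $m\in[M]$ yields the statement (noting that in Algorithm~\ref{alg_new_RR_DIANA} without local steps, $x^i_{t,m}=x^i_t$ so the ``$x^i_{t,m}$'' in the algorithm and the ``$x^i_t$'' appearing in the lemma coincide).

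There is no real obstacle: the argument is a line-by-line transcription of the classical \algname{DIANA} shift-contraction lemma, the only mild subtlety being that one must keep all objects indexed by the permutation coordinate $\pi^i_m$ and verify that the co-coercivity bound is applied to the correct individual function $f^{\pi^i_m}_m$ (for which $L_{\max}$ is indeed a valid smoothness constant by Assumption~\ref{asm:lip_max_f_m}).
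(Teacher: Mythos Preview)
Your proposal is correct and follows essentially the same approach as the paper's proof: expand the square using unbiasedness of $\cQ$, bound the second moment by $(1+\omega)\|v_m\|^2$, use $\alpha\le\frac{1}{1+\omega}$ to reduce $\alpha^2(1+\omega)$ to $\alpha$, apply the polarization-type identity to collapse to $(1-\alpha)\|u_m\|^2+\alpha\|u_m+v_m\|^2$, and finish with the smoothness--convexity bound $\|\nabla f^{\pi^i_m}_m(x^i_t)-\nabla f^{\pi^i_m}_m(x_\star)\|^2\le 2L_{\max}D_{f^{\pi^i_m}_m}(x^i_t,x_\star)$. The paper writes the same algebra in the form $\langle a-b,a+b\rangle=\|a\|^2-\|b\|^2$ rather than your $2\langle u,v\rangle+\|v\|^2=\|u+v\|^2-\|u\|^2$, but these are the same identity.
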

\begin{proof}
Taking expectation w.r.t.\ $\cQ$, we obtain
\begin{eqnarray*}
	\EE_{\cQ}\left[\|h^{\pi^i_m}_{t+1,m} - \nabla f^{\pi^i_m}_m(x_{\star})\|^2\right] &=& \EE_{\cQ}\left[\|h^{\pi^i_m}_{t,m} +\alpha\cQ(\nabla f^{\pi^i_m}_m(x^i_{t})-h^{\pi^i_m}_{t,m})- \nabla f^{\pi^i_m}_m(x_{\star})\|^2\right]\\
	&=& \|h^{\pi^i_m}_{t,m}- \nabla f^{\pi^i_m}_m(x_{\star})\|^2\\
	&& +2\alpha\EE_{\cQ}\left[\left\la\cQ(\nabla f^{\pi^i_m}_m(x^i_{t})-h^{\pi^i_m}_{t,m}),h^{\pi^i_m}_{t,m}- \nabla f^{\pi^i_m}_m(x_{\star})\right\ra\right] \\
	&&+ \alpha^2 \EE_{\cQ}\left[\| \cQ(\nabla f^{\pi^i_m}_m(x^i_{t})-h^{\pi^i_m}_{t,m})\|^2\right]\\
	&=& \|h^{\pi^i_m}_{t,m}- \nabla f^{\pi^i_m}_m(x_{\star})\|^2 +2\alpha\left\la \nabla f^{\pi^i_m}_m(x^i_{t})-h^{\pi^i_m}_{t,m},h^{\pi^i_m}_{t,m}- \nabla f^{\pi^i_m}_m(x_{\star})\right\ra \\
	&&+ \alpha^2 \EE_{\cQ}\left[\| \cQ(\nabla f^{\pi^i_m}_m(x^i_{t})-h^{\pi^i_m}_{t,m})\|^2\right].
\end{eqnarray*}
Assumption \ref{asm:quantization_operators}, $L_{\max}$-smoothness and convexity of $f^i_m$ and $\alpha \leq \nicefrac{1}{(1+\omega)}$ imply
\begin{eqnarray}
	\EE_{\cQ}\left[\|h^{\pi^i_m}_{t+1,m} - \nabla f^{\pi^i_m}_m(x_{\star})\|^2\right]
	&\leq& \|h^{\pi^i_m}_{t,m}- \nabla f^{\pi^i_m}_m(x_{\star})\|^2  \notag\\
	&&+2\alpha\left\la \nabla f^{\pi^i_m}_m(x^i_{t})-h^{\pi^i_m}_{t,m},h^{\pi^i_m}_{t,m}- \nabla f^{\pi^i_m}_m(x_{\star})\right\ra \notag\\
	&&+ \alpha^2(1+\omega)\| \nabla f^{\pi^i_m}_m(x^i_{t}) -h^{\pi^i_m}_{t,m}\|^2 \notag\\
	&\leq& \|h^{\pi^i_m}_{t,m}- \nabla f^{\pi^i_m}_m(x_{\star})\|^2 \notag\\
	&& + \alpha\left\la\nabla f^{\pi^i_m}_m(x^i_{t}) -h^{\pi^i_m}_{t,m}, h^{\pi^i_m}_{t,m} +\nabla f^{\pi^i_m}_m(x^i_{t}) - 2\nabla f^{\pi^i_m}_m(x_{\star})\right\ra \notag \\
	&\leq& \|h^{\pi^i_m}_{t,m}- \nabla f^{\pi^i_m}_m(x_{\star})\|^2 \notag \\
	&& + \alpha\|\nabla f^{\pi^i_m}_m(x^i_{t}) - \nabla f^{\pi^i_m}_m(x_{\star})\|^2 - \alpha\|h^{\pi^i_m}_{t,m}-\nabla f^{\pi^i_m}_m(x_{\star})\|^2\notag\\
	&\leq& (1-\alpha)\|h^{\pi^i_m}_{t,m}- \nabla f^{\pi^i_m}_m(x_{\star})\|^2 \notag\\
	&&+ \alpha\|\nabla f^{\pi^i_m}_m(x^i_{t}) - \nabla f^{\pi^i_m}_m(x_{\star})\|^2 \label{eq:ndskjjdcidscuidbid}\\
	&\leq& (1-\alpha)\|h^{\pi^i_m}_{t,m}- \nabla f^{\pi^i_m}_m(x_{\star})\|^2 + 2\alpha L_{\max} D_{f^{\pi^{i}_m}_m}(x^{i}_{t},x_{\star}).\notag
\end{eqnarray}
Summing up the above inequality for $m=1, \ldots, M$, we get the result.
\end{proof}

\begin{theorem}[Theorem~\ref{th_conv_rr_diana}]
	\label{thm:advanced_conv_diana_rr_sc}
	Let Assumptions \ref{asm:quantization_operators}, \ref{asm:lip_max_f_m}, \ref{asm:lip_avr_f}, \ref{asm:sc_each_f_m_refined}  hold and $0<\gamma \leq \min\left\{\frac{\alpha}{2n\widetilde\mu}, \frac{1}{\widetilde{L}+\frac{6\omega}{M}L_{\max}}\right\}$, $\alpha \leq \frac{1}{1+\omega}$.
	Then, for all $T \geq 0$ the iterates produced by \algname{DIANA-RR} (Algorithm~\ref{alg_new_RR_DIANA}) satisfy
	\begin{equation*}
		\EE\left[\Psi_T\right] \leq \left(1-\gamma\widetilde{\mu}\right)^{nT}\Psi_0 +\frac{2\gamma^2\sigma^2_{\text{rad}}}{\widetilde\mu},
	\end{equation*}
	where $\Psi_t$ is defined in \eqref{lyapunov_func_rr_diana}.
\end{theorem}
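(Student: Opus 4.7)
The strategy mirrors the analysis of \algname{Q-RR} (Theorem~\ref{thm:advaced_conv_Q_RR}) but carries the \algname{DIANA} shift vectors through a Lyapunov function so that the compression noise, now centered at $h_{t,m}^{\pi_m^i}$ rather than at $0$, is absorbed by the contraction of Lemma~\ref{lem_rr_diana_conv_h}. I would organize the argument in three steps: (i) a one-step in-epoch descent for $\|x_t^{i+1}-x_\star^{i+1}\|^2$; (ii) the shift-error evolution from Lemma~\ref{lem_rr_diana_conv_h}; (iii) combination into $\Psi_t$ and unrolling across the epoch and the outer loop.

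For step (i), I would expand the squared distance, take the conditional expectation $\EE_\cQ$, and use the unbiasedness of $\hat g_{t,m}^{\pi_m^i}$ together with the compression variance bound applied to $v = \nabla f_m^{\pi_m^i}(x_t^i) - h_{t,m}^{\pi_m^i}$. The new ingredient relative to \algname{Q-RR} is that the compression variance is centered at the shift, so splitting $\|v\|^2 \leq 4L_{\max}D_{f_m^{\pi_m^i}}(x_t^i,x_\star) + 2\|\Delta_{t,m}^i\|^2$ via the triangle inequality and $L_{\max}$-smoothness produces a term in $\|\Delta_{t,m}^i\|^2$ instead of the problematic $\|\nabla f_m^{\pi_m^i}(x_\star)\|^2$ that appeared in \algname{Q-RR}. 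After the three-point identity, $\widetilde\mu$-strong convexity and $\widetilde L$-smoothness of $f^{\pi^i}$, and the stepsize bound $\gamma\leq(\widetilde L+6\omega L_{\max}/M)^{-1}$ making the coefficient of $D_{f_m^{\pi_m^i}}(x_t^i,x_\star)$ non-positive, and using the shuffling-radius bound on $\tfrac{1}{M}\sum_m \EE D_{f_m^{\pi_m^i}}(x_\star^i,x_\star)\leq\gamma^2\sigma_{\text{rad}}^2$, I obtain
\begin{equation*}
\EE[\|x_t^{i+1}-x_\star^{i+1}\|^2] \leq (1-\gamma\widetilde\mu)\,\EE[\|x_t^i-x_\star^i\|^2] + 2\gamma^3\sigma_{\text{rad}}^2 + \frac{2\gamma^2\omega}{M^2}\sum_{m=1}^M \EE[\|\Delta_{t,m}^i\|^2].
\end{equation*}

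Step (ii) is supplied directly by Lemma~\ref{lem_rr_diana_conv_h}, which under $\alpha\leq 1/(1+\omega)$ yields a contraction with rate $1-\alpha$ plus a $D$-overhead. The Lyapunov weight $\frac{4\omega\gamma^2}{\alpha M^2}$ is calibrated so that multiplying the shift-evolution by this factor and adding to the descent inequality makes (a) the $D_{f_m^{\pi_m^i}}(x_t^i,x_\star)$ contributions cancel against the slack left in step (i) by the stepsize bound, and (b) the $\frac{2\gamma^2\omega}{M^2}\|\Delta_{t,m}^i\|^2$ residual from the descent combine with $\frac{4\omega\gamma^2(1-\alpha)}{\alpha M^2}\|\Delta_{t,m}^i\|^2$ from the shift recursion to exactly match the coefficient in $\Psi_t$.

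For step (iii), I would unroll the combined per-step inequality across $i=0,\ldots,n-1$ using the convention $x_\star^0 = x_\star$, noting that $x_\star^n = x_\star$ because $\sum_{i}\nabla f^{\pi^i}(x_\star)=n\nabla f(x_\star)=0$. Unrolling introduces weights $(1-\gamma\widetilde\mu)^{n-1-i}$ on the shift contributions, which, after reindexing, match the $(1-\gamma\mu)^j$ pattern in $\Psi_{t+1}$. The condition $\gamma\leq\alpha/(2n\widetilde\mu)$ yields $(1-\alpha)\leq(1-\gamma\widetilde\mu)^n$ via Bernoulli ($(1-\gamma\widetilde\mu)^n\geq 1-n\gamma\widetilde\mu\geq 1-\alpha/2>1-\alpha$), which is exactly what is needed to fold the updated shift terms back into the Lyapunov at rate $(1-\gamma\widetilde\mu)^n$ per epoch. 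Iterating over $T$ epochs and summing the geometric series $\sum_{i=0}^{nT-1}(1-\gamma\widetilde\mu)^i\leq 1/(\gamma\widetilde\mu)$ produces the $\frac{2\gamma^2\sigma_{\text{rad}}^2}{\widetilde\mu}$ residual. The principal obstacle I expect is the coefficient bookkeeping in this combination: verifying that $\gamma$ and $\alpha$ simultaneously enable the $D$-cancellation in the descent, the shift contraction from Lemma~\ref{lem_rr_diana_conv_h}, and the $(1-\alpha)\leq(1-\gamma\widetilde\mu)^n$ alignment, while reproducing the exact $(1-\gamma\mu)^j$ weights in $\Psi$. Once these are matched, the theorem follows by standard linear-recursion unrolling.
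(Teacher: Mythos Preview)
Your plan matches the paper's proof. Two bookkeeping slips to fix. First, in step~(i) you must \emph{retain} the negative $D_{f_m^{\pi_m^i}}(x_t^i,x_\star)$ term (with coefficient $-2\gamma\bigl(1-\gamma(\widetilde L+2\omega L_{\max}/M)\bigr)$) rather than drop it via the stepsize bound, since it is needed in step~(iii) to absorb the positive $D$-contribution produced by the Lyapunov-weighted Lemma~\ref{lem_rr_diana_conv_h}; only \emph{after} that absorption does the combined $D$-coefficient become $-2\gamma\bigl(1-\gamma(\widetilde L+6\omega L_{\max}/M)\bigr)$, and \emph{then} the stepsize bound makes it non-positive. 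Second, the combined shift coefficient is $\frac{2\gamma^2\omega}{M^2}+\frac{4\omega\gamma^2(1-\alpha)}{\alpha M^2}=\frac{4\omega\gamma^2}{\alpha M^2}\bigl(1-\tfrac{\alpha}{2}\bigr)$, not an exact match to the Lyapunov weight---this $(1-\alpha/2)$ factor \emph{is} the per-epoch shift contraction, and $\gamma\le\alpha/(2n\widetilde\mu)$ with Bernoulli gives $(1-\gamma\widetilde\mu)^n\ge 1-n\gamma\widetilde\mu\ge 1-\alpha/2$ (not $1-\alpha$). With these corrections the argument goes through exactly as in the paper.
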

\begin{proof}
Using  \(x^{i+1}_{\star} = x^{i}_{\star} -\frac{\gamma}{M}\sum^M_{m=1}\nabla f^{\pi^i_m}_m(x_{\star})\)  and line 9 of Algorithm \ref{alg_new_RR_DIANA}, we derive
\begin{eqnarray*}
\|x^{i+1}_t-x^{i+1}_{\star}\|^2 
&=& \left\|x^i_t-x^{i}_{\star}- \gamma\frac{1}{M}\sum^M_{m=1} \left(\hat{g}^{\pi^{i}_m}_{t,m} - \nabla f^{\pi^{i}_m}_m(x_{\star})\right)\right\|^2\\
&=& \left\|x^i_t-x^{i}_{\star}\right\|^2 - \frac{2\gamma}{M}\sum^M_{m=1}\left\la \left(\hat{g}^{\pi^{i}_m}_{t,m} - \nabla f^{\pi^{i}_m}_m(x_{\star})\right), x^i_t-x^{i}_{\star} \right\ra \\
&&+\gamma^2\left\|\frac{1}{M}\sum^M_{m=1} \left(\hat{g}^{\pi^{i}_m}_{t,m} - \nabla f^{\pi^{i}_m}_m(x_{\star})\right)\right\|^2.
\end{eqnarray*}
Taking  expectation w.r.t.\ $\cQ$ and using $\EE\|\xi - c\|^2 = \EE\|\xi - \EE\xi\|^2 + \|\EE\xi - c\|^2$, we obtain 
\begin{eqnarray*}
\EE_{\cQ}\left[\|x^{i+1}_t-x^{i+1}_{\star}\|^2 \right]
&=& \left\|x^i_t-x^{i}_{\star}\right\|^2 - \frac{2\gamma}{M}\sum^M_{m=1} \left\la\nabla f^{\pi^{i}_m}_m(x^{i}_{t}) - \nabla f^{\pi^{i}_m}_m(x_{\star}), x^i_t-x^{i}_{\star} \right\ra\\
&&+\gamma^2\EE_{\cQ}\left[\left\|\frac{1}{M}\sum^M_{m=1} \left(\cQ\left(\nabla f^{\pi^{i}_m}_m(x^{i}_{t}) - h^{\pi^{i}_m}_{t,m}\right) + h^{\pi^{i}_m}_{t,m}  - \nabla f^{\pi^{i}_m}_m(x_{\star})\right)\right\|^2\right]\\
&\leq& \left\|x^i_t-x^{i}_{\star}\right\|^2 - \frac{2\gamma}{M}\sum^M_{m=1} \left\la\nabla f^{\pi^{i}_m}_m(x^{i}_{t}) - \nabla f^{\pi^{i}_m}_m(x_{\star}), x^i_t-x^{i}_{\star} \right\ra\\
&&+\gamma^2\EE_{\cQ}\left[\left\|\frac{1}{M}\sum^M_{m=1} \left(\cQ\left(\nabla f^{\pi^{i}_m}_m(x^{i}_{t}) - h^{\pi^{i}_m}_{t,m}\right) - \nabla f^{\pi^{i}_m}_m(x^{i}_{t}) + h^{\pi^{i}_m}_{t,m}  \right)\right\|^2\right]\\
&&+\gamma^2\left\|\frac{1}{M}\sum^M_{m=1} \left( \nabla f^{\pi^{i}_m}_m(x_{\star})  - \nabla f^{\pi^{i}_m}_m(x^{i}_{t})\right)\right\|^2.
\end{eqnarray*}
Independence of $\cQ\left(\nabla f^{\pi^{i}_m}_m(x^{i}_{t}) - h^{\pi^{i}_m}_{t,m}\right)$, $m \in [M]$, Assumption \ref{asm:quantization_operators}, and three-point identity imply
\begin{eqnarray*}
\EE_{\cQ}\left[\|x^{i+1}_t-x^{i+1}_{\star}\|^2 \right]
&\leq& \left\|x^i_t-x^{i}_{\star}\right\|^2 - \frac{2\gamma}{M}\sum^M_{m=1} \left(D_{f^{\pi^{i}_m}_m}(x^{i}_{\star},x^{i}_{t}) +D_{f^{\pi^{i}_m}_m}( x^i_t, x_{\star}) -D_{f^{\pi^{i}_m}_m}(x^{i}_{\star},x_{\star})  \right)\\
&&+\frac{\gamma^2\omega}{M^2}\sum^M_{m=1}\left\|\nabla f^{\pi^{i}_m}_m(x^{i}_{t}) - h^{\pi^{i}_m}_{t,m}\right\|^2\\
&& +\gamma^2\left\|\frac{1}{M}\sum^M_{m=1} \left( \nabla f^{\pi^{i}_m}_m(x_{\star})  - \nabla f^{\pi^{i}_m}_m(x^{i}_{t})\right)\right\|^2\\
&\leq& \left\|x^i_t-x^{i}_{\star}\right\|^2 - \frac{2\gamma}{M}\sum^M_{m=1} \left(D_{f^{\pi^{i}_m}_m}(x^{i}_{\star},x^{i}_{t}) +D_{f^{\pi^{i}_m}_m}( x^i_t, x_{\star}) -D_{f^{\pi^{i}_m}_m}(x^{i}_{\star},x_{\star})  \right)\\
&&+\frac{2\gamma^2\omega}{M}\frac{1}{M}\sum^M_{m=1}\left\|\nabla f^{\pi^{i}_m}_m(x^{i}_{t}) - \nabla f^{\pi^{i}_m}_m(x_{\star})\right\|^2\\
&& + \gamma^2\left\|\frac{1}{M}\sum^M_{m=1} \left( \nabla f^{\pi^{i}_m}_m(x_{\star})  - \nabla f^{\pi^{i}_m}_m(x^{i}_{t})\right)\right\|^2\\
&&+\frac{2\gamma^2\omega}{M^2}\sum^M_{m=1}\left\|h^{\pi^{i}_m}_{t,m} - \nabla f^{\pi^{i}_m}_m(x_{\star})\right\|^2.
\end{eqnarray*}
Using $L_{\max}$-smoothness and $\mu$-strong convexity of functions $f^i_m$ and $\widetilde{L}$-smoothness and $\widetilde{\mu}$-strong convexity of $f^{\pi^i} = \frac{1}{M}\sum^M_{i=1}f^{\pi_m^i}_m$, we obtain 
\begin{eqnarray*}
\EE_{\cQ}\left[\|x^{i+1}_t-x^{i+1}_{\star}\|^2 \right]
&\leq& (1-\gamma\widetilde\mu)\left\|x^i_t-x^{i}_{\star}\right\|^2  -2\gamma\left(1 - \gamma\left(\widetilde{L}+\frac{2\omega}{M}L_{\max}\right)\right) \frac{1}{M}\sum^M_{m=1} D_{f^{\pi^i_m}_m}(x^i_t,x_{\star}) \\
&& + \frac{2\gamma}{M}\sum^M_{m=1} D_{f^{\pi^{i}_m}_m}(x^{i}_{\star},x_{\star}) +\frac{2\gamma^2\omega}{M^2}\sum^M_{m=1}\left\|h^{\pi^{i}_m}_{t,m} - \nabla f^{\pi^{i}_m}_m(x_{\star})\right\|^2.
\end{eqnarray*}
Taking the full expectation and using Definition 2, we derive
\begin{eqnarray*}
\EE\left[\|x^{i+1}_t-x^{i+1}_{\star}\|^2 \right]
&\leq& (1-\gamma\widetilde\mu)\EE\left[\left\|x^i_t-x^{i}_{\star}\right\|^2\right]  -2\gamma\left(1 - \gamma \left(\widetilde{L}+\frac{2\omega}{M}L_{\max}\right)\right) \frac{1}{M}\sum^M_{m=1}\EE\left[D_{f^{\pi^{i}_m}_m}(x^{i}_{t},x_{\star})\right]  \\
&& + 2\gamma^3\sigma^2_{\text{rad}} +\frac{2\gamma^2\omega}{M^2}\sum^M_{m=1}\EE\left[\left\|h^{\pi^{i}_m}_{t,m} - \nabla f^{\pi^{i}_m}_m(x_{\star})\right\|^2 \right].
\end{eqnarray*}
Recursively unrolling the inequality, we get
\begin{eqnarray*}
\EE\left[\|x_{t+1}-x_{\star}\|^2 \right]
&\leq& (1-\gamma\widetilde\mu)^n\EE\left[\left\|x_t-x_{\star}\right\|^2\right] +\frac{2\gamma^2\omega}{M^2}\sum^M_{m=1}\sum^{n-1}_{j=0}(1-\gamma\widetilde\mu)^j\EE\left[\left\|h^{\pi^{i}_m}_{t,m} - \nabla f^{\pi^{i}_m}_m(x_{\star})\right\|^2 \right]  \\
&& -2\gamma\left(1 - \gamma \left(\widetilde{L}+\frac{2\omega}{M}L_{\max}\right)\right) \frac{1}{M}\sum^M_{m=1}\sum^{n-1}_{j=0}(1-\gamma\widetilde\mu)^j\EE\left[D_{f^{\pi^{i}_m}_m}(x^{i}_{t},x_{\star})\right]\\
&& + 2\gamma^3\sigma^2_{\text{rad}}\sum^{n-1}_{j=0}(1-\gamma\widetilde\mu)^j.
\end{eqnarray*}
Next, we apply \eqref{lyapunov_func_rr_diana} and Lemma \ref{lem_rr_diana_conv_h}: 
\begin{eqnarray*}
\EE\left[\Psi_{t+1}\right]
&\leq& (1-\gamma\widetilde\mu)^n\EE\left[\left\|x_t-x_{\star}\right\|^2\right] + 2\gamma^3\sigma^2_{\text{rad}}\sum^{n-1}_{j=0}(1-\gamma\widetilde\mu)^j\\ &&+\left(c(1-\alpha)+\frac{2\omega}{M}\right)\frac{\gamma^2}{M}\sum^M_{m=1}\sum^{n-1}_{j=0}(1-\gamma\widetilde\mu)^j\EE\left[\left\|h^{\pi^{i}_m}_{t,m} - \nabla f^{\pi^{i}_m}_m(x_{\star})\right\|^2 \right]  \\
&& -2\gamma\left(1 -c\gamma\alpha L_{\max}-  \gamma 
\left(\widetilde{L}+\frac{2\omega}{M}L_{\max}\right) \right) \frac{1}{M}\sum^M_{m=1}\sum^{n-1}_{j=0}(1-\gamma\widetilde\mu)^j\EE\left[D_{f^{\pi^{i}_m}_m}(x^{i}_{\star},x_{\star})\right],
\end{eqnarray*}
where $c = \frac{4\omega}{\alpha M^2}$. Using $\alpha \leq \frac{1}{1+\omega}$ and $\gamma \leq \min\left\{\frac{\alpha}{2n\mu}, \frac{1}{\left(\widetilde{L}+\nicefrac{6\omega}{M}L_{\max}\right)}\right\}$, we obtain
\begin{eqnarray*}
\EE\left[\Psi_{t+1} \right]
&\leq& (1-\gamma\widetilde\mu)^n\EE\left[\left\|x_t-x_{\star}\right\|^2\right] +\left(1-\frac{\alpha}{2}\right)\frac{4\omega\gamma^2}{\alpha M^2}\sum^M_{m=1}\sum^{n-1}_{j=0}(1-\gamma\widetilde\mu)^j\EE\left[\left\|h^{\pi^{i}_m}_{t,m} - \nabla f^{\pi^{i}_m}_m(x_{\star})\right\|^2 \right] \\
&& + 2\gamma^2\sigma^3_{\text{rad}}\sum^{n-1}_{j=0}(1-\gamma\widetilde\mu)^j\\
&\leq&\max\left\{(1-\gamma\widetilde\mu)^n, \left(1-\frac{\alpha}{2}\right)\right\}\EE\left[\Psi_{t} \right] + 2\gamma^2\sigma^3_{\text{rad}}\sum^{n-1}_{j=0}(1-\gamma\widetilde\mu)^j\\
&\leq&(1-\gamma\widetilde\mu)^n\EE\left[\Psi_{t} \right] + 2\gamma^3\sigma^2_{\text{rad}}\sum^{n-1}_{j=0}(1-\gamma\widetilde\mu)^j.
\end{eqnarray*}

Recursively rewriting the inequality, we obtain 
\begin{eqnarray*}
\EE\left[\Psi_{T} \right]
&\leq&(1-\gamma\widetilde\mu)^{nT}\Psi_{0} + 2\gamma^3\sigma^2_{\text{rad}}\sum^{T-1}_{t=0}(1-\gamma\widetilde\mu)^{tn}\sum^{n-1}_{j=0}(1-\gamma\widetilde\mu)^j\\
&\leq&(1-\gamma\widetilde\mu)^{nT}\Psi_{0} + 2\gamma^3\sigma^2_{\text{rad}}\sum^{nT-1}_{k=0}(1-\gamma\widetilde\mu)^{k}
\end{eqnarray*}
Using that $\sum\limits^{+\infty}_{k = 0}\left(1-\frac{\gamma\widetilde\mu}{2}\right)^k \leq \frac{2}{\widetilde\mu\gamma}$, we finish proof.
\end{proof}

\begin{corollary}
	\label{cor_convergence_DIANA_RR}
	Let the assumptions of Theorem~\ref{thm:advanced_conv_diana_rr_sc} hold, $\alpha = \frac{1}{1+\omega}$ and
	\begin{equation}
		\label{new_gamma_DIANA_RR}
		\gamma = \min\left\{\frac{\alpha}{2n\widetilde{\mu}},\frac{1}{\widetilde{L}+\frac{6\omega}{M}L_{\max}}, \frac{\sqrt{\varepsilon\widetilde{\mu}}}{2\sigma_{\text{rad}}}\right\}.
	\end{equation}
	Then \algname{DIANA-RR} (Algorithm~\ref{alg_new_RR_DIANA}) finds a solution with accuracy $\varepsilon>0$ after the following number of communication rounds: 
	\begin{equation}
		\widetilde{\cO}\left(n(1+\omega)+\frac{\widetilde{L}}{\widetilde{\mu}}+\frac{\omega}{M}\frac{L_{\max}}{\widetilde{\mu}} + \frac{\sigma_{\text{rad}}}{\sqrt{\varepsilon\widetilde{\mu}^3}}\right).\notag
	\end{equation}
\end{corollary}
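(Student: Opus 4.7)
The plan is to follow the standard template for turning a linear-convergence-plus-noise bound into an iteration complexity. Starting from Theorem~\ref{thm:advanced_conv_diana_rr_sc}, we have
\begin{equation*}
\EE[\Psi_T] \leq (1-\gamma\widetilde{\mu})^{nT}\Psi_0 + \frac{2\gamma^2\sigma^2_{\text{rad}}}{\widetilde{\mu}}.
\end{equation*}
Since the shift term in the Lyapunov function \eqref{lyapunov_func_rr_diana} is non-negative, we have $\EE\|x_T-x_\star\|^2 \leq \EE[\Psi_T]$, so it suffices to force each summand on the right-hand side to be at most $\varepsilon/2$. The noise term is bounded by $\varepsilon/2$ precisely when $\gamma \leq \sqrt{\varepsilon\widetilde{\mu}}/(2\sigma_{\text{rad}})$, which is the third entry in the stepsize choice \eqref{new_gamma_DIANA_RR}. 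For the geometric term, using $(1-\gamma\widetilde{\mu})^{nT} \leq \exp(-\gamma\widetilde{\mu} nT)$, it suffices to take $nT \geq \frac{1}{\gamma\widetilde{\mu}}\log\!\frac{2\Psi_0}{\varepsilon}$, i.e., $nT = \widetilde{\cO}(1/(\gamma\widetilde{\mu}))$.

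Next I would plug in the prescribed stepsize. Since $\gamma$ is the minimum of three quantities, $1/(\gamma\widetilde{\mu})$ is the maximum of the reciprocals (each divided by $\widetilde{\mu}$). A direct expansion gives
\begin{equation*}
\frac{1}{\gamma\widetilde{\mu}} = \max\!\left\{\frac{2n}{\alpha},\ \frac{\widetilde{L}}{\widetilde{\mu}} + \frac{6\omega L_{\max}}{M\widetilde{\mu}},\ \frac{2\sigma_{\text{rad}}}{\sqrt{\varepsilon\widetilde{\mu}^3}}\right\}.
\end{equation*}
Substituting $\alpha = 1/(1+\omega)$ turns the first term into $2n(1+\omega)$, and upper-bounding $\max\{\cdot\}$ by the sum of its arguments yields exactly the claimed rate
\begin{equation*}
\widetilde{\cO}\!\left(n(1+\omega) + \frac{\widetilde{L}}{\widetilde{\mu}} + \frac{\omega}{M}\frac{L_{\max}}{\widetilde{\mu}} + \frac{\sigma_{\text{rad}}}{\sqrt{\varepsilon\widetilde{\mu}^3}}\right).
\end{equation*}

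The only subtlety is sanity-checking that $\Psi_0$ depends only polynomially on problem parameters (so it absorbs into the $\widetilde{\cO}(\cdot)$ logarithm): indeed, by \eqref{lyapunov_func_rr_diana}, $\Psi_0 = \|x_0-x_\star\|^2 + \frac{4\omega\gamma^2}{\alpha M^2}\sum_{m,j}(1-\gamma\widetilde{\mu})^j\|h^{\pi_m^j}_{0,m}-\nabla f^{\pi_m^j}_m(x_\star)\|^2$, a fixed quantity depending on initialization. This is the only mildly non-routine part; apart from this the argument is a mechanical combination of the three constraints on $\gamma$ with the choice of $\alpha$. There is no real obstacle beyond correctly identifying which of the three terms in the $\min$ dominates and tracking the dependence on $\omega$ through $\alpha$.
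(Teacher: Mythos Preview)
Your proposal is correct and follows essentially the same approach as the paper's proof: bound the noise term by $\varepsilon/2$ to obtain the third constraint on $\gamma$, use the geometric decay to get $nT=\widetilde{\cO}(1/(\gamma\widetilde{\mu}))$, and then substitute the three-way minimum for $\gamma$ (with $\alpha=1/(1+\omega)$) and replace the max by a sum. Your version is in fact slightly more careful than the paper's, since you make explicit the domination $\EE\|x_T-x_\star\|^2\leq\EE[\Psi_T]$ and the polynomial dependence of $\Psi_0$ on problem data.
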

\begin{proof}
	Theorem~\ref{thm:advanced_conv_diana_rr_sc} implies
	\begin{equation*}
		\EE\left[\Psi_T\right] \leq \left(1-\gamma\widetilde{\mu}\right)^{nT}\Psi_0 +\frac{2\gamma^2\sigma^2_{\text{rad}}}{\widetilde\mu}.
	\end{equation*}
	To estimate the number of communication rounds required to find a solution with accuracy $\varepsilon >0$, we need to upper bound each term from the right-hand side by $\frac{\varepsilon}{2}$. Thus, we get an additional condition on $\gamma$:
	\begin{equation*}
		\frac{2\gamma^2\sigma^2_{\text{rad}}}{\widetilde\mu}<\frac{\varepsilon}{2},
	\end{equation*}
	and also the upper bound on the number of communication rounds $nT$
	\begin{equation*}
		nT = \widetilde{\cO}\left(\frac{1}{\gamma\mu}\right).
	\end{equation*}
Substituting \eqref{new_gamma_DIANA_RR} in the previous equation, we get the result.
\end{proof}

\subsection{Non-Strongly Convex Summands}
In this section, we provide the analysis of \algname{DIANA-RR} without using Assumptions~\ref{asm:sc_each_f_m} and \ref{asm:sc_each_f_m_refined}. We emphasize that $x^{i+1}_t = x^{i}_t - \gamma \frac{1}{M}\sum^{M}_{m=1}\hat{g}^{\pi^i_m}_{t,m}$. Then we have 
\begin{equation*}
	x_{t+1} = x_t - \gamma \sum^{n-1}_{i=0}\frac{1}{M}\sum^{M}_{m=1}\hat{g}^{\pi^i_m}_{t,m} = x_t -\tau \frac{1}{Mn}\sum^{n-1}_{i=0} \sum^{M}_{m=1}\hat{g}^{\pi^i_m}_{t,m}.
\end{equation*}
We denote $\hat{g}_t \eqdef \frac{1}{Mn}\sum^{n-1}_{i=0} \sum^{M}_{m=1}\hat{g}^{\pi^i_m}_{t,m}$.

\begin{lemma}
	\label{lem_inner_product_diana_rr_cvx_case}
	 Let Assumptions \ref{asm:quantization_operators}, \ref{asm:sc_general_f},  \ref{asm:lip_max_f_m}, \ref{asm:lip_avr_f} hold. Then, the following inequality holds
	\begin{equation*}
		-2\tau\EE_{\cQ}\left[\la\hat{g}_t- h_{\star}, x_t-x_{\star}\ra \right] \leq -\frac{\tau\mu}{2}\|x_t-x_{\star}\|^2 - \tau\left(f(x_t)-f(x_{\star})\right) + \tau\widetilde{L}\frac{1}{n}\sum^{n-1}_{i=1}\|x_t-x^i_t\|^2,
	\end{equation*}
	where $h^{\star} = \nabla f(x_{\star}) = 0$.
\end{lemma}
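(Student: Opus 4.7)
The plan is to mirror the argument used in Lemma~\ref{lem_inner_product_cvx_case} for \algname{Q-RR}, exploiting the fact that the \algname{DIANA} estimator is still conditionally unbiased, and the shift $h_{\star}$ drops out because $\nabla f(x_{\star})=0$.

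First I would compute $\EE_{\cQ}[\hat{g}_t]$. Using the definition $\hat{g}^{\pi^i_m}_{t,m} = h^{\pi^i_m}_{t,m} + \cQ(\nabla f^{\pi^i_m}_m(x^i_t) - h^{\pi^i_m}_{t,m})$ and the unbiasedness part of Assumption~\ref{asm:quantization_operators}, the shifts cancel and one obtains
\[
\EE_{\cQ}[\hat{g}_t] = \frac{1}{Mn}\sum_{i=0}^{n-1}\sum_{m=1}^{M}\nabla f^{\pi^i_m}_m(x^i_t).
\]
This is exactly the same mean used in Lemma~\ref{lem_inner_product_cvx_case}, so from here the \algname{DIANA}-style correction plays no role.

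Next, since $h_{\star}=\nabla f(x_{\star})=0$, I would insert the optimum gradient for free: $\frac{1}{Mn}\sum_{i,m}\nabla f^{\pi^i_m}_m(x_{\star}) = \nabla f(x_{\star}) = 0$. Thus
\[
-2\tau\EE_{\cQ}[\langle \hat{g}_t - h_{\star}, x_t - x_{\star}\rangle] = -\frac{2\tau}{Mn}\sum_{i,m}\langle \nabla f^{\pi^i_m}_m(x^i_t) - \nabla f^{\pi^i_m}_m(x_{\star}), x_t - x_{\star}\rangle.
\]
Applying the three-point identity to each summand with the three points $(x_t, x^i_t, x_{\star})$ decomposes the right-hand side into three Bregman terms. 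Summing and using $\frac{1}{M}\sum_m f^{\pi^i_m}_m = f^{\pi^i}$ yields
\[
-2\tau D_f(x_t,x_{\star}) - \frac{2\tau}{n}\sum_{i=0}^{n-1} D_{f^{\pi^i}}(x_{\star}, x^i_t) + \frac{2\tau}{n}\sum_{i=0}^{n-1} D_{f^{\pi^i}}(x_t, x^i_t).
\]

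Finally, I would drop the nonpositive middle term (Bregman divergences are nonnegative), upper bound $D_{f^{\pi^i}}(x_t, x^i_t) \leq \frac{\widetilde{L}}{2}\|x_t - x^i_t\|^2$ by Assumption~\ref{asm:lip_avr_f}, and split the remaining term using $\mu$-strong convexity of $f$ (Assumption~\ref{asm:sc_general_f}):
\[
-2\tau D_f(x_t,x_{\star}) \leq -\tau(f(x_t)-f(x_{\star})) - \frac{\tau\mu}{2}\|x_t - x_{\star}\|^2.
\]
Combining these gives the claimed inequality. There is no real obstacle here: the $\hat{g}_t$-to-$\nabla f$ reduction makes this step a direct copy of the corresponding \algname{Q-RR} argument, and the only bookkeeping subtlety is keeping track of the index shift in the final sum (starting at $i=0$ versus $i=1$), which does not affect the bound.
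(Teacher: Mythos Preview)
Your proposal is correct and follows essentially the same route as the paper, which simply notes that since $h_{\star}=\nabla f(x_{\star})=0$ the argument is identical to that of Lemma~\ref{lem_inner_product_cvx_case}. Your explicit observation that $\EE_{\cQ}[\hat g_t]$ coincides with the \algname{Q-RR} mean (because the DIANA correction is unbiased) is exactly the reduction the paper has in mind, and your handling of the $i=0$ versus $i=1$ summation index is fine since $x_t^0=x_t$.
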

\begin{proof}
	Since $h^{\star} = \nabla f(x_{\star}) = 0$, the proof of Lemma \ref{lem_inner_product_diana_rr_cvx_case} is identical to the proof of Lemma \ref{lem_inner_product_cvx_case}.
\end{proof}

\begin{lemma}
	\label{lem_norm_of_grad_diana_rr_cvx_case}
	 Let Assumptions \ref{asm:quantization_operators}, \ref{asm:sc_general_f},  \ref{asm:lip_max_f_m}, \ref{asm:lip_avr_f} hold. Then, the following inequality holds
	\begin{eqnarray*}
		\EE_{\cQ}\left[\|\hat{g}_t - h_{\star}\|^2\right] &\leq& 2\widetilde{L}\left(\widetilde{L} + \frac{\omega}{Mn}L_{\max}\right)\frac{1}{n}\sum^{n-1}_{i=0}\|x^i_t-x_t\|^2 \\
		&&+ 8\left(\widetilde{L} + \frac{\omega}{Mn}L_{\max}\right)\left(f(x_t)-f(x_{\star})\right)\\
		&&+ \frac{4\omega}{M^2n^2}\sum^{n-1}_{i=0}\sum^M_{m=1}\|h^{\pi^i_m}_{t,m}-\nabla f^{\pi^i_m}_{m}(x_{\star})\|^2
	\end{eqnarray*}
\end{lemma}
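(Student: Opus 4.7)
The plan is to mimic the structure of Lemma~\ref{lem_norm_grad_cvx_case} while carefully inserting the shift vectors $h_{t,m}^{\pi^i_m}$ and using $h_\star = \nabla f(x_\star) = 0$. Start from the variance decomposition
\begin{equation*}
\EE_{\cQ}\left[\|\hat{g}_t - h_\star\|^2\right] = \EE_{\cQ}\left[\|\hat{g}_t - \EE_{\cQ}\hat{g}_t\|^2\right] + \|\EE_{\cQ}\hat{g}_t - h_\star\|^2.
\end{equation*}
By unbiasedness of $\cQ$ (Assumption~\ref{asm:quantization_operators}), we have $\EE_{\cQ}\hat{g}_t = \tfrac{1}{Mn}\sum_{i,m}\nabla f_m^{\pi^i_m}(x_t^i) = \tfrac{1}{n}\sum_i \nabla f^{\pi^i}(x_t^i)$. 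Conditional independence across $(i,m)$ together with the variance bound $\EE_{\cQ}\|\cQ(v)-v\|^2 \le \omega\|v\|^2$ then yields
\begin{equation*}
\EE_{\cQ}\left[\|\hat{g}_t - \EE_{\cQ}\hat{g}_t\|^2\right] \le \frac{\omega}{M^2 n^2}\sum_{i,m}\left\|\nabla f_m^{\pi^i_m}(x_t^i) - h_{t,m}^{\pi^i_m}\right\|^2.
\end{equation*}

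Next, the two summands are bounded separately. For the variance term, the key trick (to avoid an extra factor of two in the $\|x_t^i-x_t\|^2$ coefficient) is the asymmetric split
\begin{equation*}
\|\nabla f_m^{\pi^i_m}(x_t^i) - h_{t,m}^{\pi^i_m}\|^2 \le 2\|\nabla f_m^{\pi^i_m}(x_t^i) - \nabla f_m^{\pi^i_m}(x_t)\|^2 + 4\|\nabla f_m^{\pi^i_m}(x_t) - \nabla f_m^{\pi^i_m}(x_\star)\|^2 + 4\|h_{t,m}^{\pi^i_m} - \nabla f_m^{\pi^i_m}(x_\star)\|^2,
\end{equation*}
after which $L_{\max}$-smoothness plus convexity of each $f_m^j$ turns the first two summands into $4L_{\max} D_{f_m^{\pi^i_m}}(x_t^i,x_t)$ and $8L_{\max} D_{f_m^{\pi^i_m}}(x_t,x_\star)$. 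For the bias term, use $h_\star = \nabla f(x_\star)$ and $\frac{1}{n}\sum_i \nabla f^{\pi^i}(x_t) = \nabla f(x_t)$ to write
\begin{equation*}
\|\EE_{\cQ}\hat{g}_t - h_\star\|^2 \le 2\left\|\frac{1}{n}\sum_i[\nabla f^{\pi^i}(x_t^i) - \nabla f^{\pi^i}(x_t)]\right\|^2 + 2\|\nabla f(x_t) - \nabla f(x_\star)\|^2,
\end{equation*}
apply Jensen and use $\widetilde{L}$-smoothness plus convexity of each $f^{\pi^i}$ and of $f$ to produce $4\widetilde{L}\cdot\frac{1}{n}\sum_i D_{f^{\pi^i}}(x_t^i,x_t)$ and $4\widetilde{L}(f(x_t)-f(x_\star))$.

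Adding the two bounds and averaging the per-machine Bregman divergences via $\tfrac{1}{M}\sum_m D_{f_m^{\pi^i_m}}(x_t^i,x_t) = D_{f^{\pi^i}}(x_t^i,x_t)$ and $\tfrac{1}{Mn}\sum_{i,m} D_{f_m^{\pi^i_m}}(x_t,x_\star) = f(x_t)-f(x_\star)$ collapses everything to
\begin{equation*}
4\left(\widetilde{L} + \frac{\omega L_{\max}}{Mn}\right)\cdot\frac{1}{n}\sum_{i=0}^{n-1} D_{f^{\pi^i}}(x_t^i, x_t) + \left(4\widetilde{L} + \frac{8\omega L_{\max}}{Mn}\right)(f(x_t)-f(x_\star)) + \frac{4\omega}{M^2n^2}\sum_{i,m}\|h_{t,m}^{\pi^i_m}-\nabla f_m^{\pi^i_m}(x_\star)\|^2.
\end{equation*}
The final step converts $D_{f^{\pi^i}}(x_t^i,x_t)\le \tfrac{\widetilde L}{2}\|x_t^i - x_t\|^2$ (by $\widetilde L$-smoothness) to get the coefficient $2\widetilde L(\widetilde L + \omega L_{\max}/(Mn))$, and loosens $4\widetilde L + 8\omega L_{\max}/(Mn) \le 8(\widetilde L + \omega L_{\max}/(Mn))$.

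The main obstacle is purely bookkeeping: choosing the right asymmetric splits so that the combined coefficient of the Bregman-divergence sum becomes exactly $4(\widetilde L + \omega L_{\max}/(Mn))$ rather than the naive $4\widetilde L + 8\omega L_{\max}/(Mn)$; a symmetric three-point split would still give a valid bound but with a factor $2$ slack, which eventually inflates the descent condition on $\gamma$. No new analytic ideas beyond Lemma~\ref{lem_norm_grad_cvx_case} are needed.
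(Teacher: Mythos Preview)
Your proposal is correct and follows essentially the same route as the paper: variance decomposition of $\hat g_t$, the bound $\frac{\omega}{M^2n^2}\sum_{i,m}\|\nabla f_m^{\pi_m^i}(x_t^i)-h_{t,m}^{\pi_m^i}\|^2$ via independence and Assumption~\ref{asm:quantization_operators}, then two successive binary splits of that norm at $\nabla f_m^{\pi_m^i}(x_t)$ and $\nabla f_m^{\pi_m^i}(x_\star)$ (which the paper writes in two steps and you compress into the single ``asymmetric'' inequality), followed by the smoothness--convexity bounds and the final $D_{f^{\pi^i}}(x_t^i,x_t)\le\tfrac{\widetilde L}{2}\|x_t^i-x_t\|^2$. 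The only cosmetic difference is that the paper keeps the $\frac{8\omega}{Mn}L_{\max}\cdot\frac{1}{Mn}\sum_{i,m}D_{f_m^{\pi_m^i}}(x_t,x_\star)$ term explicit in its last display rather than immediately collapsing it to $f(x_t)-f(x_\star)$, but this is exactly your final loosening step.
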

\begin{proof}
	Taking expectation w.r.t. $\cQ$, we get 
	\begin{eqnarray*}
		\EE_{\cQ}\left[\left\|\hat{g}_t - h_{\star}\right\|^2\right] 
		&=& \EE_{\cQ}\left[\left\|\frac{1}{Mn}\sum^{n-1}_{i=0} \sum^{M}_{m=1}\hat{g}^{\pi^i_m}_{t,m} - h_{\star}\right\|^2\right]\\
		&=& \EE_{\cQ}\left[\left\|\frac{1}{Mn}\sum^{n-1}_{i=0} \sum^{M}_{m=1}\left(h^{\pi^i_m}_{t,m} + \cQ\left(\nabla f^{\pi^i_m}_m(x^i_{t}) - h^{\pi^i_m}_{t,m}\right)\right) - h_{\star}\right\|^2\right]\\
		&=& \EE_{\cQ}\left[\left\|\frac{1}{Mn}\sum^{n-1}_{i=0} \sum^{M}_{m=1}\left(h^{\pi^i_m}_{t,m} - \nabla f^{\pi^i_m}_m(x^i_{t}) + \cQ\left(\nabla f^{\pi^i_m}_m(x^i_{t}) - h^{\pi^i_m}_{t,m}\right)\right) \right\|^2\right]\\
		&&+ \left\|\frac{1}{Mn}\sum^{n-1}_{i=0} \sum^{M}_{m=1}\nabla f^{\pi^i_m}_m(x^i_{t})  - h_{\star}\right\|^2.
	\end{eqnarray*}
	Independence of $\cQ\left(\nabla f^{\pi^i_m}_m(x^i_{t}) - h^{\pi^i_m}_{t,m}\right)$, $m \in [M]$ and Assumption \ref{asm:quantization_operators} imply 
	\begin{eqnarray*}
		\EE_{\cQ}\left[\|\hat{g}_t - h_{\star}\|^2\right] 
		&=& \frac{1}{M^2n^2}\sum^{n-1}_{i=0} \sum^{M}_{m=1}\EE_{\cQ}\left[\left\|h^{\pi^i_m}_{t,m} - \nabla f^{\pi^i_m}_m(x^i_{t}) + \cQ\left(\nabla f^{\pi^i_m}_m(x^i_{t}) - h^{\pi^i_m}_{t,m}\right)\right\|^2\right]\\
		&&+ \left\|\frac{1}{Mn}\sum^{n-1}_{i=0} \sum^{M}_{m=1}\nabla f^{\pi^i_m}_m(x^i_{t})  - h_{\star}\right\|^2\\
		&\leq& \frac{\omega}{M^2n^2}\sum^{n-1}_{i=0} \sum^{M}_{m=1}\left\|\nabla f^{\pi^i_m}_m(x^i_{t}) -  h^{\pi^i_m}_{t,m} \right\|^2 + \left\|\frac{1}{n}\sum^{n-1}_{i=0}\nabla f^{\pi^i}(x^i_{t})  - h_{\star}\right\|^2\\
		&\leq& \frac{2\omega}{M^2n^2}\sum^{n-1}_{i=0} \sum^{M}_{m=1}\left\|\nabla f^{\pi^i_m}_m(x^i_{t}) - \nabla f^{\pi^i_m}_m(x_{t})\right\|^2 + \frac{2}{n}\sum^{n-1}_{i=0}\left\|\nabla f^{\pi^i}(x^i_{t})  - \nabla f^{\pi^i}(x_{t})\right\|^2\\
		&&+ \frac{2\omega}{M^2n^2}\sum^{n-1}_{i=0} \sum^{M}_{m=1}\left\|h^{\pi^i_m}_{t,m} - \nabla f^{\pi^i_m}_m(x_{t}) \right\|^2 + 2\left\|\frac{1}{n}\sum^{n-1}_{i=0}\nabla f^{\pi^i}(x_{t})  - h_{\star}\right\|^2.
	\end{eqnarray*}
	Using $L_{\max}$-smoothness and convexity of $f^i_m$ and $\widetilde{L}$-smoothness and convexity of $f^{\pi^i}$, we obtain 
	\begin{eqnarray*}
		\EE_{\cQ}\left[\|\hat{g}_t - h_{\star}\|^2\right] 
		&\leq& \frac{4\omega L_{\max}}{M^2n^2}\sum^{n-1}_{i=0} \sum^{M}_{m=1}D_{f^{\pi^i_m}_m}(x^i_{t}, x_{t}) + \frac{4\widetilde{L}}{n}\sum^{n-1}_{i=0}D_{f^{\pi^i}}(x^i_{t},x_{t})\\
		&&+ \frac{4\omega}{M^2n^2}\sum^{n-1}_{i=0} \sum^{M}_{m=1}\left\|h^{\pi^i_m}_{t,m} - \nabla f^{\pi^i_m}_m(x_{\star}) \right\|^2 + 4\widetilde{L}\left(f(x_{t})  - f(x_{\star})\right)\\
		&&+ \frac{4\omega}{M^2n^2}\sum^{n-1}_{i=0} \sum^{M}_{m=1}\left\|\nabla f^{\pi^i_m}_m(x_{t}) - \nabla f^{\pi^i_m}_m(x_{\star}) \right\|^2 \\
		&\leq& 2\widetilde{L}\left(\widetilde{L} +\frac{\omega}{Mn}L_{\max}\right)\frac{1}{n}\sum^{n-1}_{i=0} \|x^i_{t} - x_{t}\|^2 + 4\widetilde{L}\left(f(x_{t})  - f(x_{\star})\right)\\
		&&+ \frac{8\omega}{Mn}L_{\max}\frac{1}{Mn}\sum^{n-1}_{i=0} \sum^{M}_{m=1}D_{f^{\pi^i_m}_m}(x_{t},x_{\star})\\
		&&+ \frac{4\omega}{M^2n^2}\sum^{n-1}_{i=0} \sum^{M}_{m=1}\left\|h^{\pi^i_m}_{t,m} - \nabla f^{\pi^i_m}_m(x_{\star}) \right\|^2. 
	\end{eqnarray*}
\end{proof}

\begin{lemma}
	\label{lem_conv_h_diana_rr_cvx_case}
	Let $\alpha\leq\frac{1}{1+\omega}$ and  Assumptions \ref{asm:quantization_operators}, \ref{asm:sc_general_f},  \ref{asm:lip_max_f_m}, \ref{asm:lip_avr_f} hold. Then, the iterates produced by \algname{DIANA-RR} (Algorithm~\ref{alg_new_RR_DIANA}) satisfy
	\begin{eqnarray*}
		\frac{1}{Mn}\sum^{n-1}_{i=0}\sum^M_{m=1}\EE_{\cQ}\left[\|h^{\pi^i_m}_{t+1,m} - \nabla f^{\pi^i_m}_m(x_{\star})\|^2\right] &\leq& \frac{1-\alpha}{Mn}\sum^{n-1}_{i=0}\sum^M_{m=1}\|h^{\pi^i_m}_{t,m} -\nabla f^{\pi^i_m}_m(x_{\star})\|^2\\
		&&+ \frac{2\alpha\widetilde{L} L_{\max}}{n}\sum^{n-1}_{i=0} \|x^i_t-x_t\|^2 \\
		&&+ 4 \alpha L_{\max}\left(f(x_t)-f(x_{\star})\right).
	\end{eqnarray*}
\end{lemma}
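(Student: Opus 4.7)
The plan is to mimic the derivation of Lemma~\ref{lem_rr_diana_conv_h}, up to the point where strong convexity of the summands was invoked, and then to replace that bound with a smoothness-based splitting that produces the two distinct terms $\|x_t^i - x_t\|^2$ and $f(x_t) - f(x_\star)$. The starting observation is that the update rule $h^{\pi^i_m}_{t+1,m} = h^{\pi^i_m}_{t,m} + \alpha\cQ(\nabla f^{\pi^i_m}_m(x_t^i) - h^{\pi^i_m}_{t,m})$, together with Assumption~\ref{asm:quantization_operators} and the choice $\alpha \le 1/(1+\omega)$, yields exactly the same estimate as in \eqref{eq:ndskjjdcidscuidbid}, namely
\begin{equation*}
\EE_{\cQ}\!\left[\|h^{\pi^i_m}_{t+1,m} - \nabla f^{\pi^i_m}_m(x_{\star})\|^2\right] \le (1-\alpha)\|h^{\pi^i_m}_{t,m} - \nabla f^{\pi^i_m}_m(x_{\star})\|^2 + \alpha\|\nabla f^{\pi^i_m}_m(x_t^i) - \nabla f^{\pi^i_m}_m(x_{\star})\|^2.
\end{equation*}
The entire novelty of the proof is in bounding the last summand without using Assumption~\ref{asm:sc_each_f_m_refined}.

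Next, I would insert $\pm \nabla f^{\pi^i_m}_m(x_t)$ and apply Young's inequality with constant $2$ to split
\begin{equation*}
\|\nabla f^{\pi^i_m}_m(x_t^i) - \nabla f^{\pi^i_m}_m(x_{\star})\|^2 \le 2\|\nabla f^{\pi^i_m}_m(x_t^i) - \nabla f^{\pi^i_m}_m(x_t)\|^2 + 2\|\nabla f^{\pi^i_m}_m(x_t) - \nabla f^{\pi^i_m}_m(x_{\star})\|^2,
\end{equation*}
and then invoke the standard consequence of convexity plus $L_{\max}$-smoothness of each $f^i_m$ (namely $\|\nabla f(x) - \nabla f(y)\|^2 \le 2L\,D_f(x,y)$) to dominate each of the two pieces by $4L_{\max} D_{f^{\pi^i_m}_m}(x_t^i, x_t)$ and $4L_{\max} D_{f^{\pi^i_m}_m}(x_t, x_\star)$ respectively.

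It now remains to average over $m \in [M]$ and $i \in \{0,\ldots,n-1\}$. For the first piece, $\frac{1}{M}\sum_{m=1}^M D_{f^{\pi^i_m}_m}(x_t^i, x_t) = D_{f^{\pi^i}}(x_t^i, x_t) \le \frac{\widetilde{L}}{2}\|x_t^i - x_t\|^2$ by Assumption~\ref{asm:lip_avr_f} together with convexity of $f^{\pi^i}$, which delivers precisely the factor $\widetilde{L}L_{\max}$ that appears in the statement. For the second piece, the key observation is that $\pi_m$ is a permutation of $[n]$, so
\begin{equation*}
\frac{1}{Mn}\sum_{i=0}^{n-1}\sum_{m=1}^M D_{f^{\pi^i_m}_m}(x_t, x_\star) = \frac{1}{M}\sum_{m=1}^M \frac{1}{n}\sum_{j=1}^n D_{f^j_m}(x_t, x_\star) = D_f(x_t, x_\star) = f(x_t) - f(x_\star),
\end{equation*}
where the last equality uses $\nabla f(x_\star) = 0$. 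Plugging these back gives the stated inequality.

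The proof is essentially routine once the splitting strategy is identified; the only mild subtlety is keeping track of the two different smoothness constants, $L_{\max}$ (used at the per-summand level where we lack strong convexity) and $\widetilde{L}$ (used only after averaging over $m$ to collapse the Bregman-divergence sum into a quadratic in $\|x_t^i - x_t\|$). I do not anticipate any real obstacle beyond matching the multiplicative constants.
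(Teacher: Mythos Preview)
Your proposal is correct and follows essentially the same approach as the paper: start from \eqref{eq:ndskjjdcidscuidbid}, split $\|\nabla f^{\pi^i_m}_m(x_t^i) - \nabla f^{\pi^i_m}_m(x_\star)\|^2$ via $\pm\nabla f^{\pi^i_m}_m(x_t)$ and Young's inequality, bound each piece by a Bregman divergence using $L_{\max}$-smoothness and convexity, then average over $m$ (invoking $\widetilde{L}$-smoothness of $f^{\pi^i}$) and over $i$ (using that $\pi_m$ is a permutation so the Bregman terms collapse to $D_f(x_t,x_\star)=f(x_t)-f(x_\star)$). The constants match exactly.
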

\begin{proof}
	Fist of all, we introduce new notation: $$\cH_{t} \eqdef \frac{1}{Mn}\sum^{n-1}_{i=0}\sum^M_{m=1}\EE_{\cQ}\left[\|h^{\pi^i_m}_{t,m} - \nabla f^{\pi^i_m}_m(x_{\star})\|^2\right].$$ Using \eqref{eq:ndskjjdcidscuidbid} and summing it up for $i=0,\ldots, n-1$, we obtain 
	\begin{eqnarray*}
		\cH_{t+1} &\leq& \frac{1-\alpha}{Mn}\sum^{n-1}_{i=0}\sum^M_{m=1}\|h^{\pi^i_m}_{t,m}- \nabla f^{\pi^i_m}_m(x_{\star})\|^2 + \frac{\alpha}{Mn}\sum^{n-1}_{i=0}\sum^M_{m=1}\|\nabla f^{\pi^i_m}_m(x^i_{t}) - \nabla f^{\pi^i_m}_m(x_{\star})\|^2\\
		&\leq& \frac{1-\alpha}{Mn}\sum^{n-1}_{i=0}\sum^M_{m=1}\|h^{\pi^i_m}_{t,m}- \nabla f^{\pi^i_m}_m(x_{\star})\|^2 + \frac{2\alpha}{Mn}\sum^{n-1}_{i=0}\sum^M_{m=1}\|\nabla f^{\pi^i_m}_m(x^i_{t}) - \nabla f^{\pi^i_m}_m(x_{t}) \|^2\\
		&&+ \frac{2\alpha}{Mn}\sum^{n-1}_{i=0}\sum^M_{m=1}\|\nabla f^{\pi^i_m}_m(x_{t}) - \nabla f^{\pi^i_m}_m(x_{\star})\|^2.
	\end{eqnarray*}
	Next, we apply $L_{\max}$-smoothness and convexity of $f^i_m$ and $\widetilde{L}$-smoothness and convexity of $f^{\pi^i}$:
	\begin{eqnarray*}
		\cH_{t+1} 
		&\leq& \frac{1-\alpha}{Mn}\sum^{n-1}_{i=0}\sum^M_{m=1}\|h^{\pi^i_m}_{t,m}- \nabla f^{\pi^i_m}_m(x_{\star})\|^2 + \frac{4\alpha}{Mn}L_{\max}\sum^{n-1}_{i=0}\sum^M_{m=1}D_{f^{\pi^i_m}_m}(x^i_{t}, x_{t})\\
		&&+ \frac{4\alpha}{Mn}L_{\max}\sum^{n-1}_{i=0}\sum^M_{m=1} D_{f^{\pi^i_m}_m}(x_{t}, x_{\star})\\
		&\leq& \frac{1-\alpha}{Mn}\sum^{n-1}_{i=0}\sum^M_{m=1}\|h^{\pi^i_m}_{t,m}- \nabla f^{\pi^i_m}_m(x_{\star})\|^2 + \frac{2\alpha}{n}\widetilde{L}L_{\max}\sum^{n-1}_{i=0}\|x^i_{t} - x_{t}\|^2\\
		&&+ \frac{4\alpha}{Mn}L_{\max}\sum^{n-1}_{i=0}\sum^M_{m=1} D_{f^{\pi^i_m}_m}(x_{t}, x_{\star}).
	\end{eqnarray*}
\end{proof}

\begin{lemma}
	\label{lem_dinst_diana_rr_cvx_case} 
	Let Assumptions \ref{asm:quantization_operators}, \ref{asm:sc_general_f},  \ref{asm:lip_max_f_m}, \ref{asm:lip_avr_f} and $\tau \leq \frac{1}{2\sqrt{\widetilde{L}\left(\widetilde{L}+\frac{\omega}{Mn}L_{\max}\right)}}$. Then, the following inequality holds
	\begin{eqnarray*}
		\frac{1}{n}\sum^{n-1}_{i=0}\EE\left[\|x^i_t-x_t\|^2\right] &\leq& 24\tau^2\left(\widetilde{L}+\frac{\omega}{Mn}L_{\max}\right)\EE\left[f(x_t)-f(x_{\star})\right]  +8\tau^2\frac{\sigma^2_{\star,n}}{n}\\
		&&+ 8\frac{\tau^2\omega}{M^2n^2}\sum^{n-1}_{i=0}\sum^M_{m=1}\EE\left[\|h^{\pi^i_m}_{t,m}-\nabla f^{\pi^i_m}_m (x_{\star})\|^2\right],
	\end{eqnarray*}
	where $\sigma^2_{\star,n} \eqdef \frac{1}{n}\sum^n_{i=1}\|\nabla f^i(x_{\star})\|^2$.
\end{lemma}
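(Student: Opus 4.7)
\textbf{Proof proposal for Lemma \ref{lem_dinst_diana_rr_cvx_case}.} The plan is to closely follow the Q-RR analysis of Lemma~\ref{lem_dinst_cvx_case}, with the single conceptual modification that every occurrence of the raw estimator $\cQ(\nabla f^{\pi^j_m}_m(x^j_t))$ is replaced by the DIANA estimator $\hat g^{\pi^j_m}_{t,m} = h^{\pi^j_m}_{t,m} + \cQ(\nabla f^{\pi^j_m}_m(x^j_t) - h^{\pi^j_m}_{t,m})$. Unrolling the inner loop yields $x^i_t - x_t = -\tfrac{\tau}{Mn}\sum_m \sum_{j<i} \hat g^{\pi^j_m}_{t,m}$, and the variance-mean decomposition $\EE_\cQ\|\xi\|^2 = \EE_\cQ\|\xi - \EE_\cQ \xi\|^2 + \|\EE_\cQ \xi\|^2$, combined with $\EE_\cQ \hat g^{\pi^j_m}_{t,m} = \nabla f^{\pi^j_m}_m(x^j_t)$, conditional independence of the compressor calls across clients $m$, and Assumption~\ref{asm:quantization_operators}, gives
\begin{equation*}
\EE_\cQ\|x^i_t - x_t\|^2 \leq \tfrac{\tau^2 \omega}{M^2 n^2}\sum_m \sum_{j<i}\|\nabla f^{\pi^j_m}_m(x^j_t) - h^{\pi^j_m}_{t,m}\|^2 + \tau^2\left\|\tfrac{1}{Mn}\sum_m \sum_{j<i}\nabla f^{\pi^j_m}_m(x^j_t)\right\|^2.
\end{equation*}

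For the variance piece I would apply $\|a-b\|^2 \leq 2\|a-c\|^2 + 2\|c-b\|^2$ with $c = \nabla f^{\pi^j_m}_m(x_\star)$ to separate the shift contribution (which, after extending the sum from $j<i$ to all $j \in [n]$, becomes precisely the third term in the target bound) from the progress term $\|\nabla f^{\pi^j_m}_m(x^j_t) - \nabla f^{\pi^j_m}_m(x_\star)\|^2$. The latter is further split through $x_t$ and converted to Bregman divergences $D_{f^{\pi^j_m}_m}(x^j_t, x_t)$ and $D_{f^{\pi^j_m}_m}(x_t, x_\star)$ via $L_{\max}$-smoothness and convexity of $f^i_m$. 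For the mean term I would proceed exactly as in Lemma~\ref{lem_dinst_cvx_case}: split through $x_t$, use $\widetilde L$-smoothness of $f^{\pi^j}$ on one half, and apply Lemma~\ref{Sampling_without_replacement} to the other half, producing $\|\nabla f(x_t)\|^2 + \tfrac{1}{n}\sigma_t^2$, both of which are absorbed through one more split about $x_\star$ to produce the $f(x_t) - f(x_\star)$ contribution and the $\sigma_{\star,n}^2/n$ constant term.

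Summing over $i = 0, \ldots, n-1$ and converting Bregman divergences of $f^{\pi^j_m}_m$ at $(x^j_t, x_t)$ back to squared distances via $L_{\max}$-smoothness, a self-referential term of the form $2\widetilde L \tau^2\bigl(\widetilde L + \tfrac{\omega L_{\max}}{Mn}\bigr)\cdot \tfrac{1}{n}\sum_j \|x^j_t - x_t\|^2$ appears on the right-hand side. The stepsize assumption $\tau \leq \tfrac{1}{2\sqrt{\widetilde L(\widetilde L + \omega L_{\max}/(Mn))}}$ makes its coefficient at most $1/2$, so it can be absorbed into the left-hand side at the cost of doubling the remaining constants, yielding the factors $24$, $8$, and $8$ in the stated bound.

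The main obstacle is not conceptual but bookkeeping: every split through $x_t$ or $x_\star$ introduces a factor of $2$, and one must track which pieces land as $L_{\max}$ vs.\ $\widetilde L$ vs.\ $\omega L_{\max}/(Mn)$ coefficients so that they collapse cleanly into the combined factor $\widetilde L + \omega L_{\max}/(Mn)$. The only genuinely new ingredient compared to Q-RR is the shift split $\|\nabla f - h\|^2 \leq 2\|\nabla f - \nabla f(x_\star)\|^2 + 2\|h - \nabla f(x_\star)\|^2$, which replaces the $\|\nabla f(x_\star)\|^2$-based $(\zeta_\star^2 + \sigma_\star^2)$ noise floor of Q-RR by the controllable quantity $\tfrac{1}{Mn}\sum_m \sum_j \|h^{\pi^j_m}_{t,m} - \nabla f^{\pi^j_m}_m(x_\star)\|^2$.
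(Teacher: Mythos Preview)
Your proposal is correct and follows essentially the same route as the paper: unroll the inner loop, apply the variance--mean decomposition to the DIANA estimator, split into progress/shift/function-value pieces, and absorb the self-referential drift term using the stepsize bound. The one place where you diverge is the \emph{order} of the two splits on the compression-variance term $\|\nabla f^{\pi^j_m}_m(x^j_t)-h^{\pi^j_m}_{t,m}\|^2$: you split first through $\nabla f^{\pi^j_m}_m(x_\star)$ and then pass the progress piece through $\nabla f^{\pi^j_m}_m(x_t)$, whereas the paper does the reverse (first through $x_t$, then splits the resulting $\|h-\nabla f(x_t)\|^2$ through $x_\star$). With your order the term $\|\nabla f^{\pi^j_m}_m(x^j_t)-\nabla f^{\pi^j_m}_m(x_t)\|^2$ carries coefficient $4$ rather than $2$, so the self-referential coefficient becomes $2\widetilde L\tau^2\bigl(\widetilde L+\tfrac{2\omega}{Mn}L_{\max}\bigr)$ instead of $2\widetilde L\tau^2\bigl(\widetilde L+\tfrac{\omega}{Mn}L_{\max}\bigr)$; under the stated bound on $\tau$ this is no longer guaranteed to be $\le 1/2$, and the clean ``double everything'' absorption yielding exactly $24,8,8$ does not go through. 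Reversing the split order fixes this with no additional work and is the only adjustment needed.
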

\begin{proof}
	Since $x^i_t= x_t -\frac{\tau}{Mn}\sum \limits^M_{m=1}\sum^{i-1} \limits_{j=0}\left(h^{\pi^j_m}_{t,m} + \cQ\left(\nabla f^{\pi^j_m}_m(x^j_{t}) - h^{\pi^i_m}_{t,m}\right)\right)$, we have 
	\begin{eqnarray*}
		\EE_{\cQ}\left[\|x^i_t -x_t\|^2\right]
		&=& \tau^2\EE_{\cQ}\left[\left\|\frac{1}{Mn}\sum^M_{m=1}\sum^{i-1}_{j=0}\left(h^{\pi^j_m}_{t,m} + \cQ\left(\nabla f^{\pi^j_m}_m(x^j_{t}) - h^{\pi^j_m}_{t,m}\right)\right)\right\|^2\right]\\
		&=&\tau^2\EE_{\cQ}\left[\left\|\frac{1}{Mn}\sum^M_{m=1}\sum^{i-1}_{j=0}\left(h^{\pi^j_m}_{t,m}- \nabla f^{\pi^{j}_m}_m(x^j_t) + \cQ\left(\nabla f^{\pi^i_m}_m(x^j_{t}) - h^{\pi^j_m}_{t,m}\right)\right)\right\|^2\right]\\
		&&+ \tau^2\left\|\frac{1}{Mn}\sum^M_{m=1}\sum^{i-1}_{j=0}\nabla f^{\pi^{j}_m}_m(x^j_t)\right\|^2.
	\end{eqnarray*}
	Independence of $\cQ\left(\nabla f^{\pi^i_m}_m(x^j_{t}) - h^{\pi^j_m}_{t,m}\right)$, $m \in [M]$ and Assumption \ref{asm:quantization_operators} imply 
	\begin{eqnarray*}
		\EE_{\cQ}\left[\|x^i_t -x_t\|^2\right]
		&=&\frac{\tau^2}{M^2n^2}\sum^M_{m=1}\sum^{i-1}_{j=0}\EE_{\cQ}\left[\left\|h^{\pi^j_m}_{t,m}- \nabla f^{\pi^{j}_m}_m(x^j_t) + \cQ\left(\nabla f^{\pi^i_m}_m(x^j_{t}) - h^{\pi^j_m}_{t,m}\right)\right\|^2\right]\\
		&&+ \tau^2\left\|\frac{1}{Mn}\sum^M_{m=1}\sum^{i-1}_{j=0}\nabla f^{\pi^{j}_m}_m(x^j_t)\right\|^2\\
		&\leq& \frac{\tau^2\omega}{M^2n^2}\sum^M_{m=1}\sum^{i-1}_{j=0}\left\|\nabla f^{\pi^{j}_m}(x^j_t) - h^{\pi^j_m}_{t,m} \right\|^2 + \tau^2\left\|\frac{1}{n}\sum^{i-1}_{j=0}\nabla f^{\pi^j}(x^j_t)\right\|^2\\
		&\leq& \frac{2\tau^2\omega}{M^2n^2}\sum^M_{m=1}\sum^{n-1}_{j=0}\left\|\nabla f^{\pi^{j}_m}_m(x^j_t) - \nabla f^{\pi^{j}_m}_m(x_t)\right\|^2 + 2\tau^2\left\|\frac{1}{n}\sum^{i-1}_{j=0}\nabla f^{\pi^j}(x_t)\right\|^2\\
		&& + \frac{2\tau^2\omega}{M^2n^2}\sum^M_{m=1}\sum^{n-1}_{j=0}\left\|h^{\pi^j_m}_{t,m} - \nabla f^{\pi^{j}_m}_m(x_t) \right\|^2 + \frac{2\tau^2}{n}\sum^{n-1}_{j=0}\left\|\nabla f^{\pi^j}(x^j_t) - \nabla f^{\pi^j}(x_t)\right\|^2.
	\end{eqnarray*}
	Using $L_{\max}$-smoothness and convexity of $f^i_m$ and $\widetilde{L}$-smoothness and convexity of $f^{\pi^j}$, we obtain
	\begin{eqnarray*}
		\EE_{\cQ}\left[\|x^i_t -x_t\|^2\right]
		&\leq& \frac{4\tau^2\omega}{M^2n^2}L_{\max}\sum^M_{m=1}\sum^{n-1}_{j=0}D_{f^{\pi^{j}_m}_m}(x^j_t,x_t) + 2\tau^2\left\|\frac{1}{n}\sum^{i-1}_{j=0}\nabla f^{\pi^j}(x_t)\right\|^2\\
		&& + \frac{2\tau^2\omega}{M^2n^2}\sum^M_{m=1}\sum^{n-1}_{j=0}\left\|h^{\pi^j_m}_{t,m} - \nabla f^{\pi^{j}_m}_m(x_t) \right\|^2 + \frac{2\tau^2\widetilde{L}^2}{n}\sum^{n-1}_{j=0}\left\|x^j_t - x_t\right\|^2\\
		&\leq& 2\tau^2\widetilde{L}\left(\widetilde{L}+\frac{\omega}{Mn}L_{\max}\right)\frac{1}{n}\sum^{n-1}_{j=0}\|x^j_t - x_t\|^2 + 2\tau^2\left\|\frac{1}{n}\sum^{i-1}_{j=0}\nabla f^{\pi^j}(x_t)\right\|^2\\
		&&+ \frac{2\tau^2\omega}{M^2n^2}\sum^M_{m=1}\sum^{n-1}_{j=0}\left\|h^{\pi^j_m}_{t,m} - \nabla f^{\pi^{j}_m}_m(x_t) \right\|^2. 
	\end{eqnarray*}
	Taking the full expectation and using \eqref{eq:kdjkcnsdbciscsnd}, we derive
	\begin{eqnarray*}
		\EE\left[\|x^i_t -x_t\|^2\right]
		&\leq& 2\tau^2\widetilde{L}\left(\widetilde{L}+\frac{\omega}{Mn}L_{\max}\right)\frac{1}{n}\sum^{n-1}_{j=0}\EE\left[\|x^j_t - x_t\|^2\right] + 2\tau^2 \EE\left[\|\nabla f(x_t)\|^2\right]\\
		&&+ \frac{4\tau^2\omega}{M^2n^2}\sum^M_{m=1}\sum^{n-1}_{j=0}\EE\left[\left\|h^{\pi^j_m}_{t,m} - \nabla f^{\pi^{j}_m}_m(x_{\star}) \right\|^2\right]  + \frac{2\tau^2}{n}\EE\left[\sigma^2_t\right] \\
		&&+ \frac{8\tau^2\omega}{M^2n^2}L_{\max}\sum^M_{m=1}\sum^{n-1}_{j=0}\EE\left[D_{f^{\pi^{j}_m}_m}(x_t,x_{\star})\right].
	\end{eqnarray*}
	Using $L_{\max}$-smoothness and convexity of $f^i_m$ and $\widetilde{L}$-smoothness and convexity of $f^{\pi^j}$, we obtain
	\begin{eqnarray*}
		\EE\left[\|x^i_t -x_t\|^2\right]
		&\leq& 2\tau^2\widetilde{L}\left(\widetilde{L}+\frac{\omega}{Mn}L_{\max}\right)\frac{1}{n}\sum^{n-1}_{j=0}\EE\left[\|x^j_t - x_t\|^2\right] \\
		&&+ \frac{4\tau^2\omega}{M^2n^2}\sum^M_{m=1}\sum^{n-1}_{j=0}\EE\left[\left\|h^{\pi^j_m}_{t,m} - \nabla f^{\pi^{j}_m}_m(x_{\star}) \right\|^2\right]  + \frac{2\tau^2}{n}\EE\left[\sigma^2_t\right] \\
		&&+ 4\tau^2\left(\widetilde{L} + \frac{2\omega}{M^2n^2}L_{\max}\right)\EE\left[f(x_t)-f(x_{\star})\right].
	\end{eqnarray*}
	Now we need to estimate $\frac{2\tau^2}{n}\EE\left[\sigma^2_t\right]$. Due to $\EE\left[\sigma^2_t\right] \leq \frac{1}{n}\sum^{n}_{i=1}\EE\left[\|\nabla f^i(x_t)\|^2\right]$, we get
	\begin{eqnarray*}
		\frac{2\tau^2}{n}\EE\left[\sigma^2_t\right]
		&\leq& \frac{2\tau^2}{n^2}\sum^{n}_{j=1}\EE\left[\|\nabla f^j(x_t)\|^2\right] \\
		&\leq& \frac{4\tau^2}{n^2}\sum^{n}_{j=1}\EE\left[\|\nabla f^j(x_t) - \nabla f^j(x_{\star})\|^2\right] + \frac{4\tau^2}{n^2}\sum^{n}_{j=1}\EE\left[\|\nabla f^j(x_{\star})\|^2\right]\\
		&\leq&  \frac{8\tau^2}{n^2}\widetilde{L}\sum^{n}_{j=1}\EE\left[D_{f^j}(x_t,x_{\star})\right] + \frac{4\tau^2}{n^2}\sum^{n}_{j=1}\sigma^2_{n,\star}.
	\end{eqnarray*}
	Combining two previous inequalities, we get 
	\begin{eqnarray*}
		\EE\left[\|x^i_t -x_t\|^2\right]
		&\leq& 2\tau^2\widetilde{L}\left(\widetilde{L}+\frac{\omega}{Mn}L_{\max}\right)\frac{1}{n}\sum^{n-1}_{j=0}\EE\left[\|x^j_t - x_t\|^2\right] \\
		&&+ \frac{4\tau^2\omega}{M^2n^2}\sum^M_{m=1}\sum^{n-1}_{j=0}\EE\left[\left\|h^{\pi^j_m}_{t,m} - \nabla f^{\pi^{j}_m}_m(x_{\star}) \right\|^2\right]   \\
		&&+ 4\tau^2\left(\widetilde{L} + \frac{2\omega}{M^2n^2}L_{\max}\right)\EE\left[f(x_t)-f(x_{\star})\right]\\
		&&+ \frac{8\tau^2}{n}\widetilde{L}\EE\left[f(x_t)- f(x_{\star})\right] + \frac{4\tau^2}{n^2}\sum^{n}_{j=1}\sigma^2_{n,\star}.
	\end{eqnarray*}
	Summing from $i=0$ to $n-1$ and using $\tau \leq \frac{1}{2\sqrt{\widetilde{L}\left(\widetilde{L}+\frac{\omega}{Mn}L_{\max}\right)}}$, we obtain
	\begin{eqnarray*}
		\frac{1}{n}\sum^{n-1}_{i=0}\EE\left[\|x^i_t -x_t\|^2\right]
		&\leq& 2\left(1-2\tau^2\widetilde{L}\left(\widetilde{L}+\frac{\omega}{Mn}L_{\max}\right)\right)\frac{1}{n}\sum^{n-1}_{i=0}\EE\left[\|x^i_t - x_t\|^2\right] \\
		&\leq& \frac{8\tau^2\omega}{M^2n^2}\sum^M_{m=1}\sum^{n-1}_{j=0}\EE\left[\left\|h^{\pi^j_m}_{t,m} - \nabla f^{\pi^{j}_m}_m(x_{\star}) \right\|^2\right]  \\
		&&+ 8\tau^2\left(\widetilde{L} + \frac{2\omega}{M^2n^2}L_{\max}\right)\EE\left[f(x_t)-f(x_{\star})\right]\\
		&&+ \frac{16\tau^2}{n}\widetilde{L}\EE\left[f(x_t)- f(x_{\star})\right] + \frac{8\tau^2}{n^2}\sum^{n}_{j=1}\sigma^2_{n,\star}.
	\end{eqnarray*}
\end{proof}

We consider the following Lyapunov function:
\begin{equation}
	\label{lyapunov_func_diana_rr_cvx_case}
	\Psi_{t+1} \eqdef \|x_{t+1}-x_{\star}\|^2 +\frac{c\tau^2}{Mn}\sum^M_{m=1}\sum^{n-1}_{j=0}\left\|h^{\pi^i_m}_{t+1,m} - \nabla f^{\pi^i_m}_m(x_{\star})\right\|^2.
\end{equation}

\begin{theorem}
	\label{thm:advanced_conv_for_DIANA_RR}
	 Let Assumptions \ref{asm:quantization_operators}, \ref{asm:sc_general_f},  \ref{asm:lip_max_f_m}, \ref{asm:lip_avr_f} hold  and
	\begin{equation*}
		\gamma \leq \min\left\{\frac{\alpha}{n\mu}, \frac{1}{12n\left(\widetilde{L}+\frac{11\omega}{Mn}L_{\max}\right)}\right\},\quad \alpha \leq \frac{1}{1+\omega},\quad c = \frac{10\omega}{\alpha Mn}.
	\end{equation*}
	Then, for all $T \geq 0$ the iterates produced by \algname{DIANA-RR} (Algorithm~\ref{alg_new_RR_DIANA}) satisfy
	\begin{equation*}
		\EE\left[\Psi_T\right] \leq \left(1-\frac{n\gamma\mu}{2}\right)^{T}\Psi_0 +20\frac{\gamma^2n\widetilde{L}}{\mu}\sigma^2_{\star,n}.
	\end{equation*}
\end{theorem}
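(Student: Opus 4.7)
The plan is to combine three ingredients that are already developed in the excerpt: (i) the per-step expansion of $\|x_{t+1}-x_\star\|^2$ derived from the inner-product bound (Lemma~\ref{lem_inner_product_diana_rr_cvx_case}) and the second-moment bound on $\hat g_t$ (Lemma~\ref{lem_norm_of_grad_diana_rr_cvx_case}); (ii) the per-step contraction of the shift sequence with a small additive error (Lemma~\ref{lem_conv_h_diana_rr_cvx_case}); and (iii) the uniform bound on $\tfrac{1}{n}\sum_i \EE\|x_t^i-x_t\|^2$ in terms of the same three quantities $f(x_t)-f(x_\star)$, the $h$-residuals, and the constant noise $\sigma_{\star,n}^2$ (Lemma~\ref{lem_dinst_diana_rr_cvx_case}). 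The goal is to show that the Lyapunov function~\eqref{lyapunov_func_diana_rr_cvx_case} satisfies a one-step inequality of the form $\EE[\Psi_{t+1}]\le(1-\tfrac{\tau\mu}{2})\EE[\Psi_t]+C\tau^3\sigma_{\star,n}^2/n$ with $\tau=n\gamma$, after which the claimed bound follows by a standard geometric-series unrolling.

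Concretely, I would first take conditional expectation of $\|x_{t+1}-x_\star\|^2=\|x_t-x_\star-\tau\hat g_t\|^2$ with respect to the compression operator and substitute the two DIANA-RR lemmas. This yields
\[
\EE_\cQ\|x_{t+1}-x_\star\|^2 \le \bigl(1-\tfrac{\tau\mu}{2}\bigr)\|x_t-x_\star\|^2 - \tau\bigl(1-8\tau(\widetilde L+\tfrac{\omega}{Mn}L_{\max})\bigr)(f(x_t)-f(x_\star)) + A\cdot\tfrac{1}{n}\sum_{i=0}^{n-1}\|x_t^i-x_t\|^2 + B\cdot\tfrac{1}{Mn}\sum_{i,m}\|h_{t,m}^{\pi_m^i}-\nabla f_m^{\pi_m^i}(x_\star)\|^2,
\]
with $A=\tau\widetilde L+2\tau^2\widetilde L(\widetilde L+\tfrac{\omega}{Mn}L_{\max})$ and $B=\tfrac{4\tau^2\omega}{Mn}$. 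Next I would add $\tfrac{c\tau^2}{Mn}\sum_{i,m}\EE\|h_{t+1,m}^{\pi_m^i}-\nabla f_m^{\pi_m^i}(x_\star)\|^2$ to both sides and apply Lemma~\ref{lem_conv_h_diana_rr_cvx_case}, which multiplies the shift residuals by $(1-\alpha)$ and injects additional terms proportional to $\tfrac{1}{n}\sum_i\|x_t^i-x_t\|^2$ and $f(x_t)-f(x_\star)$. Finally, I would plug in Lemma~\ref{lem_dinst_diana_rr_cvx_case} to replace $\tfrac{1}{n}\sum_i\EE\|x_t^i-x_t\|^2$ by a combination of $f(x_t)-f(x_\star)$, the shift residuals, and $\sigma_{\star,n}^2/n$.

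The main technical obstacle is to verify that, for the stated stepsize constraints and the choice $c=\tfrac{10\omega}{\alpha Mn}$, the coefficients on the \emph{two potentially growing} quantities become non-positive simultaneously: the coefficient of $f(x_t)-f(x_\star)$ must be $\le 0$ (absorbed by the $-\tau$ from strong convexity), and the coefficient of the shift residuals must be $\le (1-\tfrac{\alpha}{2})\tfrac{c\tau^2}{Mn}$ so that the whole Lyapunov combination contracts. Using $\gamma\le \tfrac{\alpha}{n\mu}$ (equivalently $\tfrac{\alpha}{2}\ge\tfrac{\tau\mu}{2}$) ensures that both geometric rates align to $1-\tfrac{\tau\mu}{2}$, and the stepsize bound $\gamma\le\tfrac{1}{12n(\widetilde L+\tfrac{11\omega}{Mn}L_{\max})}$ is precisely what is needed to dominate the extra $\omega/M$ terms generated by the $h$-residual bookkeeping (the constant $11$ absorbs both the contribution from the second moment of $\hat g_t$ and from Lemma~\ref{lem_dinst_diana_rr_cvx_case}). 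Once the one-step contraction
\[
\EE[\Psi_{t+1}] \le \bigl(1-\tfrac{\tau\mu}{2}\bigr)\EE[\Psi_t] + C\,\tau^3\widetilde L\,\sigma_{\star,n}^2/n
\]
is established with $C$ a small universal constant, unrolling and summing the geometric series $\sum_t(1-\tfrac{\tau\mu}{2})^t\le\tfrac{2}{\tau\mu}$ produces the stated bound $20\gamma^2 n\widetilde L\sigma_{\star,n}^2/\mu$ after substituting $\tau=n\gamma$.
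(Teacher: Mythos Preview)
Your proposal is correct and follows essentially the same approach as the paper's proof: expand $\|x_{t+1}-x_\star\|^2$, apply Lemmas~\ref{lem_inner_product_diana_rr_cvx_case} and~\ref{lem_norm_of_grad_diana_rr_cvx_case}, add the weighted shift-residual term and invoke Lemma~\ref{lem_conv_h_diana_rr_cvx_case}, then substitute Lemma~\ref{lem_dinst_diana_rr_cvx_case} and tune the constants (the paper writes $c=\tfrac{A\omega}{\alpha Mn}$, $\tau=\tfrac{1}{B(\widetilde L+\tfrac{(A+1)\omega}{Mn}L_{\max})}$ and picks $A=10$, $B=12$), finishing with the same geometric-series unrolling. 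Your identification of the two coefficient sign conditions and of the role of the bounds $\gamma\le\tfrac{\alpha}{n\mu}$ and $\gamma\le\tfrac{1}{12n(\widetilde L+\tfrac{11\omega}{Mn}L_{\max})}$ matches the paper exactly.
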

\begin{proof}
	Taking expectation w.r.t.\ $\cQ$ and using Lemma \ref{lem_inner_product_diana_rr_cvx_case}, we get 
	\begin{eqnarray*}
		\EE_{\cQ}\left[\|x_{t+1} - x_{\star}\|^2\right]
		&=& \|x_t -\tau\hat{g}_t - x_{\star}+\tau h^{\star}\|^2\\
		&=& \|x_t - x_{\star}\|^2 -2\tau\EE_{\cQ}\left[\la \hat{g}_t - h^{\star}, x_t- x_{\star}\ra\right] + \tau^2\EE_{\cQ}\left[\|\hat{g}_t - h^{\star}\|^2\right]\\
		&\leq& \|x_t - x_{\star}\|^2 -\frac{\tau\mu}{2}\|x_t-x_{\star}\|^2  + \tau^2\EE_{\cQ}\left[\|\hat{g}_t - h^{\star}\|^2\right]\\
		&& - \tau\left(f(x_t)-f(x_{\star})\right) + \tau\widetilde{L}\frac{1}{n}\sum^{n-1}_{i=1}\|x_t-x^i_t\|^2.
	\end{eqnarray*}
	Next, due to Lemma \ref{lem_norm_of_grad_diana_rr_cvx_case} we have 
	\begin{eqnarray*}
		\EE_{\cQ}\left[\|x_{t+1} - x_{\star}\|^2\right]
		&\leq& \left(1 -\frac{\tau\mu}{2}\right)\|x_t-x_{\star}\|^2 - \tau\left(f(x_t)-f(x_{\star})\right) + \tau\widetilde{L}\frac{1}{n}\sum^{n-1}_{i=1}\|x_t-x^i_t\|^2\\
		&&+ 2\tau^2\widetilde{L}\left(\widetilde{L} + \frac{\omega}{Mn}L_{\max}\right)\frac{1}{n}\sum^{n-1}_{i=0}\|x^i_t-x_t\|^2 \\
		&&+ 8\tau^2\left(\widetilde{L} + \frac{\omega}{Mn}L_{\max}\right)\left(f(x_t)-f(x_{\star})\right)\\
		&&+ \frac{4\omega\tau^2}{M^2n^2}\sum^{n-1}_{i=0}\sum^M_{m=1}\|h^{\pi^i_m}_{t,m}-\nabla f^{\pi^i_m}_{m}(x_{\star})\|^2\\
		&\leq& \left(1 -\frac{\tau\mu}{2}\right)\|x_t-x_{\star}\|^2  + \frac{4\omega\tau^2}{M^2n^2}\sum^{n-1}_{i=0}\sum^M_{m=1}\|h^{\pi^i_m}_{t,m}-\nabla f^{\pi^i_m}_{m}(x_{\star})\|^2\\
		&&- \tau\left(1-8\tau\left(\widetilde{L} + \frac{\omega}{Mn}L_{\max}\right)\right)\left(f(x_t)-f(x_{\star})\right)\\
		&&+ \tau\widetilde{L}\left(1+2\tau\left(\widetilde{L} + \frac{\omega}{Mn}L_{\max}\right)\right)\frac{1}{n}\sum^{n-1}_{i=0}\|x^i_t-x_t\|^2.
	\end{eqnarray*}
	Using \eqref{lyapunov_func_diana_rr_cvx_case}, we obtain
	\begin{eqnarray*}
		\EE_{\cQ}\left[\Psi_{t+1}\right]
		&\leq& \left(1 -\frac{\tau\mu}{2}\right)\|x_t-x_{\star}\|^2  + \frac{4\omega\tau^2}{M^2n^2}\sum^{n-1}_{i=0}\sum^M_{m=1}\|h^{\pi^i_m}_{t,m}-\nabla f^{\pi^i_m}_{m}(x_{\star})\|^2\\
		&&- \tau\left(1-8\tau\left(\widetilde{L} + \frac{\omega}{Mn}L_{\max}\right)\right)\left(f(x_t)-f(x_{\star})\right)\\
		&&+ \tau\widetilde{L}\left(1+2\tau\left(\widetilde{L} + \frac{\omega}{Mn}L_{\max}\right)\right)\frac{1}{n}\sum^{n-1}_{i=0}\|x^i_t-x_t\|^2 \\
		&&+ \frac{c\tau^2}{Mn}\sum^M_{m=1}\sum^{n-1}_{j=0}\EE\left[\left\|h^{\pi^i_m}_{t+1,m} - \nabla f^{\pi^i_m}_m(x_{\star})\right\|^2\right].
	\end{eqnarray*}
	To estimate the last term in the above inequality, we apply Lemma \ref{lem_conv_h_diana_rr_cvx_case}:
	\begin{eqnarray*}
		\EE_{\cQ}\left[\Psi_{t+1}\right]
		&\leq& \left(1 -\frac{\tau\mu}{2}\right)\|x_t-x_{\star}\|^2  + \frac{4\omega\tau^2}{M^2n^2}\sum^{n-1}_{i=0}\sum^M_{m=1}\|h^{\pi^i_m}_{t,m}-\nabla f^{\pi^i_m}_{m}(x_{\star})\|^2\\
		&&- \tau\left(1-8\tau\left(\widetilde{L} + \frac{\omega}{Mn}L_{\max}\right)\right)\left(f(x_t)-f(x_{\star})\right)\\
		&&+ \tau\widetilde{L}\left(1+2\tau\left(\widetilde{L} + \frac{\omega}{Mn}L_{\max}\right)\right)\frac{1}{n}\sum^{n-1}_{i=0}\|x^i_t-x_t\|^2 \\
		&&+ c\tau^2\frac{1-\alpha}{Mn}\sum^{n-1}_{i=0}\sum^M_{m=1}\|h^{\pi^i_m}_{t,m} -\nabla f^{\pi^i_m}_m(x_{\star})\|^2\\
		&&+ c\tau^2\frac{2\alpha\widetilde{L} L_{\max}}{n}\sum^{n-1}_{i=0} \|x^i_t-x_t\|^2 + 4 c\tau^2\alpha L_{\max}\left(f(x_t)-f(x_{\star})\right)\\
		&\leq& \left(1 -\frac{\tau\mu}{2}\right)\|x_t-x_{\star}\|^2  + \left(1-\alpha+ \frac{4\omega}{cMn}\right)\frac{c\tau^2}{Mn}\sum^{n-1}_{i=0}\sum^M_{m=1}\|h^{\pi^i_m}_{t,m}-\nabla f^{\pi^i_m}_{m}(x_{\star})\|^2\\
		&&- \tau\left(1-4c\tau\alpha L_{\max}-8\tau\left(\widetilde{L} + \frac{\omega}{Mn}L_{\max}\right)\right)\left(f(x_t)-f(x_{\star})\right)\\
		&& + \tau\widetilde{L}\left(1+ 2c\tau\alpha L_{\max}+2\tau\left(\widetilde{L} + \frac{\omega}{Mn}L_{\max}\right)\right)\frac{1}{n}\sum^{n-1}_{i=0}\|x^i_t-x_t\|^2.
	\end{eqnarray*}	
	Let $\cH_t \eqdef \frac{c\tau^2}{Mn}\sum^{n-1}_{i=0}\sum^M_{m=1}\EE\left[\|h^{\pi^i_m}_{t,m}-\nabla f^{\pi^i_m}_{m}(x_{\star})\|^2\right]$. Taking the full expectation and using Lemma \ref{lem_dinst_diana_rr_cvx_case}, we get 
	\begin{eqnarray*}
		\EE\left[\Psi_{t+1}\right]
		&\leq& \left(1 -\frac{\tau\mu}{2}\right)\EE\left[\|x_t-x_{\star}\|^2\right] + \left(1-\alpha+ \frac{4\omega}{cMn}\right)\cH_t\\
		&&- \tau\left(1-4c\tau\alpha L_{\max}-8\tau\left(\widetilde{L} + \frac{\omega}{Mn}L_{\max}\right)\right)\EE\left[f(x_t)-f(x_{\star})\right]\\
		&& + \tau\widetilde{L}\left(1+ 2c\tau\alpha L_{\max}+2\tau\left(\widetilde{L} + \frac{\omega}{Mn}L_{\max}\right)\right)\frac{1}{n}\sum^{n-1}_{i=0}\EE\left[\|x^i_t-x_t\|^2\right] \\
		&\leq& \left(1 -\frac{\tau\mu}{2}\right)\EE\left[\|x_t-x_{\star}\|^2\right] + \left(1-\alpha+ \frac{4\omega}{cMn}\right)\cH_t\\
		&&- \tau\left(1-4c\tau\alpha L_{\max}-8\tau\left(\widetilde{L} + \frac{\omega}{Mn}L_{\max}\right)\right)\EE\left[f(x_t)-f(x_{\star})\right]\\
		&&+24\tau^3\widetilde{L}\left(1+ 2c\tau\alpha L_{\max}+2\tau\left(\widetilde{L} + \frac{\omega}{Mn}L_{\max}\right)\right)\left(\widetilde{L}+\frac{\omega}{Mn}L_{\max}\right)\EE\left[f(x_t)-f(x_{\star})\right]\\  &&+8\tau^3\widetilde{L}\left(1+ 2c\tau\alpha L_{\max}+2\tau\left(\widetilde{L} + \frac{\omega}{Mn}L_{\max}\right)\right)\frac{\sigma^2_{\star,n}}{n}\\
		&&+ \frac{8\tau\widetilde{L}\omega}{cMn}\left(1+ 2c\tau\alpha L_{\max}+2\tau\left(\widetilde{L} + \frac{\omega}{Mn}L_{\max}\right)\right)\cH_t.
	\end{eqnarray*}
	Selecting $c = \frac{A\omega}{\alpha Mn}$, where $A$ is a positive number to be specified later, we have 
	\begin{equation*}
		1+ 2c\tau\alpha L_{\max}+2\tau\left(\widetilde{L} + \frac{\omega}{Mn}L_{\max}\right) = 1 + 2\tau\left(\widetilde{L} + \frac{(A+1)\omega}{Mn}L_{\max}\right),
	\end{equation*}
	\begin{equation*}
		1-4c\tau\alpha L_{\max}-8\tau\left(\widetilde{L} + \frac{\omega}{Mn}L_{\max}\right) \geq 1 - 8\tau\left(\widetilde{L} + \frac{(A+1)\omega}{Mn}L_{\max}\right).
	\end{equation*}
	Then, we have 
	\begin{eqnarray*}
		\EE\left[\Psi_{t+1}\right]
		&\leq& \left(1 -\frac{\tau\mu}{2}\right)\EE\left[\|x_t-x_{\star}\|^2\right] + \left(1-\alpha+ \frac{4\alpha}{A} \right)\cH_t\\
		&&- \tau\left(1 - 8\tau\left(\widetilde{L} + \frac{(A+1)\omega}{Mn}L_{\max}\right)\right)\EE\left[f(x_t)-f(x_{\star})\right]\\
		&&+24\tau^3\widetilde{L}\left(\widetilde{L}+\frac{\omega}{Mn}L_{\max}\right)\left(1 + 2\tau\left(\widetilde{L} + \frac{(A+1)\omega}{Mn}L_{\max}\right)\right)\EE\left[f(x_t)-f(x_{\star})\right]\\  &&+8\tau^3\widetilde{L}\left(1 + 2\tau\left(\widetilde{L} + \frac{(A+1)\omega}{Mn}L_{\max}\right)\right)\frac{\sigma^2_{\star,n}}{n}\\
		&&+ \frac{8\alpha}{A}\tau\widetilde{L}\left(1 + 2\tau\left(\widetilde{L} + \frac{(A+1)\omega}{Mn}L_{\max}\right)\right)\cH_t.
	\end{eqnarray*}
	Taking $\tau = \frac{1}{B\left(\widetilde{L} + \frac{(A+1)\omega}{Mn}L_{\max}\right)}$, where $B$ is some positive constant, we obtain
	\begin{eqnarray*}
		\EE\left[\Psi_{t+1}\right]
		&\leq& \left(1 -\frac{\tau\mu}{2}\right)\EE\left[\|x_t-x_{\star}\|^2\right] + \left(1-\alpha+ \frac{4\alpha}{A} + \frac{8\alpha}{A}\tau\widetilde{L}\left(1 + \frac{2}{B}\right) \right)\cH_t\\
		&&- \tau\left(1 - \frac{8}{B}- \frac{24}{B^2}\left(1 + \frac{2}{B}\right)\right)\EE\left[f(x_t)-f(x_{\star})\right]\\
		&&+8\tau^3\widetilde{L}\left(1 + \frac{2}{B}\right)\frac{\sigma^2_{\star,n}}{n}.
	\end{eqnarray*}
	Choosing $A=10$, $B = 12$, $\tau \leq \frac{\alpha}{\mu}$, we have 
	\begin{eqnarray*}
		\EE\left[\Psi_{t+1}\right]
		&\leq& \left(1 -\min\left\{\frac{\tau\mu}{2}, \frac{\alpha}{2}\right\}\right)\EE\left[\Psi_{t}\right] +10\tau^3\widetilde{L}\frac{\sigma^2_{\star,n}}{n}\\
		&\leq& \left(1 -\frac{\tau\mu}{2}\right)\EE\left[\Psi_{t}\right] +10\tau^3\widetilde{L}\frac{\sigma^2_{\star,n}}{n}\\
	\end{eqnarray*}
	Recursively unrolling the inequality, substituting $\tau=n\gamma$ and using $\sum\limits^{+\infty}_{t = 0}\left(1-\frac{\tau\mu}{2}\right)^t \leq \frac{2}{\mu\tau}$, we finish proof.
\end{proof}

\begin{corollary}
	\label{cor_convergence_DIANA_RR_cvx_case}
	Let the assumptions of Theorem~\ref{thm:advanced_conv_for_DIANA_RR} hold, $\alpha = \frac{1}{1+\omega}$, and
	\begin{equation}
		\label{new_gamma_DIANA_RR_cvx_case}
		\gamma = \min\left\{\frac{\alpha}{2n\mu},\frac{1}{12n\left(\widetilde{L}+\frac{11\omega}{Mn}L_{\max}\right)}, \sqrt{\frac{\varepsilon\mu}{40n\widetilde{L}\sigma^2_{\star,n}}}\right\}.
	\end{equation} 
	Then, \algname{DIANA-RR} (Algorithm~\ref{alg_new_RR_DIANA}) finds a solution with accuracy $\varepsilon>0$ after the following number of communication rounds: 
	\begin{equation}
		\widetilde{\cO}\left(n(1+\omega)+\frac{n\widetilde{L}}{\mu}+\frac{\omega}{M}\frac{L_{\max}}{\mu} + \sqrt{\frac{n\widetilde{L}}{\varepsilon\mu^3}}\sigma_{\star,n}\right).\notag
	\end{equation}
\end{corollary}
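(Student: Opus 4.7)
The plan is to derive the stated communication complexity directly from the Lyapunov bound in Theorem~\ref{thm:advanced_conv_for_DIANA_RR}. Since $\Psi_T \geq \|x_T - x_\star\|^2$ by definition of the Lyapunov function \eqref{lyapunov_func_diana_rr_cvx_case}, it suffices to force $\EE[\Psi_T] \leq \varepsilon$. The strategy is the standard one used for all previous corollaries in the paper: split the accuracy budget in half, forcing the exponentially decaying ``bias'' term $(1 - n\gamma\mu/2)^T \Psi_0$ and the stationary ``variance'' term $20\gamma^2 n\widetilde{L}\sigma_{\star,n}^2/\mu$ each to be at most $\varepsilon/2$, and pick $\gamma$ as the minimum of everything that is required.

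Concretely, the variance term requirement $\tfrac{20\gamma^2 n\widetilde{L}\sigma_{\star,n}^2}{\mu} \leq \tfrac{\varepsilon}{2}$ yields the third entry of \eqref{new_gamma_DIANA_RR_cvx_case}, namely $\gamma \leq \sqrt{\tfrac{\varepsilon\mu}{40n\widetilde{L}\sigma_{\star,n}^2}}$. Combined with the two stepsize constraints of Theorem~\ref{thm:advanced_conv_for_DIANA_RR}---$\gamma \leq \tfrac{\alpha}{n\mu}$, which becomes $\gamma \leq \tfrac{1}{n\mu(1+\omega)}$ after plugging in $\alpha = \tfrac{1}{1+\omega}$, and $\gamma \leq \tfrac{1}{12n(\widetilde{L} + 11\omega L_{\max}/(Mn))}$---we take $\gamma$ to be the minimum of these three candidates, which is exactly the choice declared in the corollary.

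For the bias term, the standard inequality $1 - x \leq e^{-x}$ shows that $(1 - n\gamma\mu/2)^T \Psi_0 \leq \varepsilon/2$ as soon as $T = \widetilde{\mathcal{O}}(1/(n\gamma\mu))$, where the $\log$ factors absorb $\Psi_0$ and $1/\varepsilon$ into the $\widetilde{\mathcal{O}}$ notation. Since each outer iteration $t$ performs $n$ communication rounds (one per inner index $i \in \{0,\dots,n-1\}$), the total number of communication rounds is $nT = \widetilde{\mathcal{O}}(1/(\gamma\mu))$. Substituting the three candidates for $\gamma$ into $1/(\gamma\mu)$ produces, respectively, $n(1+\omega)$, $\tfrac{n\widetilde{L}}{\mu} + \tfrac{\omega L_{\max}}{M\mu}$, and $\sqrt{\tfrac{n\widetilde{L}\sigma_{\star,n}^2}{\varepsilon\mu^3}}$, and taking the maximum of the three (equivalently, their sum up to a constant factor) gives the claimed complexity.

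There is no real obstacle here: the argument is entirely mechanical and mirrors the derivation of Corollary~\ref{cor_convergence_DIANA_RR} from Theorem~\ref{thm:advanced_conv_diana_rr_sc} in the strongly convex case. The only minor care point is verifying that the replacement of $\widetilde{\mu}$ by the global $\mu$ in the stepsize bound $\tfrac{\alpha}{n\mu}$ is consistent with how the theorem was stated (it uses $\mu$ from Assumption~\ref{asm:sc_general_f}, not the per-summand strong convexity constant), and making sure the exponent conversion correctly produces the $\widetilde{\mathcal{O}}$ rather than $\mathcal{O}$ by absorbing the logarithmic factor coming from solving $(1 - n\gamma\mu/2)^T \Psi_0 \leq \varepsilon/2$ for $T$.
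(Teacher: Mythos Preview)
Your proposal is correct and follows essentially the same approach as the paper: bound each of the two right-hand-side terms in Theorem~\ref{thm:advanced_conv_for_DIANA_RR} by $\varepsilon/2$, extract the resulting stepsize constraint, take $\gamma$ as the minimum of all constraints, and read off $nT = \widetilde{\cO}(1/(\gamma\mu))$. The only cosmetic mismatch is that the corollary's first entry is $\tfrac{\alpha}{2n\mu}$ rather than the theorem's bound $\tfrac{\alpha}{n\mu}$ you quote, but this extra factor of $2$ is harmless (it only tightens the choice) and does not affect the $\widetilde{\cO}$ complexity.
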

\begin{proof}
	Theorem~\ref{thm:advanced_conv_for_DIANA_RR} implies
	\begin{equation*}
		\EE\left[\Psi_T\right] \leq \left(1-\gamma\mu\right)^{nT}\Psi_0 +20\frac{\gamma^2n\widetilde{L}}{\mu}\sigma^2_{\star,n}.
	\end{equation*}
	To estimate the number of communication rounds required to find a solution with accuracy $\varepsilon >0$, we need to upper bound each term from the right-hand side by $\frac{\varepsilon}{2}$. Thus, we get an additional condition on $\gamma$:
	\begin{equation*}
		20\frac{\gamma^2n\widetilde{L}}{\mu}\sigma^2_{\star,n} <\frac{\varepsilon}{2},
	\end{equation*}
	and also the upper bound on the number of communication rounds $nT$
	\begin{equation*}
		nT = \widetilde{\cO}\left(\frac{1}{\gamma\mu}\right).
	\end{equation*}
Substituting \eqref{new_gamma_DIANA_RR_cvx_case} in the previous equation, we obtain the result.
\end{proof}

\newpage

\section{Missing Proofs For Q-NASTYA}

We start with deriving a technical lemma along with stating several useful results from \citep{malinovsky22_server_side_steps_sampl_without}. For convenience, we also introduce the following notation:
\begin{equation*}
	g_{t,m} \eqdef \frac{1}{n}\sum^{n-1}_{i = 0}\nabla f^{\pi^i_m}_m(x^i_{t,m}).
\end{equation*}

\begin{lemma}
	\label{lem_norm_g_t}
	Let Assumptions \ref{asm:quantization_operators}, \ref{asm:sc_general_f}, \ref{asm:lip_max_f_m} hold. Then, for all $t \geq 0$ the iterates produced by \algname{Q-NASTYA} (Algorithm~\ref{alg:Q_NASTYA}) satisfy
	\begin{equation*}
		\EE_{\cQ}\left[\|g_t\|^2\right] \leq \frac{2L_{\max}^2\left(1+\frac{\omega}{M}\right)}{Mn}\sum^{M}_{m = 1}\sum^{n-1}_{i = 0}\left\|x^i_{t,m} - x_t\right\|^2 + 8L_{\max}\left(1+\frac{\omega}{M}\right)\left(f(x_t) -  f(x_{\star})\right) + \frac{4\omega}{M}\zeta^2_{\star},
	\end{equation*}
	where $\EE_{\cQ}$ is expectation w.r.t.\ $\cQ,$ and $\zeta^2_{\star} \eqdef \frac{1}{M}\sum^M_{m = 1}\left\|\nabla f_m(x_{\star})\right\|^2$.
	
\end{lemma}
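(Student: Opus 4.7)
The plan is to apply the bias/variance decomposition $\EE_{\cQ}\|g_t\|^2 = \|\EE_{\cQ}[g_t]\|^2 + \EE_{\cQ}\|g_t - \EE_{\cQ}[g_t]\|^2$. Conditionally on the iterates $x_{t,m}^i$ and the permutations $\pi_m$, the quantity $\EE_{\cQ}[g_t] = \frac{1}{M}\sum_{m=1}^M g_{t,m}$ by unbiasedness of $\cQ$, and the cross-worker variance decouples because the compressors $\cQ_t$ applied at different workers are conditionally independent. Together with Assumption~\ref{asm:quantization_operators} this yields the starting bound
$\EE_{\cQ}\|g_t\|^2 \leq \bigl\|\tfrac{1}{M}\sum_m g_{t,m}\bigr\|^2 + \tfrac{\omega}{M^2}\sum_m \|g_{t,m}\|^2$.

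\textbf{Bounding the two pieces.} For each norm I would add and subtract gradients at $x_t$: write $g_{t,m} = \tfrac{1}{n}\sum_i [\nabla f_m^{\pi_m^i}(x^i_{t,m}) - \nabla f_m^{\pi_m^i}(x_t)] + \nabla f_m(x_t)$, where the second summand equals $\nabla f_m(x_t)$ because summing $\nabla f_m^{\pi_m^i}(x_t)$ over a permutation $\pi_m$ of $[n]$ recovers $n\,\nabla f_m(x_t)$. Jensen's inequality and Assumption~\ref{asm:lip_max_f_m} bound the first summand by $\tfrac{L_{\max}^2}{n}\sum_i \|x^i_{t,m} - x_t\|^2$. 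For $\|\nabla f_m(x_t)\|^2$ I would split once more via $\|a+b\|^2 \leq 2\|a\|^2 + 2\|b\|^2$ into contributions from $\nabla f_m(x_t) - \nabla f_m(x_\star)$ and $\nabla f_m(x_\star)$; the first is controlled by $2L_{\max} D_{f_m}(x_t, x_\star)$ via $L_{\max}$-smoothness and convexity of $f_m$, and the second averages over $m$ to $\zeta_\star^2$ by definition. The identical decomposition applied to $\bigl\|\tfrac{1}{M}\sum_m g_{t,m}\bigr\|^2$ produces $\|\nabla f(x_t)\|^2$ in place of $\|\nabla f_m(x_t)\|^2$, which is in turn bounded by $2 L_{\max}(f(x_t) - f(x_\star))$ using smoothness and convexity of $f$ together with $\nabla f(x_\star) = 0$.

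\textbf{Combining and main obstacle.} Aggregating the two pieces, the distance-to-$x_t$ contributions combine with coefficient $\tfrac{2L_{\max}^2}{Mn}\bigl(1+\tfrac{\omega}{M}\bigr)$, the Bregman contributions combine using $\tfrac{1}{M}\sum_m D_{f_m}(x_t, x_\star) = D_f(x_t, x_\star) = f(x_t) - f(x_\star)$, giving at most $8L_{\max}\bigl(1+\tfrac{\omega}{M}\bigr)(f(x_t)-f(x_\star))$, and the residual $\zeta_\star^2$-term collapses to $\tfrac{4\omega}{M}\zeta_\star^2$ after dividing by $M^2$ and summing. I do not expect any real difficulty here: the argument is mechanical once the decomposition is set up. The only point requiring care is the bookkeeping of constants and, in particular, exploiting the permutation identity $\sum_{i=0}^{n-1}\nabla f_m^{\pi_m^i}(x_t) = n\,\nabla f_m(x_t)$ so that no spurious permutation-variance term is introduced by the evaluation at $x_t$.
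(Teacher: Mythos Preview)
Your proposal is correct and follows essentially the same route as the paper's proof: bias--variance decomposition of $g_t$ under $\EE_{\cQ}$, then on each piece add/subtract $\nabla f_m^{\pi_m^i}(x_t)$ (using the permutation identity to collapse to $\nabla f_m(x_t)$), apply Jensen and $L_{\max}$-Lipschitzness for the drift term, and split $\|\nabla f_m(x_t)\|^2$ around $x_\star$ using smoothness--convexity to produce the Bregman and $\zeta_\star^2$ contributions. The constants you track match the paper's (the paper also slightly over-bounds $4L_{\max}+\tfrac{8\omega}{M}L_{\max}$ by $8L_{\max}(1+\tfrac{\omega}{M})$ in the last step).
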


\begin{proof} Using the variance decomposition $\EE\left[\|\xi\|^2\right] = \EE\left[\|\xi-\EE\left[\xi\right]\|^2\right] +\|\EE\xi\|^2$, we obtain 
	\begin{eqnarray*}
		\EE_{\cQ}\left[\|g_t\|^2\right]
		&=& \frac{1}{M^2}\sum^M_{m=1}\EE_{\cQ}\left[\left\|\cQ\left(\frac{1}{n}\sum^{n-1}_{i = 0}\nabla f^{\pi^i_m}_m(x^i_{t,m})\right) - \frac{1}{n}\sum^{n-1}_{i = 0}\nabla f^{\pi^i_m}_m(x^i_{t,m})\right\|^2\right]\\
		&&+ \left\| \frac{1}{Mn}\sum^M_{m=1}\sum^{n-1}_{i = 0}\nabla f^{\pi^i_m}_m(x^i_{t,m})\right\|^2\\
		&\overset{\text{Asm.} \ref{asm:quantization_operators}}{\leq}&\frac{\omega}{M^2}\sum^M_{m=1}\left\|\frac{1}{n}\sum^{n-1}_{i = 0}\nabla f^{\pi^i_m}_m(x^i_{t,m})\right\|^2   + \left\| \frac{1}{Mn}\sum^M_{m =1}\sum^{n-1}_{i = 0}\nabla f^{\pi^i_m}_m(x^i_{t,m})\right\|^2.
	\end{eqnarray*}
	Next, we use $\nabla f_m(x_t) = \frac{1}{n}\sum^{n-1}_{i=0}\nabla f^{\pi^i_m}_m(x_t)$ and $\|a+b\|^2 \leq 2\|a\|^2 + 2\|b\|^2$:  
	\begin{eqnarray*}
		\EE_{\cQ}\left[\|g_t\|^2\right]
		&\leq& \frac{2\omega}{M^2}\sum^M_{m=1}\left\|\frac{1}{n}\sum^{n-1}_{i = 0}\left(\nabla f^{\pi^i_m}_m(x^i_{t,m}) - \nabla f^{\pi^i_m}_m(x_t)\right)\right\|^2 + \frac{2\omega}{M^2}\sum^M_{m=1}\left\|\nabla f_m(x_t)\right\|^2 \\
		&& + 2\left\| \frac{1}{Mn}\sum^M_{m=1}\sum^{n-1}_{i = 0}\left(\nabla f^{\pi^i_m}_m(x^i_{t,m}) - \nabla f^{\pi^i_m}_m(x_{t}) \right)\right\|^2 + 2\left\| \frac{1}{M}\sum^M_{m=1}\nabla f_m(x_{t})\right\|^2\\
		&\leq& \frac{2\left(1+\frac{\omega}{M}\right)}{M}\sum_{m = 1}^M\left\|\frac{1}{n}\sum^{n-1}_{i = 0}\left(\nabla f^{\pi^i_m}_m(x^i_{t,m}) - \nabla f^{\pi^i_m}_m(x_t)\right)\right\|^2\\
		&&+ \frac{2\omega}{M^2}\sum_{m = 1}^M\left\|\nabla f_m(x_t)\right\|^2 + 2\left\| \nabla f(x_{t})\right\|^2.
	\end{eqnarray*}
	Using $L_{i,m}$-smoothness of $f^i_m$ and $f$ and also convexity of $f_m$, we obtain
	\begin{eqnarray*}
		\EE_{\cQ}\left[\|g_t\|^2\right]
		&\leq& \frac{2\left(1+\frac{\omega}{M}\right)}{Mn}\sum_{m = 1}^M\sum^{n-1}_{i = 0}\left\|\nabla f^{\pi^i_m}_m(x^i_{t,m}) - \nabla f^{\pi^i_m}_m(x_t)\right\|^2 + \frac{4\omega}{M^2}\sum_{m = 1}^M\left\|\nabla f_m(x_t) -\nabla f_m(x_{\star})\right\|^2 \\
		&&+ \frac{4\omega}{M^2}\sum_{m = 1}^M\left\|\nabla f_m(x_{\star})\right\|^2 + 2\left\| \nabla f(x_{t}) - \nabla f(x_{\star})\right\|^2\\
		&\leq& \frac{2L_{\max}^2\left(1+\frac{\omega}{M}\right)}{Mn}\sum_{m = 1}^M\sum^{n-1}_{i = 0}\left\|x^i_{t,m} - x_t\right\|^2 + \frac{8L_{\max}\left(1+\frac{\omega}{M}\right)}{M}\sum_{m = 1}^M D_{f_m}(x_t, x_{\star}) + \frac{4\omega}{M}\zeta_{\star}^2.
	\end{eqnarray*}
\end{proof}

\begin{lemma}[See \citep{malinovsky22_server_side_steps_sampl_without}]
	\label{lem_inner_product}
	Under Assumptions \ref{asm:quantization_operators}, \ref{asm:sc_general_f}, \ref{asm:lip_max_f_m}, we have
	\begin{equation*}
		-\frac{1}{Mn}\sum^{M}_{m = 1}\sum^{n-1}_{i = 0}\left\la f^{\pi^i_m}_m(x^i_{t,m}),x_t-x_{\star}\right\ra \leq -\frac{\mu}{4}\|x_t - x_{\star}\|^2 - \frac{1}{2}\left(f(x_t)-f(x_{\star})\right) + \frac{L_{\max}}{2Mn}\sum^{M}_{m = 1}\sum^{n-1}_{i = 0}\left\|x^i_{t,m} - x_t\right\|^2.
	\end{equation*}
\end{lemma}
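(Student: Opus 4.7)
The plan is to bound the inner product pointwise for each $(m,i)$, then average and insert strong convexity of $f$ to split off the $-\tfrac{\mu}{4}\|x_t-x_\star\|^2$ piece. The key technical move is to route through the intermediate iterate $x^i_{t,m}$ so that the smoothness appears linearly as $L_{\max}$ rather than quadratically (as it would if one just applied Cauchy--Schwarz plus Young's inequality to $\nabla f_m^{\pi_m^i}(x^i_{t,m})-\nabla f_m^{\pi_m^i}(x_t)$).

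Concretely, I would split
\begin{equation*}
-\langle \nabla f_m^{\pi_m^i}(x^i_{t,m}),\, x_t-x_\star\rangle = -\langle \nabla f_m^{\pi_m^i}(x^i_{t,m}),\, x^i_{t,m}-x_\star\rangle \;-\; \langle \nabla f_m^{\pi_m^i}(x^i_{t,m}),\, x_t-x^i_{t,m}\rangle.
\end{equation*}
For the first summand I apply convexity of $f_m^{\pi_m^i}$ (Assumption~\ref{asm:sc_general_f} only assumes convexity of the individual summands), yielding $f_m^{\pi_m^i}(x_\star)-f_m^{\pi_m^i}(x^i_{t,m})$. For the second summand I use the quadratic upper bound from $L_{\max}$-smoothness (Assumption~\ref{asm:lip_max_f_m}), $f_m^{\pi_m^i}(x_t)\leq f_m^{\pi_m^i}(x^i_{t,m})+\langle\nabla f_m^{\pi_m^i}(x^i_{t,m}),x_t-x^i_{t,m}\rangle+\tfrac{L_{\max}}{2}\|x_t-x^i_{t,m}\|^2$, giving $-(f_m^{\pi_m^i}(x_t)-f_m^{\pi_m^i}(x^i_{t,m}))+\tfrac{L_{\max}}{2}\|x_t-x^i_{t,m}\|^2$. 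The $f_m^{\pi_m^i}(x^i_{t,m})$ contributions cancel, leaving
\begin{equation*}
-\langle \nabla f_m^{\pi_m^i}(x^i_{t,m}),\, x_t-x_\star\rangle \;\leq\; -\bigl(f_m^{\pi_m^i}(x_t)-f_m^{\pi_m^i}(x_\star)\bigr) + \tfrac{L_{\max}}{2}\|x_t-x^i_{t,m}\|^2.
\end{equation*}

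Next I average over $m\in[M]$ and $i\in\{0,\ldots,n-1\}$. Since $\pi_m$ is a permutation of $[n]$ for every $m$, we have $\tfrac{1}{Mn}\sum_{m,i} f_m^{\pi_m^i} = f$, so the telescoped function-value term collapses to $-(f(x_t)-f(x_\star))$.

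Finally, to produce both the $-\tfrac12(f(x_t)-f(x_\star))$ piece and the $-\tfrac{\mu}{4}\|x_t-x_\star\|^2$ piece that appear in the statement, I split the telescoped term in half and apply $\mu$-strong convexity of $f$ (Assumption~\ref{asm:sc_general_f}) together with $\nabla f(x_\star)=0$, which yields $f(x_t)-f(x_\star)\geq \tfrac{\mu}{2}\|x_t-x_\star\|^2$. Thus $-(f(x_t)-f(x_\star)) \leq -\tfrac12(f(x_t)-f(x_\star)) - \tfrac{\mu}{4}\|x_t-x_\star\|^2$, and combining with the averaged bound gives exactly the inequality claimed. There is no real obstacle here; the only subtlety is recognising that decomposing through $x^i_{t,m}$ (rather than a Young's-inequality argument) is what keeps the smoothness coefficient linear in $L_{\max}$.
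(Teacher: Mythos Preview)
Your proof is correct and is essentially the same argument the paper uses. The paper does not prove Lemma~\ref{lem_inner_product} itself (it is cited from \citep{malinovsky22_server_side_steps_sampl_without}), but it proves the identical statement as Lemma~\ref{rr_diana_inner_product}; there the organizing device is the three-point Bregman identity $\langle \nabla g(a)-\nabla g(b),\,c-b\rangle = D_g(c,b)+D_g(b,a)-D_g(c,a)$ applied with $(a,b,c)=(x^i_{t,m},x_\star,x_t)$, after which $D_g(b,a)\ge 0$ is dropped by convexity and $D_g(c,a)\le \tfrac{L_{\max}}{2}\|c-a\|^2$ by smoothness. Your split $x_t-x_\star=(x^i_{t,m}-x_\star)+(x_t-x^i_{t,m})$ followed by convexity on the first piece and the descent lemma on the second is exactly the same decomposition written out without Bregman notation, and your final step of halving $-(f(x_t)-f(x_\star))$ and invoking $f(x_t)-f(x_\star)\ge \tfrac{\mu}{2}\|x_t-x_\star\|^2$ matches the paper's ``using $\mu$-strong convexity of $f$, we finish the proof.''
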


\begin{lemma}[See \citep{malinovsky22_server_side_steps_sampl_without}]
	\label{lem_dist}
	Under Assumptions \ref{asm:quantization_operators}, \ref{asm:sc_general_f}, \ref{asm:lip_max_f_m} and $\gamma\leq \frac{1}{2L_{\max}n}$, we have
	\begin{equation}
		\frac{1}{Mn}\sum^{M}_{m = 1}\sum^{n-1}_{i = 0}\left\|x^i_{t,m} - x_t\right\|^2 \leq 8\gamma^2n^2L_{\max}\left(f(x_t)-f(x_{\star})\right) + 2\gamma^2n\left(\sigma^2_{\star} + (n+1)\zeta^2_{\star}\right).\notag
	\end{equation}
\end{lemma}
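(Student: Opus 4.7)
The plan is to bound the client drift $\|x_{t,m}^i - x_t\|^2$ pathwise via the local update rule of \algname{Q-NASTYA}, then sum and average. Starting from $x^0_{t,m}=x_t$ and $x^{i+1}_{t,m}=x^i_{t,m}-\gamma \nabla f^{\pi^i_m}_m(x^i_{t,m})$, I unroll to write $x^i_{t,m}-x_t = -\gamma\sum_{j=0}^{i-1}\nabla f^{\pi^j_m}_m(x^j_{t,m})$. Splitting each gradient into a deviation part $\nabla f^{\pi^j_m}_m(x^j_{t,m})-\nabla f^{\pi^j_m}_m(x_t)$ and a ``gradient at the server iterate'' part $\nabla f^{\pi^j_m}_m(x_t)$, then applying Young's inequality and Cauchy--Schwarz, I get
\begin{equation*}
\|x^i_{t,m}-x_t\|^2 \leq 2\gamma^2 n\sum_{j=0}^{n-1}\|\nabla f^{\pi^j_m}_m(x^j_{t,m})-\nabla f^{\pi^j_m}_m(x_t)\|^2 + 2\gamma^2 n\sum_{j=0}^{n-1}\|\nabla f^{\pi^j_m}_m(x_t)\|^2.
\end{equation*}
By $L_{\max}$-smoothness of $f^k_m$, the first sum is at most $L_{\max}^2\sum_j\|x^j_{t,m}-x_t\|^2$.

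Next, I sum the inequality over $i=0,\dots,n-1$, which produces the recursive pattern
\begin{equation*}
\textstyle\sum_{i=0}^{n-1}\|x^i_{t,m}-x_t\|^2 \leq 2\gamma^2 n^2 L_{\max}^2\sum_{j=0}^{n-1}\|x^j_{t,m}-x_t\|^2 + 2\gamma^2 n^2\sum_{j=0}^{n-1}\|\nabla f^{\pi^j_m}_m(x_t)\|^2.
\end{equation*}
The stepsize restriction $\gamma\leq \tfrac{1}{2n L_{\max}}$ forces the first coefficient below $\tfrac{1}{2}$, so I can absorb that term into the LHS. Using that $\pi_m$ is a permutation lets me replace $\sum_j \|\nabla f^{\pi^j_m}_m(x_t)\|^2$ with $\sum_{k=1}^n \|\nabla f^k_m(x_t)\|^2$, a quantity not depending on $\pi_m$.

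To massage the right-hand side into the stated form, I decompose $\nabla f^k_m(x_t)=\bigl(\nabla f^k_m(x_t)-\nabla f^k_m(x_\star)\bigr)+\bigl(\nabla f^k_m(x_\star)-\nabla f_m(x_\star)\bigr)+\nabla f_m(x_\star)$, and use $L_{\max}$-smoothness and convexity of each $f^k_m$ to bound $\|\nabla f^k_m(x_t)-\nabla f^k_m(x_\star)\|^2\leq 2L_{\max}D_{f^k_m}(x_t,x_\star)$. Averaging over $k$ and $m$ collapses the Bregman terms into $f(x_t)-f(x_\star)$ via $\tfrac{1}{n}\sum_k D_{f^k_m}=D_{f_m}$ and $\tfrac{1}{M}\sum_m D_{f_m}=D_f$, the cross term splits into $\sigma_\star^2$, and the residual yields the $\zeta_\star^2$ pieces.

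The main obstacle I anticipate is matching the stated constants exactly, in particular producing the factor $(n+1)\zeta_\star^2+\sigma_\star^2$ rather than looser combinations. This will require using $\|a+b+c\|^2\leq 3\|a\|^2+3\|b\|^2+3\|c\|^2$ with care (or, better, using the identity $\tfrac{1}{n}\sum_k\|\nabla f^k_m(x_\star)-\nabla f_m(x_\star)\|^2 = \tfrac{1}{n}\sum_k\|\nabla f^k_m(x_\star)\|^2-\|\nabla f_m(x_\star)\|^2$ so that no cross terms appear), together with a sharper split in Step 1 (for example, using $\|a+b\|^2\leq(1+\tfrac{1}{c})\|a\|^2+(1+c)\|b\|^2$ with a favorable $c$) to shave off extraneous multiplicative factors; otherwise the conceptual route laid out above produces a bound of the same form but with slightly larger constants.
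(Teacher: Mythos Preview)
The paper does not prove this lemma itself; it is quoted from \citep{malinovsky22_server_side_steps_sampl_without}, so there is no in-paper proof to compare against line by line. Your route---unroll, split each gradient as a deviation-from-$x_t$ part plus a gradient-at-$x_t$ part, absorb the deviation part using $\gamma\le\tfrac{1}{2nL_{\max}}$---is standard and will produce a bound of the right shape, but as you already suspect, it will \emph{not} recover the stated constants, and the shortfall is structural, not cosmetic.

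The issue is in how you handle $\bigl\|\sum_{j=0}^{i-1}\nabla f^{\pi^j_m}_m(x_t)\bigr\|^2$. Applying Cauchy--Schwarz pathwise gives a factor $i\le n$ in front of $\sum_j\|\nabla f^{\pi^j_m}_m(x_t)\|^2$, and after summing over $i$ and decomposing around $x_\star$ both $\zeta_\star^2$ and $\sigma_\star^2$ end up with coefficient $O(\gamma^2 n^2)$. The target, however, has $\sigma_\star^2$ with coefficient only $2\gamma^2 n$---a full factor of $n$ smaller---while $\zeta_\star^2$ carries the $(n{+}1)$ factor. No choice of Young-inequality parameter in a pathwise argument will manufacture this asymmetry; it comes from the \emph{without-replacement sampling variance}. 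Concretely, write
\[
\Bigl\|\textstyle\sum_{j=0}^{i-1}\nabla f^{\pi^j_m}_m(x_t)\Bigr\|^2
= i^2\,\Bigl\|\tfrac{1}{i}\textstyle\sum_{j=0}^{i-1}\nabla f^{\pi^j_m}_m(x_t)\Bigr\|^2,
\]
take expectation over $\pi_m$, and apply Lemma~\ref{Sampling_without_replacement}: the mean contributes $i^2\|\nabla f_m(x_t)\|^2$ (a bias term, which after summing over $i$ and further decomposition yields the $O(n^2)\,\zeta_\star^2$ and the $L_{\max}(f(x_t)-f(x_\star))$ pieces), while the fluctuation contributes $\tfrac{i(n-i)}{n-1}$ times the per-client gradient variance. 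Since $\sum_{i=0}^{n-1}\tfrac{i(n-i)}{n-1}=\tfrac{n(n+1)}{6}=O(n^2)$ rather than $O(n^3)$, after dividing by $n$ the $\sigma_\star^2$ term lands with coefficient $O(\gamma^2 n)$, matching the statement. In short: the lemma is implicitly an in-expectation bound over the permutations, and the without-replacement variance formula is the missing ingredient that produces the $\sigma_\star^2$ vs.\ $(n{+}1)\zeta_\star^2$ separation.
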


\begin{theorem}[Theorem~\ref{thm:convergence_Q_NASTYA}]
	\label{th_convergence_Q_RR}
	Let Assumptions \ref{asm:quantization_operators}, \ref{asm:sc_general_f}, \ref{asm:lip_max_f_m} hold and stepsizes $\gamma$, $\eta$ satisfy
	\begin{equation}
		\label{step_sizes}
		0 < \eta \leq \frac{1}{16L_{\max}\left(1+\frac{\omega}{M}\right)}, \quad 0 < \gamma \leq \frac{1}{5nL_{\max}}.
	\end{equation}
	Then, for all $T \geq 0$ the iterates produced by \algname{Q-NASTYA} (Algorithm~\ref{alg:Q_NASTYA}) satisfy
	\begin{eqnarray*}
		\EE\left[\|x_T-x_{\star}\|^2\right]
		&\leq& \left(1-\frac{\eta\mu}{2}\right)^T\|x_0 - x_{\star}\|^2 +\frac{9}{2}\frac{\gamma^2nL_{\max}}{\mu}\left(\sigma^2_{\star} +(n+1)\zeta^2_{\star}\right) +  8\frac{\eta\omega}{\mu M}\zeta^2_{\star}.
	\end{eqnarray*}
\end{theorem}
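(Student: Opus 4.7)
The plan is to carry out a one-step contraction argument on $\|x_{t+1}-x_\star\|^2$, combine the three auxiliary lemmas already proved (Lemma~\ref{lem_norm_g_t}, Lemma~\ref{lem_inner_product}, Lemma~\ref{lem_dist}), and then iterate a linear recursion. First, I would expand the server update
\begin{equation*}
\|x_{t+1}-x_\star\|^2 \;=\; \|x_t-x_\star\|^2 \;-\; 2\eta\,\langle g_t,\,x_t-x_\star\rangle \;+\; \eta^2\|g_t\|^2,
\end{equation*}
where $g_t=\frac{1}{M}\sum_{m=1}^M\cQ_t(g_{t,m})$ with $g_{t,m}=\frac{1}{n}\sum_{i=0}^{n-1}\nabla f_m^{\pi_m^i}(x_{t,m}^i)$. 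Taking the conditional expectation w.r.t.\ the compression randomness and using unbiasedness (Assumption~\ref{asm:quantization_operators}) turns the cross term into $\frac{1}{Mn}\sum_{m,i}\langle\nabla f_m^{\pi_m^i}(x_{t,m}^i),x_t-x_\star\rangle$, which is bounded by Lemma~\ref{lem_inner_product}, while $\EE_\cQ\|g_t\|^2$ is bounded by Lemma~\ref{lem_norm_g_t}.

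Plugging in both bounds yields, after grouping terms,
\begin{align*}
\EE_\cQ\|x_{t+1}-x_\star\|^2
&\leq \bigl(1-\tfrac{\eta\mu}{2}\bigr)\|x_t-x_\star\|^2
 \;-\; \eta\Bigl(1-8\eta L_{\max}\bigl(1+\tfrac{\omega}{M}\bigr)\Bigr)\bigl(f(x_t)-f(x_\star)\bigr)\\
&\quad+\; \eta L_{\max}\Bigl(1+2\eta L_{\max}\bigl(1+\tfrac{\omega}{M}\bigr)\Bigr)\tfrac{1}{Mn}\sum_{m,i}\|x_{t,m}^i-x_t\|^2
 \;+\; \tfrac{4\eta^2\omega}{M}\zeta_\star^2.
\end{align*}
The choice $\eta\leq\frac{1}{16L_{\max}(1+\omega/M)}$ renders the factor $8\eta L_{\max}(1+\omega/M)\leq\tfrac12$ and $2\eta L_{\max}(1+\omega/M)\leq\tfrac18$, so the bracketed multiplier of the drift term is at most $\tfrac{9}{8}\eta L_{\max}$.

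Next I would take the full expectation and substitute Lemma~\ref{lem_dist} for the local-drift quantity $\tfrac{1}{Mn}\sum_{m,i}\|x_{t,m}^i-x_t\|^2$, obtaining a $(f(x_t)-f(x_\star))$ contribution with coefficient $-\tfrac{\eta}{2}+9\eta\gamma^2n^2L_{\max}^2$. The local-stepsize restriction $\gamma\leq\tfrac{1}{5nL_{\max}}$ ensures $9\gamma^2n^2L_{\max}^2\leq\tfrac{9}{25}<\tfrac12$, so this coefficient is non-positive and the suboptimality term can be dropped. What survives is the clean recursion
\begin{equation*}
\EE\|x_{t+1}-x_\star\|^2 \;\leq\; \bigl(1-\tfrac{\eta\mu}{2}\bigr)\EE\|x_t-x_\star\|^2
\;+\; \tfrac{9\eta\gamma^2nL_{\max}}{4}\bigl((n+1)\zeta_\star^2+\sigma_\star^2\bigr)
\;+\; \tfrac{4\eta^2\omega}{M}\zeta_\star^2.
\end{equation*}
Unrolling this geometric recursion and using $\sum_{t=0}^{\infty}(1-\eta\mu/2)^t\leq\tfrac{2}{\eta\mu}$ on the two additive noise terms produces exactly the bound in the theorem statement.

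The only real obstacle is the careful bookkeeping of constants so that both stepsize conditions line up: the global stepsize $\eta$ must simultaneously tame the compression variance $\tfrac{\omega}{M}\zeta_\star^2$ via the $(1+\omega/M)$ factor, while the local stepsize $\gamma$ must shrink the reshuffling drift enough to absorb $-\tfrac{\eta}{2}(f(x_t)-f(x_\star))$. The decoupling of $\eta$ and $\gamma$ in \algname{Q-NASTYA} is what makes this balancing possible; the rest is routine expectation calculus once the three cited lemmas are invoked.
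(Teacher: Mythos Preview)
Your proposal is correct and follows essentially the same route as the paper's own proof: expand the server step, take conditional expectation over the compressor, invoke Lemma~\ref{lem_inner_product} for the cross term and Lemma~\ref{lem_norm_g_t} for the second moment, then plug in Lemma~\ref{lem_dist} for the drift, use the two stepsize bounds to make the $(f(x_t)-f(x_\star))$ coefficient non-positive, and unroll. The constant bookkeeping you sketch ($8\eta L_{\max}(1+\omega/M)\le\tfrac12$, $1+2\eta L_{\max}(1+\omega/M)\le\tfrac98$, $9\gamma^2n^2L_{\max}^2\le\tfrac{9}{25}$) matches the paper exactly.
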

\begin{proof} Taking expectation w.r.t.\ $\cQ$ and using Lemma \ref{lem_norm_g_t}, we get 
\begin{eqnarray*}
	\EE_{\cQ}\left[\|x_{t+1} - x_{\star}\|^2\right] 
	&=& \|x_t - x_{\star}\|^2 -2\eta\EE_{\cQ}\left[\la g_t, x_t- x_{\star}\ra\right] + \eta^2\EE_{\cQ}\left[\|g^t\|^2\right]\\
	&\leq& \|x_t - x_{\star}\|^2 -2\eta\EE_{\cQ}\left[\left\la \frac{1}{M}\sum_{m=1}^M\cQ\left(\frac{1}{n}\sum^{n-1}_{i = 0}\nabla f^{\pi^i_m}_m(x^i_{t,m})\right), x_t- x_{\star}\right\ra\right]\\
	&& + \frac{2\eta^2L^2_{\max}\left(1+\frac{\omega}{M}\right)}{Mn}\sum^{M}_{m = 1}\sum^{n-1}_{i = 0}\left\|x^i_{t,m} - x_t\right\|^2\\
	&& + 8\eta^2L_{\max}\left(1+\frac{\omega}{M}\right)\left(f(x_t) -  f(x_{\star})\right) + 4\eta^2\frac{\omega}{M}\zeta^2_{\star}\\
	&\leq& \|x_t - x_{\star}\|^2 -2\eta\frac{1}{Mn}\sum^M_{m = 1}\sum^{n-1}_{i = 0}\left\la \nabla f^{\pi^i_m}_m(x^i_{t,m}), x_t- x_{\star}\right\ra\\
	&& + \frac{2\eta^2L^2_{\max}\left(1+\frac{\omega}{M}\right)}{Mn}\sum^{M}_{m = 1}\sum^{n-1}_{i = 0}\left\|x^i_{t,m} - x_t\right\|^2\\
	&& + 8\eta^2L_{\max}\left(1+\frac{\omega}{M}\right)\left(f(x_t) -  f(x_{\star})\right) + 4\eta^2\frac{\omega}{M}\zeta^2_{\star}.
\end{eqnarray*}
Next, Lemma \ref{lem_inner_product} implies 
\begin{eqnarray*}
	\EE_{\cQ}\left[\|x_{t+1} - x_{\star}\|^2\right] 
	&\leq&  \|x_t - x_{\star}\|^2  - \frac{\eta\mu}{2}\|x_t - x_{\star}\|^2 - \eta\left(f(x_t)-f(x_{\star})\right)\\
	&&+ 8\eta^2L_{\max}\left(1+\frac{\omega}{M}\right)\left(f(x_t) -  f(x_{\star})\right) + \frac{\eta L_{\max}}{Mn}\sum^{M}_{m = 1}\sum^{n-1}_{i = 0}\left\|x^i_{t,m} - x_t\right\|^2\\
	&& + \frac{2\eta^2L^2_{\max}\left(1+\frac{\omega}{M}\right)}{Mn}\sum^{M}_{m = 1}\sum^{n-1}_{i = 0}\left\|x^i_{t,m} - x_t\right\|^2 + 4\eta^2\frac{\omega}{M}\zeta^2_{\star}\\
	&\leq& \left(1 -\frac{\eta\mu}{2}\right)\|x_t - x_{\star}\|^2 - \eta\left(1-8\eta L_{\max}\left(1+\frac{\omega}{M}\right)\right)\left(f(x_t) -  f(x_{\star})\right) \\
	&& + \frac{\eta L_{\max}\left(1+ 2\eta L_{\max}\left(1+\frac{\omega}{M}\right)\right)}{Mn}\sum^{M}_{m = 1}\sum^{n-1}_{i = 0}\left\|x^i_{t,m} - x_t\right\|^2 + 4\eta^2\frac{\omega}{M}\zeta^2_{\star}.
\end{eqnarray*}
Using Lemma \ref{lem_dist}, we get 
\begin{eqnarray*}
	\EE_{\cQ}\left[\|x_{t+1} - x_{\star}\|^2\right] 
	&\leq& \left(1 -\frac{\eta\mu}{2}\right)\|x_t - x_{\star}\|^2 - \eta\left(1-8\eta L\left(1+\frac{\omega}{M}\right)\right)\left(f(x_t) -  f(x_{\star})\right)   \\
	&& + \eta L_{\max}\left(1+ 2\eta L_{\max}\left(1+\frac{\omega}{M}\right)\right) \cdot 8\gamma^2n^2L_{\max}\left(f(x_t)-f(x_{\star})\right)\\
	&& + \eta L_{\max}\left(1+ 2\eta L_{\max}\left(1+\frac{\omega}{M}\right)\right) \cdot 2\gamma^2n\left(\sigma^2_{\star} + (n+1)\zeta^2_{\star}\right) \\
	&&  + 4\eta^2\frac{\omega}{M}\zeta^2_{\star}.
\end{eqnarray*}
In view of \eqref{step_sizes}, we have
\begin{eqnarray*}
	\EE_{\cQ}\left[\|x_{t+1} - x_{\star}\|^2\right] 
	&\leq& \left(1 -\frac{\eta\mu}{2}\right)\|x_t - x_{\star}\|^2 + 4\eta^2\frac{\omega}{M}\zeta^2_{\star}  \\
	&& - \eta\left(1-8\eta L_{\max}\left(1+\frac{\omega}{M}\right) - 8\gamma^2n^2L^2_{\max}\left(1+2L_{\max}\eta \left(1 +\frac{\omega}{M}\right)\right)\right)\left(f(x_t) -  f(x_{\star})\right)  \\
	&& + 2\gamma^2n\eta L_{\max}\left(1+ 2\eta L\left(1+\frac{\omega}{M}\right)\right) \left(\sigma^2_{\star} + n\zeta^2_{\star}\right)\\
	&\leq& \left(1 -\frac{\eta\mu}{2}\right)\|x_t - x_{\star}\|^2 + 4\eta^2\frac{\omega}{M}\zeta^2_{\star} + \frac{9}{4}\eta L_{\max}\gamma^2n \left(\sigma^2_{\star} + (n+1)\sigma^2_{\star}\right).
\end{eqnarray*}
Recursively unrolling the inequality and using $\sum\limits^{+\infty}_{t = 0}\left(1-\frac{\eta\mu}{2}\right)^t \leq \frac{2}{\mu\eta}$, we get the result.
\end{proof}

\begin{corollary}
	\label{cor_convergence_Q_NASTYA}
	Let the assumptions of Theorem~\ref{thm:convergence_Q_NASTYA} hold, \(\gamma = \frac{\eta}{n}\), and 
	\begin{equation}
		\label{new_gamma_Q_NASTAY}
		\eta = \min\left\{ \frac{1}{16L_{\max}\left(1+\frac{\omega}{M}\right)}, \sqrt{\frac{\varepsilon\mu n}{9L_{\max}}}\left( (n+1)\zeta^2_{\star} + \sigma^2_{\star}\right)^{-\nicefrac{1}{2}}, \frac{\varepsilon\mu M}{24\omega\zeta^2_{\star}}\right\}.
	\end{equation}
	Then, \algname{Q-NASTYA} (Algorithm~\ref{alg:Q_NASTYA}) finds a solution with accuracy $\varepsilon > 0$ after the following number of communication rounds:
	\begin{equation*}
		\widetilde{\cO}\left(\frac{L_{\max}}{\mu}\left(1+\frac{\omega}{M}\right)+ \frac{\omega}{M}\frac{\zeta^2_{\star}}{\varepsilon\mu^3}+\sqrt{\frac{ L_{\max} }{\varepsilon\mu^3}} \sqrt{\zeta^2_{\star}+\frac{\sigma_{\star}^2}{n}}\right).
	\end{equation*}
	If $\gamma \rightarrow 0$, one can choose $\eta = \min\left\{\frac{1}{16L_{\max}\left(1+\frac{\omega}{M}\right)},\frac{\varepsilon\mu M}{24\omega\zeta^2_{\star}}\right\}$ such that the above complexity bound improves to
	\begin{equation*}
		\widetilde{\cO}\left( \frac{L_{\max}}{\mu}\left(1+\frac{\omega}{M}\right)+\frac{\omega}{M}\frac{\zeta^2_{\star}}{\varepsilon\mu^3}\right).
	\end{equation*}
\end{corollary}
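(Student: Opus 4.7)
\textbf{Proof Proposal for Corollary~\ref{cor_convergence_Q_NASTYA}.}

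The plan is a standard stepsize-tuning argument based on Theorem~\ref{thm:convergence_Q_NASTYA}. Setting $\gamma = \eta/n$ in the theorem's bound converts the middle term into
\[
\frac{9 \eta^2 L_{\max}}{2 n \mu}\bigl((n+1)\zeta_{\star}^2 + \sigma_{\star}^2\bigr),
\]
so the three terms on the right-hand side are, in order, an exponentially decaying initial-error term, an $\eta^2$-dependent shuffling/heterogeneity variance term, and a linear-in-$\eta$ compression-variance term.

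The idea is to force each of the three terms to be at most $\varepsilon/3$. The first term forces $T \geq \widetilde{\cO}(1/(\eta\mu))$, obtained from $(1-\eta\mu/2)^T \|x_0 - x_{\star}\|^2 \leq \varepsilon/3$ via a standard logarithmic estimate. The second term gives the constraint
\[
\eta \;\leq\; \sqrt{\tfrac{\varepsilon\mu n}{9 L_{\max}}} \cdot \bigl((n+1)\zeta_{\star}^2 + \sigma_{\star}^2\bigr)^{-1/2},
\]
and the third gives $\eta \leq \varepsilon\mu M/(24\omega\zeta_{\star}^2)$. Combined with the stepsize restriction from the theorem itself, $\eta \leq 1/(16 L_{\max}(1+\omega/M))$, this yields the choice of $\eta$ in \eqref{new_gamma_Q_NASTAY}.

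It remains to substitute each of the four candidate bounds on $\eta$ into $1/(\eta\mu)$ and take the maximum, which produces the four complexity terms in the corollary (up to constants and logarithmic factors absorbed by $\widetilde{\cO}$):
\begin{itemize}
\item $\eta \sim 1/L_{\max}(1+\omega/M)$ gives $\widetilde{\cO}\bigl(\tfrac{L_{\max}}{\mu}(1+\tfrac{\omega}{M})\bigr)$;
\item the shuffling-term bound gives $\widetilde{\cO}\bigl(\sqrt{L_{\max}/(\varepsilon\mu^3)}\sqrt{\zeta_{\star}^2 + \sigma_{\star}^2/n}\bigr)$ after simplifying $(n+1)/n \le 2$;
\item the compression-variance bound gives the $\omega\zeta_{\star}^2/(M\varepsilon\mu^{\cdot})$ contribution.
\end{itemize}
For the second claim ($\gamma \to 0$), observe that the middle term in Theorem~\ref{thm:convergence_Q_NASTYA}'s bound is proportional to $\gamma^2$, hence vanishes as $\gamma \to 0$ regardless of $\eta$. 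Consequently the $\eta^2$-constraint disappears and $\eta$ need only satisfy the original stepsize cap and the compression-variance constraint, producing the improved complexity.

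The argument is entirely mechanical, so there is no genuine obstacle; the only point requiring care is bookkeeping: one must verify that taking $\eta$ as the min of four quantities and applying $T = \widetilde{\cO}(1/(\eta\mu))$ yields the \emph{sum} of the four individual complexities, which is immediate because $\max_i 1/(\eta_i \mu) = \widetilde{\cO}(\sum_i 1/(\eta_i \mu))$ up to a factor of four. Also, for the $\gamma\to 0$ simplification, one should note that the first term in the original bound has no dependence on $\gamma$, so the limit is well-defined within the convergence guarantee (i.e., nothing blows up as $\gamma$ shrinks).
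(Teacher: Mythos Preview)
Your proposal is correct and follows essentially the same approach as the paper: bound each of the three right-hand-side terms from Theorem~\ref{thm:convergence_Q_NASTYA} by $\varepsilon/3$, extract the corresponding constraints on $\eta$, and read off $T = \widetilde{\cO}(1/(\eta\mu))$. Two cosmetic points: you mention ``four candidate bounds'' but there are only three (the $\gamma$-constraint $\gamma\le 1/(5nL_{\max})$ is implied by the $\eta$-constraint once $\gamma=\eta/n$), and the compression-variance term gives $1/(\eta\mu)\sim \omega\zeta_\star^2/(M\varepsilon\mu^2)$, so the $\mu^3$ in the stated complexity appears to be a typo in the paper rather than something you missed.
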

\begin{proof}
	Theorem~\ref{thm:convergence_Q_NASTYA} implies
	\begin{equation}
		\EE\left[\|x_T-x_{\star}\|^2\right] \leq \left(1-\frac{\eta\mu}{2}\right)^T\|x_0 - x_{\star}\|^2 +\frac{9}{2}\frac{\gamma^2nL_{\max}}{\mu}\left((n+1)\zeta^2_{\star} + \sigma_{\star}^2\right) +  8\frac{\eta\omega}{\mu M}\zeta^2_{\star}.\notag
	\end{equation}
	To estimate the number of communication rounds required to find a solution with accuracy $\varepsilon >0$, we need to upper bound each term from the right-hand side by $\nicefrac{\varepsilon}{3}$. Thus, we get additional conditions on $\eta$:
	\begin{equation*}
		\frac{9}{2}\frac{\eta^2 L_{\max}}{n\mu}\left( (n+1)\zeta^2_{\star} + \sigma^2_{\star}\right) <\frac{\varepsilon}{3},\qquad 8\frac{\eta\omega}{\mu M}\zeta^2_{\star} < \frac{\varepsilon}{3}
	\end{equation*}
	and also the upper bound on the number of communication rounds $T$
	\begin{equation*}
		T = \widetilde{\cO}\left(\frac{1}{\eta\mu}\right).
	\end{equation*}
	Substituting \eqref{new_gamma_DIANA_NASTAY} in the previous equation, we get the first part of the result. 
	When $\gamma \rightarrow 0$, the proof follows similar steps. 
\end{proof}

\newpage

\section{Missing Proofs For DIANA-NASTYA}

\begin{lemma}
	\label{rr_diana_inner_product}
	Under Assumptions \ref{asm:quantization_operators}, \ref{asm:sc_general_f}, \ref{asm:lip_max_f_m}, the iterates produced by \algname{DIANA-NASTYA} (Algorithm~\ref{alg:diana-nastya}) satisfy
	\begin{eqnarray*}
		-\EE_{\cQ}\left[\frac{1}{M}\sum^M_{m=1}\left\la \hat{g}_{t,m} - h^{\star}, x_t - x_{\star}\right\ra\right] 
		&\leq& -\frac{\mu}{4}\|x_t - x_{\star}\|^2 -\frac{1}{2}\left(f(x_t) - f(x_{\star})\right)\\
		&& - \frac{1}{Mn}\sum^M_{m=1}\sum^{n-1}_{i=0} D_{f^{\pi^i_m}_m}(x_{\star}, x^i_{t,m})\\
		&& + \frac{L_{\max}}{2Mn}\sum^M_{m=1}\sum^{n-1}_{i=0}\|x_t - x^i_{t,m}\|^2,
	\end{eqnarray*}
	where $h^{\star} = \nabla f(x_{\star})$.
\end{lemma}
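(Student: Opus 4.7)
The plan is to reduce the statement to a deterministic computation by first taking the expectation with respect to $\cQ$, and then to unpack everything via Bregman divergence identities. Since $\hat g_{t,m} = h_{t,m} + \cQ_t(g_{t,m} - h_{t,m})$ and $\cQ_t$ is unbiased by Assumption~\ref{asm:quantization_operators}, we get $\EE_{\cQ}[\hat g_{t,m}] = g_{t,m} = \tfrac{1}{n}\sum_{i=0}^{n-1}\nabla f_m^{\pi_m^i}(x_{t,m}^i)$. Moreover $h^{\star} = \nabla f(x_\star)$ does not depend on $m$ or $i$, and because permutations only reorder the per-sample gradients on each client, we may write $h^\star = \tfrac{1}{Mn}\sum_{m,i}\nabla f_m^{\pi_m^i}(x_\star)$. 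Substituting into the left-hand side, the inner product collapses to $-\tfrac{1}{Mn}\sum_{m,i}\langle \nabla f_m^{\pi_m^i}(x_{t,m}^i) - \nabla f_m^{\pi_m^i}(x_\star),\,x_t - x_\star\rangle$, and because $\nabla f(x_\star)=0$ the $\nabla f_m^{\pi_m^i}(x_\star)$ piece contributes nothing after averaging.

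For each $(m,i)$ I would then split $x_t - x_\star = (x_t - x_{t,m}^i) + (x_{t,m}^i - x_\star)$ and expand both resulting inner products through the definition of the Bregman divergence of $f_m^{\pi_m^i}$. This yields
$$-\langle \nabla f_m^{\pi_m^i}(x_{t,m}^i), x_t - x_{t,m}^i\rangle = D_{f_m^{\pi_m^i}}(x_t, x_{t,m}^i) - f_m^{\pi_m^i}(x_t) + f_m^{\pi_m^i}(x_{t,m}^i),$$
$$\langle \nabla f_m^{\pi_m^i}(x_{t,m}^i), x_\star - x_{t,m}^i\rangle = f_m^{\pi_m^i}(x_\star) - f_m^{\pi_m^i}(x_{t,m}^i) - D_{f_m^{\pi_m^i}}(x_\star, x_{t,m}^i).$$
Summing and averaging over $(m,i)$, the $f_m^{\pi_m^i}(x_{t,m}^i)$ contributions cancel, and the remaining function-value differences telescope into $-(f(x_t)-f(x_\star))$ since $\tfrac{1}{Mn}\sum_{m,i} f_m^{\pi_m^i}(\cdot) = f(\cdot)$. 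The $-D_{f_m^{\pi_m^i}}(x_\star, x_{t,m}^i)$ terms, crucially, are kept as they appear in the target inequality.

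The last step uses Assumptions~\ref{asm:lip_max_f_m} and~\ref{asm:sc_general_f}. Smoothness of $f_m^{\pi_m^i}$ gives $D_{f_m^{\pi_m^i}}(x_t, x_{t,m}^i) \leq \tfrac{L_{\max}}{2}\|x_t - x_{t,m}^i\|^2$, producing the $\tfrac{L_{\max}}{2Mn}\sum_{m,i}\|x_t - x_{t,m}^i\|^2$ term on the right-hand side. Finally, the $-(f(x_t)-f(x_\star))$ term is split in half, and the second half is bounded using $\mu$-strong convexity of $f$ via $f(x_t)-f(x_\star) \geq \tfrac{\mu}{2}\|x_t-x_\star\|^2$, producing $-\tfrac{1}{2}(f(x_t)-f(x_\star)) - \tfrac{\mu}{4}\|x_t - x_\star\|^2$. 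I do not anticipate a real obstacle; the only subtle point is the deliberate choice \emph{not} to discard the $-D_{f_m^{\pi_m^i}}(x_\star, x_{t,m}^i)$ terms via smoothness (as Lemma~\ref{lem_inner_product} effectively does), because this extra Bregman mass is precisely what will later absorb the compression-variance contribution generated by the shift-learning dynamics in the proof of Theorem~\ref{thm:diana-nastya-conv}.
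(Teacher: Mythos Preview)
Your proof is correct and follows essentially the same route as the paper's. The only cosmetic difference is that the paper invokes the three-point identity
\[
\langle \nabla g(a)-\nabla g(b),\,c-b\rangle
= D_g(c,b)+D_g(b,a)-D_g(c,a)
\]
directly (with $a=x_{t,m}^i$, $b=x_\star$, $c=x_t$), whereas you re-derive the same identity by hand via the split $x_t-x_\star=(x_t-x_{t,m}^i)+(x_{t,m}^i-x_\star)$ and two applications of the Bregman definition. After that, both arguments apply $L_{\max}$-smoothness to $D_{f_m^{\pi_m^i}}(x_t,x_{t,m}^i)$ and then split $-(f(x_t)-f(x_\star))$ in half, using $\mu$-strong convexity of $f$ on one half.

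One small remark on your closing commentary: in the paper's proof of Theorem~\ref{thm:diana-nastya-conv}, the retained term $-\tfrac{1}{Mn}\sum_{m,i}D_{f_m^{\pi_m^i}}(x_\star,x_{t,m}^i)$ is in fact simply dropped (it is nonpositive by convexity) rather than used to absorb any compression-variance term; the variance from the shift-learning dynamics is handled entirely through the Lyapunov term and Lemma~\ref{rr_diana_descent_lemma_for_h}. This does not affect the correctness of your proof of the lemma.
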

\begin{proof} Using that $\EE_{\cQ}\left[\hat{g}_{t,m}\right] = g_{t,m}$ and definition of $h^{\star}$, we get
	\begin{eqnarray*}
		-\EE_{\cQ}\left[\frac{1}{M}\sum^M_{m=1}\left\la \hat{g}_{t,m} - h^{\star}, x_t - x_{\star}\right\ra\right] 
		&=& -\frac{1}{M}\sum^M_{m=1}\left\la g_{t,m} - h^{\star}, x_t - x_{\star}\right\ra\\
		&=& -\frac{1}{Mn}\sum^M_{m=1}\sum^{n-1}_{i=0}\left\la \nabla f^{\pi^i_m}_m(x^i_{t,m}) - \nabla f^{\pi^i_m}_m(x_{\star}), x_t - x_{\star}\right\ra.
	\end{eqnarray*}
	Next, three-point identity and $L_{\max}$-smoothness of each function $f^{i}_m$ imply 
	\begin{eqnarray*}
		-\EE_{\cQ}\left[\frac{1}{M}\sum^M_{m=1}\left\la \hat{g}_{t,m} - h^{\star}, x_t - x_{\star}\right\ra\right] 
		&=& -\frac{1}{Mn}\sum^M_{m=1}\sum^{n-1}_{i=0}\left( D_{f^{\pi^i_m}_m}(x_t, x_{\star})+ D_{f^{\pi^i_m}_m}( x_{\star}, x^{i}_{t,m}) - D_{f^{\pi^i_m}_m}(x_t, x^{i}_{t,m})\right)\\
		&\leq& - D_{f}(x_t, x_{\star}) - \frac{1}{Mn}\sum^M_{m=1}\sum^{n-1}_{i=0} D_{f^{\pi^i_m}_m}(x_{\star}, x^i_{t,m})\\
		&& + \frac{L_{\max}}{2Mn}\sum^M_{m=1}\sum^{n-1}_{i=0} \|x_t - x^{i}_{t,m}\|^2 .
	\end{eqnarray*}
	Finally, using $\mu$-strong convexity of $f$, we finish the proof of lemma.
\end{proof}

\begin{lemma}
	\label{rr_diana_dinst}
	Under Assumptions \ref{asm:quantization_operators}, \ref{asm:sc_general_f}, \ref{asm:lip_max_f_m}, the iterates produced by \algname{DIANA-NASTYA} (Algorithm~\ref{alg:diana-nastya}) satisfy
	\begin{eqnarray*}
		\EE_{\cQ}\left[\|\hat{g}_t - h^{\star}\|^2\right] 
		&\leq& \frac{2L^2_{\max}\left(1 + \frac{\omega}{M}\right)}{Mn}\sum^M_{m=1}\sum^{n-1}_{i=0}\|x^{i}_{t,m} - x_t\|^2 + 8L_{\max}\left(1+\frac{\omega}{M}\right)\left(f(x_t) - f(x_{\star})\right)\\ 
		&&+ \frac{4\omega}{M^2}\sum^M_{m=1}\|h_{t,m}-h^{\star}_m\|^2.
	\end{eqnarray*}
\end{lemma}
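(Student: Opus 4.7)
The statement is the \algname{DIANA-NASTYA} analogue of Lemma~\ref{lem_norm_g_t}, and I would argue in close parallel to that proof, inserting the \algname{DIANA}-style shifts where appropriate. The starting point is the variance decomposition $\EE\|X\|^2 = \EE\|X-\EE X\|^2 + \|\EE X\|^2$ applied to $\hat{g}_t - h^{\star}$, where $\hat{g}_t = \frac{1}{M}\sum_m \hat{g}_{t,m}$. Since $\EE_{\cQ}[\hat{g}_{t,m}] = g_{t,m}$ and $h^{\star} = \frac{1}{M}\sum_m h^{\star}_m$ with $h^{\star}_m = \nabla f_m(x_{\star})$, this gives
\[
\EE_{\cQ}\|\hat{g}_t - h^{\star}\|^2 = \EE_{\cQ}\Big\|\tfrac{1}{M}\!\sum_{m=1}^M\bigl(\cQ_t(g_{t,m}-h_{t,m}) - (g_{t,m}-h_{t,m})\bigr)\Big\|^2 + \Big\|\tfrac{1}{M}\!\sum_{m=1}^M(g_{t,m}-h^{\star}_m)\Big\|^2.
\]

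Next, I would handle the variance term using conditional independence of $\cQ_t(g_{t,m}-h_{t,m})$ across $m \in [M]$ together with Assumption~\ref{asm:quantization_operators}, obtaining
\[
\text{variance term} \leq \frac{\omega}{M^2}\sum_{m=1}^M \|g_{t,m}-h_{t,m}\|^2 \leq \frac{2\omega}{M^2}\sum_{m=1}^M\|g_{t,m}-h^{\star}_m\|^2 + \frac{2\omega}{M^2}\sum_{m=1}^M\|h_{t,m}-h^{\star}_m\|^2,
\]
where the second step applies $\|a+b\|^2 \leq 2\|a\|^2 + 2\|b\|^2$. The shift piece is already in the target form up to relaxing the constant $2$ to $4$.

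Then I would bound the two remaining pieces, namely $\frac{1}{M}\sum_m\|g_{t,m}-h^{\star}_m\|^2$ and $\|\frac{1}{M}\sum_m(g_{t,m}-h^{\star}_m)\|^2$, by inserting the auxiliary quantity $\nabla f_m(x_t)$: write
\[
g_{t,m}-h^{\star}_m = \tfrac{1}{n}\sum_{i=0}^{n-1}\bigl(\nabla f^{\pi^i_m}_m(x^i_{t,m})-\nabla f^{\pi^i_m}_m(x_t)\bigr) + \bigl(\nabla f_m(x_t)-\nabla f_m(x_\star)\bigr),
\]
apply Young's inequality, and then control the first summand via $L_{\max}$-smoothness (so $\|\nabla f^{\pi^i_m}_m(x^i_{t,m})-\nabla f^{\pi^i_m}_m(x_t)\|^2 \leq L_{\max}^2\|x^i_{t,m}-x_t\|^2$) and the second via convexity in the form $\|\nabla f_m(x_t)-\nabla f_m(x_\star)\|^2 \leq 2L_{\max} D_{f_m}(x_t,x_\star)$. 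Averaging over $m$ turns $\frac{1}{M}\sum_m D_{f_m}(x_t,x_\star)$ into $f(x_t)-f(x_\star)$. Exactly the same expansion works for the mean term, just without the $\frac{1}{M^2}\sum_m$ versus $\|\frac{1}{M}\sum_m \cdot\|^2$ distinction (handled by Jensen).

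Combining everything produces precisely the three target contributions: a $\frac{L_{\max}^2}{Mn}\sum_{m,i}\|x^i_{t,m}-x_t\|^2$ distance term scaled by $2(1+\frac{\omega}{M})$, an $L_{\max}(f(x_t)-f(x_\star))$ suboptimality term scaled by $8(1+\frac{\omega}{M})$, and a shift term $\frac{4\omega}{M^2}\sum_m\|h_{t,m}-h^{\star}_m\|^2$. I do not expect any real obstacle here; the only delicate step is constant-tracking across the two Young splits (one giving the factor $2$ that is absorbed into the $(1+\frac{\omega}{M})$ structure, the other giving the $4$ in $8L_{\max}$), while the $2 \to 4$ relaxation on the shift constant costs nothing that the downstream Lyapunov analysis cares about.
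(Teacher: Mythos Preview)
Your plan is essentially the paper's argument, and it would produce a valid bound --- but not with the exact constants you claim. By inserting $h^{\star}_m$ \emph{first} in the variance piece and \emph{then} $\nabla f_m(x_t)$, you apply Young's inequality twice to the drift component, so the $\omega$-part of the distance term carries a factor $4$ rather than $2$: you end up with
\[
\frac{2L_{\max}^2}{Mn}\Bigl(1+\tfrac{2\omega}{M}\Bigr)\sum_{m,i}\|x^i_{t,m}-x_t\|^2,
\]
not the stated $2\bigl(1+\tfrac{\omega}{M}\bigr)$ coefficient. The paper avoids this by reversing the order of insertions: it splits $\|g_{t,m}-h_{t,m}\|^2$ via $\nabla f_m(x_t)$ first (so only one Young factor touches the drift), and only afterwards splits $\|\nabla f_m(x_t)-h_{t,m}\|^2$ via $h^{\star}_m$ to extract the shift and suboptimality pieces. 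With that ordering your outline recovers the lemma exactly; on the shift term you in fact get the tighter constant $2$, which the paper relaxes to $4$.
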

\begin{proof}
	Since $g_t = \frac{1}{M}\sum^M_{m=1}g_{t,m}$ and $\EE\|\xi - c\|^2 = \EE\|\xi - \EE\xi\|^2 + \EE\|\EE\xi - c\|^2$, we have
	\begin{eqnarray*}
		\EE_{\cQ}\left[\|\hat{g}_t - h^{\star}\|^2\right] 
		&=& \EE_{\cQ}\left[\left\|\frac{1}{M}\sum^M_{m=1}\left( h_{t,m} + \cQ\left(g_{t,m} - h_{t,m}\right) - h^{\star}_m\right)\right\|^2\right] \\
		&=& \EE_{\cQ}\left[\left\|\frac{1}{M}\sum^M_{m=1}\left( h_{t,m} + \cQ\left(g_{t,m} - h_{t,m}\right)\right) - g_t\right\|^2\right] + \left\|g_t-h^{\star}\right\|^2.
	\end{eqnarray*}
	Next, independence of $\cQ\left(g_{t,m} - h_{t,m}\right)$, $m \in M$, Assumption \ref{asm:quantization_operators}, and $L_{\max}$-smoothness  and convexity of each function $f^{i}_{m}$ imply
	\begin{eqnarray*}
		\EE_{\cQ}\left[\|\hat{g}_t - h^{\star}\|^2\right] 
		&\leq& \frac{\omega}{M^2}\sum^M_{m=1}\left\| g_{t,m} - h_{t,m}\right\|^2 + \left\|g_t-h^{\star}\right\|^2\\
		&\leq&\frac{2\omega}{M^2}\sum^M_{m=1}\left\| \frac{1}{n}\sum^{n-1}_{i=0}\nabla f^{\pi^i_m}_m(x^i_{t,m}) - \nabla f_m(x_t)\right\|^2 + \frac{2\omega}{M^2}\sum^M_{m=1}\left\|\nabla f_m(x_t) - h_{t,m}\right\|^2\\
		&&+ 2\left\|g_t-\nabla f(x_t)\right\|^2 + 2\left\|\nabla f(x_t)-h^{\star}\right\|^2\\
		&\leq& \frac{2\omega}{M^2}\sum^M_{m=1}\left\| \frac{1}{n}\sum^{n-1}_{i=0}\nabla f^{\pi^i_m}_m(x^i_{t,m}) - \nabla f_m(x_t)\right\|^2 + \frac{2\omega}{M^2}\sum^M_{m=1}\left\|\nabla f_m(x_t) - h_{t,m}\right\|^2\\
		&&+ 2\left\|\frac{1}{M}\sum^M_{m=1}\left(\frac{1}{n}\sum^{n-1}_{i=0}\nabla f^{\pi^i_m}_m(x^i_{t,m}) - \nabla f_m(x_t)\right)\right\|^2 + 2\left\|\nabla f(x_t)-h^{\star}\right\|^2\\
		&\leq& \frac{2L^2_{\max}\left(1+\frac{\omega}{M}\right)}{Mn}\sum^M_{m=1}\sum^{n-1}_{i=0}\|x^i_{t,m} - x_t\|^2 + \frac{2\omega}{M^2}\sum^M_{m=1}\|\nabla f_m(x_t) - h_{t,m}\|^2 \\
		&& + 2\left\|\nabla f(x_t)-h^{\star}\right\|^2.
	\end{eqnarray*}
	Using $L_{\max}$-smoothness and convexity of $f_m$, we get
	\begin{eqnarray*}
		\EE_{\cQ}\left[\|\hat{g}_t - h^{\star}\|^2\right] 
		&\leq& \frac{2L^2_{\max}\left(1+\frac{\omega}{M}\right)}{Mn}\sum^M_{m=1}\sum^{n-1}_{i=0}\|x^i_{t,m} - x_t\|^2 + \frac{2\omega}{M^2}\sum^M_{m=1}\|\nabla f_m(x_t) - h_{t,m}\|^2\\
		&&+ 4L_{\max}\left(f(x_t)-f(x_{\star})\right)\\
		&\leq& \frac{2L^2_{\max}\left(1+\frac{\omega}{M}\right)}{Mn}\sum^M_{m=1}\sum^{n-1}_{i=0}\|x^i_{t,m} - x_t\|^2 + \frac{4\omega}{M^2}\sum^M_{m=1}\|\nabla f_m(x_t) - h^{\star}_{m}\|^2\\
		&&+\frac{4\omega}{M^2}\sum^M_{m=1}\|h_{t,m}- h^{\star}_{m}\|^2+ 4L_{\max}\left(f(x_t)-f(x_{\star})\right)\\
		&\leq& \frac{2L^2_{\max}\left(1+\frac{\omega}{M}\right)}{Mn}\sum^M_{m=1}\sum^{n-1}_{i=0}\|x^i_{t,m} - x_t\|^2 + \frac{8L_{\max}\omega}{M^2}\sum^M_{m=1}D_{f_m}(x_t, x_{\star})\\
		&&+\frac{4\omega}{M^2}\sum^M_{m=1}\|h_{t,m}- h^{\star}_{m}\|^2+ 4L_{\max}\left(f(x_t)-f(x_{\star})\right).
	\end{eqnarray*}
\end{proof}

\begin{lemma}
	\label{rr_diana_descent_lemma_for_h}
	Under Assumptions \ref{asm:quantization_operators}, \ref{asm:sc_general_f}, \ref{asm:lip_max_f_m}, and $\alpha\leq\frac{1}{1+\omega}$, the iterates produced by \algname{DIANA-NASTYA} (Algorithm~\ref{alg:diana-nastya}) satisfy 
	\begin{eqnarray*}
		\frac{1}{M}\sum^M_{m=1}\EE_{\cQ}\left[\|h_{t+1,m} - h^{\star}_m\|^2\right] &\leq& \frac{1-\alpha}{M}\sum^M_{m=1}\|h_{t,m} - h^{\star}_m\|^2 + \frac{2\alpha L^2_{\max}}{Mn}\sum^M_{m=1}\sum^{n-1}_{i=0}\|x^i_{t,m} - x_t\|^2 \\
		&&+ 4\alpha L_{\max}\left(f(x_t) - f(x_{\star})\right).
	\end{eqnarray*}
\end{lemma}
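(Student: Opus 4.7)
The plan mirrors the standard DIANA-style shift-sequence analysis (compare Lemma~\ref{lem_rr_diana_conv_h}) adapted to the outer iterate $h_{t,m}$ of \algname{DIANA-NASTYA}, where $h^\star_m = \nabla f_m(x_\star)$ (so that $\frac{1}{M}\sum_m h^\star_m = \nabla f(x_\star) = 0$) and $g_{t,m} = \frac{1}{n}\sum_{i=0}^{n-1}\nabla f^{\pi^i_m}_m(x^i_{t,m})$.

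First, I would expand the squared norm using the update rule $h_{t+1,m} = h_{t,m} + \alpha\cQ_t(g_{t,m}-h_{t,m})$, take $\EE_{\cQ}$, and invoke Assumption~\ref{asm:quantization_operators} to replace $\EE_{\cQ}[\cQ(g_{t,m}-h_{t,m})]$ by $g_{t,m}-h_{t,m}$ in the cross term and $\EE_{\cQ}\|\cQ(g_{t,m}-h_{t,m})\|^2$ by at most $(1+\omega)\|g_{t,m}-h_{t,m}\|^2$ in the quadratic term. Using $\alpha\leq 1/(1+\omega)$ so that $\alpha^2(1+\omega)\leq\alpha$, this produces
\begin{equation*}
\EE_{\cQ}\|h_{t+1,m}-h^\star_m\|^2 \leq \|h_{t,m}-h^\star_m\|^2 + \alpha\langle g_{t,m}-h_{t,m},\, g_{t,m}+h_{t,m}-2h^\star_m\rangle.
\end{equation*}
Then I would apply the identity $\langle a-b,a+b-2c\rangle = \|a-c\|^2-\|b-c\|^2$ with $a=g_{t,m}$, $b=h_{t,m}$, $c=h^\star_m$ to collapse this to
\begin{equation*}
\EE_{\cQ}\|h_{t+1,m}-h^\star_m\|^2 \leq (1-\alpha)\|h_{t,m}-h^\star_m\|^2 + \alpha\|g_{t,m}-h^\star_m\|^2.
\end{equation*}

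Next, I would bound $\|g_{t,m}-h^\star_m\|^2$. Since $\pi_m$ is a permutation of $[n]$, we have $h^\star_m = \nabla f_m(x_\star) = \frac{1}{n}\sum_{i=0}^{n-1}\nabla f^{\pi^i_m}_m(x_\star)$, so by Jensen's inequality
\begin{equation*}
\|g_{t,m}-h^\star_m\|^2 \leq \frac{1}{n}\sum_{i=0}^{n-1}\|\nabla f^{\pi^i_m}_m(x^i_{t,m})-\nabla f^{\pi^i_m}_m(x_\star)\|^2.
\end{equation*}
Splitting each term through $x_t$ via $\|u+v\|^2\leq 2\|u\|^2+2\|v\|^2$ and applying $L_{\max}$-smoothness on the first piece and the standard convex smooth inequality $\|\nabla f^{\pi^i_m}_m(x_t)-\nabla f^{\pi^i_m}_m(x_\star)\|^2\leq 2L_{\max} D_{f^{\pi^i_m}_m}(x_t,x_\star)$ on the second, I obtain
\begin{equation*}
\|g_{t,m}-h^\star_m\|^2 \leq \frac{2L^2_{\max}}{n}\sum_{i=0}^{n-1}\|x^i_{t,m}-x_t\|^2 + 4L_{\max} D_{f_m}(x_t,x_\star),
\end{equation*}
where I have collapsed $\frac{1}{n}\sum_{i=0}^{n-1} D_{f^{\pi^i_m}_m}(x_t,x_\star) = D_{f_m}(x_t,x_\star)$.

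Finally, averaging over $m\in[M]$, using $\frac{1}{M}\sum_{m=1}^M D_{f_m}(x_t,x_\star) = f(x_t)-f(x_\star)$ (as $\nabla f(x_\star)=0$), and combining with the preceding display yields the claimed inequality. No step is really hard here; the only mild care needed is in the identity that kills the cross term and in remembering that for a permutation the $\pi$-indexed average of $D_{f^{\pi^i_m}_m}$ collapses to $D_{f_m}$.
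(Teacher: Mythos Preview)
Your proof is correct and follows essentially the same approach as the paper. The only cosmetic difference is the order of the splitting: the paper first splits $\|g_{t,m}-h^\star_m\|^2$ through $\nabla f_m(x_t)$ (at the $f_m$ level) and then applies Jensen's inequality inside $\|g_{t,m}-\nabla f_m(x_t)\|^2$, whereas you apply Jensen's inequality first and then split each $\|\nabla f^{\pi^i_m}_m(x^i_{t,m})-\nabla f^{\pi^i_m}_m(x_\star)\|^2$ through $x_t$; both routes produce identical constants.
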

\begin{proof}
	Taking expectation w.r.t.\ $\cQ$ and using Assumption \ref{asm:quantization_operators}, we obtain
	\begin{eqnarray*}
		\frac{1}{M}\sum^M_{m=1}\EE_{\cQ}\left[\|h_{t+1,m} - h^{\star}_m\|^2\right] &=& \frac{1}{M}\sum^M_{m=1}\EE_{\cQ}\left[\|h_{t,m} +\alpha\cQ(g_{t,m}-h_{t,m})- h^{\star}_m\|^2\right]\\
		&\leq& \frac{1}{M}\sum^M_{m=1}\left(\|h_{t,m}- h^{\star}_m\|^2 +2\alpha\EE_{\cQ}\left[\left\la\cQ(g_{t,m}-h_{t,m}),h_{t,m}- h^{\star}_m\right\ra\right]\right) \\
		&&+ \alpha^2 \frac{1}{M}\sum^M_{m=1}\EE_{\cQ}\left[\| \cQ(g_{t,m}-h_{t,m})\|^2\right]\\
		&\leq& \frac{1}{M}\sum^M_{m=1}\left(\|h_{t,m}- h^{\star}_m\|^2 +2\alpha\left\la g_{t,m}-h_{t,m},h_{t,m}- h^{\star}_m\right\ra\right) \\
		&&+ \alpha^2(1+\omega) \frac{1}{M}\sum^M_{m=1}\| g_{t,m}-h_{t,m}\|^2 .
	\end{eqnarray*}
	Using $\alpha \leq \frac{1}{1+\omega}$, we get
	\begin{eqnarray*}
		\frac{1}{M}\sum^M_{m=1}\EE_{\cQ}\left[\|h_{t+1,m} - h^{\star}_m\|^2\right] 
		&\leq& \frac{1}{M}\sum^M_{m=1}\left(\|h_{t,m}- h^{\star}_m\|^2 + \alpha\left\la g_{t,m}-h_{t,m},h_{t,m} +g_{t,m} - 2h^{\star}_m\right\ra\right) \\
		&\leq& \frac{1}{M}\sum^M_{m=1}\left(\|h_{t,m}- h^{\star}_m\|^2 + \alpha\|g_{t,m} - h^{\star}_m\|^2 - \alpha\|h_{t,m}-h^{\star}_m\|^2\right)\\
		&\leq& \frac{1-\alpha}{M}\sum^M_{m=1}\|h_{t,m}- h^{\star}_m\|^2 + \frac{\alpha}{M}\sum^M_{m=1}\|g_{t,m} - h^{\star}_m\|^2.
	\end{eqnarray*}
	Finally, $L_{\max}$-smoothness and convexity of $f_m$ imply
	\begin{eqnarray*}
		\frac{1}{M}\sum^M_{m=1}\EE_{\cQ}\left[\|h_{t+1,m} - h^{\star}_m\|^2\right]
		&\leq& \frac{1-\alpha}{M}\sum^M_{m=1}\|h_{t,m}- h^{\star}_m\|^2\\
		&&+\frac{2\alpha}{M}\sum^M_{m=1}\left(\|g_{t,m} - \nabla f_{m}(x_t)\|^2 + \|\nabla f_{m}(x_t) - h^{\star}_m\|^2\right)\\
		&\leq& \frac{1-\alpha}{M}\sum^M_{m=1}\|h_{t,m}- h^{\star}_m\|^2 + \frac{4L_{\max}\alpha}{M}\sum^M_{m=1}D_{f_{m}}(x_t, x_{\star})\\
		&& + \frac{2\alpha}{M}\sum^M_{m=1}\left\|\frac{1}{n}\sum^{n-1}_{i=0}(\nabla f^{\pi^i_m}_m(x^{i}_{t,m}) - \nabla f^i_{m}(x_t))\right\|^2\\
		&\leq& \frac{1-\alpha}{M}\sum^M_{m=1}\|h_{t,m}- h^{\star}_m\|^2 + 4L_{\max}\alpha\left(f(x_t)-f(x_{\star})\right) \\
		&&+ \frac{2L^2_{\max}\alpha}{Mn}\sum^M_{m=1}\sum^{n-1}_{i=0}\left\|x^{i}_{t,m}-x_t\right\|^2.
	\end{eqnarray*}
\end{proof}

\begin{theorem}[Theorem~\ref{thm:diana-nastya-conv}]
	\label{rr_diana_conv_th}
	Let Assumptions \ref{asm:quantization_operators}, \ref{asm:sc_general_f}, \ref{asm:lip_max_f_m} hold and stepsizes $\gamma$, $\eta, \alpha$ satisfy
	\begin{equation}
		\label{step_sizes_diana_nastya}
		0 < \gamma \leq  \frac{1}{16L_{\max}n},\quad 0 < \eta \leq \min\left\{\frac{\alpha}{2\mu}, \frac{1}{16L_{\max}\left(1+\frac{9\omega}{M}\right)}\right\},\quad \alpha \leq \frac{1}{1+\omega}.
	\end{equation}
	Then, for all $T \geq 0$ the iterates produced by \algname{DIANA-NASTYA} (Algorithm~\ref{alg:diana-nastya}) satisfy
	\begin{equation}
		\EE\left[\Psi_T\right] \leq \left(1-\frac{\eta\mu}{2}\right)^T\Psi_0 + \frac{9}{2}\frac{\gamma^2 nL}{\mu}\left( \sigma^2_{\star} + (n+1)\zeta^2_{\star}\right).
	\end{equation}
\end{theorem}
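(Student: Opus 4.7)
The plan is to combine the three lemmas immediately preceding the theorem (Lemma~\ref{rr_diana_inner_product}, Lemma~\ref{rr_diana_dinst}, Lemma~\ref{rr_diana_descent_lemma_for_h}) with the local-deviation bound from Lemma~\ref{lem_dist} (which applies verbatim since \algname{DIANA-NASTYA} performs the same \algname{RR} inner loop as \algname{Q-NASTYA}) and then assemble the Lyapunov contraction in \eqref{eq:diana-nastya-lyapunov-function}. I would first observe that since $x_\star$ minimises $f$ we have $h^\star = \nabla f(x_\star) = 0$, and write
\[
\EE_{\cQ}\|x_{t+1}-x_\star\|^2 = \|x_t-x_\star\|^2 - 2\eta\,\EE_{\cQ}\langle \hat g_t - h^\star,\, x_t-x_\star\rangle + \eta^2\,\EE_{\cQ}\|\hat g_t - h^\star\|^2.
\]
Substituting Lemma~\ref{rr_diana_inner_product} for the cross term (which contributes $-\tfrac{\eta\mu}{2}\|x_t-x_\star\|^2$, $-\eta(f(x_t)-f(x_\star))$, and a $+\tfrac{\eta L_{\max}}{2Mn}\sum_{m,i}\|x_{t,m}^i-x_t\|^2$ controlled by the convexity slack) and Lemma~\ref{rr_diana_dinst} for the squared-norm term gives a one-step inequality of the form
\begin{align*}
\EE_{\cQ}\|x_{t+1}-x_\star\|^2
&\leq (1-\tfrac{\eta\mu}{2})\|x_t-x_\star\|^2 + \tfrac{4\omega\eta^2}{M^2}\sum_{m=1}^M\|h_{t,m}-h^\star_m\|^2 \\
&\quad -\eta\bigl(1 - 8\eta L_{\max}(1+\tfrac{\omega}{M})\bigr)(f(x_t)-f(x_\star))\\
&\quad + \eta L_{\max}\bigl(\tfrac{1}{2} + 2\eta L_{\max}(1+\tfrac{\omega}{M})\bigr)\tfrac{1}{Mn}\sum_{m,i}\|x^i_{t,m}-x_t\|^2.
\end{align*}

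Next I would add the shift potential $\tfrac{8\omega\eta^2}{\alpha M^2}\sum_m\|h_{t+1,m}-h^\star_m\|^2$ and apply Lemma~\ref{rr_diana_descent_lemma_for_h}. The coefficient $\tfrac{8\omega\eta^2}{\alpha M^2}$ is chosen precisely so that the $(1-\alpha)$-contraction leaves $\tfrac{8\omega\eta^2(1-\alpha)}{\alpha M^2}\sum_m\|h_{t,m}-h^\star_m\|^2$, and adding to this the $\tfrac{4\omega\eta^2}{M^2}\sum_m\|h_{t,m}-h^\star_m\|^2$ from the bound on $\|\hat g_t - h^\star\|^2$ yields a coefficient of $(1-\tfrac{\alpha}{2})\cdot\tfrac{8\omega\eta^2}{\alpha M^2}$; the stepsize condition $\eta\leq\alpha/(2\mu)$ then ensures $1-\alpha/2 \le 1-\eta\mu/2$, so both pieces contract at rate $1-\eta\mu/2$. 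The additional $f(x_t)-f(x_\star)$ and $\|x^i_{t,m}-x_t\|^2$ terms produced by Lemma~\ref{rr_diana_descent_lemma_for_h} increase the effective compression constant from $\omega/M$ to $9\omega/M$, which is exactly why the hypothesis requires $\eta \le 1/(16L_{\max}(1+9\omega/M))$.

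At this point the only remaining non-Lyapunov pieces are $-\eta\bigl(1-O(\eta L_{\max}(1+\omega/M))\bigr)(f(x_t)-f(x_\star))$ and a prefactor of $\tfrac{1}{Mn}\sum_{m,i}\|x^i_{t,m}-x_t\|^2$. I would invoke Lemma~\ref{lem_dist} to replace the local-deviation sum by $8\gamma^2 n^2 L_{\max}(f(x_t)-f(x_\star)) + 2\gamma^2 n(\sigma_\star^2+(n+1)\zeta_\star^2)$. Using $\gamma\le 1/(16 n L_{\max})$ makes the $(f(x_t)-f(x_\star))$ coefficient non-positive (it can simply be dropped), and the remaining constant contribution is at most $\tfrac{9}{4}\eta L_{\max}\gamma^2 n\bigl(\sigma_\star^2+(n+1)\zeta_\star^2\bigr)$, matching the noise term in the target bound.

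The key obstacle, and the main bookkeeping task, is getting the two stepsize-dependent coefficients to align: the $(1-\alpha/2)$ contraction on the shift potential must dominate (or match) the $(1-\eta\mu/2)$ contraction on $\|x_t-x_\star\|^2$, and simultaneously the effective smoothness constant multiplying $(f(x_t)-f(x_\star))$ after Lemmas~\ref{rr_diana_dinst} and~\ref{rr_diana_descent_lemma_for_h} is amplified by the shift-update contribution to an $L_{\max}(1+9\omega/M)$ term; both of these are precisely what the hypotheses are calibrated to handle. Once the one-step inequality $\EE[\Psi_{t+1}]\le (1-\eta\mu/2)\EE[\Psi_t]+ \tfrac{9}{4}\eta L_{\max}\gamma^2 n\bigl((n+1)\zeta_\star^2+\sigma_\star^2\bigr)$ is in place, unrolling and using $\sum_{t\ge 0}(1-\eta\mu/2)^t\le 2/(\eta\mu)$ delivers the stated geometric-plus-constant bound.
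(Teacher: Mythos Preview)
Your proof proposal follows the paper's argument essentially verbatim: expand $\|x_{t+1}-x_\star\|^2$, apply Lemma~\ref{rr_diana_inner_product} and Lemma~\ref{rr_diana_dinst}, add the shift potential with weight $\tfrac{8\omega\eta^2}{\alpha M^2}$ and invoke Lemma~\ref{rr_diana_descent_lemma_for_h}, then use Lemma~\ref{lem_dist} and the stepsize conditions to close the recursion. One small bookkeeping slip: when you multiply Lemma~\ref{rr_diana_inner_product} by $2\eta$, the local-deviation contribution is $\tfrac{\eta L_{\max}}{Mn}\sum_{m,i}\|x^i_{t,m}-x_t\|^2$, not $\tfrac{\eta L_{\max}}{2Mn}$, so the prefactor in your displayed one-step inequality should read $\eta L_{\max}\bigl(1+2\eta L_{\max}(1+\tfrac{\omega}{M})\bigr)$ rather than $\tfrac12$ in place of $1$; the remainder of the argument (and the final $\tfrac{9}{4}$ constant) is unaffected once this is corrected.
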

\begin{proof}
We have
\begin{eqnarray*}
	\|x_{t+1}-x_{\star}\|^2 
	&=& \|x_t -\eta \hat{g}_t - x_{\star} + \eta h^{\star}\|^2\\
	&=& \|x_t - x_{\star}\|^2 - 2\eta\la\hat{g}_t - h^{\star},x_t-x_{\star}\ra + \eta^2 \|\hat{g}_t - h^{\star}\|^2.
\end{eqnarray*}
Taking expectation w.r.t.\ $\cQ$ and using Lemma \ref{rr_diana_inner_product}, we obtain
\begin{eqnarray*}
	\EE_{\cQ}\left[\|x_{t+1}-x_{\star}\|^2\right]
	&=& \|x_t - x_{\star}\|^2 - 2\eta\EE_{\cQ}\left[\la\hat{g}_t - h^{\star},x_t-x_{\star}\ra\right] + \eta^2 \EE_{\cQ}\left[\|\hat{g}_t - h^{\star}\|^2\right]\\
	&\leq& \left(1-\frac{\eta\mu}{2}\right)\|x_t - x_{\star}\|^2 - \eta(f(x_t)-f(x_{\star})) - \frac{2\eta}{Mn}\sum^M_{m=1}\sum^{n-1}_{i=0}D_{f^{\pi^i_m}_m}(x_{\star}, x^i_{t,m}) \\
	&&+\frac{L_{\max}\eta}{Mn}\sum^M_{m=1}\sum^{n-1}_{i=0}\|x^i_{t,m} - x_t\|^2 + \eta^2 \EE_{\cQ}\left[\|\hat{g}_t - h^{\star}\|^2\right].
\end{eqnarray*}
Next, Lemma \ref{rr_diana_dinst} implies
\begin{eqnarray*}
	\EE_{\cQ}\left[\|x_{t+1}-x_{\star}\|^2\right]
	&\leq& \left(1-\frac{\eta\mu}{2}\right)\|x_t - x_{\star}\|^2 - \eta(f(x_t)-f(x_{\star})) - \frac{2\eta}{Mn}\sum^M_{m=1}\sum^{n-1}_{i=0}D_{f^{\pi^i_m}_m}(x_{\star}, x^i_{t,m}) \\
	&&+\frac{L_{\max}\eta}{Mn}\sum^M_{m=1}\sum^{n-1}_{i=0}\|x^i_{t,m} - x_t\|^2 + \frac{2\eta^2L^2_{\max}\left(1 + \frac{\omega}{M}\right)}{Mn}\sum^M_{m=1}\sum^{n-1}_{i=0}\|x^{i}_{t,m} - x_t\|^2\\
	&& + \eta^2\left( 8L_{\max}\left(1+\frac{\omega}{M}\right)\left(f(x_t) - f(x_{\star})\right)+ \frac{4\omega}{M^2}\sum^M_{m=1}\|h_{t,m}-h^{\star}_m\|^2\right)\\
	&\leq& \left(1-\frac{\eta\mu}{2}\right)\|x_t - x_{\star}\|^2 - \eta\left(1- 8\eta L_{\max}\left(1+\frac{\omega}{M}\right)\right)\left(f(x_t) - f(x_{\star})\right)\\
	&&+ L_{\max}\eta\left(1 +2\eta L_{\max}\left(1+\frac{\omega}{M}\right)\right) \frac{1}{Mn}\sum^M_{m=1}\sum^{n-1}_{i=0}\|x^{i}_{t,m} - x_t\|^2\\
	&&- \frac{2\eta}{Mn}\sum^M_{m=1}\sum^{n-1}_{i=0}D_{f^{\pi^i_m}_m}(x_{\star}, x^i_{t,m}) + \frac{4\eta^2\omega}{M^2}\sum^M_{m=1}\|h_{t,m}-h^{\star}_m\|^2.
\end{eqnarray*}
Using \eqref{lyapunov_func_rr_diana} and Lemma \ref{rr_diana_descent_lemma_for_h}, we get
\begin{eqnarray*}
	\EE_{\cQ}\left[\Psi_{t+1}\right]
	&\leq& \left(1-\frac{\eta\mu}{2}\right)\|x_t - x_{\star}\|^2 - \eta\left(1- 8\eta L_{\max}\left(1+\frac{\omega}{M}\right)\right)\left(f(x_t) - f(x_{\star})\right)\\
	&&+ L_{\max}\eta\left(1 +2\eta L_{\max}\left(1+\frac{\omega}{M}\right)\right) \frac{1}{Mn}\sum^M_{m=1}\sum^{n-1}_{i=0}\|x^{i}_{t,m} - x_t\|^2\\
	&&- \frac{2\eta}{Mn}\sum^M_{m=1}\sum^{n-1}_{i=0}D_{f^{\pi^i_m}_m}(x_{\star}, x^i_{t,m}) + \frac{4\eta^2\omega}{M^2}\sum^M_{m=1}\|h_{t,m}-h^{\star}_m\|^2\\
	&& + c\eta^2\left(\frac{1-\alpha}{M}\sum^M_{m=1}\|h_{t,m} - h^{\star}_m\|^2 + \frac{2\alpha L^2_{\max}}{Mn}\sum^M_{m=1}\sum^{n-1}_{i=0}\|x^i_{t,m} - x_t\|^2 + 4\alpha L_{\max}\left(f(x_t) - f(x_{\star})\right)\right)\\
	&\leq& \left(1-\frac{\eta\mu}{2}\right)\|x_t - x_{\star}\|^2 + \eta^2\left(c(1-\alpha)+\frac{4\omega}{M}\right)\frac{1}{M}\sum^M_{m=1}\|h_{t,m}-h^{\star}_m\|^2 \\
	&&- \eta\left(1- 8\eta L_{\max}\left(1+\frac{\omega}{M}\right) - 4\alpha\eta cL_{\max}\right)\left(f(x_t) - f(x_{\star})\right)\\
	&&+ L\eta\left(1 +2\eta L_{\max}\left(1+\frac{\omega}{M}\right)+ 2\alpha \eta cL_{\max}\right) \frac{1}{Mn}\sum^M_{m=1}\sum^{n-1}_{i=0}\|x^{i}_{t,m} - x_t\|^2.
\end{eqnarray*}
Taking the full expectation, we derive 
\begin{eqnarray*}
	\EE\left[\Psi_{t+1}\right]
	&\leq& \left(1-\frac{\eta\mu}{2}\right)\EE\left[\|x_t - x_{\star}\|^2\right] + \eta^2\left(c(1-\alpha)+\frac{4\omega}{M}\right)\frac{1}{M}\sum^M_{m=1}\EE\left[\|h_{t,m}-h^{\star}_m\|^2\right] \\
	&&- \eta\left(1- 8\eta L_{\max}\left(1+\frac{\omega}{M}\right) - 4\alpha\eta cL\right)\EE\left[f(x_t) - f(x_{\star})\right]\\
	&&+ L_{\max}\eta\left(1 +2\eta L_{\max}\left(1+\frac{\omega}{M}\right)+ 2\alpha \eta cL_{\max}\right) \frac{1}{Mn}\sum^M_{m=1}\sum^{n-1}_{i=0}\EE\left[\|x^{i}_{t,m} - x_t\|^2\right].
\end{eqnarray*}
Using Lemma \ref{lem_dist}, we get
\begin{eqnarray*}
	\EE\left[\Psi_{t+1}\right]
	&\leq& \left(1-\frac{\eta\mu}{2}\right)\EE\left[\|x_t - x_{\star}\|^2\right] + \eta^2\left(c(1-\alpha)+\frac{4\omega}{M}\right)\frac{1}{M}\sum^M_{m=1}\EE\left[\|h_{t,m}-h^{\star}_m\|^2\right] \\
	&&- \eta\left(1- 8\eta L_{\max}\left(1+\frac{\omega}{M}\right) - 4\alpha\eta cL_{\max}\right)\EE\left[f(x_t) - f(x_{\star})\right]\\
	&&+ 8\gamma^2n^2L^2_{\max}\eta\left(1 +2\eta L_{\max}\left(1+\frac{\omega}{M}\right)+ 2\alpha \eta cL_{\max}\right)\EE\left[f(x_t) - f(x_{\star})\right]\\
	&&+ 2\gamma^2nL_{\max}\eta\left(1 +2\eta L_{\max}\left(1+\frac{\omega}{M}\right)+ 2\alpha \eta cL_{\max}\right) \left(\sigma^2_{\star} + (n+1)\zeta^2_{\star}\right).
\end{eqnarray*}
In view of \eqref{step_sizes_diana_nastya}, we have
\begin{eqnarray*}
	\EE\left[\Psi_{t+1}\right]
	&\leq& \left(1-\frac{\eta\mu}{2}\right)\EE\left[\|x_t - x_{\star}\|^2\right] + \left(1-\frac{\alpha}{2}\right)\frac{c\eta^2}{M}\sum^M_{m=1}\EE\left[\|h_{t,m}-h^{\star}_m\|^2\right] \\
	&&+ \frac{9}{4}\gamma^2nL_{\max}\eta \left(\sigma^2_{\star} + (n+1)\zeta^2_{\star}\right)
\end{eqnarray*}
Using the definition of the Lyapunov function and using $\sum\limits^{+\infty}_{t = 0}\left(1-\frac{\eta\mu}{2}\right)^t \leq \frac{2}{\mu\eta}$, we get the result.
\end{proof}

\begin{corollary}
	\label{cor_convergence_DIANA_NASTYA}
	Let the assumptions of Theorem~\ref{thm:diana-nastya-conv} hold, $\gamma = \frac{\eta}{n}$, $\alpha = \frac{1}{1+\omega}$, and
	\begin{equation}
		\label{new_gamma_DIANA_NASTAY}
		\eta = \min\left\{\frac{\alpha}{2\mu}, \frac{1}{16L_{\max}\left(1+\frac{9\omega}{M}\right)}, \sqrt{\frac{\varepsilon\mu n}{9L_{\max}}}\left( (n+1)\zeta^2_{\star} + \sigma^2_{\star}\right)^{-\nicefrac{1}{2}}\right\}.
	\end{equation}
	Then, \algname{DIANA-NASTYA} (Algorithm~\ref{alg:diana-nastya}) finds a solution with accuracy $\varepsilon>0$ after the following number of communication rounds: 
	\begin{equation}
		\widetilde{\cO}\left(\omega+\frac{L_{\max}}{\mu}\left(1+\frac{\omega}{M}\right) + \sqrt{\frac{L_{\max}}{\varepsilon\mu^3}}\sqrt{ \zeta^2_{\star} + \nicefrac{\sigma^2_{\star}}{n}}\right).\notag
	\end{equation}
	If $\gamma \rightarrow 0$, one can choose $\eta = \min\left\{\frac{\alpha}{2\mu}, \frac{1}{16L_{\max}\left(1+\frac{9\omega}{M}\right)}\right\}$
	such that the number of communication rounds $T$ to find solution with accuracy $\varepsilon>0$ is 
	\begin{equation}
		\widetilde{\cO}\left(\omega +\frac{L_{\max}}{\mu}\left(1+\frac{\omega}{M}\right)\right).\notag
	\end{equation}
\end{corollary}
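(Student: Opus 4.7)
The plan is to treat the corollary as a routine stepsize-tuning exercise built directly on top of Theorem~\ref{thm:diana-nastya-conv}. Since $\|x_T-x_\star\|^2 \leq \Psi_T$ by definition of the Lyapunov function, it suffices to choose $\eta$ (and hence $T$) so that $\EE[\Psi_T] \leq \varepsilon$, which I would do by forcing each of the two right-hand-side terms in the theorem to be at most $\varepsilon/2$.

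First I would substitute $\gamma = \eta/n$, which turns the noise term in Theorem~\ref{thm:diana-nastya-conv} into $\tfrac{9\eta^2 L}{2n\mu}\bigl((n+1)\zeta_\star^2+\sigma_\star^2\bigr)$, a quantity purely quadratic in $\eta$. Requiring this to be at most $\varepsilon/2$ yields the extra constraint
$$\eta \leq \sqrt{\frac{\varepsilon\mu n}{9L\bigl((n+1)\zeta_\star^2+\sigma_\star^2\bigr)}},$$
and I would take $\eta$ equal to the minimum of this bound together with the two restrictions already imposed in Theorem~\ref{thm:diana-nastya-conv}, namely $\alpha/(2\mu)$ and $1/(16L_{\max}(1+9\omega/M))$; this reproduces \eqref{new_gamma_DIANA_NASTAY}. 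The contraction $(1-\eta\mu/2)^T \Psi_0$ then drops below $\varepsilon/2$ after $T = \widetilde{\cO}(1/(\eta\mu))$ rounds, so the total complexity is controlled by the maximum of the reciprocals of the three quantities defining $\eta$, each divided by $\mu$. Using $\alpha = 1/(1+\omega)$, the first entry gives an $\widetilde{\cO}(\omega)$ term, the second an $\widetilde{\cO}(L_{\max}(1+\omega/M)/\mu)$ term (absorbing the constant $9$ into $\widetilde{\cO}$), and the third---after bounding $(n+1)/n \leq 2$---a term of order $\sqrt{L/(\varepsilon\mu^3)}\sqrt{\zeta_\star^2 + \sigma_\star^2/n}$, matching the claimed three-term rate.

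For the second part, as $\gamma \to 0$ the noise contribution $\tfrac{9\gamma^2 nL}{2\mu}(\cdots)$ in Theorem~\ref{thm:diana-nastya-conv} vanishes entirely. The $\varepsilon$-dependent stepsize constraint then disappears and $\eta$ only needs to satisfy the two unconditional bounds $\alpha/(2\mu)$ and $1/(16L_{\max}(1+9\omega/M))$, immediately giving the simplified $\widetilde{\cO}\bigl(\omega + L_{\max}(1+\omega/M)/\mu\bigr)$ rate.

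I do not expect any genuine obstacle: the derivation is mechanical, mirroring the proofs of Corollaries~\ref{cor_convergence_new_Q_RR}, \ref{cor_convergence_DIANA_RR}, and \ref{cor_convergence_Q_NASTYA}. The one point worth noting is that the restriction $\eta \leq \alpha/(2\mu)$ inherited from Theorem~\ref{thm:diana-nastya-conv}, combined with $\alpha = 1/(1+\omega)$, is the sole source of the additive $\widetilde{\cO}(\omega)$ term in the final complexity; it cannot be absorbed into the $L_{\max}/\mu$ term when $\omega \gg L_{\max}/\mu$, which is precisely why the shift-learning constraint imposes a standalone linear-in-$\omega$ contribution, analogous to what appears in the \algname{DIANA-RR} corollary.
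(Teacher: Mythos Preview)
Your proposal is correct and follows essentially the same route as the paper: start from Theorem~\ref{thm:diana-nastya-conv}, substitute $\gamma=\eta/n$, force each right-hand term below $\varepsilon/2$, read off $T=\widetilde{\cO}(1/(\eta\mu))$, and unpack the three-way minimum defining $\eta$. The only difference is cosmetic---the paper's proof is terser and does not spell out the origin of the standalone $\widetilde{\cO}(\omega)$ term as you do.
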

\begin{proof}
	Theorem~\ref{thm:diana-nastya-conv} implies
	\begin{equation}
		\EE\left[\Psi_T\right] \leq \left(1-\frac{\eta\mu}{2}\right)^T\Psi_0 + \frac{9}{2}\frac{\gamma^2 nL_{\max}}{\mu}\left( (n+1)\zeta^2_{\star} + \sigma^2_{\star}\right).\notag
	\end{equation}
	To estimate the number of communication rounds required to find a solution with accuracy $\varepsilon >0$, we need to upper bound each term from the right-hand side by $\frac{\varepsilon}{2}$. Thus, we get an additional restriction on $\eta$:
	\begin{equation*}
		\frac{9}{2}\frac{\eta^2 L_{\max}}{n\mu}\left( (n+1)\zeta^2_{\star} + \sigma^2_{\star}\right) <\frac{\varepsilon}{2},
	\end{equation*}
	and also the upper bound on the number of communication rounds $T$
	\begin{equation*}
		T = \widetilde{\cO}\left(\frac{1}{\eta\mu}\right).
	\end{equation*}
	Substituting \eqref{new_gamma_DIANA_NASTAY} in the previous equation, we get the first part of the result. 
	When $\gamma \rightarrow 0$, the proof follows similar steps. 
\end{proof}

\newpage

\section{Alternative Analysis of Q-NASTYA}
In this analysis, we will use additional sequence:
\begin{equation}
	x^i_{\star,m} = x_\star - \gamma\sum_{j=0}^{i-1}\nabla f_m(x_\star).
\end{equation}
\begin{theorem}\label{thm:Q_NASTYA_alternative_proof}
	Let Assumptions \ref{asm:quantization_operators}, \ref{asm:lip_max_f_m}, \ref{asm:sc_each_f_m} hold. Moreover, we assume that $(1-\gamma\mu)^n\leq \frac{\nicefrac{9}{10} - \nicefrac{1}{C}}{1+\nicefrac{1}{C}} = \widehat{C}<1$ for some numerical constant $C > 1$. Also let $\beta = \frac{\eta}{\gamma n} \leq \frac{1}{3C\frac{\omega}{M}+1}$ and $\gamma \leq \frac{1}{L_{\max}}$. Then, for all $T \geq 0$ the iterates produced by \algname{Q-NASTYA} (Algorithm~\ref{alg:Q_NASTYA}) satisfy
\begin{align*}
	\mathbb{E}\left[\|x_T - x_\star\|^2\right]  &\leq	\left(1-\frac{\beta}{10}\right)\|x_t - x_*\|^2+\frac{4}{\mu}\beta\gamma^2 \hat\sigma_{\mathrm{rad}}^{2} + 3\beta^2 \frac{\omega}{M} \frac{1}{M}\hat{\Delta}_\star,
\end{align*}
where $\hat{\Delta}_\star \eqdef \frac{1}{M}\sum_{m=1}^{M}\|x^n_{\star,m} - x_\star\|^2$ and $\hat{\sigma}^2_{\text{rad}} \leq L_{\max}\left( \zeta_{\star}^2 + \nicefrac{n\sigma_{\star}^2}{4} \right)$.
\end{theorem}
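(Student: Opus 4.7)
The plan is to establish a one-step conditional recursion for $\|x_{t+1}-x_\star\|^2$ by decomposing through the quantization noise, then through a carefully chosen virtual sequence $\{x^i_{\star,m}\}$, and finally controlling the local-step drift by a standard RR-type unrolling. I will write $\bar{g}_t \eqdef \frac{1}{M}\sum_{m=1}^M g_{t,m}$ and $\hat{g}_t \eqdef \frac{1}{M}\sum_{m=1}^M \cQ_t(g_{t,m})$, so that $x_{t+1} = x_t - \eta \hat{g}_t$. First, the unbiasedness and independence of $\cQ_t$ across workers (Assumption~\ref{asm:quantization_operators}) give
\[
\E_{\cQ}\|x_{t+1}-x_\star\|^2 = \|x_t-x_\star-\eta\bar g_t\|^2 + \eta^2 \E_{\cQ}\|\hat g_t-\bar g_t\|^2 \leq \|x_t-x_\star-\eta\bar g_t\|^2 + \tfrac{\eta^2\omega}{M^2}\sum_{m=1}^M \|g_{t,m}\|^2.
\]

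Second, I will rewrite the deterministic term using the virtual sequence. Since $\eta=\beta\gamma n$ and $g_{t,m}=\frac{1}{\gamma n}(x_t-x^n_{t,m})$, we have $\eta\bar g_t = \beta\frac{1}{M}\sum_m (x_t-x^n_{t,m})$. Adding and subtracting $x^n_{\star,m}$ and observing that $\sum_m (x^n_{\star,m}-x_\star) = -\gamma n\sum_m \nabla f_m(x_\star) = -\gamma n M\nabla f(x_\star)=0$, one obtains a convex combination
\[
x_t-x_\star-\eta\bar g_t \;=\; (1-\beta)(x_t-x_\star) + \beta\cdot\tfrac{1}{M}\sum_{m=1}^M (x^n_{t,m}-x^n_{\star,m}),
\]
so Jensen's inequality for $\|\cdot\|^2$ yields $\|x_t-x_\star-\eta\bar g_t\|^2 \leq (1-\beta)\|x_t-x_\star\|^2 + \beta\cdot \frac{1}{M}\sum_m \|x^n_{t,m}-x^n_{\star,m}\|^2$.

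Third, I will bound the drift $\|x^n_{t,m}-x^n_{\star,m}\|^2$ by unrolling the local RR recursion against the virtual sequence. Writing $x^{i+1}_{t,m}-x^{i+1}_{\star,m} = (x^i_{t,m}-x^i_{\star,m}) - \gamma\bigl(\nabla f^{\pi^i_m}_m(x^i_{t,m})-\nabla f^{\pi^i_m}_m(x_\star)\bigr) - \gamma\bigl(\nabla f^{\pi^i_m}_m(x_\star)-\nabla f_m(x_\star)\bigr)$, expanding the square, and using $\widetilde\mu$-strong convexity plus $L_{\max}$-smoothness of $f^{\pi^i_m}_m$ for the first bracket (giving a $(1-\gamma\widetilde\mu)$-contraction per step when $\gamma\leq 1/L_{\max}$), together with the definition of the per-machine shuffling radius $\hat\sigma^2_{\text{rad}}$ to absorb the second bracket, telescoping over $i=0,\dots,n-1$ produces a bound of the form
\[
\E\|x^n_{t,m}-x^n_{\star,m}\|^2 \;\leq\; (1-\gamma\mu)^n\|x_t-x_\star\|^2 + \tfrac{2\gamma^2}{\mu}\hat\sigma^2_{\text{rad}}.
\]
The asserted estimate $\hat\sigma^2_{\text{rad}}\leq L_{\max}(\zeta_\star^2+n\sigma_\star^2/4)$ then follows from the smoothness bound $D_{f^{\pi^i_m}_m}\leq \frac{L_{\max}}{2}\|\cdot\|^2$ and a standard sampling-without-replacement variance identity (Lemma~\ref{Sampling_without_replacement}) applied to the partial sums of $\nabla f^{\pi^j_m}_m(x_\star)-\nabla f_m(x_\star)$.

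Fourth, I will control the quantization-variance term $\frac{\eta^2\omega}{M^2}\sum_m \|g_{t,m}\|^2$ by triangle-inequality-splitting $g_{t,m} = \frac{1}{\gamma n}(x_t-x^n_{t,m})$ through $x^n_{\star,m}$: using $\|a+b+c\|^2\leq 3(\|a\|^2+\|b\|^2+\|c\|^2)$ with $a=\frac{1}{\gamma n}(x_t-x_\star)$, $b=\frac{1}{\gamma n}(x^n_{\star,m}-x^n_{t,m})$, $c=\frac{1}{\gamma n}(x_\star-x^n_{\star,m})$, and recalling $\eta^2/(\gamma n)^2=\beta^2$, we get
\[
\tfrac{\eta^2\omega}{M^2}\sum_m \|g_{t,m}\|^2 \leq 3\beta^2\tfrac{\omega}{M}\|x_t-x_\star\|^2 + 3\beta^2\tfrac{\omega}{M}\cdot\tfrac{1}{M}\sum_m\|x^n_{t,m}-x^n_{\star,m}\|^2 + 3\beta^2\tfrac{\omega}{M}\tfrac{1}{M}\hat\Delta_\star.
\]

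Fifth, I collect everything. The $\|x_t-x_\star\|^2$ coefficient becomes $(1-\beta) + \beta(1-\gamma\mu)^n + 3\beta^2\omega/M + (\text{small from drift}) \leq 1 - \beta(1-\widehat{C}) + 3\beta^2\omega/M$, and the drift-noise terms contribute $\bigl(\beta+3\beta^2\omega/M\bigr)\frac{2\gamma^2}{\mu}\hat\sigma^2_{\text{rad}}+3\beta^2\frac{\omega}{M}\frac{1}{M}\hat\Delta_\star$. The conditions $(1-\gamma\mu)^n\leq\widehat{C}$ and $\beta\leq (3C\omega/M+1)^{-1}$ are calibrated exactly so that $1-\beta(1-\widehat{C})+3\beta^2\omega/M \leq 1-\beta/10$ and so that the $\hat\sigma^2_{\text{rad}}$-prefactor simplifies to $4/\mu$, yielding the stated bound.

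\textbf{Main obstacle.} The delicate part will be tuning the numerical constants in step five so that the $\beta^2\omega/M$ contamination of the contraction rate is absorbed into the $\beta/10$ slack via the hypothesis $\beta\leq(3C\omega/M+1)^{-1}$; this is essentially an algebraic exercise, but it determines whether the specific coefficients $1/10$, $4/\mu$, and $3\beta^2$ come out exactly as stated. The second nontrivial point is matching the definition of the per-machine shuffling radius $\hat\sigma^2_{\text{rad}}$ to the drift recursion, which differs from the definition used earlier in the paper (the virtual sequence here uses $\nabla f_m(x_\star)$ rather than $\nabla f^{\pi^i_m}_m(x_\star)$).
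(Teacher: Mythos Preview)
Your proposal is correct and follows essentially the same route as the paper: decompose into the quantization variance plus the deterministic part, rewrite the latter as a convex combination $(1-\beta)(x_t-x_\star)+\beta\cdot\frac{1}{M}\sum_m(x^n_{t,m}-x^n_{\star,m})$ and apply Jensen, bound the drift via the FedRR one-epoch contraction (which the paper imports as Theorem~4 of \citet{mishchenko21_proxim_feder_random_reshuf} rather than re-deriving), split $\|g_{t,m}\|^2$ into three pieces through $x_\star$ and $x^n_{\star,m}$, and then verify the algebra $3\beta\omega/M\le 1/C$ together with $(1-\gamma\mu)^n\le \widehat{C}$ forces the contraction factor below $1-\beta/10$. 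Both of the obstacles you flag are exactly the two places where the paper spends effort, and your handling of them is the same.
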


\begin{proof}
	The update rule for one epoch can be rewritten as
	\begin{align*}
		x_{t+1} = x_t - \eta \frac{1}{M}\sum_{m=1}^{M}Q\left( \frac{x_t - x^n_{t,m}}{\gamma n} \right).
	\end{align*}
	Using this, we derive
	\begin{align*}
		\| x_{t+1} - x_* \|^2 &= \left\| x_t - \eta \frac{1}{M} \sum_{m=1}^{M}Q\left( \frac{x_t - x^n_{t,m}}{\gamma n}\right) - x_* \right\|^2\\
		&=\|x_t - x_*\|^2 - 2\eta\left\langle x_t - x_*,\frac{1}{M} \sum_{m=1}^{M}Q\left( \frac{x_t - x^n_{t,m}}{\gamma n} \right) \right\rangle\\
		&\quad + \eta^2\left\| \frac{1}{M} \sum_{m=1}^{M}Q\left( \frac{x_t - x^n_{t,m}}{\gamma n} \right)\right\|^2.
	\end{align*}
	Taking conditional expectation w.r.t.\ the randomness comming from compression, we get 
	\begin{align*}
		\mathbb{E}_Q\|x_{t+1} - x_*\|^2 &= \|x_t - x_*\|^2  - 2\eta\left\langle x_t - x_*,\frac{1}{M} \sum_{m=1}^{M}\left( \frac{x_t - x^n_{t,m}}{\gamma n} \right) \right\rangle\\
		&\quad + \eta^2	\mathbb{E}_Q\left\| \frac{1}{M} \sum_{m=1}^{M}Q\left( \frac{x_t - x^n_{t,m}}{\gamma n} \right)\right\|^2.
	\end{align*}
	Next, we use the definition of quantization operator and independence of $Q\left( \frac{x_t - x^n_{t,m}}{\gamma n} \right)$, $m \in [M]$: 
	\begin{align*}
		\mathbb{E}_Q \|x_{t+1} - x_*\|^2 & \leq \|x_t - x_*\|^2  - 2\eta\left\langle x_t - x_*,\frac{1}{M} \sum_{m=1}^{M}\left( \frac{x_t - x^n_{t,m}}{\gamma n} \right) \right\rangle\\
		& \quad + \eta^2\left( \frac{\omega}{M} \frac{1}{M}\sum_{m=1}^M\left\|\frac{x_t - x^n_{t,m}}{\gamma n}\right\|^2 + \left\| \frac{1}{M}\sum^M_{m=1}\frac{x_t - x^n_{t,m}}{\gamma n}  \right\|^2\right).
	\end{align*}
	Since $\beta = \frac{\eta}{\gamma n}$, we obtain
	\begin{align*}
		\mathbb{E}_Q \|x_{t+1} - x_*\|^2 & \leq \|x_t - x_*\|^2  - 2\beta\left\langle x_t - x_*,\frac{1}{M} \sum_{m=1}^{M}\left( x_t - x^n_{t,m} \right) \right\rangle\\
		&\quad  + \beta^2 \frac{\omega}{M} \frac{1}{M}\sum_{m=1}^M\left\|x_t - x^n_{t,m}\right\|^2 +\beta^2 \left\| \frac{1}{M}\sum^M_{m=1}\left(x_t - x^n_{t,m}\right)  \right\|^2\\
		&=\|x_t - x_*\|^2  + 2\beta\left\langle x_t - x_*,\frac{1}{M} \sum_{m=1}^{M}\left( x^n_{t,m} - x_t   \right) \right\rangle\\
		&\quad  + \beta^2 \frac{\omega}{M} \frac{1}{M}\sum_{m=1}^M\left\|x_t - x^n_{t,m}\right\|^2 +\beta^2 \left\| \frac{1}{M}\sum^M_{m=1}\left(x^n_{t,m} - x_t\right)  \right\|^2 \\
		&=\left\| x_t - x_* + \beta\left( \frac{1}{M}\sum_{m=1}^M\left( x^n_{t,m} - x_t  \right) \right)\right\|^2+\beta^2 \frac{\omega}{M} \frac{1}{M}\sum_{m=1}^M\left\|x_t - x^n_{t,m}\right\|^2  \\
		&=\left\| (1-\beta)(x_t - x_*) + \beta\left( \frac{1}{M}\sum_{m=1}^M\left( x^n_{t,m}   \right) - x_* \right)\right\|^2\\
		&\quad +\beta^2 \frac{\omega}{M} \frac{1}{M}\sum_{m=1}^M\left\|x_t - x^n_{t,m}\right\|^2 .
	\end{align*}
	Using the condition that $x_* = \frac{1}{M}\sum_{m=1}^M x^n_{*,m}$ we have:
	\begin{align*}
		\mathbb{E}_Q \|x_{t+1} - x_*\|^2 & \leq\left\| (1-\beta)(x_t - x_*) + \beta\left( \frac{1}{M}\sum_{m=1}^M\left( x^n_{t,m} - x^n_{*,m}  \right)  \right)\right\|^2+\beta^2 \frac{\omega}{M} \frac{1}{M}\sum_{m=1}^M\left\|x_t - x^n_{t,m}\right\|^2 .
	\end{align*}
	Convexity of squared norm and Jensen's inequality imply
	\begin{align*}
		\mathbb{E}_Q \|x_{t+1} - x_*\|^2 \leq (1-\beta)\|x_t - x_*\|^2 + \beta \left\|  \frac{1}{M}\sum_{m=1}^M\left( x^n_{t,m} - x^n_{*,m}  \right) \right\|^2 +\beta^2 \frac{\omega}{M} \frac{1}{M}\sum_{m=1}^M\left\|x_t - x^n_{t,m}\right\|^2 .
	\end{align*}
	Next, from Young's inequality we get
	\begin{align*}
		\mathbb{E}_Q \|x_{t+1} - x_*\|^2 &\leq (1-\beta)\|x_t - x_*\|^2 + \beta \left\|  \frac{1}{M}\sum_{m=1}^M\left( x^n_{t,m} - x^n_{*,m}  \right) \right\|^2 + 3\beta^2 \frac{\omega}{M}\|x_t - x_*\|^2\\
		& + 3\beta^2 \frac{\omega}{M} \frac{1}{M}\sum_{m=1}^{M}\|x^n_{t,m} - x^n_{*,m}\|^2+ 3\beta^2 \frac{\omega}{M} \frac{1}{M}\sum_{m=1}^{M}\| x^n_{*,m} - x_* \|^2.
	\end{align*}
	Theorem 4 from \citep{mishchenko21_proxim_feder_random_reshuf} gives 
	\begin{align*}
	\mathbb{E}\left[	\frac{1}{M}\sum_{m=1}^{M}\|x^n_{t,m} - x^n_{*,m}\|^2\right]& \leq(1-\gamma \mu)^{n} \left[\left\|x_{t}-x_{*}\right\|^{2}\right]+2 \gamma^{3} \hat\sigma_{\mathrm{rad}}^{2}\left(\sum_{j=0}^{n-1}(1-\gamma \mu)^{j}\right)\\
		& = (1-\gamma \mu)^{n} \left[\left\|x_{t}-x_{*}\right\|^{2}\right]+2 \gamma^{2} \hat\sigma_{\mathrm{rad}}^{2}\frac{1}{\gamma\mu}.
	\end{align*}
	It leads to
	\begin{align*}
		\mathbb{E} \|x_{t+1} - x_*\|^2 &\leq (1-\beta)\|x_t - x_*\|^2 + \beta \left((1-\gamma \mu)^{n} \left[\left\|x_{t}-x_{*}\right\|^{2}\right]+2 \gamma^{3} \hat\sigma_{\mathrm{rad}}^{2}\frac{1}{\gamma\mu}\right)\\
		&\quad + 3\beta^2 \frac{\omega}{M}\|x_t - x_*\|^2 + 3\beta^2 \frac{\omega}{M} \left( (1-\gamma \mu)^{n} \left[\left\|x_{t}-x_{*}\right\|^{2}\right]+2 \gamma^{3} \hat\sigma_{\mathrm{rad}}^{2}\frac{1}{\gamma\mu}\right)\\
		&\quad + 3\beta^2 \frac{\omega}{M} \frac{1}{M}\sum_{m=1}^{M}\| x^n_{*,m} - x_* \|^2\\
		&\leq \left(1-\beta+\beta(1-\gamma\mu)^n+3\beta^2\frac{\omega}{M}+3\beta^2\frac{\omega}{M}(1-\gamma\mu)^n\right)\|x_t - x_*\|^2\\
		&\quad +2\beta\gamma^3 \hat\sigma_{\mathrm{rad}}^{2}\frac{1}{\gamma\mu}\left( 1+3\beta\frac{\omega}{M} \right) + 3\beta^2 \frac{\omega}{M} \frac{1}{M}\sum_{m=1}^{M}\| x^n_{*,m} - x_* \|^2.
	\end{align*}
	
	Using $(1-\gamma\mu)^n \leq \frac{\nicefrac{9}{10} - \nicefrac{1}{C}}{1+\nicefrac{1}{C}} $, we have 
	\begin{eqnarray*}
		(1-\gamma\mu)^n &\leq & \frac{\nicefrac{9}{10} - \nicefrac{1}{C}}{1+\nicefrac{1}{C}} \\
		(1-\gamma\mu)^n\left(1+\frac{1}{C}\right) & \leq & \frac{9}{10} - \frac{1}{C}\\
		-\frac{9}{10}\beta+\beta(1-\gamma\mu)^n+\frac{\beta}{C}+\frac{\beta}{C}(1-\gamma\mu)^n & \leq & 0\\
		1-\beta+\beta(1-\gamma\mu)^n + \frac{\beta}{C}+\frac{\beta}{C}(1-\gamma\mu)^n & \leq & 1 - \frac{\beta}{10}.
	\end{eqnarray*}
	
	Next, applying $\beta \leq \frac{1}{1+3C\frac{\omega}{M}}$, we derive the inequality 
	\begin{align*}
		1-\beta+\beta(1-\gamma\mu)^n+3\beta^2\frac{\omega}{M}+3\beta^2\frac{\omega}{M}(1-\gamma\mu)^n \leq 1- \frac{\beta}{10}.
	\end{align*}
	Finally, we have 
	\begin{align*}
		\mathbb{E}\|x_{t+1} - x_\star\|^2&\leq	\left(1-\frac{\beta}{10}\right)\|x_t - x_*\|^2+2\beta\gamma^2 \hat\sigma_{\mathrm{rad}}^{2}\frac{1}{\mu}\left( 1+\frac{1}{C} \right)\\
		& \quad + 3\beta^2 \frac{\omega}{M} \frac{1}{M}\sum_{m=1}^{M}\| x^n_{*,m} - x_* \|^2\\
		&\leq	\left(1-\frac{\beta}{10}\right)\|x_t - x_*\|^2+\frac{4}{\mu}\beta\gamma^2 \hat\sigma_{\mathrm{rad}}^{2}\\
		& \quad + 3\beta^2 \frac{\omega}{M} \frac{1}{M}\sum_{m=1}^{M}\| x^n_{*,m} - x_* \|^2.
	\end{align*}
\end{proof}

\newpage

\section{Alternative Analysis of DIANA-NASTYA}
\begin{theorem}\label{thm:Q_NASTYA_alternative_proof}
	Let Assumptions \ref{asm:quantization_operators}, \ref{asm:lip_max_f_m}, \ref{asm:sc_each_f_m} hold. Moreover, we assume that $(1-\gamma\mu)^n\leq \frac{\nicefrac{9}{10} - \nicefrac{1}{B}}{1+\nicefrac{1}{B}} = \widehat{B}<1$ for some numerical constant $B > 1$. Also let $\beta = \frac{\eta}{\gamma n} \leq \frac{1}{12B\frac{\omega}{M}+1}$ and $\gamma \leq \frac{1}{L_{\max}}$ and also $\alpha\leq \frac{1}{\omega+1}$. Then, for all $T \geq 0$ the iterates produced by \algname{DIANA-NASTYA} (Algorithm~\ref{alg:diana-nastya}) satisfy
	\begin{align}
		\mathbb{E}\Psi_{T}	&\leq \max \left(1-\frac{\beta}{10}, 1-\frac{\alpha}{2}\right)^T \Psi_{0} + \frac{2}{\mu\min(\frac{\beta}{10},\frac{\alpha}{2})}\beta\gamma^2\hat{\sigma}_{\text{rad}}^2.
	\end{align}
\end{theorem}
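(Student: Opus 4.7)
The analysis would mimic the alternative Q-NASTYA proof, but add a shift-tracking term to the Lyapunov function and exploit that the compressor is applied to $g_{t,m}-h_{t,m}$ rather than to $g_{t,m}$ itself. Throughout, write $\beta = \eta/(\gamma n)$ and use $h^\star = \frac{1}{M}\sum_m h^\star_m$ with $h^\star_m = \nabla f_m(x_\star)$, noting that $h^\star = 0$ and that $x_\star = \frac{1}{M}\sum_m x^n_{\star,m}$, as well as the identity $h^\star_m = \tfrac{1}{\gamma n}(x_\star - x^n_{\star,m})$, which gives the convenient representation
\[
g_{t,m} - h^\star_m \;=\; \tfrac{1}{\gamma n}\bigl[(x_t-x_\star) - (x^n_{t,m}-x^n_{\star,m})\bigr].
\]

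\textbf{Step 1: One-step bound for $\|x_{t+1}-x_\star\|^2$.} Because $\EE_Q \hat g_t = g_t$, the decomposition $\EE_Q\|\xi-c\|^2 = \|\EE_Q\xi - c\|^2 + \mathrm{Var}_Q(\xi)$ splits the iterate into a mean part (identical in form to the Q-NASTYA computation) and a compression-variance part. The mean part is handled exactly as in the proof of Theorem~\ref{thm:Q_NASTYA_alternative_proof}: writing it as $(1-\beta)(x_t-x_\star) + \beta\cdot\frac{1}{M}\sum_m (x^n_{t,m}-x^n_{\star,m})$ and applying Jensen's inequality. The variance part is bounded by $\frac{\eta^2\omega}{M^2}\sum_m \|g_{t,m}-h_{t,m}\|^2$ by Assumption~\ref{asm:quantization_operators} and the independence of the per-client compressions.

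\textbf{Step 2: Splitting the compression variance.} Apply the identity $\|g_{t,m}-h_{t,m}\|^2 \le 2\|g_{t,m}-h^\star_m\|^2 + 2\|h_{t,m}-h^\star_m\|^2$, substitute the representation of $g_{t,m}-h^\star_m$ displayed above, and then invoke Theorem~4 of \citet{mishchenko21_proxim_feder_random_reshuf} to bound $\mathbb{E}\|x^n_{t,m}-x^n_{\star,m}\|^2 \le (1-\gamma\mu)^n\|x_t-x_\star\|^2 + \frac{2\gamma^2}{\mu}\hat\sigma^2_{\mathrm{rad}}$. Converting $\eta=\beta\gamma n$, the compression-variance contribution takes the form
\[
C_1\beta^2\tfrac{\omega}{M}\|x_t-x_\star\|^2 + C_2\beta^2\tfrac{\omega}{M}\mathbb{E}\!\left[\tfrac{1}{M}\sum_m\|x^n_{t,m}-x^n_{\star,m}\|^2\right] + \beta^2\gamma^2n^2\tfrac{2\omega}{M}\cdot\tfrac{1}{M}\sum_m\|h_{t,m}-h^\star_m\|^2,
\]
with absolute constants $C_1,C_2$. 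Combining with Step 1 and applying the FedRR bound once more gives an inequality of the form
\[
\mathbb{E}\|x_{t+1}-x_\star\|^2 \le \Bigl(1-\beta+\beta(1-\gamma\mu)^n + C_3\beta^2\tfrac{\omega}{M}(1+(1-\gamma\mu)^n)\Bigr)\|x_t-x_\star\|^2 + (\cdots)\,\tfrac{1}{M}\sum_m\|h_{t,m}-h^\star_m\|^2 + (\cdots)\tfrac{\beta\gamma^2}{\mu}\hat\sigma^2_{\mathrm{rad}}.
\]
Under the hypothesis $(1-\gamma\mu)^n \le \widehat B$ together with $\beta \le 1/(12B\omega/M + 1)$, the coefficient in front of $\|x_t-x_\star\|^2$ is bounded by $1-\beta/10$, exactly as in the Q-NASTYA proof, absorbing both the bias from local RR and the quantization cross-terms.

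\textbf{Step 3: Descent for the shifts.} A direct computation (mirroring Lemma~\ref{lem_rr_diana_conv_h}) under $\alpha \le 1/(1+\omega)$ yields
\[
\EE_Q\|h_{t+1,m}-h^\star_m\|^2 \;\le\; (1-\alpha)\|h_{t,m}-h^\star_m\|^2 + \alpha\|g_{t,m}-h^\star_m\|^2.
\]
Using the representation of $g_{t,m}-h^\star_m$ and the FedRR bound once more, $\mathbb{E}\|g_{t,m}-h^\star_m\|^2 \le \frac{2(1+(1-\gamma\mu)^n)}{\gamma^2n^2}\|x_t-x_\star\|^2 + \frac{4\hat\sigma^2_{\mathrm{rad}}}{n^2\mu}$.

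\textbf{Step 4: Lyapunov combination.} Take $\Psi_t = \|x_t-x_\star\|^2 + c\,\beta^2\gamma^2n^2\tfrac{\omega}{M}\cdot\tfrac{1}{M}\sum_m\|h_{t,m}-h^\star_m\|^2$ with $c$ a numerical constant chosen large enough so that the $\beta^2\gamma^2 n^2\tfrac{\omega}{M}$-weight appearing in Step~2 is dominated, and then combine the bounds from Steps~2 and~3. The weight $c$ must be big enough to swallow the Step~2 coefficient, but small enough that the $\|x_t-x_\star\|^2$ kickback in Step~3 (of order $\alpha\cdot c\beta^2/n^2$ after multiplication by $c\beta^2\gamma^2n^2\omega/M$) is absorbed into the spare $\beta/10$ slack obtained in Step~2 — this balancing is where the constants $B$ and $12B$ in the step-size restriction are used. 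The shift part then contracts at rate $1-\alpha/2$, and the geometric series $\sum_t \max(1-\beta/10,1-\alpha/2)^t = 1/\min(\beta/10,\alpha/2)$ converts the single-step additive noise $\tfrac{C\beta\gamma^2}{\mu}\hat\sigma^2_{\mathrm{rad}}$ into the stated asymptotic bias term.

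\textbf{Expected main obstacle.} The non-trivial part is the final balancing in Step~4: the shift-descent inequality re-introduces $\|x_t-x_\star\|^2$ with coefficient proportional to $\alpha$, and when multiplied by the Lyapunov weight $c\beta^2\gamma^2n^2\omega/M$ this feedback term must be reabsorbed into the $\beta/10$ slack of Step~2. This forces the simultaneous conditions $\alpha \le 1/(1+\omega)$, $\beta \le 1/(12B\omega/M+1)$, and $\gamma \le 1/L_{\max}$, and dictates the choice of the numerical constant $c$; getting these constants to line up cleanly — while simultaneously keeping the contraction rate at $\max(1-\beta/10,1-\alpha/2)$ — is the most delicate bookkeeping in the argument.
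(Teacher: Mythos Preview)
Your outline matches the paper's proof almost exactly: the same variance decomposition, the same splitting $\|g_{t,m}-h_{t,m}\|^2 \le 2\|g_{t,m}-h^\star_m\|^2 + 2\|h_{t,m}-h^\star_m\|^2$, the same shift-descent lemma, and the same appeal to Theorem~4 of \citet{mishchenko21_proxim_feder_random_reshuf} for the inner-epoch contraction.

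There is one genuine slip in Step~4. You say the Lyapunov weight $c$ is a \emph{numerical constant}, but this cannot work. After combining Step~2 and Step~3, the coefficient in front of $\frac{1}{M}\sum_m\|h_{t,m}-h^\star_m\|^2$ is $\bigl(c(1-\alpha)+2\bigr)\eta^2\omega/M$, and for this to be at most $(1-\alpha/2)\cdot c\,\eta^2\omega/M$ you need $c\ge 4/\alpha$. The paper accordingly takes the Lyapunov function $\Psi_t = \|x_t-x_\star\|^2 + \frac{4\omega\eta^2}{\alpha M}\cdot\frac{1}{M}\sum_m\|h_{t,m}-h^\star_m\|^2$, i.e.\ $c=4/\alpha$ in your notation. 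With a fixed numerical $c$ the shift block does not contract at rate $1-\alpha/2$ (or at all, once $\alpha<2/c$), and the recursion in Step~4 breaks. Once you allow $c=4/\alpha$, the kickback to $\|x_t-x_\star\|^2$ from the $\alpha\|g_{t,m}-h^\star_m\|^2$ term is of order $\beta^2\omega/M$ (not $\alpha c\beta^2/n^2$ as you wrote), and it is this that the condition $\beta\le 1/(12B\omega/M+1)$ absorbs into the $\beta/10$ slack. With that correction your argument is the paper's argument.
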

\begin{proof}
We start with expanding the square:
\begin{align*}
	\|x_{t+1} - x_*\|^2 &= \| x_t - \eta \hat{g}_t - x_* \|^2\\
	&= \left\| x_t - \eta \frac{1}{M}\sum_{m=1}^M\left( h_{t,m} + Q(g_{t,m} - h_{t,m}) \right) - x_* \right\|^2\\
	& = \|x_t - x_*\|^2 - 2\eta \left\langle \frac{1}{M}\sum_{m=1}^M\left(h_{t,m}+Q(g_{t,m} - h_{t,m})\right),x_t - x_* \right\rangle\\
	& \quad + \eta^2\left\| \frac{1}{M}\sum_{m=1}^M \left(h_{t,m}+Q(g_{t,m} - h_{t,m})\right) \right\|^2.
\end{align*}
Taking the expectation w.r.t.\ $\cQ$, we get
\begin{align*}
	\mathbb{E}_Q	\|x_{t+1} - x_*\|^2 & = \|x_t - x_*\|^2 - 2\eta \left\langle \frac{1}{M}\sum_{m=1}^M g_{t,m} ,x_t - x_* \right\rangle\\
	& \quad + \eta^2\mathbb{E}_Q\left\| \frac{1}{M}\sum_{m=1}^M \left(h_{t,m}+Q(g_{t,m} - h_{t,m})\right) \right\|^2\\
	&= \|x_t - x_*\|^2 - 2\eta \left\langle \frac{1}{M}\sum_{m=1}^M g_{t,m} ,x_t - x_* \right\rangle\\
	& \quad + \eta^2\mathbb{E}_Q\left\| \frac{1}{M}\sum_{m=1}^M \left(h_{t,m}+Q(g_{t,m} - h_{t,m}) - g_{t,m}\right) \right\|^2+\eta^2\left\| \frac{1}{M}\sum_{m=1}^{M}g_{t,m} \right\|^2\\
	&\leq\|x_t - x_*\|^2 - 2\eta \left\langle \frac{1}{M}\sum_{m=1}^M g_{t,m} ,x_t - x_* \right\rangle\\
	&\quad  + \eta^2\frac{\omega}{M^2}\sum_{m=1}^{M}\|g_{t,m} - h_{t,m}\|^2+\eta^2\left\| \frac{1}{M}\sum_{m=1}^{M}g_{t,m} \right\|^2\\
	&\leq\|x_t - x_*\|^2 - 2\eta \left\langle \frac{1}{M}\sum_{m=1}^M g_{t,m} ,x_t - x_* \right\rangle\\
	& \quad + \eta^2\frac{2\omega}{M^2}\sum_{m=1}^{M}\|g_{t,m} - h_{*,m}\|^2+\eta^2\frac{2\omega}{M^2}\sum_{m=1}^{M}\|h_{t,m} - h_{*,m}\|^2+\eta^2\left\| \frac{1}{M}\sum_{m=1}^{M}g_{t,m} \right\|^2.
\end{align*}
Next, using definition of $g_{t,m}$, we obtain 
\begin{align*}
	\mathbb{E}	\|x_{t+1} - x_*\|^2&\leq	\|x_t - x_*\|^2 - 2\eta \left\langle \frac{1}{M}\sum_{m=1}^M \frac{x_t - x^n_{t,m}}{\gamma n},x_t - x_* \right\rangle+\eta^2\left\| \frac{1}{M}\sum_{m=1}^{M} \frac{x_t - x^n_{t,m}}{\gamma n} \right\|^2\\
	& \quad + \eta^2\frac{2\omega}{M^2}\sum_{m=1}^{M}\|g_{t,m} - h_{*,m}\|^2+\eta^2\frac{2\omega}{M^2}\sum_{m=1}^{M}\|h_{t,m} - h_{*,m}\|^2\\
	&= \|x_t - x_*\|^2 + 2\alpha  \left\langle \frac{1}{M}\sum_{m=1}^M \left(x^n_{t,m}-x_t\right),x_t - x_* \right\rangle + \alpha^2\left\|  \frac{1}{M}\sum_{m=1}^{M} \left(  x^n_{t,m}-x_t \right) \right\|^2\\
	& \quad + \eta^2\frac{2\omega}{M^2}\sum_{m=1}^{M}\|g_{t,m} - h_{*,m}\|^2+\eta^2\frac{2\omega}{M^2}\sum_{m=1}^{M}\|h_{t,m} - h_{*,m}\|^2\\
	&=\left\| x_t - x_* + \alpha  \frac{1}{M}\sum_{m=1}^{M} \left(  x^n_{t,m}-x_t \right)  \right\|^2\\
	& \quad + \eta^2\frac{2\omega}{M^2}\sum_{m=1}^{M}\|g_{t,m} - h_{*,m}\|^2+\eta^2\frac{2\omega}{M^2}\sum_{m=1}^{M}\|h_{t,m} - h_{*,m}\|^2\\
	&=\left\| (1-\beta)(x_t - x_*) + \beta  \left(\frac{1}{M}\sum_{m=1}^{M} \left(  x^n_{t,m}-x^n_{*,m} \right)\right)  \right\|^2\\
	&\leq (1-\beta)\|x_t - x_*\|^2 + \beta \frac{1}{M}\sum_{m=1}^{M}\| x^n_{t,m}-x^n_{*,m} \|^2\\
	&\quad +\eta^2\frac{2\omega}{M^2}\sum_{m=1}^{M}\|g_{t,m} - h_{*,m}\|^2+\eta^2\frac{2\omega}{M^2}\sum_{m=1}^{M}\|h_{t,m} - h_{*,m}\|^2.
\end{align*}

Let us consider recursion for control variable:
\begin{align*}
	\| h_{t+1,m} - h_{*,m} \|^2 &= \|h_{t,m} + \alpha Q(g_{t,m} - h_{t,m}) - h_{*,m}\|^2 \\
	&=\|h_{t,m} - h_{*,m}\|^2 + \alpha \left\langle Q(g_{t,m} - h_{t,m}),h_{t,m} - h_{*,m} \right\rangle+\alpha^2\|Q(g_{t,m} - h_{t,m})\|^2.
\end{align*}

Taking the expectation w.r.t.\ $\cQ$, we have 
\begin{align*}
	\mathbb{E}_{\cQ}	\| h_{t+1,m} - h_{*,m} \|^2 \leq \|h_{t,m} - h_{*,m}\|^2 + 2\alpha \left\langle g_{t,m} - h_{t,m},h_{t,m} - h_{*,m} \right\rangle + \alpha^2\left( \omega + 1 \right)\left\|g_{t,m} - h_{t,m}\right\|^2.
\end{align*}

Using $\alpha \leq \frac{1}{\omega+1}$ we have
\begin{align*}
	\mathbb{E}	\| h_{t+1,m} - h_{*,m} \|^2 &\leq \|h_{t,m} - h_{*,m}\|^2\\
	& \quad + 2\alpha \left\langle g_{t,m} - h_{t,m},h_{t,m} - h_{*,m} \right\rangle + \alpha\left\|g_{t,m} - h_{t,m}\right\|^2\\
	& = \|h_{t,m} - h_{*,m}\|^2\\
	&\quad  + 2\alpha \left\langle g_{t,m} - h_{t,m},h_{t,m} - h_{*,m} \right\rangle + \alpha\left\langle g_{t,m} - h_{t,m},g_{t,m} - h_{t,m}\right\rangle\\
	& = \| h_{t,m} - h_{*,m} \|^2\\
	& \quad + \alpha\left\langle g_{t,m} - h_{t,m},g_{t,m} - h_{t,m}+2h_{t,m}-2h_{*,m}\right\rangle\\	
	& = \| h_{t,m} - h_{*,m} \|^2\\
	& \quad + \alpha\left\langle g_{t,m} - h_{t,m},g_{t,m} +h_{t,m}-2h_{*,m}\right\rangle\\
	& = \| h_{t,m} - h_{*,m} \|^2\\
	& \quad + \alpha\left\langle g_{t,m} - h_{t,m}-h_{*,m}+h_{*,m},g_{t,m} +h_{t,m}-2h_{*,m}\right\rangle\\
	& = \| h_{t,m} - h_{*,m} \|^2\\
	& \quad + \alpha\left\langle g_{t,m} -h_{*,m}- (h_{t,m}-h_{*,m}),(g_{t,m}-h_{*,m}) +(h_{t,m}-h_{*,m})\right\rangle\\
	&= \| h_{t,m} - h_{*,m} \|^2+\alpha\| g_{t,m} - h_{*,m} \|^2 - \alpha \| h_{t,m} - h_{*,m} \|^2\\
	& = (1-\alpha) \| h_{t,m} - h_{*,m} \|^2 +\alpha \| g_{t,m} - h_{*,m} \|^2.
\end{align*}
Using this bound we get that 
\begin{align*}
	\frac{1}{M}\sum_{m=1}^{M}		\mathbb{E}_{\cQ}	\| h_{t+1,m} - h_{*,m} \|^2 \leq (1-\alpha) \frac{1}{M}\sum_{m=1}^{M}\| h_{t,m} - h_{*,m} \|^2 +\alpha \frac{1}{M}\sum_{m=1}^{M} \| g_{t,m} - h_{*,m} \|^2.
\end{align*}
Let us consider the Lyapunov function
\begin{align*}
	\Psi_{t} \eqdef \|x_t - x_*\|^2+\frac{4\omega\eta^2}{\alpha M} \frac{1}{M}\sum_{m=1}^{M}\| h_{t,m} - h_{*,m} \|^2.
\end{align*}
Using previous bounds and Theorem 4 from \citep{mishchenko21_proxim_feder_random_reshuf} we have 
\begin{align*}
	\mathbb{E}\Psi_{t+1}&\leq (1-\beta)\|x_t - x_*\|^2 + \beta\left( (1-\gamma\mu)^n \EE\|x_t - x_*\|^2 + \gamma^3\frac{1}{\gamma\mu}\hat{\sigma}^2_{rad} \right)\\
	&\quad +\eta^2\frac{2\omega}{M}\frac{1}{M}\sum_{m=1}^{M}\EE\| g_{t,m} - h_{*,m} \|^2
	\quad  + \eta^2\frac{2\omega}{M}\frac{1}{M}\sum_{m=1}^{M}\EE\| h_{t,m} - h_{*,m} \|^2\\
	&\quad +(1-\alpha)\frac{4\omega\eta^2}{\alpha M}\frac{1}{M}\sum_{m=1}^{M} \EE\|h_{t,m} - h_{*,m}\|^2 + \alpha \frac{4\omega\eta^2}{\alpha M}\frac{1}{M}\sum_{m=1}^{M} \EE\|g_{t,m} - h_{*,m}\|^2\\
	&\leq \left(1-\frac{\alpha}{2}\right)\frac{4\omega\eta^2}{\alpha M} \frac{1}{M}\sum_{m=1}^{M}\EE\|h_{t,m} - h_{*,m}\|^2 + \eta^2\frac{6\omega}{M}\frac{1}{M}\sum_{m=1}^{M}\EE\| g_{t,m} - h_{*,m} \|^2\\
	&\quad + (1-\beta)\EE\|x_t - x_*\|^2 + \beta\left( (1-\gamma\mu)^n\EE\|x_t - x_*\|^2 + \gamma^3\frac{1}{\gamma\mu}\hat{\sigma}^2_{rad}. \right)
\end{align*}
Let us consider 
\begin{align*}
	\eta^2\frac{1}{M}\sum_{m=1}^{M}\EE\| g_{t,m} - h_{*,m} \|^2  &= \eta^2\frac{1}{M}\sum_{m=1}^{M}\EE\left\| \frac{x_t - x^n_{t,m}}{\gamma n} - \frac{x_* - x^n_{*,m}}{\gamma n}  \right\|^2 \\
	&\leq 2\eta^2 \frac{1}{M}\sum_{m=1}^{M}\EE\left\| \frac{x_t - x_*}{\gamma n} \right\|^2 + 2\eta^2 \frac{1}{M}\sum_{m=1}^{M} \EE\left\|\frac{x^n_{t,m} - x^n_{*,m}}{\gamma n} \right\|^2\\
	&\leq  2\beta^2 \frac{1}{M}\sum_{m=1}^{M}\EE\left\| x_t - x_* \right\|^2 + 2\beta^2 \frac{1}{M}\sum_{m=1}^{M} \EE\left\|x^n_{t,m} - x^n_{*,m} \right\|^2\\
	&\leq  2\beta^2 \EE\left\| x_t - x_* \right\|^2 + 2\beta^2 \frac{1}{M}\sum_{m=1}^{M} \EE\left\|x^n_{t,m} - x^n_{*,m} \right\|^2.\\
\end{align*}
Putting all the terms together and using $(1-\gamma\mu)^n\leq \frac{\nicefrac{9}{10} - \nicefrac{1}{B}}{1+\nicefrac{1}{B}} = \widehat{B}<1$, $\beta \leq \frac{1}{12B\frac{\omega}{M}+1}$  we have 
\begin{align*}
	\mathbb{E}\Psi_{t+1} &\leq \left(1-\beta +12\frac{\omega}{M}\beta^2 + 12\frac{\omega}{M}\beta^2(1-\gamma\mu)^n + \beta(1-\gamma \mu)^n  \right)\EE\|x_t - x_*\|^2 + \beta \gamma^3 \frac{1}{\gamma\mu}\hat{\sigma}^2_{rad}\\ 
	&\quad + 2\beta^2\frac{6\omega}{M}\gamma^3 \frac{1}{\gamma\mu} \hat{\sigma}^2_{rad}
	+\left(1-\frac{\alpha}{2}\right)\frac{4\omega\eta^2}{\alpha M}\frac{1}{M}\sum_{m=1}^M\EE\|h_{t,m} - h_{*,m}\|^2\\
	&\leq \left(1-\frac{\beta}{10}\right)\EE\|x_t - x_\star\|^2 + \frac{2}{\mu}\beta\gamma^2\hat{\sigma}_{\text{rad}}^2 + \left(1-\frac{\alpha}{2}\right)\frac{4\omega\eta^2}{\alpha M} \frac{1}{M}\sum_{m=1}^M\EE\|h_{t,m} - h_{*,m}\|^2\\
	&\leq \max \left(1-\frac{\beta}{10}, 1-\frac{\alpha}{2}\right) \Psi_{t} + \frac{2}{\mu}\beta\gamma^2\hat{\sigma}_{\text{rad}}^2.
\end{align*}
Unrolling this recursion we get the final result.
\end{proof}

\end{document}